\def\bw{{\mathbf w}}
\def\bW{{\mathbf W}}
\def\bx{{\mathbf x}}
\def\gS{{\mathcal{S}}}
\def\bmu{{\boldsymbol{\mu}}}
\def\bxi{{\boldsymbol{\xi}}}
\def\orho{{\overline{\rho}}}
\def\urho{{\underline{\rho}}}
\def\sP{{\mathbb{P}}}
\def\bI{{\mathbf I}}
\def\SNR{{\mathrm{SNR}}}
\def\sE{{\mathbb{E}}}
\def\gI{{\mathcal{I}}}
\def\bzeta{{\boldsymbol{\zeta}}}
\def\sone{{\mathds{1}}}
\def\Unif{{\mathrm{Unif}}}
\def\gN{{\mathcal{N}}}
\def\gD{{\mathcal{D}}}
\theoremstyle{plain}
\newtheorem{theorem}{Theorem}[section]
\newtheorem{proposition}[theorem]{Proposition}
\newtheorem{lemma}[theorem]{Lemma}
\theoremstyle{definition}
\newtheorem{condition}{Condition}[section]
\theoremstyle{remark}
\definecolor{ultramarine}{rgb}{0.07, 0.04, 0.56}
\icmltitlerunning{On the Role of Label Noise in the Feature Learning Process}
\begin{document}

\twocolumn[
\icmltitle{On the Role of Label Noise in the Feature Learning Process}



\icmlsetsymbol{equal}{*}

\begin{icmlauthorlist}
\icmlauthor{Andi Han}{equal,xx,yy}
\icmlauthor{Wei Huang}{equal,xx}
\icmlauthor{Zhanpeng Zhou}{equal,zz}\\
\icmlauthor{Gang Niu}{xx}
\icmlauthor{Wuyang Chen}{aa}
\icmlauthor{Junchi Yan}{zz}
\icmlauthor{Akiko Takeda}{xx,bb}
\icmlauthor{Taiji Suzuki}{xx,bb}
\end{icmlauthorlist}

\icmlaffiliation{xx}{RIKEN AIP}
\icmlaffiliation{zz}{School of Computer Science \& AI, Shanghai Jiao Tong University}
\icmlaffiliation{yy}{University of Sydney}
\icmlaffiliation{aa}{Simon Fraser University}
\icmlaffiliation{bb}{University of Tokyo}

\icmlcorrespondingauthor{Andi Han}{andi.han@sydney.edu.au}
\icmlcorrespondingauthor{Wei Huang}{wei.huang.vr@riken.jp}
\icmlcorrespondingauthor{Zhanpeng Zhou}{zzp1012@sjtu.edu.cn}

\icmlkeywords{Machine Learning, ICML}

\vskip 0.3in
]



\printAffiliationsAndNotice{\icmlEqualContribution} 

\begin{abstract}
Deep learning with noisy labels presents significant challenges.
In this work, we theoretically characterize the role of label noise from a feature learning perspective.
Specifically, we consider a \emph{signal-noise} data distribution, where each sample comprises a label-dependent signal and label-independent noise, and rigorously analyze the training dynamics of a two-layer convolutional neural network under this data setup, along with the presence of label noise. Our analysis identifies two key stages.
In \emph{Stage I}, the model perfectly fits all the clean samples (i.e., samples without label noise) while ignoring the noisy ones (i.e., samples with noisy labels).
During this stage, the model learns the signal from the clean samples, which generalizes well on unseen data.
In \emph{Stage II}, as the training loss converges, the gradient in the direction of noise surpasses that of the signal, leading to overfitting on noisy samples.
Eventually, the model memorizes the noise present in the noisy samples and degrades its generalization ability.
Furthermore, our analysis provides a theoretical basis for two widely used techniques for tackling label noise: early stopping and sample selection.
Experiments on both synthetic and real-world setups validate our theory.
\end{abstract}

\section{Introduction}
One of the key challenges in deep learning lies in its susceptibility to label noise~\citep{angluin1988learning}.
The success of deep learning stems from its exceptional ability to approximate arbitrary functions~\citep{hornik1989multilayer,funahashi1989approximate}, yet this ability becomes problematic in the presence of noisy labels.
Over-parameterized neural networks, which have sufficient capacity to memorize training data, tend to overfit noisy labels, leading to poor generalization on unseen data~\citep{zhang2021understanding}.
Although many studies~\citep{patrini2017making,pmlr-v80-ma18d,pmlr-v97-yu19b,tanaka2018joint,NEURIPS2018_a19744e2,NEURIPS2023_6a6eceda,NEURIPS2023_d191ba4c,10375792} have developed methods to mitigate the effects of label noise in practice, our theoretical understanding of how label noise affects the learning process remains incomplete.
A crucial step toward bridging this gap is to develop a comprehensive theory describing neural network training dynamics in the presence of label noise.

Existing works have attempted to analyze the effects of label noise theoretically, but many such studies focus on the \emph{lazy training} regime~\citep{jacot2018neural,Chizat2019lazytraining}.
For instance, \citet{li2020gradient} constrained the distance between model weights and their initialization, while \citet{liu2023emergence} adopted an infinitely-wide neural network, where the dynamics can be described by a static kernel function.
However, the lazy regime, which typically occurs in wide neural networks with relatively large initialization, is considered undesirable in practice~\citep{Chizat2019lazytraining,ghorbani2020neural}.
Understanding how label noise influences training dynamics \emph{beyond} the lazy regime remains an active research frontier. 

Recently, a new line of research~\citep{allen2020towards,frei2022benign,cao2022benign,kou2023benign,xu2023benign} has developed the feature learning theory to better capture the complex, nonlinear training behaviors of neural networks.
The core idea of the feature learning theory, which dates back to \citet{rumelhart1986learning}, is to assume a simplified data distribution and analyze how neural networks learn useful \emph{features} from it—an approach that stands in contrast to the linear dynamics observed in lazy training. However, few works have examined how label noise shapes these nonlinear feature-learning dynamics.
To address this, we pose the following question:
\begin{center}
    \emph{How does label noise affect the feature learning process of neural networks?} 
\end{center}


\begin{table*}[!ht]
    \centering
    \begin{tabular}{c|l|c}
    \hline\hline
      \multirow{2}{*}{\bf{w/ label noise}} &  \emph{Stage I.} fit all clean samples and ignore noisy ones; learn signal. & \cref{lemma:phase1_main} \\ \cline{2-3}
       & \emph{Stage II.} over-fit noisy samples; learn noise; degrade generalization. & \cref{lemma:phase2_main} \\ 
      \hline
      \bf{w/o label noise} &  the loss converges; fit all clean samples; generalize well.  &  \cref{them:noiseless_main} \\
      \hline\hline
    \end{tabular}
    \caption{Overview of the two-phase picture and corresponding theoretical results.}
    \label{tab:summary_results}
\end{table*}

\textbf{Our main contributions.}
In this work, we analyze the feature learning process of neural networks in the presence of label noise, unveiling a two-stage behavior.
Our analysis is based on a \emph{signal-noise} data distribution. 
Specifically, we consider the classification task where each data point consists of a label-dependent \emph{signal} and a label-independent \emph{noise} patch. 
This simplified data setup has been widely adopted in recent theoretical advances on feature learning~\citep{allen2020towards,frei2022benign,cao2022benign,kou2023benign}. 
We introduce label noise by flipping the label into other classes with a certain probability.
We then rigorously analyze the training dynamics of a two-layer convolutional neural network under the above setup and characterize the following \emph{two-stage} picture:
\begin{itemize}[topsep=1em,leftmargin=1em]
    \item \emph{Stage I.} The model perfectly fits all the clean samples while ignoring the noisy ones\footnote{To clarify, ``clean samples'' refers to samples without noisy labels, while ``noisy samples'' refers to those with noisy labels. The same terminology applies for the rest of this paper.}. The model mainly learns the signal from clean samples, which generalizes well on unseen data.
    \vspace{-5pt}
    \item \emph{Stage II.} The training loss converges and the model overfits to the noisy samples. The model memorizes the noise features from noisy samples, degrading its generalization.
\end{itemize}
For comparison, we also show that when training without label noise, the model tends to fit all training samples throughout the training, and the test error remains well-bounded.
See \cref{tab:summary_results} for a summary.
Our two-stage picture successfully explains existing empirical phenomena on label noise: neural networks tend to first learn simple patterns from clean samples and then proceed to memorize the noisy ones~\citep{arpit2017closer,NEURIPS2018_a19744e2}.




Our theoretical analysis also provides valuable insights into the effectiveness of two common techniques used to address the label noise problem, {i.e.},  \emph{early stopping}~\citep{liu2020elr,bai2021earlystopping} and \emph{sample selection}~\citep{NEURIPS2018_a19744e2,han2020sigua}. 
i) For early stopping, we show that stopping the training process at the end of the first stage ensures a low test error, even in the presence of label noise.
ii) For sample selection, we verify that the small-loss criterion~\footnote{The samples with small loss are more likely to be the ones which are correctly labeled, i.e., the clean samples.}~\citep{NEURIPS2018_a19744e2} can provably identify the clean samples from noisy ones. 
These insights ground the path towards more effective techniques tackling the label noise problem.
In summary, 
our theoretical analysis provides a rich and trackable view of training dynamics under label noise while providing practically relevant insights, a contribution that addresses a notable gap in deep learning theory.
\section{Related Work}

\textbf{Feature Learning Theory.}
Feature learning~\citep{geiger2020disentangling,Woodworth2020kernel}, or the rich regime, is often used to broadly describe learning behaviors that deviate from the lazy regime.
Despite significant interest in exploring the mechanism behind feature learning, our understanding remains limited~\citep{tu2024mixed}.
Recent works~\citep{allen2020towards,frei2022benign,allen2022feature,cao2022benign,kou2023benign,zou2023benefits,chen2023understanding,huang2023understanding,xu2023benign,bu2024provably} have developed a systematic theoretical framework to study feature learning, namely \emph{feature learning theory}.
By simplifying the data setup, feature learning theory provides a tractable view of the complex non-linear dynamics of neural networks and has offered significant insights into the success of deep learning.
For instance, feature learning theory explains the implicit bias in various architectures, such as convolutional neural networks~\citep{cao2022benign,kou2023benign}, vision transformers~\citep{jelassi2022vision,li2023theoretical,jiang2024unveil}, graph neural networks~\citep{huang2023graph}, and diffusion models~\citep{han2025on,han2025can}.
Many attempts have also been made to understand different training schemes, including gradient descent with momentum~\citep{jelassi2022towards}, Adam~\citep{zou2021understanding}, sharpness-aware minimization~\citep{chen2024does}, mixup~\citep{zou2023benefits,chidambaram2023provably} and contrastive learning~\citep{huang2024comparison}.


\textbf{Label Noise Theories.}
Many existing works have tried to theoretically analyze the training of neural networks under label noise.
\citet{li2020gradient,liu2023emergence} studied label noise in the lazy-training regime~\citep{jacot2018neural,chizat2019lazy}, constraining the distance between model weights and their initialization, and thus cannot capture the process of learning features from data~\citep{rumelhart1986learning,damian2022neural}.
On the other hand, \citet{frei2021provable} focused on the early dynamics of neural networks trained with SGD, which can be largely explained by the behavior of a linear classifier~\citep{Kalimeris2019sgdlearn,hu2020simplicity}, thereby overlooking the adverse effects of label noise on generalization in later training stages.

\emph{In comparison}, our work characterizes the whole training process of neural networks under label noise from a feature-learning perspective, unveiling a novel two-stage behavior in the training dynamics. 
Some recent works~\citep{kou2023benign,meng2023benign,xu2023benign} in feature learning theory also considered label noise in their analysis. 
However, they fail to characterize the two-stage distinct behaviors due to differences in their setups.

\section{Problem Setup}
\label{sect:prelim}


\textbf{Basic Notation.}
We use bold letters for vectors and matrices, and scalars otherwise.
The Euclidean norm of a vector and spectral norm of a matrix are denoted by $\| \cdot \|_2$, and the Frobenius norm of a matrix by $\| \cdot \|_F$. 
We use $y$ to represent the true label and $\tilde y$ to represent the observed label. 
For a logical variable $a$, let $\mathds{1}(a) = 1$ if $a$ is true, otherwise $\mathds{1}(a) = 0$. 
Denote ${\bf I}_d$ as the $d\times d$ identity matrix and $[n] = \{1,2,\ldots,n \}$. 


\textbf{Binary Classification.}
We consider a binary-classification data distribution $\mathcal{D}_{\rm tr}$\footnote{The analysis can be extended to multi-class classification, where each class is represented by distinct, orthogonal signal vector, i.e., $\bmu_1, \bmu_2, …, \bmu_k$, along with one-hot labels.}, following prior works \citep{cao2022benign,kou2023benign},
where each data point consists of a label-dependent \emph{signal} and a label-independent \emph{noise}.
More precisely, let $\bmu \in \mathbb{R}^{d}$ be a fixed vector representing the signal and let $\boldsymbol{\xi} \in \mathbb{R}^{d}$ be a random vector sampled from $\mathcal{N}(\mathbf{0}, \sigma^2_\xi \mathbf{I}_d)$ representing the noise.
Then each data point $\mathbf{x} \in \mathbb{R}^{2d}$ is defined as ${\bf x} = [\mathbf{x}^{(1)}, \mathbf{x}^{(2)} ]$, where one of $\bx^{(1)}, \bx^{(2)}$ is chosen to be $y \bmu$ and the other as $\bxi$,
Here $y\in \{-1, 1\}$ is the corresponding label, generated from the Rademacher distribution, i.e., $\sP(y = 1) = \sP(y = -1) = \frac{1}{2}$.
We sample the training set $\{ {\bf x}_i, y_i \}_{i=1}^n$ from $\gD_{\rm tr}$
and assume the training set is balanced without loss of generality.


\textbf{Label Flipping.}
We introduce label noise for each training sample, where the observed label $\tilde y$ may differ from the ground-truth label $y$~\footnote{In experiments, the ground-truth labels are inaccessible, and we can only evaluate models based on the observed labels.}. 
Specifically, we flip the observed label into the incorrect class with a certain probability, i.e., $\tau_+ = \sP( \tilde y = -1 | y = 1), \tau_{-} = \sP(\tilde y=1 | y=-1)$.
We require $\tau_+, \tau_{-}  \in (0, 1/2)$ such that neural networks can still learn the signal from data under label noise.
In addition, we denote the clean sample set as $\gS_t \coloneqq \{ i \in [n] : \tilde y_i = y_i \}$ and the noisy sample set as $\gS_{f} \coloneqq \{ i \in [n] : \tilde y_i \neq y_i \}$.






\textbf{Two-Layer ReLU CNN.}
We consider a two-layer convolutional neural network (CNN) with ReLU activation, as in \citep{cao2022benign,kou2023benign}. 
Formally, given the input data $\mathbf{x}$, the output of the neural network is defined as $f (\mathbf{W}, \mathbf{x}) = F_{+1}(\mathbf{W}_{+1}, \mathbf{x}) - F_{-1}(\mathbf{W}_{-1}, \mathbf{x})$, 
where $F_{+1}(\mathbf{W}_{+1}, \mathbf{x})$ and $F_{-1}(\mathbf{W}_{-1}, \mathbf{x})$ are given by 
\begin{equation*}
    F_j(\mathbf{W}_j, \mathbf{x} ) = \frac{1}{m}\sum_{r=1}^m \Big( \sigma \big(\langle \mathbf{w}_{j,r}, y \boldsymbol{\mu} \rangle \big)  + \sigma \big( \langle \mathbf{w}_{j,r}, \boldsymbol{\xi} \rangle \big) \Big),
\end{equation*}
for $j = \pm 1$. Here, $\sigma(\cdot)$ is the ReLU activation function, defined as $\sigma(x) = \max \{0, x \}$. This corresponds to a two-layer CNN with second layer weights fixed to be $\pm1/m$. 

We initialize the entries of ${\bf W}$ independently from a zero-mean Gaussian distribution with variance $\sigma_0^2$, i.e., ${\bw}_{j,r}^{(0)} \sim  \mathcal{N}(0, \sigma^2_0 {\bf I}_d )$ for all $j = \pm1, r \in [m]$. 
We also fix the second layer weights uniformly to $\pm 1$, which is a common setting used in previous studies~\citep{arora2019fine,chatterji2021jmlr}.


\textbf{Gradient Descent with Logistic Loss.}
We consider the logistic loss $\ell(f, \tilde y) =\log \big(1+\exp(- f \cdot \tilde y) \big)$. 
Then the training loss, or the empirical risk, can be written as: \begin{equation*} 
    L_S(\bW)  =  \frac{1}{n}\sum_{i=1}^n \ell \left (f( \bW, {\bf x}_i), \tilde y_i \right),
\end{equation*}
To minimize this empirical risk, we use gradient descent (GD) with a constant learning rate $\eta > 0$,
\begin{align}
    \mathbf{w}_{j,r}^{(t+1)} &= \mathbf{w}_{j,r}^{(t)} - \eta \cdot \nabla_{\mathbf{w}_{j,r}} L_{{S}}(\mathbf{W}^{(t)}) \nonumber  \\
    &= \mathbf{w}_{j,r}^{(t)} - \frac{\eta}{nm} \sum_{i=1}^n \ell_i'^{(t)}    \sigma'(\langle \mathbf{w}_{j,r}^{(t)}, \boldsymbol{\xi}_{i} \rangle)   j \tilde{y}_{i}\boldsymbol{\xi}_{i}  
     \nonumber\\
     &\quad - \frac{\eta}{nm} \sum_{i=1}^n \ell_i'^{(t)}  \sigma'(\langle \mathbf{w}_{j,r}^{(t)},  y_{i} \boldsymbol{\mu}  \rangle) j y_i \Tilde{y}_i \boldsymbol{\mu} ,   \label{eq:gdupdate} 
\end{align}
where the loss derivative ${\ell}_i'^{(t)} \coloneqq \ell'( f (\bW^{(t)}, \bx_i),  \tilde{y}_i)  =-  \frac{ 1}{1 + \exp(\tilde{y}_i {f}( \bW^{(t)}, \bx_i))}$. 

\textbf{Evaluation.}
We characterize the generalization by evaluating the 0-1 error on the test distribution $\gD_{\rm test}$:
\begin{align*}
    L_{D}^{0-1}(\bW)  = \sP_{(\bx, y) \sim \gD_{\rm test}} ( y \cdot f(\bW, \bx) < 0)
\end{align*}
$\gD_{\rm test}$ mainly follows the settings of $\gD_{\rm tr}$; however, to simulate spurious features in real-world scenarios, for any $({\bf x}, y) \in \gD_{\rm test}$, we define ${\bf x} = [y \bmu, \bxi + \bzeta]$, where $\bxi \sim \Unif(\{ \bxi_i\}_{i=1}^n)$ and $\bzeta \sim \gN(0, \sigma_\xi^2 \bI)$.
In practice, spurious features exist, which occur in both the training and test set but lack causal relationships with the ground-truth label $y$~\citep{sagawa2020investigation,zhou2021examining,singla2021salient,izmailov2022feature}.
We consider the label-independent noise $\bxi$ in the training set as spurious features and randomly incorporate them into the test distribution.

\textbf{Signal-Noise Decomposition.} 
In our analysis, we utilize a proof technique termed \emph{signal-noise decomposition}, which has been widely adopted by \citep{li2019towards,allen2020towards,allen2022feature,cao2022benign}. 
The signal-noise decomposition breaks down the weight ${\bf w}^{(t)}_{j,r}$ into signal and noise components.
Formally, we express:
\begin{align}
    \mathbf{w}_{j,r}^{(t)} &= \mathbf{w}_{j,r}^{(0)} + j  \gamma_{j,r}^{(t)}\| \boldsymbol{\mu} \|^{-2}_2   \boldsymbol{\mu} + \sum_{i=1}^n \rho_{j,r,i}^{(t)}  \|  {\boldsymbol{\xi}}_i \|^{-2}_2   {\boldsymbol{\xi}}_i, \label{eq:sn_decomp}
\end{align}
where $\gamma_{j,r}^{(t)}$ and $\rho_{j,r,i}^{(t)}$ represent the signal and noise coefficients, respectively.
The normalization factors $\| \boldsymbol{\mu} \|^{-2}_{2}$ and $\| \boldsymbol{\xi}_i \|^{-2}_{2} $ ensure that $\gamma_{j,r}^{(t)} \approx \langle \mathbf{w}_{j,r}^{(t)}, \boldsymbol{\mu} \rangle $, and $\rho_{j,r,i}^{(t)} \approx \langle \mathbf{w}_{j,r}^{(t)}, \boldsymbol{\xi}_i \rangle $. 
Naturally, $\gamma_{j,r}^{(t)}$ characterizes the process of signal learning, while $\rho_{j,r,i}^{(t)}$ captures the memorization of noise.


To facilitate a finer-grained analysis of the evolution of the noise coefficients, we introduce the notations $\overline{\rho}_{j,r,i}^{(t)} \coloneqq \rho_{j,r,i}^{(t)}\mathds{1} (\rho_{j,r,i}^{(t)} \geq 0)$, $\underline{\rho}_{j,r,i}^{(t)} \coloneqq \rho_{j,r,i}^{(t)} \mathds{1} (\rho_{j,r,i}^{(t)} \leq 0)$, following \cite{cao2022benign}.
Consequently, the weight decomposition can be further expressed as: 
\begin{align}
    \mathbf{w}_{j,r}^{(t)} & = \mathbf{w}_{j,r}^{(0)} + j   \gamma_{j,r}^{(t)}   \| \boldsymbol{\mu} \|_2^{-2}  \boldsymbol{\mu} \nonumber\\
    &\quad+   \sum_{ i = 1}^n \overline{\rho}_{j,r,i}^{(t) }  \| {\boldsymbol{\xi}}_i \|_2^{-2}    {\boldsymbol{\xi}}_{i} 
      + \sum_{ i = 1}^n \underline{\rho}_{j,r,i}^{(t) }  \| {\boldsymbol{\xi}}_i \|_2^{-2}    {\boldsymbol{\xi}}_{i}. \label{eq:w_decomposition}
\end{align}
This decomposition translates the dynamics of weights into dynamics of signal and noise coefficients (See Lemma \ref{lemma:coefficient_iterative}).
\section{Main Results}
\label{sect:main}

Before presenting our main results, we first state our main condition.

\textbf{Notations.}
We denote $\SNR \coloneqq { \| \bmu\|_2}/({\sigma_\xi \sqrt{d}})$ to be the signal-to-noise ratio and $T^* = \widetilde\Theta(\eta^{-1} \epsilon^{-1} nm \sigma_\xi^{-1} d^{-1})$ to be the maximum iterations for any given $\epsilon > 0$.

\begin{condition} \label{ass:main}
Suppose there exists a sufficiently large constant $C$ such that the following holds.
\begin{enumerate}[topsep=0in,leftmargin=0.15in]
    \item The signal-to-noise ratio and label flipping probability satisfy $n \cdot \SNR^2 = \Theta(1), \tau_+, \tau_{-}  = \Theta(1)$.
    \item The dimension of a single data patch $d$ satisfies {$d \geq C \max \big\{n^2 \log(nm/\delta) \log(T^*)^2, n \| \bmu \|_2 \sigma_\xi^{-1} \sqrt{\log(n/\delta)} \big\}$}.
    \item The size of training sample $n$ and model width $m$ satisfy $m \geq C \log(n/\delta), n \geq C \log(m/\delta)$.
    \item The magnitude of signal patch $\| \bmu \|_2$ satisfies $\| \bmu \|_2^2 \geq C \sigma_\xi^2 \log(n/\delta)$.
    \item The standard deviation $\sigma_0$ of the Gaussian distribution for weights initialization satisfies $\sigma_0 \leq C^{-1} \min \big\{ \sqrt{n} \sigma_\xi^{-1} d^{-1} ,  \| \bmu \|_2^{-1} \log(m/\delta)^{-1/2} \big\}$.
    \item The positive constant learning rate $\eta$ satisfies $\eta \leq  C^{-1} \min \big\{ \sigma_\xi^{-2} d^{-3/2} n^2 m \sqrt{\log(n/\delta)}, \sigma_\xi^{-2} d^{-1}  n\big\}$.
\end{enumerate}
\end{condition}
\textbf{Remarks on Condition~\ref{ass:main}.}
We first require the signal-to-noise ratio $n \cdot \SNR^2$ and label flipping probability $\tau_+, \tau_{-}$ to be of constant order. Such a condition is critical for the subsequent characterization of the two-stage behaviors. Other conditions, including sufficiently large $d, m, n, \| \bmu \|_2$ and sufficiently small $\sigma_0, \eta$, are commonly adopted in existing analysis \citep{cao2022benign,kou2023benign,meng2023benign}, to ensure sufficient over-parameterization and that the training loss converges under gradient descent.

\subsection{Feature Learning Process with Label Noise}
\label{sect:feat_noise}

In this subsection, we analyze the feature learning process of neural networks under label noise. 
We identify two stages where the learning dynamics exhibit distinct behaviors.

\textbf{Stage I. Model Fits Clean Data.}
\cref{lemma:phase1_main} characterizes the learning outcome at the end of Stage I.


\begin{theorem}
\label{lemma:phase1_main}
Under Condition~\ref{ass:main}, there exists {$T_1 = \Theta \big(\eta^{-1} nm \sigma_\xi^{-2}d^{-1} \big)$} such that $\orho_{\tilde y_i, r, i}^{(T_1)} = \Theta(1)$ for all $i \in [n]$, $r \in [m]$ with $\langle \bw^{(0)}_{\tilde y_i, r}, \bxi_i \rangle \geq 0$ and $\gamma_{j,r}^{(T_1)} = \Theta(1)$ for all $j = \pm1, r \in [m]$, and 
\begin{enumerate}[leftmargin=0.3in]
    \item $\gamma_{j,r}^{(T_1)} > \orho_{\tilde y_i, r, i}^{(T_1)}$ for all $j = \pm 1,r \in [m],i \in  [n]$. 
    \item  All clean samples $i \in\gS_t$ satisfy $\tilde y_i f (\bW^{(T_1)}, \bx_i) \geq 0$. 
    \item  All noisy samples $i \in \gS_{f}$ satisfy $\tilde y_i f(\bW^{(T_1)}, \bx_i) \leq 0$.
\end{enumerate}
\end{theorem}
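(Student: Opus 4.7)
The plan is to work entirely through the signal-noise decomposition in \eqref{eq:w_decomposition} and track the scalar coefficients $\gamma_{j,r}^{(t)}$, $\orho_{j,r,i}^{(t)}$, $\urho_{j,r,i}^{(t)}$ rather than the full weight vectors, using the iterative update rules that come from plugging \eqref{eq:w_decomposition} into \eqref{eq:gdupdate} (this is what Lemma~\ref{lemma:coefficient_iterative} encodes). Before starting the dynamics, I would establish a handful of high-probability initialization facts: $\|\bxi_i\|_2^2 = \Theta(\sigma_\xi^2 d)$, $|\langle \bxi_i, \bxi_{i'}\rangle| = \tilde O(\sigma_\xi\sqrt{d}\cdot\sigma_\xi)$ for $i\ne i'$, $|\langle \bw_{j,r}^{(0)},\bmu\rangle| = \tilde O(\sigma_0\|\bmu\|_2)$, and $|\langle \bw_{j,r}^{(0)},\bxi_i\rangle| = \tilde O(\sigma_0\sigma_\xi\sqrt{d})$. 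Under Condition~\ref{ass:main} these initial inner products are all much smaller than $1$, so the initial output $f(\bW^{(0)},\bx_i) = o(1)$ and thus $|\ell_i'^{(0)}| = \Theta(1)$.

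The main body would be a bootstrap/induction argument on $t\le T_1$ maintaining: (i) $\gamma_{j,r}^{(t)},\orho_{\tilde y_i,r,i}^{(t)}$ stay in $[0,O(1)]$; (ii) $|\urho_{j,r,i}^{(t)}|$ stays much smaller, because its update only fires when $\langle \bw_{j,r}^{(t)},\bxi_i\rangle<0$, which is rare after the noise coefficient has grown; (iii) the cross-terms $\sum_{i'\ne i}\rho_{j,r,i'}^{(t)}\langle\bxi_{i'},\bxi_i\rangle/\|\bxi_{i'}\|_2^2$ remain $\tilde O(n/\sqrt d)\ll 1$; (iv) $|\ell_i'^{(t)}| = \Theta(1)$. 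Under (i)--(iv), the update of a noise coefficient for $(j,r,i)=(\tilde y_i,r,i)$ with $\langle\bw_{\tilde y_i,r}^{(0)},\bxi_i\rangle\ge0$ simplifies to a constant increment of order $\eta\sigma_\xi^2 d/(nm)$ per step, because $\sigma'(\langle\bw_{\tilde y_i,r}^{(t)},\bxi_i\rangle)=1$ is preserved once $\orho_{\tilde y_i,r,i}^{(t)}$ dominates the initial inner product. Similarly the signal coefficient receives a per-step increment of order $\eta\|\bmu\|_2^2/(nm)\cdot (1-2\tau_\pm)\cdot (n/2)$ from the net count of clean-minus-flipped samples on the active side, because $\tau_+,\tau_-<1/2$ makes the net sign positive. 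Summing these linear growth estimates for $T_1=\Theta(\eta^{-1}nm\sigma_\xi^{-2}d^{-1})$ steps gives $\orho_{\tilde y_i,r,i}^{(T_1)}=\Theta(1)$ and $\gamma_{j,r}^{(T_1)}=\Theta(n\cdot\SNR^2)=\Theta(1)$, matching the theorem.

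For the strict inequality $\gamma_{j,r}^{(T_1)}>\orho_{\tilde y_i,r,i}^{(T_1)}$ I would compare the per-step increment ratio: it is $\frac{(1-2\tau)n\|\bmu\|_2^2/2}{\|\bxi_i\|_2^2} = \Theta((1-2\tau)n\SNR^2)$, which by choosing the hidden constant in $n\cdot\SNR^2=\Theta(1)$ sufficiently large (recall Condition~\ref{ass:main} says \emph{there exists a sufficiently large constant $C$}) exceeds $1$. This ratio is preserved step by step during Stage~I, so the inequality transfers to the coefficients at $t=T_1$. Items~2 and~3 then follow from evaluating $f$ via the decomposition: for any training point $\bx_i$,
\begin{align*}
\tilde y_i f(\bW^{(T_1)},\bx_i) \approx \tilde y_i y_i \cdot \Big(\tfrac{1}{m}\sum_r \gamma_{y_i,r}^{(T_1)}\Big) + \tfrac{1}{m}\sum_r \orho_{\tilde y_i,r,i}^{(T_1)} + (\text{small cross/init terms}),
\end{align*}
where the first term comes from the signal patch (it sits in the $j=y_i$ branch, hence the $\tilde y_i y_i$ sign) and the second from the noise patch (it only accumulates in the $j=\tilde y_i$ branch). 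For a clean sample $\tilde y_i y_i=+1$ so both terms are nonnegative; for a noisy sample $\tilde y_i y_i=-1$ and the signal term, whose magnitude exceeds the noise term by the ratio established above, pushes the whole expression negative.

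The hard part is step~(iii)--(iv) of the bootstrap, i.e., simultaneously controlling the ReLU activation patterns, the off-diagonal contamination from other $\bxi_{i'}$, and the boundedness of $\ell_i'^{(t)}$ over the entire window $[0,T_1]$. In particular one has to rule out pathological co-evolution where $\urho$ grows large enough to flip $\sigma'(\langle\bw_{j,r}^{(t)},\bxi_i\rangle)$ back to $0$, or where the approximation $|\ell_i'|=\Theta(1)$ degrades before $T_1$. I would handle this by sharpening the per-step increment bounds into two-sided estimates, so that the signal/noise coefficients grow at most linearly (ruling out blow-up) and the running output remains $O(1)$ throughout Stage~I, which is exactly what keeps $|\ell_i'^{(t)}|$ bounded away from $0$ and closes the induction.
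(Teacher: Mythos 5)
Your proposal follows the same overall architecture as the paper's Appendix proof: signal--noise decomposition, the coefficient update rules (Lemma~\ref{lemma:coefficient_iterative}), high-probability initialization bounds (Lemmas~\ref{lemma:data_innerproducts} and~\ref{lem:prelbound}), a bootstrap that keeps the ReLU activation patterns monotone (Lemma~\ref{lemma:subset_xi}) and the loss derivatives $\Theta(1)$ (Lemma~\ref{lemma:ell_bound}), and then linear-in-$t$ growth estimates for $\gamma_{j,r}^{(t)}$ and $\orho_{\tilde y_i,r,i}^{(t)}$ closed off via the prediction sandwich (Lemma~\ref{lemma:yfi_bound}). Items 2 and 3 are derived exactly as you outline.

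There is, however, one genuine gap, and it sits at the heart of item 1. You argue that $\gamma_{j,r}^{(T_1)}>\orho_{\tilde y_i,r,i}^{(T_1)}$ can be forced ``by choosing the hidden constant in $n\cdot\SNR^2=\Theta(1)$ sufficiently large.'' But the hidden constant cannot be taken freely large: the same bootstrap you invoke requires the loss derivatives $|\ell_i'^{(t)}|$ to stay bounded \emph{below} by a fixed $C_\ell$ over all of $[0,T_1]$, and by Lemma~\ref{lemma:yfi_bound} this forces $\frac{1}{m}\sum_r(\gamma_{\tilde y_i,r}^{(t)}+\orho_{\tilde y_i,r,i}^{(t)})$ to stay below a fixed threshold of order $\log(C_\ell^{-1}-1)$. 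Since the growth rate of $\gamma$ is $\Theta(n\cdot\SNR^2)$ times that of $\orho$, an arbitrarily large $n\cdot\SNR^2$ either drives the prediction margin past that threshold before $\orho$ reaches a constant (breaking the $|\ell_i'^{(t)}|\geq C_\ell$ invariant) or forces $T_1$ so small that $\orho^{(T_1)}$ no longer dominates the initialization/cross-term corrections. So the constraint on $n\cdot\SNR^2$ is two-sided, not one-sided. The paper handles this explicitly: it parametrizes $T_1 = C_2\eta^{-1}C_\ell^{-1}nm\sigma_\xi^{-2}d^{-1}$, derives the lower bound $n\cdot\SNR^2 \geq 2.07 C_\ell^{-1}+0.002C_2^{-1}$ from item~3 and an upper bound $n\cdot\SNR^2 \leq 2\log(C_\ell^{-1}-1)C_2^{-1} -2.02C_\ell^{-1}-0.002C_2^{-1}$ from the $|\ell_i'^{(t)}|\geq C_\ell$ requirement, and then checks feasibility with concrete choices (e.g., $C_\ell=0.4$, $C_2=1/31$, yielding a narrow window $n\cdot\SNR^2\in[5.237,5.49]$). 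You acknowledge needing to keep $|\ell_i'^{(t)}|$ bounded, but you treat it as a routine two-sided-estimate issue decoupled from the size of $n\cdot\SNR^2$; in fact the coupling is the whole difficulty, and without the feasibility check the argument for item 1 does not close.

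A smaller inaccuracy: the per-step signal increment is not proportional to $(1-2\tau_\pm)n/2$ but to $((1-\tau_\pm)C_\ell - \tau_\pm)n/2$ in the lower bound and $(1-\tau_\pm)n/2$ in the upper bound, because the clean and flipped samples may carry different loss derivatives within the window $[C_\ell,1]$; this is why the paper ends up imposing $\tau_\pm\leq 0.02C_\ell/(1+C_\ell)$ rather than the looser $\tau_\pm<1/2$ you use, and that extra tightness also feeds into the two-sided constraint above.
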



\cref{lemma:phase1_main} states that at the end of Stage I, the signal coefficients $\gamma_{j,r}^{(T_1)}$ are larger than the noise coefficients $\orho_{\tilde y_i, r, i}^{(T_1)}$, suggesting signal learning dominates the feature learning process.
During this stage, the model correctly classifies all clean samples, i.e., $\forall i \in \gS_t, \tilde y_if(\bW^{(T_1)}, \bx_i) \geq 0$, and classifies all noisy samples to the ground-truth classes, i.e., $\forall i \in \gS_f, y_i f(\bW^{(T_1)}, \bx_i) = - \tilde y_if(\bW^{(T_1)}, \bx_i) \geq 0$.

\textbf{Insights from \cref{lemma:phase1_main}.}
In Stage I, the model perfectly fits all the clean samples while disregarding the noisy ones.
The model initially learns the signal from the data, leading to an optimal generalization ability.

\textbf{Stage II. Loss Converges and Model Fits Noisy Data.}
\cref{lemma:phase2_main} formalizes the learning behavior in Stage II as the training loss converges.


\begin{theorem}
\label{lemma:phase2_main}
Under Condition~\ref{ass:main}, for arbitrary $\epsilon > 0$, there exists $t^* \in [T_1, T^*]$, such that training loss converges, i.e.,  $L_S (\bW^{(t^*)}) \leq \epsilon$ and 
\begin{enumerate}[leftmargin=0.3in]
    \item  All clean samples, i.e., $i \in\gS_t$, it holds that $\tilde y_i f (\bW^{(t^*)}, \bx_i) \geq 0$. 
    \item There exists a constant $0 < \tau' \leq \frac{\tau_++\tau_{-}}{2}$ such that there are $\tau' n$ noisy samples, i.e., $i \in \gS_{f}$ that satisfy  $\frac{1}{m} \sum_{r=1}^m \orho_{\tilde y_i, r,i}^{(t^*)} > \frac{1}{m} \sum_{r=1}^m \gamma_{-\tilde y_i,r}^{(t^*)} $. 
    and $\tilde y_i f(\bW^{(t^*)}, \bx_i) \geq 0$. 
    \item Test error $L_{D}^{0-1}(\bW^{(t^*)}) \geq 0.5 \min\{ \tau_+ , \tau_{-}\}$. 
\end{enumerate}
\end{theorem}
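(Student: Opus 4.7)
The plan is to extend the dynamical analysis past the Stage-I endpoint $T_1$ using the signal-noise decomposition \eqref{eq:w_decomposition} together with the iterative equations for $\gamma_{j,r}^{(t)}$ and $\orho_{j,r,i}^{(t)}$ referenced in the paper. The key asymmetry inherited from \cref{lemma:phase1_main} is that $|\ell_i'^{(t)}|$ stays bounded below on the noisy set $\gS_f$ (since $\tilde y_i f(\bW^{(T_1)},\bx_i)\leq 0$) while decaying to zero on the clean set $\gS_t$; consequently, during Stage II almost all gradient mass is funneled into growing the noise coefficients $\orho_{\tilde y_i,r,i}^{(t)}$ of the noisy samples, while $\gamma_{j,r}^{(t)}$ stays $\Theta(1)$ but ceases to grow appreciably.

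For conclusion (1), I would apply the descent-plus-gradient-lower-bound argument of \citep{cao2022benign,kou2023benign}. Condition \ref{ass:main}'s upper bound on $\eta$ controls the second-order term in the Taylor expansion of $L_S$, producing $L_S(\bW^{(t+1)})\leq L_S(\bW^{(t)})-c\eta\|\nabla L_S(\bW^{(t)})\|_F^2$; combining this with a lower bound on $\|\nabla L_S\|_F^2$ derived from the near-orthogonality $|\langle\bxi_i,\bxi_j\rangle|=\widetilde{O}(\sigma_\xi^2\sqrt{d})$ (standard under our lower bound on $d$) and summing over $[T_1,T^*]$ with $T^*=\widetilde\Theta(\eta^{-1}\epsilon^{-1}nm\sigma_\xi^{-2}d^{-1})$ locates some $t^*$ with $L_S(\bW^{(t^*)})\leq\epsilon$. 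Conclusion (2) follows because the aligned signal coefficients $\gamma_{\tilde y_i,r}^{(t)}$ remain $\Theta(1)$ from Stage I onward, so each clean $i\in\gS_t$ obeys $\tilde y_i f(\bW^{(t^*)},\bx_i)\approx\tfrac{1}{m}\sum_r(\gamma_{\tilde y_i,r}^{(t^*)}+\orho_{\tilde y_i,r,i}^{(t^*)})>0$ up to subleading cross terms.

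Conclusion (3) is the analytical core. Since $L_S(\bW^{(t^*)})\leq\epsilon$, each individual logistic loss is at most $n\epsilon$, forcing $\tilde y_if(\bW^{(t^*)},\bx_i)\geq\log(1/(n\epsilon)-1)\geq 0$ for $\epsilon$ small. For $i\in\gS_f$ we have $\tilde y_i=-y_i$, so the signal patch $y_i\bmu$ only activates $\bw_{-\tilde y_i,r}$; substituting \eqref{eq:w_decomposition} into $f(\bW^{(t^*)},\bx_i)$ and discarding initialization, opposite-sign $\urho$, and cross-noise terms (all $o(1)$ by Condition \ref{ass:main}) yields
\begin{align*}
\tilde y_i f(\bW^{(t^*)},\bx_i) \approx \tfrac{1}{m}\sum_{r=1}^m\orho_{\tilde y_i,r,i}^{(t^*)} - \tfrac{1}{m}\sum_{r=1}^m\gamma_{-\tilde y_i,r}^{(t^*)}.
\end{align*}
The requirement $\tilde y_if\geq 0$ then forces the noise-dominance inequality on every noisy sample meeting the individual loss bound, yielding the set of size $\tau'n$ with $\tau'\leq(\tau_++\tau_-)/2$, where the upper bound comes from Hoeffding concentration of $|\gS_f|/n$ under Condition \ref{ass:main}.

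For conclusion (4), exploit the test-distribution structure $\bx=[y\bmu,\bxi+\bzeta]$ with $\bxi\sim\Unif(\{\bxi_j\}_{j=1}^n)$. The joint event $\{y=y_i,\ \bxi=\bxi_i\}$ for a noisy $i$ in the dominant set implies misclassification: the signal contributes $+\tfrac{1}{m}\sum_r\gamma_{y,r}^{(t^*)}$ to $yf(\bW^{(t^*)},\bx)$, whereas the noise patch activates the weights $\bw_{\tilde y_i,r}$ that memorized $\bxi_i$ during training (these belong to the opposite class since $\tilde y_i=-y_i=-y$), contributing $-\tfrac{1}{m}\sum_r\orho_{\tilde y_i,r,i}^{(t^*)}$, which dominates by conclusion (3). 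Summing $\tfrac{1}{2n}$ over $i\in\gS_f$ and using $|\gS_f|\approx(\tau_++\tau_-)n/2$ gives $L_D^{0-1}(\bW^{(t^*)})\geq(\tau_++\tau_-)/4\geq 0.5\min\{\tau_+,\tau_-\}$. The main obstacle is keeping the noise-dominance of (3) \emph{simultaneously} across all noisy samples over the window $[T_1,t^*]$: this requires parallel tensor-power growth bounds on the $\orho_{\tilde y_i,r,i}^{(t)}$, control of cross-noise activations (subleading thanks to $d\geq Cn^2\log$-terms), and precluding pathologies where a noisy sample is fit via residual signal or another sample's noise rather than its own memorization; the amortized-growth machinery of \citep{cao2022benign,kou2023benign}, adapted to the label-flipping sign pattern $jy_i\tilde y_i$ in \eqref{eq:gdupdate}, should supply it.
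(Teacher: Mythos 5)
Your proposal reproduces the paper's high-level structure—push the Stage-I asymmetry into Stage II, use the signal-noise decomposition to relate margins to coefficient gaps, and exploit the $\Unif(\{\bxi_i\})$ structure of $\gD_{\rm test}$—but parts (1) and (4) diverge from the paper and in one place you mischaracterize the cited machinery.

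For convergence in (1), the paper does not use a ``descent-plus-gradient-lower-bound'' scheme. It instead constructs a margin solution $\bW^*=\bW^{(0)}+5\log(2/\epsilon)\sum_i\|\bxi_i\|_2^{-2}\bxi_i\sone(\tilde y_i=j)$, proves the key lower bound $\tilde y_i\langle\nabla f(\bW^{(t)},\bx_i),\bW^*\rangle\geq\log(2/\epsilon)$ (Lemma~\ref{lemma:ygrad_lower}), and telescopes on $\|\bW^{(t)}-\bW^*\|_F^2$, using the convexity of the logistic loss and the \emph{upper} bound $\|\nabla L_S\|_F^2=O(\max\{\|\bmu\|_2^2,\sigma_\xi^2 d\})L_S$ (from Kou et al.\ Lemma~D.4) to control the second-order term. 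You invert the role of that lemma: attributing a gradient \emph{lower} bound to \citep{cao2022benign,kou2023benign} is a misattribution. Your descent route could conceivably be made to work via a PL-type inequality $\|\nabla L_S\|_F^2\gtrsim L_S^2$ derived from near-orthogonality, but you state only $\|\nabla L_S\|_F^2 \geq$ (something) without identifying the $L_S^2$ scaling, and a naive constant lower bound combined with the descent inequality would drive $L_S$ negative. The paper's $\|\bW^{(t)}-\bW^*\|_F^2$ telescoping is the clean route here and you should adopt it.

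For (3), your per-sample bound ($\ell_i\leq n\epsilon$ for each $i$ since the loss is nonnegative, then invert the logistic) is simpler than the paper's contradiction argument (which defines $\gI'$ as the noisy samples with gap $\leq C_\epsilon$ and shows $L_S\geq\tau'\log 2$ if $|\gI'|=\tau'n$). Both are valid; yours requires taking $\epsilon$ small enough so that $\log(1/(n\epsilon))$ exceeds the $1/C_1$ slack in Lemma~\ref{lemma:yfi_bound_main}, which is legitimate since $\epsilon$ is arbitrary and $T^*$ scales as $\epsilon^{-1}$.

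For (4), you reach the tighter constant $(\tau_++\tau_-)/4$ versus the paper's $0.5\min\{\tau_+,\tau_-\}$ (the paper conditions on $y$ and takes the weaker min over the two conditional bounds); your bound implies theirs. However, you suppress the independent Gaussian component $\bzeta$ entirely. The paper must still show that $\langle\bw_{-y,r}^{(t^*)},\bxi_i+\bzeta\rangle\geq\orho_{-y,r,i}^{(t^*)}-1/C_3$ and $\langle\bw_{y,r}^{(t^*)},\bxi_i+\bzeta\rangle\leq 1/C_3$ hold with the relevant probability over $\bzeta\sim\gN(0,\sigma_\xi^2\bI)$; these are not deterministic and require bounding $\langle\bw_{j,r}^{(t^*)},\bzeta\rangle$. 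You should state explicitly that the misclassification event is only guaranteed up to a $\bzeta$-probability factor, and verify that the coefficient gap from (3) and the $o(1)$ cross terms (which you do flag) leave enough slack to absorb the Gaussian fluctuation.
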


\cref{lemma:phase2_main} states that in Stage II, as the training loss converges, the model continues to make correct predictions on clean samples, consistent with Stage I; however, for certain noisy samples, the averaged noise coefficient across all neurons surpasses the averaged signal coefficient, i.e., $\frac{1}{m} \sum_{r=1}^m \orho_{\tilde y_i, r,i}^{(t^*)} > \frac{1}{m} \sum_{r=1}^m \gamma_{-\tilde y_i,r}^{(t^*)}$.
Consequently, for these noisy samples, the model's predictions align with the noisy observed labels $\tilde y$.
In addition, if evaluating the model on the test distribution introduced in \cref{sect:prelim}, it results in a constant, non-vanishing test error, which is lower-bounded by the label flipping probability in the training set.

\textbf{Insights from \cref{lemma:phase2_main}.}
With the presence of label noise and sufficient training iterations, the model inevitably learns the noise features from the data, which leads to degraded generalization.



\subsection{Theoretical Supports for Techniques: Early Stopping and Sample Selection.}

Our two-stage picture provides theoretical support for two widely used techniques to address label noise: \emph{early stopping}~\citep{liu2020elr,bai2021earlystopping} and \emph{sample selection}~\citep{NEURIPS2018_a19744e2,han2020sigua}.
Intuitively, early stopping aims to stop training before the loss converges, preventing overfitting to noisy samples. 
On the other hand, sample selection leverages the small-loss criterion~\cite{NEURIPS2018_a19744e2} to distinguish clean samples from noisy ones, assuming that samples with smaller losses are more likely to be clean.
\cref{prop:early_stopping} formally supports the effectiveness of these two strategies.

\begin{proposition}[Early stopping and sample selection]
\label{prop:early_stopping}
Under the same conditions as in Theorem \ref{lemma:phase1_main}, we have that
\begin{itemize}[leftmargin=0.2in]
    \item \textbf{Early stopping}: suppose the training is terminated early at $T_1$, then the test loss is bounded by $L_D^{0-1}(\bW^{(T_1)}) \leq \exp(-dn^{-1}/C')$ for some constant $C' > 0$.

    \item \textbf{Sample selection}: At iteration $T_1$, clean and noisy samples can be well-separated based on the training loss, i.e., for all $i \in \gS_t$, $\ell_i^{(T_1)} \leq \log(2)$ and for all $i \in \gS_f$, $\ell_i^{(T_1)} \geq \log(2)$.
\end{itemize}
\end{proposition}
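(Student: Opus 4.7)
The proof splits into two parts of quite different character: the sample-selection part is essentially a rephrasing of Theorem~\ref{lemma:phase1_main}, while the early-stopping part requires a fresh concentration argument against the test distribution.

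\textbf{Part 2 (sample selection) first, because it is essentially immediate.} The logistic loss satisfies $\ell(f,\tilde y) = \log(1+\exp(-\tilde y f)) \leq \log 2$ iff $\tilde y f \geq 0$, with equality exactly at $\tilde y f = 0$. Parts~2 and~3 of Theorem~\ref{lemma:phase1_main} give precisely $\tilde y_i f(\bW^{(T_1)},\bx_i) \geq 0$ for $i\in\gS_t$ and $\tilde y_i f(\bW^{(T_1)},\bx_i) \leq 0$ for $i\in\gS_f$, so the claimed $\log 2$ separation follows by monotonicity of $\ell$ with no further work.

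\textbf{Part 1 (early stopping) is where the work lies.} I will expand $y\cdot f(\bW^{(T_1)},\bx)$ for a test point $\bx = [y\bmu,\bxi+\bzeta]$ via the signal-noise decomposition \eqref{eq:w_decomposition}. By the near-orthogonality of $\bmu$ and $\{\bxi_i\}$ (Condition~\ref{ass:main}(2)), the inner products collapse up to $o(1)$ cross terms: $\langle \bw_{j,r}^{(T_1)}, y\bmu\rangle \approx jy\gamma_{j,r}^{(T_1)}$, so the signal part of $y f$ is $\frac{1}{m}\sum_r\bigl(\sigma(y\gamma_{+1,r}^{(T_1)})y + \sigma(y\gamma_{-1,r}^{(T_1)})y\bigr) = \Theta(1)$ by the $\gamma_{j,r}^{(T_1)}=\Theta(1)$ guarantee of Theorem~\ref{lemma:phase1_main}. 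For the noise patch, write $\langle \bw_{j,r}^{(T_1)}, \bxi+\bzeta\rangle = \langle \bw_{j,r}^{(T_1)}, \bxi_{i_0}\rangle + \langle \bw_{j,r}^{(T_1)}, \bzeta\rangle$ where $\bxi = \bxi_{i_0}$ for a random $i_0\in[n]$. The first summand is approximately $\rho_{j,r,i_0}^{(T_1)}$, bounded in magnitude by $\orho_{\tilde y_{i_0},r,i_0}^{(T_1)} = \Theta(1)$, which Theorem~\ref{lemma:phase1_main}(1) guarantees is strictly smaller than $\gamma_{j,r}^{(T_1)}$; hence the signal still wins against this deterministic part of the noise.

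\textbf{The main technical step is controlling the fresh Gaussian $\bzeta$.} Conditional on $\bW^{(T_1)}$, $\langle \bw_{j,r}^{(T_1)}, \bzeta\rangle \sim \gN(0, \sigma_\xi^2 \|\bw_{j,r}^{(T_1)}\|_2^2)$. Using \eqref{eq:w_decomposition} together with the $\Theta(1)$ bounds on $\gamma_{j,r}^{(T_1)}, \orho_{\cdot}^{(T_1)}$ and the initialization bound on $\|\bw_{j,r}^{(0)}\|_2$, I will show $\|\bw_{j,r}^{(T_1)}\|_2^2 = O\bigl(n/(\sigma_\xi^2 d)\bigr)$, so the Gaussian component has standard deviation $O(\sqrt{n/d})$. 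Since $\sigma$ is $1$-Lipschitz, the total perturbation to $y f$ from $\bzeta$ is at most $\frac{1}{m}\sum_{j,r}|\langle \bw_{j,r}^{(T_1)},\bzeta\rangle|$, and a standard Gaussian tail bound followed by a union bound over the $2m$ neurons yields
\begin{equation*}
    \sP\Bigl(\tfrac{1}{m}\sum_{j,r}|\langle \bw_{j,r}^{(T_1)},\bzeta\rangle| \geq c_0\Bigr) \leq \exp\bigl(-d/(C'n)\bigr),
\end{equation*}
for a sufficiently small constant $c_0$ chosen as a fraction of the signal margin. Combining the three contributions gives $y f(\bW^{(T_1)},\bx) > 0$ except on an event of probability $\exp(-d/(C'n))$, which is the claimed bound on $L_D^{0-1}(\bW^{(T_1)})$.

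\textbf{Expected obstacle.} The cleanest-looking step, bounding $\|\bw_{j,r}^{(T_1)}\|_2$, is in fact the delicate part: the decomposition has $2n+1$ components, the $\bxi_i$'s are nearly but not exactly orthogonal, and one must keep the cross terms from $\bw_{j,r}^{(0)}$, $\bmu$, and the various $\bxi_{i'}\neq \bxi_{i_0}$ all simultaneously $o(1)$ under Condition~\ref{ass:main}. This is the same concentration bookkeeping already used in the proof of Theorem~\ref{lemma:phase1_main}, so I expect to invoke the intermediate inner-product estimates there (Lemma~\ref{lemma:coefficient_iterative} and the associated initialization lemmas) rather than redo them.
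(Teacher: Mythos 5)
Your proposal is correct and follows essentially the same route as the paper. Part~2 (sample selection) is handled identically: the paper says nothing more than ``follows easily from Theorem~\ref{lemma:phase1_main}'' and the one-line reduction $\ell(f,\tilde y)\leq\log 2\iff \tilde y f\geq 0$ is exactly what it has in mind. For Part~1, the paper also works from the signal--noise decomposition at $T_1$, the per-neuron margin $\gamma_{j,r}^{(T_1)}>\orho_{\tilde y_i,r,i}^{(T_1)}$ from Theorem~\ref{lemma:phase1_main}, and the weight-norm bound $\|\bw_{j,r}^{(T_1)}\|_2=\Theta(\sigma_\xi^{-1}d^{-1/2}n^{1/2})$, which is the same bound you identify as the ``delicate part.'' The one place you genuinely deviate is the concentration step against $\bzeta$: the paper reuses the Lipschitz concentration argument from the proof of Theorem~\ref{them:noiseless_stage2} (Vershynin's Theorem~5.2.2 applied to $g(\bzeta)=\sum_r\sigma(\langle\bw_{-y,r}^{(T_1)},\bxi_i+\bzeta\rangle)$), which requires computing the truncated-Gaussian expectation $\sE g(\bzeta)$, whereas you apply a per-neuron Gaussian tail and union-bound over the $2m$ neurons, treating the entire $\bzeta$-contribution as a small perturbation rather than splitting it into mean plus fluctuation. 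Both give $\exp(-\Theta(d/n))$; your version pays an extra additive $\log m$ in the exponent which is harmlessly absorbed since $d/n\gtrsim n\log(nm/\delta)$ under Condition~\ref{ass:main}(2). You correctly anticipate that the weight-norm bound needs the same inner-product bookkeeping as Theorem~\ref{lemma:phase1_main}; the paper does this via Lemma~\ref{lemma:data_innerproducts}, Lemma~\ref{lem:prelbound}, and the order of $\sum_i\orho_{j,r,i}^{(T_1)}=\Theta(n)$, so nothing fresh is needed beyond what is already established.
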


\cref{prop:early_stopping} implies that if training is stopped during Stage I $t \leq T_1$, before the loss converges, the test error can be upper bounded arbitrarily small under the condition that $d = \widetilde \Omega(n^2)$.
\cref{prop:early_stopping} also states that noisy samples tend to have higher loss values compared to clean samples, and there exists a hard threshold $\log(2)$, which allows for a perfect separation of clean samples from noisy ones.

\textbf{Remarks on \cref{prop:early_stopping}.}
We note that directly computing $T_1$ in real-world scenarios is not feasible due to the complexity of the training dynamics and unknown data distribution. 
However, our theory suggests the existence of a point $T_1$, beyond which further training may degrade generalization performance. 
This implies that validation accuracy can serve as a practical surrogate for identifying , which aligns with the common practice of early stopping at the point of maximum validation accuracy. 
Indeed, our theory explains the effectiveness of the common practice.





\subsection{Feature learning Process without Label Noise}
\label{sect:feat_without_noise}

For comparison, we also analyze the feature learning process of neural networks without label noise. 
The analysis follows a similar two-stage framework as the analysis conducted with label noise.

\textbf{Model Fits All Data and Generalizes Well.}
\cref{them:noiseless_main} characterizes the learning outcome without label noise.

\begin{theorem}
\label{them:noiseless_main}
Under Condition~\ref{ass:main} with $\tau_+, \tau_{-} = 0$, there exists $T_1 = \Theta(\eta^{-1} \epsilon^{-1} nm \sigma_\xi^{-2}d^{-1})$ such that for all $T_1\leq t \leq T^*$,
$y_i f( \bW^{(t)}, \bx_i) \geq 0$ for all $i \in [n]$. In addition, there exists a time $t^* \in [T_1, T^*]$ such that training loss converges, i.e., $L_S(\bW^{(t^*)})\leq \epsilon$ and 
\begin{enumerate}[leftmargin=0.3in]
    \item $y_i f( \bW^{(t^*)}, \bx_i) \geq 0$ for all $i \in [n]$,
    \item Test error is bounded as $L_D^{0-1}(\bW^{(t^*)}) \leq \exp\left(  \frac{d}{n} - \frac{n \| \bmu\|_2^4}{C_{D} \sigma_\xi^4 d} \right)$ for some constant $C_D > 0$.
\end{enumerate}
\end{theorem}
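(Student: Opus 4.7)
The plan is to recycle the Stage I machinery used for \cref{lemma:phase1_main} and observe that setting $\tau_+ = \tau_- = 0$ collapses $\gS_f = \emptyset$ and $\gS_t = [n]$, so the ``fit clean, ignore noisy'' picture becomes ``fit everything.'' Moreover, the Stage II overfitting mechanism disappears, since that phase was triggered by the need to memorize mislabeled points; without such points, the signal coefficients stay dominant for all $t \in [T_1, T^*]$ and correctness is preserved throughout.

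\textbf{First phase: growth of the signal and control of noise coefficients.} Using the signal-noise decomposition \eqref{eq:w_decomposition} together with the coefficient update (Lemma~\ref{lemma:coefficient_iterative}), I would carry out the standard two-sided tracking: prove upper bounds $\orho^{(t)}_{j,r,i}, |\urho^{(t)}_{j,r,i}| = O(1)$ (in fact $\tilde O(\sigma_0 \sigma_\xi \sqrt d)$ through most of the phase) and a matching lower bound $\gamma_{j,r}^{(t)} = \Omega(1)$ by time $T_1 = \Theta(\eta^{-1} nm \sigma_\xi^{-2} d^{-1})$. The absence of label flipping makes the signs $j y_i \tilde y_i = j$ in \eqref{eq:gdupdate} perfectly aligned with the class, so the signal update has no cancellation across samples. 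This gives $\gamma_{j,r}^{(t)}$ a coherent $\Theta(n \|\bmu\|_2^2 / m)$ push per step once enough neurons activate on $\bmu$, while the noise updates remain incoherent of order $\Theta(\sigma_\xi^2 d / (nm))$ per step per sample. Plugging these bounds into $y_i f(\bW^{(t)}, \bx_i) = \frac{1}{m}\sum_r [\sigma(\gamma_{y_i,r}^{(t)} + \langle \bw_{y_i,r}^{(0)}, y_i\bmu\rangle) + \sigma(\rho_{y_i,r,i}^{(t)} + \ldots)] - (\text{analogous negative term})$ and noting that at initialization the positive block dominates once $\gamma = \Theta(1)$, I obtain $y_i f \geq 0$ for all $i \in [n]$ and all $t \geq T_1$.

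\textbf{Convergence of the training loss to $\epsilon$.} Once the margin is positive, I would invoke the standard logistic-loss descent argument (analogous to the proof of \cref{lemma:phase2_main} but now without any competing noisy samples): since $L_S$ is convex in the output and the updates point in a correlated direction, an averaged-iterate bound gives $\min_{t\le T^*} L_S(\bW^{(t)}) \le \epsilon$ for $T^* = \tilde\Theta(\eta^{-1}\epsilon^{-1} nm\sigma_\xi^{-2} d^{-1})$. Pick $t^*$ as the first such iterate; the margin bound from the previous paragraph still applies, giving point~1.

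\textbf{Test error bound.} This is the step I expect to be the main obstacle. For a test point $\bx = [y\bmu,\, \bxi + \bzeta]$ I would decompose $y f(\bW^{(t^*)}, \bx)$ into a signal term $\frac{1}{m}\sum_r [\sigma(\gamma_{y,r}^{(t^*)}+\langle\bw_{y,r}^{(0)},y\bmu\rangle) - \sigma(-\gamma_{-y,r}^{(t^*)}+\langle\bw_{-y,r}^{(0)},y\bmu\rangle)]$, giving a positive contribution of order $\Omega(\gamma) = \Omega(1)$, and a spurious/noise term involving $\langle \bw^{(t^*)}_{j,r}, \bxi+\bzeta\rangle$. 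Expanding the inner product via \eqref{eq:w_decomposition}, the dangerous piece is $\sum_i \rho^{(t^*)}_{j,r,i} \langle\bxi_i,\bxi+\bzeta\rangle / \|\bxi_i\|_2^2$; since $\bzeta$ is independent Gaussian noise and the $\{\bxi_i\}$ are near-orthogonal under Condition~\ref{ass:main}, Hanson-Wright / Gaussian concentration bound this term by roughly $\tilde O(\sigma_\xi \sqrt d \cdot \max_i \rho / \|\bxi_i\|_2) = \tilde O(\sqrt{d/n}\cdot \rho_{\max})$. Misclassification therefore requires a deviation of order $\|\bmu\|_2 \gamma / (\sigma_\xi\sqrt{d/n} \rho_{\max})$ standard deviations; exponentiating the Gaussian tail and plugging in $\gamma = \Theta(1)$, $\rho_{\max} = \tilde O(\sigma_\xi^2 d / (n \|\bmu\|_2^2))$ (from the Stage I bookkeeping) produces the claimed bound $\exp(d/n - n\|\bmu\|_2^4 / (C_D \sigma_\xi^4 d))$. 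The delicate part is tracking the precise $\rho_{\max}$ at iteration $t^*$ rather than at $T_1$, since the loss has continued to decrease; here I would argue that because $\gS_f = \emptyset$ the per-sample loss derivative $|\ell_i'^{(t)}|$ shrinks geometrically once the margin is established, so the additional growth of $\rho$ between $T_1$ and $t^*$ only contributes a constant factor that is absorbed into $C_D$.
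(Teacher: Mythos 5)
Your Stage I/Stage II skeleton, the convexity-based convergence argument, and the Gaussian-concentration strategy for the test error are all in the same spirit as the paper's proof (Theorem~\ref{them:noiseless_stage2}). However, there are two genuine gaps.

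First, you assert that $\orho^{(t)}_{j,r,i}$ stays at $\widetilde O(\sigma_0\sigma_\xi\sqrt d)$ "through most of the phase," but your own per-step estimate $\Theta(\sigma_\xi^2 d/(nm))$ over $T_1 = \Theta(\eta^{-1} nm\sigma_\xi^{-2} d^{-1})$ steps gives $\orho^{(T_1)}_{y_i,r,i}=\Theta(1)$. In fact, under $n\cdot\SNR^2 = \Theta(1)$ both $\gamma^{(T_1)}_{j,r}$ and $\orho^{(T_1)}_{y_i,r,i}$ reach constant order simultaneously; noise is not suppressed in magnitude. This misreading propagates into your test-error heuristics, since the signal does not dominate the noise coefficient by a polynomial factor; it dominates only by a constant ratio of size $\Theta(n\cdot\SNR^2)$.

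Second, and more importantly, the test-error bound in the paper hinges on a time-invariance result that your argument never establishes: $\gamma_{j,r}^{(t)}/\sum_{i=1}^n\orho_{j,r,i}^{(t)}=\Theta(\SNR^2)$ uniformly over $T_1\le t\le T^*$, which in turn rests on the automatic-balancing lemma (Lemma~\ref{lemma:set_subset}, claims (1)--(2)) showing $|\ell_i'^{(t)}/\ell_k'^{(t)}|\le\widetilde C_\ell$. You wave at this by saying the loss derivatives "shrink geometrically" after the margin is established, but that does not control the ratio of $\gamma$ to $\sum_i\orho$: both quantities keep growing up to $\Theta(\log T^*)$ through Stage II, and without the balancing argument there is no reason the ratio stays at $\Theta(\SNR^2)$. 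The paper's final test-error calculation then combines this ratio control with a Lipschitz bound $\|\bw_{-y,r}^{(t^*)}\|_2=\Theta(\sigma_\xi^{-1}d^{-1/2}n^{-1/2})\sum_i\orho_{j,r,i}^{(t^*)}$, an explicit evaluation of $\sE\sum_r\sigma(\langle\bw_{-y,r}^{(t^*)},\bxi_i+\bzeta\rangle)$ using the truncated-Gaussian mean, and Gaussian Lipschitz concentration (Vershynin Thm.~5.2.2), producing the two competing terms $d/n$ and $n\|\bmu\|_2^4/(C_D\sigma_\xi^4 d)$ in the exponent. Your sketch replaces this with a Hanson-Wright-flavored back-of-the-envelope that does not account for the deterministic $\orho_{-y,r,i^*}^{(t^*)}$ contribution arising from $\bxi\sim\Unif(\{\bxi_i\})$ being a reused training noise vector, so the $d/n$ term never appears and the arithmetic does not recover the claimed bound.
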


\cref{them:noiseless_main} states that without label noise, all samples can be classified correctly at the end of training, suggesting that the model fits all the samples throughout the training process. 
\cref{them:noiseless_main} also shows that when evaluating, the test error can be upper bounded by $\exp(\frac{d}{n} - \frac{n \| \bmu\|_2^4}{C_{D} \sigma_\xi^4 d})$. 
Thus, if $n \cdot \SNR^2 \geq 2 C_D = \Theta(1)$, it follows that $L_D^{0-1}(\bW^{(t^*)}) \leq \exp(- \frac{n \| \bmu\|^4}{2C_D \sigma_\xi^4 d} )$, which is small given the requirement on $d$ (see Condition~\ref{ass:main}). 

\section{Proof Sketch}
In this section, we provide an overview of our proof techniques for our theoretical results in \cref{sect:main}. 

\textbf{Technical Novelty.}
Compared with existing works in feature learning theory, we mainly introduce two novel conditions: i) $n \cdot \SNR^2$ and ii) $\tau_+, \tau_{-}$ are of the constant order.
These two conditions establish a significantly different training regime and are crucial for identifying the two-stage picture in learning dynamics.
We delve into the technical details of these distinctions in the subsections below and in Appendix~\ref{app:comparison}.





\subsection{Feature Learning Process with Label Noise}
\label{sect:feat_label_proof}

The analysis of feature learning with label noise critically relies on \cref{lemma:yfi_bound_main}, which shows the difference in terms of model predictions between clean and noisy samples.

\begin{lemma}
\label{lemma:yfi_bound_main}
Under Condition~\ref{ass:main}, there exists a sufficiently large constant $C_1$ such that for all $t \in [0, T^*]$, the following are satisfied:
\begin{itemize}[leftmargin=0.1in]
    \item $ \frac{1}{m} \sum_{r = 1}^m \big( \gamma_{\tilde y_i, r}^{(t)}  + \orho^{(t)}_{\tilde y_i,r,i} \big) - 1/C_1 \leq \tilde y_i f(\bW^{(t)}, \bx_i) \leq  \frac{1}{m} \sum_{r = 1}^m \big( \gamma_{\tilde y_i, r}^{(t)}  + \orho^{(t)}_{\tilde y_i,r,i} \big)  + 1/C_1$ for all clean samples, i.e., $i \in \gS_t$

    \item $\frac{1}{m} \sum_{r = 1}^m \big(  \orho^{(t)}_{\tilde y_i,r,i} -  \gamma_{-\tilde y_i, r}^{(t)}  \big) - 1/C_1 \leq \tilde y_i f(\bW^{(t)}, \bx_i) \leq  \frac{1}{m} \sum_{r = 1}^m \big(  \orho^{(t)}_{\tilde y_i,r,i} - \gamma_{-\tilde y_i, r}^{(t)}  \big)  + 1/C_1$ for all noisy samples, i.e., $i \in \gS_f$. 
\end{itemize}
\end{lemma}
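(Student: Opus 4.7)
The plan is to substitute the signal-noise decomposition~\eqref{eq:w_decomposition} into each ReLU activation appearing in $f(\bW^{(t)}, \bx_i)$, isolate the dominant $\gamma^{(t)}$- and $\orho^{(t)}$-contributions, and absorb every remaining cross term into a uniform residual of order $1/C_1$. Summing over $r \in [m]$ and $j \in \{\pm 1\}$, multiplying by $\tilde y_i$, and splitting into the two cases $\tilde y_i = y_i$ (clean) versus $\tilde y_i = -y_i$ (noisy) then produces the two two-sided bounds in the statement.

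\textbf{Concentration and invariants.} The first ingredient is a standard high-probability event over the data and initialization: under Condition~\ref{ass:main}, Gaussian tail and Hanson--Wright bounds yield, for all $j \in \{\pm 1\}$, $r \in [m]$ and distinct $i, i' \in [n]$, $\|\bxi_i\|_2^2 = \Theta(\sigma_\xi^2 d)$, $|\langle \bxi_i, \bxi_{i'}\rangle| = \widetilde{O}(\sigma_\xi^2 \sqrt{d})$, $|\langle \bxi_i, \bmu\rangle| = \widetilde{O}(\sigma_\xi \|\bmu\|_2)$, $|\langle \bw^{(0)}_{j,r}, \bmu\rangle| = \widetilde{O}(\sigma_0 \|\bmu\|_2)$, and $|\langle \bw^{(0)}_{j,r}, \bxi_i\rangle| = \widetilde{O}(\sigma_0 \sigma_\xi \sqrt{d})$. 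The second ingredient, which I would establish in a companion lemma by joint induction on $t$ in the spirit of \citet{cao2022benign,kou2023benign}, is a set of trajectory-wide invariants: (i)~$\gamma^{(t)}_{j,r} \geq 0$; (ii)~$\orho^{(t)}_{j,r,i} = 0$ whenever $j \neq \tilde y_i$ and $\urho^{(t)}_{j,r,i} = 0$ whenever $j = \tilde y_i$, because the $\rho$-update is signed by $j \tilde y_i$ and starts from $\rho^{(0)} = 0$; and (iii)~$\gamma^{(t)}_{j,r}$, $\orho^{(t)}_{\tilde y_i, r, i}$, and $|\urho^{(t)}_{-\tilde y_i, r, i}|$ are bounded by $\widetilde{O}(1)$ throughout $t \in [0, T^*]$.

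\textbf{Expansion and ReLU simplification.} Using~\eqref{eq:w_decomposition} I would expand
\begin{align*}
\langle \bw^{(t)}_{j,r}, y_i\bmu\rangle &= j y_i \gamma^{(t)}_{j,r} + E^{(1)}_{j,r,i}, \\
\langle \bw^{(t)}_{j,r}, \bxi_i\rangle &= \rho^{(t)}_{j,r,i} + E^{(2)}_{j,r,i},
\end{align*}
where $E^{(1)}_{j,r,i}$ collects $\langle \bw^{(0)}_{j,r}, y_i\bmu\rangle$ and $y_i\sum_{i'} (\rho^{(t)}_{j,r,i'}/\|\bxi_{i'}\|_2^2)\langle \bxi_{i'}, \bmu\rangle$, and $E^{(2)}_{j,r,i}$ collects $\langle \bw^{(0)}_{j,r}, \bxi_i\rangle$, the signal-leakage term $j\gamma^{(t)}_{j,r}\langle \bmu, \bxi_i\rangle/\|\bmu\|_2^2$, and the cross-noise sum $\sum_{i'\neq i}(\rho^{(t)}_{j,r,i'}/\|\bxi_{i'}\|_2^2)\langle \bxi_{i'}, \bxi_i\rangle$. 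Combining the concentration bounds with invariant~(iii) and using $d \gtrsim n^2 \log$ together with the smallness of $\sigma_0$ in Condition~\ref{ass:main} yields $|E^{(1)}_{j,r,i}|, |E^{(2)}_{j,r,i}| \leq 1/(4C_1)$. Lipschitzness of ReLU and invariant~(ii) then give $\sigma(\langle \bw^{(t)}_{j,r}, y_i\bmu\rangle) = \mathds{1}(j = y_i)\gamma^{(t)}_{j,r} + O(1/(mC_1))$ and $\sigma(\langle \bw^{(t)}_{j,r}, \bxi_i\rangle) = \orho^{(t)}_{j,r,i} + O(1/(mC_1))$, so that $f(\bW^{(t)}, \bx_i) = \tfrac{1}{m}\sum_r \big[y_i \gamma^{(t)}_{y_i,r} + \orho^{(t)}_{+1,r,i} - \orho^{(t)}_{-1,r,i}\big] + O(1/C_1)$. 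Multiplying by $\tilde y_i$ and invoking the identity $\orho^{(t)}_{+1,r,i} - \orho^{(t)}_{-1,r,i} = \tilde y_i\, \orho^{(t)}_{\tilde y_i, r, i}$ from invariant~(ii) delivers the clean and noisy expressions after the case split on $\tilde y_i y_i \in \{\pm 1\}$.

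\textbf{Main obstacle.} The principal difficulty lies in securing the invariants above uniformly over the horizon $T^* = \widetilde{\Theta}(\eta^{-1}\epsilon^{-1} nm \sigma_\xi^{-2} d^{-1})$. Unlike benign-overfitting analyses with vanishing SNR where coefficient growth is polylogarithmic, the present regime $n\cdot\SNR^2 = \Theta(1)$ drives $\gamma$ and $\orho$ to $\Theta(1)$ magnitudes, tightening the residual budget against the coefficient bound. Moreover, the proof of invariant~(iii) itself depends on the cross-term estimates, so the two must be carried through a single joint induction on $t$ that simultaneously propagates the bounds on $\{\gamma^{(t)}, \orho^{(t)}, \urho^{(t)}\}$ and the residuals $|E^{(1)}|, |E^{(2)}|$.
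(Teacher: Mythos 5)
Your proposal is correct and takes essentially the same route as the paper's own proof. The paper proves this statement (as Lemma~\ref{lemma:yfi_bound} in the appendix) by first establishing the cross-term bounds you collect in $E^{(1)}$ and $E^{(2)}$ (their Lemma~\ref{lemma:bound_inner}), then using the coefficient-scale invariants you list as~(i)--(iii) (their Proposition~\ref{prop:global_main}, proved by the joint induction you anticipate), and finally doing the same clean/noisy case split driven by the sign structure of $\gamma_{j,r}\geq 0$ and the indicator structure of $\orho,\urho$. The only difference is cosmetic: the paper writes the two-sided bounds directly as inequalities on $\langle\bw_{j,r},y_i\bmu\rangle$ and $\langle\bw_{j,r},\bxi_i\rangle$ before plugging into the ReLU, whereas you package everything as an exact identity $f = \tfrac{1}{m}\sum_r[y_i\gamma_{y_i,r}+\orho_{+1,r,i}-\orho_{-1,r,i}]+O(1/C_1)$; the two are equivalent. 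One bookkeeping slip: after deriving $|E^{(1)}|,|E^{(2)}|\le 1/(4C_1)$ per neuron you write the per-neuron ReLU error as $O(1/(mC_1))$; it should be $O(1/C_1)$ per neuron, which after averaging over $m$ neurons still gives the $O(1/C_1)$ total you use in the next line, so the conclusion is unaffected.
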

\cref{lemma:yfi_bound_main} suggests for clean samples $i \in \gS_t$, the model prediction $\tilde y_i f(\bW^{(t)}, \bx_i)$ is determined by $\frac{1}{m} \sum_{r = 1}^m \big( \gamma_{\tilde y_i, r}^{(t)}  + \orho^{(t)}_{\tilde y_i,r,i} \big)$, while for noisy samples $i \in \gS_f$, $\tilde y_i f(\bW^{(t)}, \bx_i)$ is determined by $\frac{1}{m} \sum_{r = 1}^m \big(  \orho^{(t)}_{\tilde y_i,r,i} - \gamma_{-\tilde y_i, r}^{(t)}  \big)$. 

Besides \cref{lemma:yfi_bound_main}, we also need to bound the scale of coefficients throughout the training process. 

\begin{proposition}
\label{prop:global_main}
Under Condition~\ref{ass:main}, for any $0\leq t \leq T^*$, we can bound 
\begin{align*}
    &0 \leq \orho^{(t)}_{j,r,i}, \gamma_{j,r}^{(t)} \leq \Theta(\log(T^*)), \\
    &0\geq \urho^{(t)}_{j,r,i} \geq - \widetilde O( \max\{ \sigma_0 \| \bmu\|_2, \sigma_0 \sigma_\xi \sqrt{d}, n d^{-1/2} \} ).
\end{align*}
\end{proposition}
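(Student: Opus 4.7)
The plan is to establish all three bounds via a simultaneous induction on $t \in [0, T^*]$, following the template developed in the feature learning literature \citep{cao2022benign,kou2023benign}. The base case $t=0$ is immediate since $\gamma^{(0)}_{j,r} = \orho^{(0)}_{j,r,i} = \urho^{(0)}_{j,r,i} = 0$. For the inductive step, I would invoke the iterative update equations for $\gamma_{j,r}^{(t)}$, $\orho_{j,r,i}^{(t)}$, $\urho_{j,r,i}^{(t)}$ implied by the signal-noise decomposition \eqref{eq:w_decomposition} and the GD update \eqref{eq:gdupdate}. Since $\ell_i'^{(t)} \leq 0$, the sign of each coefficient update is governed by $jy_i\tilde y_i$ (for signal) and $j\tilde y_i$ (for noise), which immediately yields the monotonicity part: $\gamma_{j,r}^{(t)}, \orho_{j,r,i}^{(t)} \geq 0$ and $\urho_{j,r,i}^{(t)} \leq 0$ at every iterate.

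\textbf{Upper bound on $\gamma_{j,r}^{(t)}, \orho_{j,r,i}^{(t)}$.} I would split the training horizon into a pre-threshold regime where these coefficients stay $\leq 2$ and a post-threshold regime where they exceed $2$. In the pre-threshold regime, a crude per-step bound $|\orho_{j,r,i}^{(t+1)} - \orho_{j,r,i}^{(t)}| \leq \tfrac{\eta}{nm}\|\bxi_i\|_2^2 = \widetilde O(\eta\sigma_\xi^2 d/(nm))$ together with the length of this phase keeps the coefficients $O(1)$. In the post-threshold regime, I would combine Lemma \ref{lemma:yfi_bound_main} with the inequality $|\ell_i'^{(t)}| \leq \exp(-\tilde y_i f(\bW^{(t)},\bx_i))$ to show that the per-step growth of $\orho_{j,r,i}^{(t)}$ is proportional to $\exp(-\orho_{j,r,i}^{(t)})$ (and analogously for $\gamma$); summing the resulting telescoping/geometric series across $t \leq T^*$ produces the logarithmic envelope $O(\log T^*)$.

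\textbf{Lower bound on $\urho_{j,r,i}^{(t)}$.} Here the proof is a self-regulation argument: $\urho_{j,r,i}^{(t)}$ can only decrease when $\langle \bw_{j,r}^{(t)}, \bxi_i\rangle \geq 0$. Expanding via \eqref{eq:w_decomposition} and bounding the off-diagonal terms using the near-orthogonality $|\langle \bxi_i,\bxi_{i'}\rangle| \leq \widetilde O(\sigma_\xi^2\sqrt{d})$ guaranteed by Condition~\ref{ass:main}(2), one obtains
\begin{equation*}
\langle \bw_{j,r}^{(t)}, \bxi_i\rangle = \langle \bw_{j,r}^{(0)}, \bxi_i\rangle + \urho_{j,r,i}^{(t)} + (\text{cross-terms}),
\end{equation*}
where the three perturbation sources give rise, respectively, to the $\sigma_0 \sigma_\xi \sqrt{d}$ contribution (from initial noise overlap), the $\sigma_0\|\bmu\|_2$ contribution (from the signal component of the initialization), and the $nd^{-1/2}$ contribution (from the accumulated cross-terms $\sum_{i'\neq i}\orho_{j,r,i'}^{(t)}\langle\bxi_i,\bxi_{i'}\rangle/\|\bxi_{i'}\|_2^2$, which is controlled using the already-established upper bound). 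Once $\urho_{j,r,i}^{(t)}$ falls below $-\widetilde O(\max\{\sigma_0\|\bmu\|_2, \sigma_0\sigma_\xi\sqrt{d}, nd^{-1/2}\})$, the inner product becomes negative, the ReLU derivative vanishes, and the coefficient is frozen — closing the bound.

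\textbf{Main obstacle.} The technically delicate point is that the upper bound on $\orho$ (needed to control the cross-term in the lower-bound argument for $\urho$) and the lower bound on $\urho$ (needed to control the signal/noise inner products in Lemma \ref{lemma:yfi_bound_main}, which in turn feeds the upper-bound argument) are mutually entangled. Executing the induction therefore requires carefully sequencing the inductive hypotheses so that at each step, the quantity being bounded is controlled using \emph{strictly weaker} versions of the other bounds. This is exactly where Condition~\ref{ass:main}(2), which imposes $d \geq Cn^2\log(nm/\delta)\log(T^*)^2$, becomes essential: it forces the accumulated cross-correlation $\widetilde O(nd^{-1/2}\log T^*)$ to remain a lower-order perturbation throughout training, preventing the two coupled bounds from amplifying each other.
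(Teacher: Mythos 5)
Your plan matches the paper's high-level architecture (simultaneous induction, the two-regime split for the upper bound, the self-freezing argument for $\urho$), but it contains a genuine gap at a point that is in fact one of the central technical difficulties of the paper: the claim that $\gamma_{j,r}^{(t)} \geq 0$ is ``immediate'' from the signs of $\ell_i'^{(t)}$ and $j y_i \tilde y_i$.

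For $\orho_{j,r,i}^{(t)}$ and $\urho_{j,r,i}^{(t)}$ the sign argument is fine, because each of those updates involves a \emph{single} sample $i$, with a fixed sign. But the $\gamma_{j,r}^{(t)}$ update is a \emph{sum over all samples}, and the factor $y_i \tilde y_i$ equals $+1$ on clean samples and $-1$ on noisy ones. Explicitly, for $\langle \bw_{j,r}^{(t)},\bmu\rangle\geq 0$,
\begin{align*}
    \gamma^{(t+1)}_{j,r} = \gamma^{(t)}_{j,r} + \frac{\eta\|\bmu\|_2^2}{nm}\Bigl(\sum_{i\in\gS_t\cap\gS_1}|\ell_i'^{(t)}| - \sum_{i\in\gS_f\cap\gS_1}|\ell_i'^{(t)}|\Bigr),
\end{align*}
so the increment can be \emph{negative} whenever the noisy-sample loss derivatives dominate the clean-sample ones. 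Under Condition~\ref{ass:main} with $\tau_\pm = \Theta(1)$, this is not a vacuous concern: in Stage II the noisy samples are still misclassified while the clean ones are well-fit, so $|\ell_i'^{(t)}|$ is \emph{larger} on $\gS_f$ than on $\gS_t$, and $\gamma$ can genuinely decrease. Nonnegativity of $\gamma$ therefore requires a separate argument, and the paper supplies a two-part one: in Stage~I (all $|\ell_i'^{(t)}| \geq C_\ell$ bounded below), the balance $\tfrac{1-\tau_\pm}{2}C_\ell - \tfrac{\tau_\pm}{2}\geq 0$ under the constant-order $\tau_\pm$ constraint forces the net increment to be nonnegative; in Stage~II, the paper argues by contradiction — if $\gamma_{j,r}^{(t)}$ decreases, the gap between clean and noisy loss derivatives (via Lemma~\ref{lemma:yfi_bound_main}) shrinks, driving the increment back to nonnegative before $\gamma$ can reach zero. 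Your proposal omits this entirely, and the omission cannot be repaired by re-sequencing the inductive hypotheses: the cancellation between clean and noisy contributions is the whole point, and it has to be controlled quantitatively. (Your treatment of the $\orho$ upper bound and $\urho$ lower bound, and your identification of the role of the $d\gtrsim n^2\log(T^*)^2$ condition in decoupling the inductive hypotheses, are consistent with the paper's proof.)
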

\cref{prop:global_main} states that $|\urho^{(t)}_{j,r,i}|$ is lower bounded by a small term based on Condition~\ref{ass:main}. 
In addition, both $\orho^{(t)}_{j,r,i}, \gamma_{j,r}^{(t)}$ are positive and cannot grow faster than a logarithmic order of $T^*$.

\textbf{Comparison with Previous Studies.}
Some previous works also need to bound the scale of coefficients during training; however, their analysis is not applicable to our case due to our condition that \(n \cdot \SNR^2 = \Theta(1)\).
As a representative example, the analysis in \citet{kou2023benign} requires the automatic balance of updates, i.e., the loss derivatives $\ell_i'^{(t)}$ are balanced across all samples. 
However, in our case, due to the SNR condition, signal coefficients are on the same scale as noise coefficients.
Consequently, the loss derivatives $\ell_i'^{(t)}$ are no longer solely determined by $\frac{1}{m} \sum_{r=1}^m \orho^{(t)}_{\tilde y_i, r, i}$.
Thus, the automatic balance of updates cannot be derived.


\textbf{Two-Stage Analysis.}
To prove Proposition \ref{prop:global_main} as well as the main theorems in \cref{tab:summary_results} under our Condition~\ref{ass:main}, we separately consider two stages. 

In \emph{Stage I}, before the maximum of the coefficients reaches a constant order, all loss derivatives can be lower bounded by a constant, i.e., $|\ell_i'^{(t)}| \geq C_\ell$ for all $i \in [n]$. 
This ensures the balance of loss derivatives across all samples as $|\ell_i'^{(t)}| \leq 1$. 
Such a condition allows both $\orho^{(t)}_{j,r,i}, \gamma_{j,r}^{(t)}$ to increase to a constant order, enabling the establishment of the bound in \cref{prop:global_main}. 
Furthermore, by applying \cref{lemma:yfi_bound_main}, we can assert that $\tilde y_i f(\bW^{(t)}, \bx_i) \geq 0$ for all $i \in \gS_t$. 
On the other hand, as long as $n \cdot \SNR^2 \geq c'$, for some constant $c' > 0$,
we can demonstrate that signal learning slightly surpasses noise memorization, concluding that $\tilde y_i f(\bW^{(t)}, \bx_i) \leq 0$ for noisy samples $i \in \gS_f$.

In \emph{Stage II}, after the coefficients reach a constant order, the loss derivatives can no longer be lower-bounded by a constant.
To establish that \cref{prop:global_main} still holds in this stage, we first rewrite the signal learning dynamics in \cref{lemma:coefficient_iterative} as follows:
\begin{align*}
    \gamma^{(t+1)}_{j,r } &= \gamma^{(t)}_{j,r} + \frac{\eta}{nm} \Big(  \sum_{i \in \gS_t} |\ell_i'^{(t)}| \mathds{1}(\langle \bw^{(t)}_{j,r}, y_i \bmu \rangle \geq 0) \\
    &\quad - \sum_{i \in \gS_{f}}  |\ell_i'^{(t)}| \mathds{1}(\langle \bw^{(t)}_{j,r}, y_i \bmu \rangle \geq 0) \Big) \| \bmu \|_2^2.
\end{align*}
Recall $|\ell_i'^{(t)}| = \frac{1}{1 + \exp(\tilde y_i f(\bW^{(t)}, \bx_i))}$. 
Based on \cref{lemma:yfi_bound_main}, $|\ell_i'^{(t)}| \sone(i \in \gS_f)$ can be larger than $|\ell_i'^{(t)}| \sone(i \in \gS_t)$, which causes $\gamma_{j,r}^{(t)}$ to decrease. 
However, we show by contradiction that $\gamma_{j,r}^{(t)} \leq 0$ cannot occur.
When $\gamma_{j,r}^{(t)}$ decreases, the gap between $|\ell_i'^{(t)}| \sone(i \in \gS_t)$ and $|\ell_i'^{(t)}| \sone(i \in \gS_f)$ diminishes, allowing $\gamma_{j,r}^{(t)}$ to eventually increase in subsequent iterations. 
Therefore, the upper bound for both $\gamma_{j,r}^{(t)}, \orho_{j,r,i}^{(t)}$ can be derived by showing that $|\ell_i'^{(t)}|$ converges at a rate of $O(1/t)$  when either $\gamma_{j,r}^{(t)}$ or $\orho_{j,r,i}^{(t)}$ grows to a logarithmic order.

Additionally, we demonstrate that training loss converges at some iteration $t^*$. 
Upon convergence, because of the monotonicity of $\orho_{\tilde y_i,r,i}^{(t)}$ and positivity of $\gamma_{j,r}^{(t)}$, we can show $\tilde y_i f(\bW^{t^*}, \bx_i) \geq 0$ for all $i \in \gS_t$. 
On the other hand, we establish by contradiction that there must exist a constant fraction of noisy samples satisfying $\tilde y_i f(\bW^{t^*}, \bx_i) \geq 0$.
Because otherwise, we would have $\frac{1}{m} \sum_{r=1}^m (\orho^{(t^*)}_{\tilde y_i, r,i} - \gamma_{{-\tilde y_i, r}}^{(t^*)}) \leq C_\epsilon$ for some constant $C_\epsilon > 0$ over a constant fraction of samples. 
This suggests the training loss $L_S(\bW^{(t^*)})$ can be lower bounded by a strictly positive constant $c_l > 0$, which leads to a contradiction.

Finally, to establish the test error, we show that, based on the test distribution $\gD_{\rm test}$ introduced in \cref{sect:prelim}, there exists some sufficiently large constant $C'$ that
\begin{align*}
    &\sP( y f(\bW^{(t^*)}, \bx) < 0 )  \\
    &\geq  \frac{1}{n} \sum_{i=1}^n \sP\Big(  \frac{1}{m} \sum_{r=1}^m  \sigma(\langle \bw_{-y, r}^{(t^*)}, \bxi_i + \bzeta \rangle) \\
    &\quad - \frac{1}{m}\sum_{r=1}^m \sigma(\langle \bw_{y, r}^{(t^*)}, \bxi_i + \bzeta \rangle)   \geq  \frac{1}{m} \sum_{r=1}^m  \gamma_{y,r}^{(t^*)} + 1/C'   \Big)
\end{align*}
Next, we show that for any $y = \pm1$, there exists some sufficiently large constant $C_2$ that for any $i \in \gS_f \cap \gS_y$, $\langle \bw_{-y,r}^{(t^*)} , \bxi_i + \bzeta \rangle \geq \orho^{(t^*)}_{-y,r,i} - 1/C_2, \langle \bw_{y,r}^{(t^*)} , \bxi_i + \bzeta \rangle \leq 1/C_2$.
Then, based on the scale that $\frac{1}{m} \sum_{r=1}^m (\orho^{(t^*)}_{\tilde y_i, r,i} - \gamma_{{-\tilde y_i, r}}^{(t^*)}) \geq C_\epsilon$, we can show $\sP(y f(\bW^{(t^*)}, \bx) < 0 ) \geq 0.5 \tau_+$ if $y = 1$ and $\sP(y f(\bW^{(t^*)}, \bx) < 0 ) \geq 0.5 \tau_{-}$ if $y = -1$. 

\subsection{Feature Learning Process without Label Noise}
\label{sect:feat_proof}

The analysis of feature learning without label noise follows a similar two-stage framework as the analysis with label noise. 
Our analysis without label noise relies on \cref{lemma:balance_coef_without}, where we show that under Condition~\ref{ass:main}, the loss derivatives are balanced across all samples, and both the noise coefficients and signal coefficients are of the same order. 
\begin{lemma}
\label{lemma:balance_coef_without}
Under Condition~\ref{ass:main}, for any $0\leq t \leq T^*$, there exists constants $\kappa, \tilde C_\ell > 0$ such that
\begin{enumerate}[leftmargin=0.3in]
    \item  $\frac{1}{m}\sum_{r=1}^m (\orho_{y_i, r, i}^{(s)} + \gamma_{ y_i, r}^{(s)} - \orho_{y_k, r, k}^{(s)}  - \gamma_{ y_k, r}^{(s)} )  \leq \kappa$ for all $i, k \in [n]$. 

    \item  $\ell_i'^{(s)}/\ell_k'^{(s)} \leq \tilde C_\ell$ for all $i,k \in [n]$.
\end{enumerate}
\end{lemma}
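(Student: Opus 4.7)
The plan is to prove the two claims simultaneously by induction on the iteration count $s$, since they are tightly coupled: a multiplicative bound on $\ell_i'^{(s)}/\ell_k'^{(s)}$ drives the per-step increments of the coefficients to stay in lockstep, which in turn preserves the additive balance $\kappa$ that controls the loss derivatives at the next step. The base case $s = 0$ is immediate: by Condition~\ref{ass:main} the initialization scale $\sigma_0$ is small enough that $|\langle \bw_{j,r}^{(0)}, \bmu\rangle|$ and $|\langle \bw_{j,r}^{(0)}, \bxi_i\rangle|$ are all $o(1)$, so every $\gamma^{(0)}_{j,r}, \orho^{(0)}_{j,r,i}$ is negligible and $\ell_i'^{(0)} \approx -1/2$ for every $i$, giving both claims with room to spare.

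For the inductive step, I would use the coefficient recursions from Lemma~\ref{lemma:coefficient_iterative}, specialized to $\tilde y_i = y_i$ (no label noise). The key structural fact is the analogue of Lemma~\ref{lemma:yfi_bound_main} in the clean setting: $y_i f(\bW^{(s)}, \bx_i) = \frac{1}{m}\sum_r(\gamma_{y_i, r}^{(s)} + \orho_{y_i, r, i}^{(s)}) + O(1/C_1)$, where the remainder comes from initialization contributions and cross-inner-products $\langle \bw^{(0)}_{j,r}, \bxi_i\rangle$, $\langle \bxi_i, \bxi_k\rangle/\|\bxi_k\|_2^2$ for $i \neq k$, all controlled by the large-$d$ bounds on Gaussian noise patches. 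Thus assuming Claim~1 at step $s$, the logistic structure $|\ell'| = 1/(1+\exp(\cdot))$ yields Claim~2 at step $s$ with $\tilde C_\ell = \exp(\kappa) + O(1/C_1)$.

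To push Claim~1 to step $s+1$, I would write
\begin{align*}
&\bigl(\orho^{(s+1)}_{y_i,r,i} + \gamma^{(s+1)}_{y_i,r}\bigr) - \bigl(\orho^{(s+1)}_{y_k,r,k} + \gamma^{(s+1)}_{y_k,r}\bigr) \\
&\quad = \bigl(\orho^{(s)}_{y_i,r,i} + \gamma^{(s)}_{y_i,r}\bigr) - \bigl(\orho^{(s)}_{y_k,r,k} + \gamma^{(s)}_{y_k,r}\bigr) \\
&\qquad + \tfrac{\eta}{nm}\bigl(|\ell_i'^{(s)}| A_{i,r} - |\ell_k'^{(s)}| A_{k,r}\bigr) + (\text{signal terms}),
\end{align*}
where $A_{i,r}$ collects the ReLU activation indicators of the $i$th sample at neuron $r$. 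Because all labels are clean, the signal part of every update increments $\gamma_{y_i,r}$ in the same sign direction, so the signal contribution telescopes cleanly. For the noise part, I would show that when Claim~1 already holds at step $s$, the increments $|\ell_i'^{(s)}| A_{i,r}$ and $|\ell_k'^{(s)}| A_{k,r}$ differ by only a bounded multiplicative factor via Claim~2, so the additive gap grows by at most $O(\eta \sigma_\xi^2 d / (nm))$ per step and can be absorbed back into $\kappa$ using the upper bounds on $\eta$ and the logarithmic-growth bound on coefficients over the horizon $T^*$.

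The main obstacle will be handling the ReLU activation patterns $\sigma'(\langle \bw_{j,r}^{(s)}, \bxi_i\rangle)$: two different noise patches $\bxi_i, \bxi_k$ can activate different subsets of neurons, breaking the naive telescoping. The remedy is to exploit near-orthogonality of the $\bxi_i$'s (guaranteed by the $d \ge C n^2 \log(\cdot)$ condition), so that $\langle \bw_{j,r}^{(s)}, \bxi_i\rangle \approx \orho^{(s)}_{j,r,i} + \langle \bw_{j,r}^{(0)},\bxi_i\rangle$ up to a negligible cross term; this lets one argue that once $\orho^{(s)}_{j,r,i}$ has moved meaningfully positive (or stayed near zero) the activation pattern stabilizes, and any mismatch between samples is bookkept into the $\kappa$ slack. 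Combining the three ingredients, propagating the additive balance and multiplicative ratio jointly through all $s \le T^*$ completes the induction.
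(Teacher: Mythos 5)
Your overall architecture (joint induction, base case from small initialization, appealing to the clean-label analogue of Lemma~\ref{lemma:yfi_bound_main} to pass between claims (1) and (2)) matches the paper, but your handling of the inductive step for claim (1) has a genuine gap.

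You argue that "the additive gap grows by at most $O(\eta\sigma_\xi^2 d/(nm))$ per step and can be absorbed back into $\kappa$ using the upper bounds on $\eta$ and the logarithmic-growth bound on coefficients over the horizon $T^*$." This does not close the induction with a time-uniform constant $\kappa$. If the gap could grow by a fixed positive amount each step, summing over $T^* = \widetilde\Theta(\eta^{-1}\epsilon^{-1} nm \sigma_\xi^{-2} d^{-1})$ iterations produces a bound that blows up as $\epsilon \to 0$, and even the cruder observation that $\orho, \gamma = O(\log T^*)$ only gives you $\kappa = O(\log T^*)$, not $O(1)$. The multiplicative bound $|\ell_i'^{(s)}| \le \tilde C_\ell |\ell_k'^{(s)}|$ from claim~(2) merely caps the increment ratio from above; it provides no mechanism forcing the gap to stop accumulating.

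The missing ingredient is a \emph{negative-feedback/contraction} argument, which is what the paper's proof actually turns on. One splits on the current gap size. If $\frac{1}{m}\sum_r(\orho_{y_i,r,i}^{(s)} + \gamma_{y_i,r}^{(s)} - \orho_{y_k,r,k}^{(s)} - \gamma_{y_k,r}^{(s)}) \le 0.5\kappa$, then a single step can increase it by at most $O(\eta\sigma_\xi^2 d/n + \eta\|\bmu\|_2^2/m) \le 0.5\kappa$ because of the small-$\eta$ condition, so after one step it is still $\le \kappa$. If instead the gap already exceeds $0.5\kappa$, then by the margin comparison and Lemma~\ref{lemma:derivative_ratio_bound} one gets the much stronger one-sided inequality $|\ell_i'^{(s)}|/|\ell_k'^{(s)}| \le \exp(-0.4\kappa) < 1$, i.e., sample $i$'s loss derivative is \emph{strictly smaller} than sample $k$'s precisely when $i$'s coefficients are ahead; combined with bounding the signal-term mismatch by $O(n(\tilde C_\ell - 1)\|\bmu\|_2^2)$ (which is where the condition $n\cdot\SNR^2 = O(1)$ enters), the next-step increment is nonpositive and the gap does not grow. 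These two cases together give the time-uniform bound. Without this dichotomy and the sharp $\exp(-0.4\kappa)$ inequality in the second case, your induction does not close.
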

\cref{lemma:balance_coef_without} allows us to establish \cref{prop:global_main} for all training iterations. 
Initially, we can demonstrate that both signal and noise coefficients reach a constant order during Stage I, as in \cref{sect:feat_label_proof}.
In Stage II, we can show that the scales of the signal and noise coefficients remain on the same order, i.e., $\gamma_{j,r}^{(t)}/\sum_{i=1}^n \orho_{j,r,i}^{(t)} = \Theta(\SNR^2)$.
Finally, by carefully analyzing the test error, we can upper bound the test error in terms of $n\cdot\SNR^2$ (see details in Appendix \ref{app:second_stage_withou}). 




\begin{figure}[tb!]
    \centering
    \includegraphics[width=0.97\linewidth]{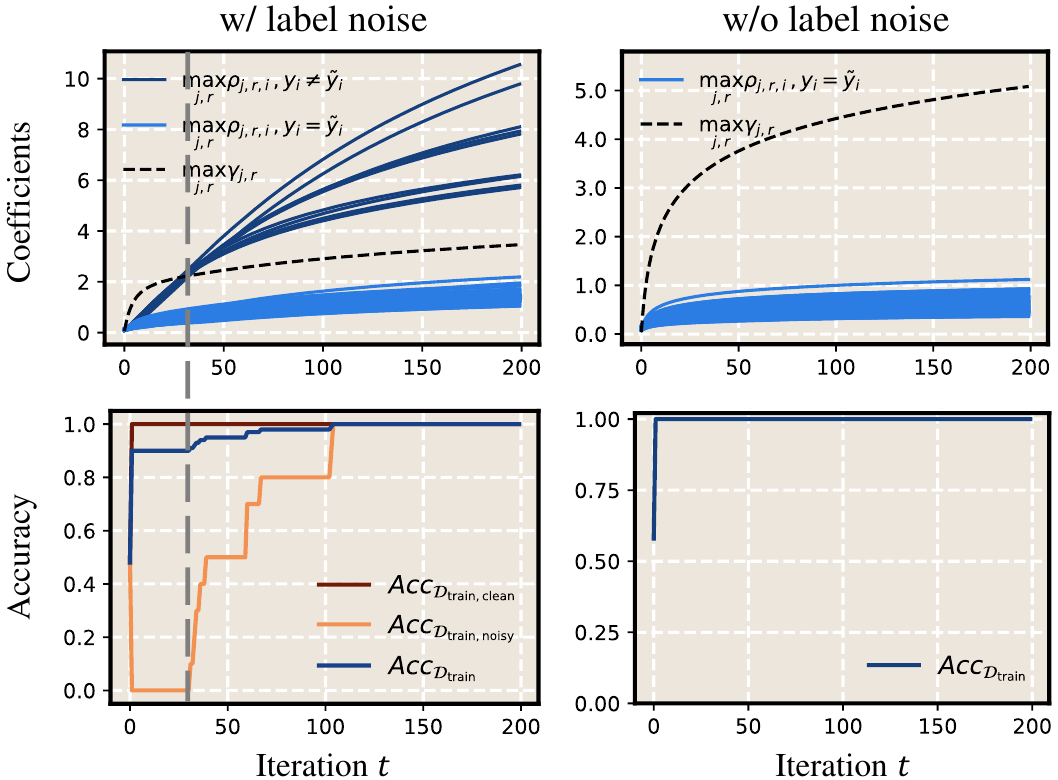}
    \caption{
    \textbf{Experimental validation under the synthetic setup, with label noise (left) and without label noise (right).}
    \textbf{(Top)}
    The change in $\max_{j,r} \gamma_{j,r}$ (signal learning) and $\max_{j,r} \rho_{j,r,i}$ (noise memorization) on noisy (i.e., when $y_i \neq \tilde y_i$) and clean samples (i.e., when $y_i = \tilde y_i$) w.r.t the training iteration $t$.
    \textbf{(Bottom)}
    The change in overall training accuracy $Acc_{\mathcal{D}_{\textrm{train}}}$, as well as the accuracy on clean $Acc_{\mathcal{D}_{\textrm{train, clean}}}$ and noisy samples $Acc_{\mathcal{D}_{\textrm{train, noisy}}}$, w.r.t the training iteration $t$ for models under different settings. 
    Note that there are no noisy samples when training without label noise; thus we only plot noise memorization on clean samples and the overall training accuracy.
    The {\textcolor{gray}{gray}} dashed line separates the two stages for training with label noise.
    More experimental results are in \cref{suppl:additional_exp}.
    }
    \label{fig:synth}
\end{figure}

\section{Experiments}
\label{sect:exp}

In this section, we provide empirical evidence under both synthetic and real-world setups to support our theory.

\subsection{Synthetic Experiments}

First, we conduct experiments under the synthetic setup.

\textbf{Synthetic Setup with Label Noise.}
We follow precisely the problem setup introduced in \cref{sect:prelim}:
\begin{itemize}[topsep=0.5em,leftmargin=1em]
    \item \textbf{SNR.} The fixed signal vector is set to be $\bmu = [y \mu, 0, \cdots, 0] \in \mathbb R^d$, where $\mu = 20$ and $d = 2000$, and the random noise vector $\bxi$ is sampled from $\gN(0, \bI_d)$. This setting corresponds to $n \cdot \SNR^2 = 20$.
    \item \textbf{Label noise.} The $n = 100$ training samples is generated with balanced class labels and each sample's label is flipped with a probability of $0.1$.
    \item \textbf{Model and training settings.} A two-layer CNN (as defined in \cref{sect:prelim}) is trained using gradient descent, with a total of $T = 200$ iterations and a learning rate of $\eta = 0.1$.
\end{itemize}


\begin{figure*}[tb!]
    \centering
    \includegraphics[width=0.80\linewidth]{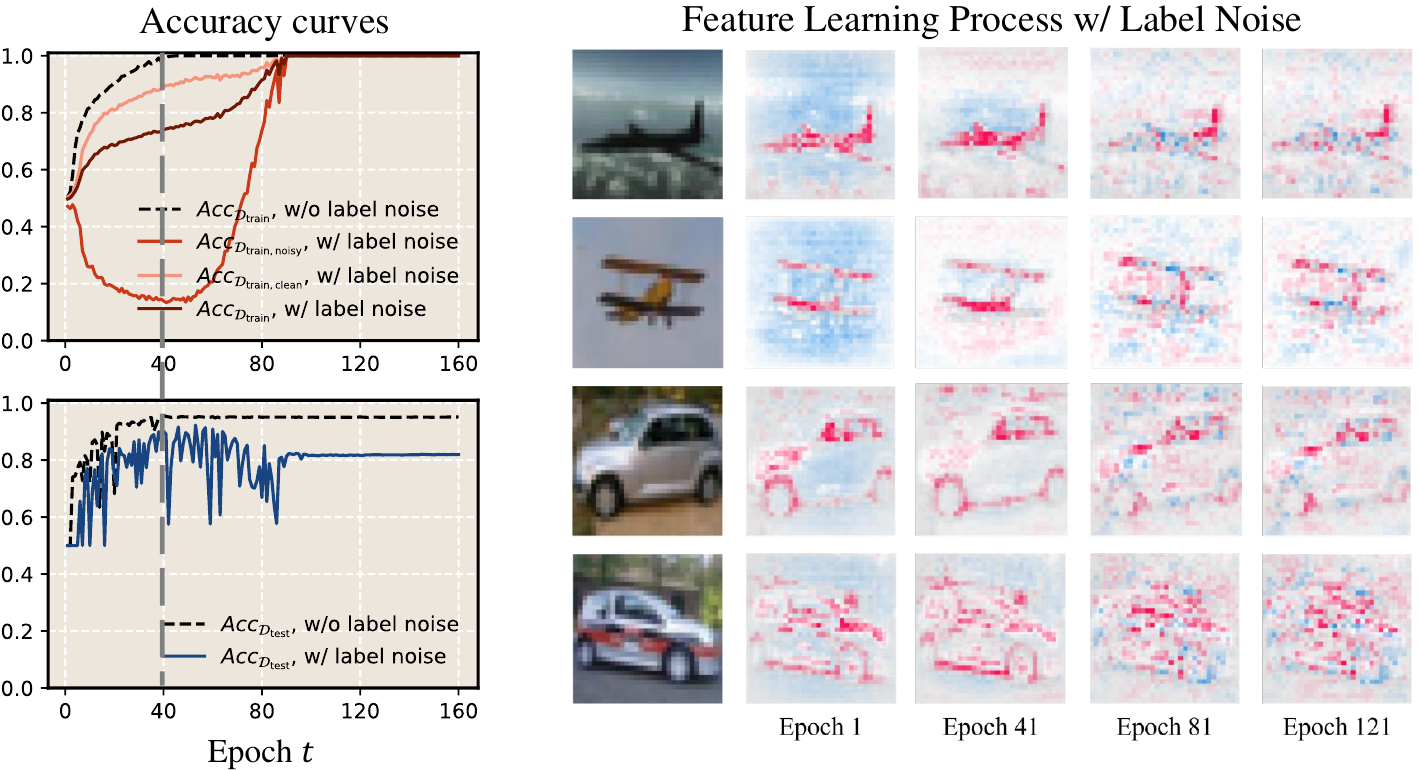}
    \caption{
    \textbf{Experimental validation in real-world scenarios.}
    Two VGG nets are trained on the first two categories of CIFAR-10 under nearly identical settings.
    One is trained with label noise and the other without.
    \textbf{(Left)}
    The accuracy curves for the two models.
    Here, $Acc_{\mathcal{D}_{\textrm{train}}}$ and $Acc_{\mathcal{D}_{\textrm{test}}}$ represent the accuracy on the entire training and test sets, respectively, while $Acc_{\mathcal{D}_{\textrm{train, clean}}}$ and $Acc_{\mathcal{D}_{\textrm{train, noisy}}}$ specifically denote the accuracy on clean and noisy samples from the training set. 
    \textbf{(Right)}
    Visualization of model predictions (via SHAP~\citep{lundberg2017shap}) for noisy samples across multiple epochs. 
    Red regions indicate positive contributions to model predictions, while blue regions denote negative contributions, with darker regions signifying greater contributions. 
    More experimental results are in \cref{suppl:additional_exp}.
    }
    \label{fig:real}
\end{figure*}

\textbf{The Two-Stage Picture Emerges in the Feature Learning Process with Label Noise.}
Specifically, we demonstrate the signal learning process in the two-layer CNN by showing how $\max_{j,r} \gamma_{j,r}$ changes during training.
We also present the noise memorization process by illustrating the evolution of $\max_{j,r} \rho_{j,r,i}$.
In \cref{fig:synth}~(left), a clear two-stage pattern is observed in the learning process: 
\begin{itemize}[topsep=0.5em,leftmargin=1em]
    \item \textbf{Stage I.} The values of $\max_{j,r} \gamma_{j,r}$ are significantly larger than those of $\max_{j,r} \rho_{j,r,i}$, indicating that the signal learning initially dominates;
    \item \textbf{Stage II.} The values of $\max_{j,r} \rho_{j,r,i}$ on noisy samples (i.e., when $y_i \neq \tilde y_i$) increasingly surpass those of $\max_{j,r} \gamma_{j,r}$, implying that the noise memorization process, particularly for noisy samples, gradually takes over.
\end{itemize}
Additionally, we provide the training accuracy curves.
In \cref{fig:synth}~(bottom, left), the accuracy on noisy samples initially drops to $0$ during the early stage and then gradually increases, as predicted by our theory in 
\cref{sect:feat_noise}

\textbf{Synthetic Setup without Label Noise.}
For comparison, we also train a baseline model under nearly identical settings but without label flipping.

\textbf{Signal Learning Dominates in the Feature Learning Process without Label Noise.}
Similarly, we focus on the evolution of signal and noise coefficients. 
In \cref{fig:synth}~(top, right), the values of $\max_{j,r} \gamma_{j,r}$ are larger than those of $\max_{j,r} \rho_{j,r,i}$ throughout the training, suggesting that the signal learning dominates the feature learning process.
Furthermore, in \cref{fig:synth}~(bottom, right), the training accuracy remains consistently high along the training.
These results closely align with our theory in \cref{sect:feat_without_noise}.




\subsection{Real-World Experiments}
Taking a step further, we also validate our theoretical analysis in the real-world scenario. The code for replicating the results is available on \url{https://github.com/zzp1012/label-noise-theory}.

\textbf{Real-World Setup with and without Label Noise.}
We perform experiments on the commonly used image classification dataset CIFAR-10~\citep{krizhevsky2009learning}, using the standard network architecture VGG net~\citep{simonyan2015vgg}.
Specifically, we train the VGG net with stochastic gradient descent (SGD) on samples from the first two categories of CIFAR-10, where each sample’s label is flipped with a probability of $0.2$.
Similar to the synthetic experiment, for comparison, we also train another VGG net under the same settings but without label flipping.

\textbf{The Two Stage Picture: Accuracy.}
Accuracy curves demonstrate the two-stage picture with label noise in real-world scenarios.
In \cref{fig:real}~(left), when training with label noise, the accuracy on noisy samples follows a similar two-stage pattern to the synthetic experiments --- an initial drop followed by a gradual increase to $1$ --- while the test set accuracy remains consistently lower than when training without label noise.
In comparison, when training without label noise, the accuracies on both training and test sets consistently increase during the training.

\textbf{The Two Stage Picture: Visualization of the Feature Learning Process.}
As deep models are black-boxes, we visualize their feature learning process using post-hoc interpretability methods.
Specifically, we choose SHAP~\citep{lundberg2017shap}
, which interprets model predictions by attributing the contribution of each input variable (e.g., pixels for image inputs).
In \cref{fig:real}~(right), it is evident that a two-stage behavior emerges.
In the first stage (reflected by Epoch 1 and 41), clear patterns are observed in the interpretations, such as the wings of ``airplane'' class and contours of ``automobile'' class, implying the model relies on the generalizable features for predictions.
However, in the second stage (reflected by Epoch 81 and 121), the interpretations appear messy, and the model overfits to the spurious features, such as the noise in backgrounds.








\section{Conclusion and Limitations}

In conclusion, our work offers an exact learning dynamics analysis of training neural networks with label noise, 
We identify two distinct stages in the feature learning process, offering a solid explanation for the effectiveness of techniques such as early stopping and sample selection.
Our theoretical results, along with sufficient practical insights, are significant contributions that have been largely absent from the deep learning theory literature.
In addition, experiments under both synthetic and real-world setups back up our theory.



\textbf{Limitations.}
Our current analysis is limited to random label noise and does not account for data-dependent label noise, where mislabeling probability varies based on sample characteristics. Extending our framework to structured or adversarial noise remains an important direction for future research. Additionally, our theoretical results are derived for a two-layer CNN, which, while analytically tractable, may not fully capture the complexities of deeper architectures. Investigating whether our two-stage learning dynamics persist in deeper networks with advanced components (e.g., residual connections, normalization) is crucial for improving robustness.

\section*{Acknowledgments}
WH was supported by JSPS KAKENHI Grant Number 24K20848. TS was partially supported by JSPS KAKENHI (24K02905) and JST CREST (JPMJCR2115).
This research is supported by the National Research Foundation, Singapore and the Ministry of Digital Development and Information under the AI Visiting Professorship Programme (award number AIVP-2024-004). Any opinions, findings and conclusions or recommendations expressed in this material are those of the author(s) and do not reflect the views of National Research Foundation, Singapore and the Ministry of Digital Development and Information.

\section*{Impact Statement}
This paper presents work whose goal is to advance the field
of Machine Learning. There are many potential societal
consequences of our work, none which we feel must be
specifically highlighted here.

\bibliography{references}
\bibliographystyle{icml2025}

\newpage
\appendix
\onecolumn

\section{Comparison of Technical Quantities to \citep{kou2023benign}}
\label{app:comparison}

Among the various differences in conditions compared to \citep{kou2023benign}, the most critical distinction lies in the scale of the SNR.
Because we aim to characterize the two-stage behaviors induced by label noise, we require the SNR to satisfy $n \cdot \SNR^2 = \Theta(1)$. This enables the signal learning to dominate the noise learning in the first stage while noise learning dominates signal learning in the second stage. Such a distinct two-stage dynamics cannot be captured by \citep{kou2023benign} due to $n \cdot \SNR^2 = o(1)$. 

More specifically, in the following, we explicitly compares the key differences in the analysis techniques compared to \citep{kou2023benign}:
\begin{itemize}[leftmargin=0.3in]
    \item \textbf{Non-Time-invariant coefficients}: One of the key techniques (Key Technique 1 in \citep{kou2023benign}) is the derivation of time-invariant order of the coefficient ratio:  $\gamma_{j,r}^{(t)}/ \sum_{i=1}^n \orho^{(t)}_{j,r,i} = \Theta(\SNR^2)$, which is critical for their generalization analysis. However, in our case, due to the setting of constant order $n \cdot \SNR^2$, the noisy samples exhibit different behaviors as the clean samples (which is the main goal we wish to show), such time-invariance may not hold for all iterations. 

    \item \textbf{Non-balancing of the updates}: Another key technique employed in \citep{kou2023benign} is the automatic balancing of coefficient updates, which requires to show $\ell_i'^{(t)}/\ell_k'^{(t)} \leq C$ for all $i,k \in [n]$. That is, the loss derivatives across all samples are approximately balanced, which is critical for their convergence analysis. Because in our case $n \cdot \SNR^2$, the loss derivatives of noisy samples may be significantly larger than that of clean samples, we cannot guarantee the balance of updates across all samples. 
\end{itemize}

Without the above two results in our case, the convergence and generalization analysis becomes challenging. To address the challenges, we require developing novel techniques via refined analysis on clean and noisy samples, which cannot be addressed in the prior works.

To better comprehend the differences to the analysis of \citep{kou2023benign}, we present the following tables that compare the different quantities at each training stage. These differences require non-trivial analysis.

\begin{table}[h!]
\centering
\renewcommand{\arraystretch}{1.3}
\setlength{\tabcolsep}{5pt}
\scalebox{0.86}{
\begin{tabular}{ c|c|c |c }
\toprule
  & & First Stage & Second Stage \\ \hline\hline
\multirow{2}{*}{Monotonicity of signal}            &  \citep{kou2023benign}          &  \multicolumn{2}{c}{Monotonic increase }             \\\cline{2-4}
& Our work & Monotonic increase &  No monotonicity  \\ \hline
\multirow{2}{*}{Signal-noise magnitude}            &  \citep{kou2023benign}          &    \multicolumn{2}{c}{Noise dominates}           \\\cline{2-4}
& Our work & Signal dominates &  Noise dominates  \\ \hline
\multirow{3}{*}{\makecell[c]{Determining factors of \\ $\tilde y_i f(\bW^{(t)}, \bx_i)$}}  &  \citep{kou2023benign} & \multicolumn{2}{c}{$\frac{1}{m}\sum_{r=1}^m \orho^{(t)}_{\tilde y_i, r, i} \pm o(1)$} 
 \\\cline{2-4}
& Our work &  \multicolumn{2}{c}{$\begin{cases}
    \frac{1}{m} \sum_{r = 1}^m ( \gamma_{\tilde y_i, r}^{(t)}  + \orho^{(t)}_{\tilde y_i,r,i} ) \pm o(1), & \text{for } i \in \gS_t \\
    \frac{1}{m} \sum_{r = 1}^m (  \orho^{(t)}_{\tilde y_i,r,i} - \gamma_{-\tilde y_i, r}^{(t)}  ) \pm o(1), &\text{for } i \in \gS_f
\end{cases}$} \\ \hline
\multirow{3}{*}{Prediction}  &  \citep{kou2023benign} & \multicolumn{2}{c}{$\tilde y_i f(\bW^{(t)}, \bx_i) \geq 0, \forall i \in [n]$} 
 \\\cline{2-4}
& Our work & $\begin{cases} \tilde y_i f(\bW^{(t)}, \bx_i) \geq 0, \quad i \in \gS_t \\
\tilde y_i f(\bW^{(t)}, \bx_i) \leq 0, \quad i \in \gS_f
\end{cases}$ & $\tilde y_i f(\bW^{(t)}, \bx_i) \geq 0, \forall i \in [n]$ \\ \hline
\multirow{3}{*}{\makecell[c]{Test error  $L_D^{0-1}(\bW^{(T_1)})$}}  &  \citep{kou2023benign} & \multicolumn{2}{c}{$\begin{cases} o(1), &\text{ if } n \|\bmu\|_2^4 > C-1 \sigma_\xi^4 d \\
\Omega(1), &\text{ if } n \|\bmu\|_2^4 \leq C_3 \sigma_\xi^4 d
\end{cases}$} 
 \\\cline{2-4}
& Our work & $o(1)$ & $\Omega(1)$ \\
\bottomrule
\end{tabular}
}
\caption{Comparisons of key quantities in the analysis at each stage. }
\label{tab:quantities}
\end{table}

\section{Preliminary Lemmas}

This section introduces a few lemmas that are critical to bound the parameters at initialization. 

\begin{lemma}[\cite{cao2022benign,kou2023benign}]
\label{lemma:data_innerproducts}
Suppose $d = \Omega(\log(6n/\delta))$. Then with probability at least $1-\delta$,
\begin{align*}
    & \sigma_\xi^2 d/2 \leq\| \bxi_i \|_2^2 \leq 3\sigma_\xi^2 d/2, \\
    &|\langle \bxi_i, \bxi_{i'} \rangle| \leq 2 \sigma_\xi^2 \sqrt{d \log(6n^2/\delta)}, \\
    &|\langle \bxi_i, \bmu \rangle| \leq \| \bmu \|_2 \sigma_\xi \sqrt{2 \log(6n/\delta)}.
\end{align*}
\end{lemma}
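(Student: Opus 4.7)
The plan is to establish each of the three concentration bounds separately using standard tail inequalities for Gaussian vectors, and then combine them via a union bound over the $n$ indices and the $\binom{n}{2}$ unordered pairs. Since each $\bxi_i$ is drawn independently from $\gN(\mathbf{0}, \sigma_\xi^2 \bI_d)$, all three quantities reduce to well-known concentration phenomena, and the assumption $d = \Omega(\log(6n/\delta))$ will be exactly what is needed to make the chi-squared tail collapse to the stated constants.

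For the norm bound, I would observe that $\|\bxi_i\|_2^2/\sigma_\xi^2$ is chi-squared with $d$ degrees of freedom, with mean $d$. Applying the Laurent--Massart deviation inequality (or equivalently Bernstein's inequality for subexponential random variables), the probability that this quantity deviates from $d$ by more than $d/2$ is at most $2\exp(-c d)$ for an absolute constant $c > 0$. Under $d = \Omega(\log(6n/\delta))$ with a sufficiently large implicit constant, this failure probability is at most $\delta/(3n)$ per index, giving the stated two-sided sandwich $\sigma_\xi^2 d/2 \leq \|\bxi_i\|_2^2 \leq 3\sigma_\xi^2 d/2$ simultaneously for all $i \in [n]$ after a union bound.

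For the cross inner product $|\langle \bxi_i, \bxi_{i'}\rangle|$ with $i \neq i'$, I would condition on $\bxi_{i'}$. Then $\langle \bxi_i, \bxi_{i'}\rangle$ is a scalar Gaussian with mean zero and variance $\sigma_\xi^2 \|\bxi_{i'}\|_2^2$. On the high-probability event from the previous step, $\|\bxi_{i'}\|_2^2 \leq 3\sigma_\xi^2 d/2$, so the Gaussian tail gives
\begin{equation*}
\sP\bigl(|\langle \bxi_i, \bxi_{i'}\rangle| > t \bigr) \leq 2\exp\!\bigl(-t^2 / (3 \sigma_\xi^4 d)\bigr).
\end{equation*}
Choosing $t = 2\sigma_\xi^2 \sqrt{d \log(6n^2/\delta)}$ yields a per-pair failure probability of at most $\delta/(3n^2)$. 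For the signal inner product, $\langle \bxi_i, \bmu\rangle \sim \gN(0, \sigma_\xi^2 \|\bmu\|_2^2)$ is exactly Gaussian, so a one-dimensional Gaussian tail bound with threshold $\|\bmu\|_2 \sigma_\xi \sqrt{2\log(6n/\delta)}$ gives failure probability at most $\delta/(3n)$ per index.

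A final union bound aggregates the three contributions: $n \cdot \delta/(3n)$ from the norm bound, $n^2 \cdot \delta/(3n^2)$ from the cross-product bound over pairs, and $n \cdot \delta/(3n)$ from the signal inner product, summing to at most $\delta$. The only mild subtlety is the conditioning in the second step: one must verify that the chi-squared event for $\bxi_{i'}$ and the conditional Gaussian tail for $\bxi_i$ are handled in the correct order, but since we are only taking a union bound this is automatic. Overall the lemma is a routine concentration result and I do not anticipate any genuinely hard step; the proof is essentially identical to Lemma B.4 of \citet{cao2022benign}, from which this statement is inherited.
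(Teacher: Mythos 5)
Your proof is correct and follows the standard concentration-plus-union-bound argument that \citet{cao2022benign} (Lemma B.4) and \citet{kou2023benign} use for this exact statement, which the paper itself inherits by citation rather than reproving. The three tail bounds and the bookkeeping of the $\delta/3$ budget across $n$, $n^2$, and $n$ events all check out, including the constant tracking on the cross-product term (the worst-case conditional variance $3\sigma_\xi^4 d/2$ combined with the threshold $2\sigma_\xi^2\sqrt{d\log(6n^2/\delta)}$ yields an exponent of $-\tfrac{4}{3}\log(6n^2/\delta)$, which indeed clears $\delta/(3n^2)$).
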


\begin{lemma}[\cite{cao2022benign,kou2023benign}]
\label{lem:prelbound}
Suppose that $d  = \Omega(\log(nm / \delta) )$, $ m = \Omega(\log(1 / \delta))$. Then with probability at least $1 - \delta$, 
\begin{align*}
    &\sigma_0^2 d/2 \leq \| \mathbf w_{j,r}^0 \|^2_2 \leq 3 \sigma_0^2 d/2 \\
    &|\langle \mathbf{w}_{j,r}^{(0)}, \boldsymbol{\mu} \rangle | \leq \sqrt{2 \log(12m/\delta)} \cdot \sigma_0 \| \boldsymbol{\mu} \|_2,\\
    &| \langle \mathbf{w}_{j,r}^{(0)}, \boldsymbol{\xi}_i \rangle | \leq 2\sqrt{ \log(12mn/\delta)}\cdot \sigma_0 \sigma_\xi \sqrt{d}.
\end{align*}
\end{lemma}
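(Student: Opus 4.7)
The plan is to treat the three bounds separately as standard Gaussian concentration estimates and then glue them together via a union bound over the indices $j \in \{\pm 1\}$, $r \in [m]$, and $i \in [n]$. Throughout, I will exploit the fact that $\mathbf{w}_{j,r}^{(0)} \sim \mathcal{N}(\mathbf{0}, \sigma_0^2 \mathbf{I}_d)$ with all weights mutually independent and independent of the noise vectors $\boldsymbol{\xi}_i$.

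For the norm bound, I would observe that $\|\mathbf{w}_{j,r}^{(0)}\|_2^2/\sigma_0^2$ is a chi-squared random variable with $d$ degrees of freedom. A standard Laurent--Massart tail inequality gives, for any $u \in (0,1)$,
\begin{equation*}
\mathbb{P}\bigl(\|\mathbf{w}_{j,r}^{(0)}\|_2^2 \notin [(1-u)\sigma_0^2 d, (1+u)\sigma_0^2 d]\bigr) \leq 2\exp(-c u^2 d)
\end{equation*}
for some absolute constant $c > 0$. Taking $u = 1/2$ and invoking the hypothesis $d = \Omega(\log(nm/\delta))$, the failure probability per index can be driven below $\delta/(6m)$, so a union bound over the $2m$ choices of $(j,r)$ yields the first claim with probability at least $1-\delta/3$.

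For the inner product with $\boldsymbol{\mu}$, since $\boldsymbol{\mu}$ is deterministic, $\langle \mathbf{w}_{j,r}^{(0)}, \boldsymbol{\mu}\rangle$ is a centered Gaussian with variance $\sigma_0^2 \|\boldsymbol{\mu}\|_2^2$. The standard Gaussian tail bound $\mathbb{P}(|Z| \geq t\sqrt{\mathrm{Var}(Z)}) \leq 2\exp(-t^2/2)$ with $t = \sqrt{2\log(12m/\delta)}$ matches the claimed bound exactly, and a union bound over the $2m$ pairs $(j,r)$ gives the second claim with probability at least $1-\delta/3$. For the inner product with $\boldsymbol{\xi}_i$, I would condition on the data (which is independent of the weights). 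Conditional on $\boldsymbol{\xi}_i$, the quantity $\langle \mathbf{w}_{j,r}^{(0)}, \boldsymbol{\xi}_i\rangle$ is centered Gaussian with variance $\sigma_0^2 \|\boldsymbol{\xi}_i\|_2^2$. On the event of Lemma~\ref{lemma:data_innerproducts} (which holds with probability at least $1-\delta/3$ given the hypothesis on $d$), we have $\|\boldsymbol{\xi}_i\|_2^2 \leq 3\sigma_\xi^2 d/2$. Applying the Gaussian tail bound with $t = 2\sqrt{\log(12mn/\delta)}/\sqrt{3/2}$, combined with a union bound over $2mn$ triples $(j,r,i)$, absorbs the constants into the stated form $2\sqrt{\log(12mn/\delta)}\cdot\sigma_0\sigma_\xi\sqrt{d}$.

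There is no real obstacle here; the only subtlety is bookkeeping. I would carefully split the overall failure probability $\delta$ into three pieces (one for each bound) and, in the third bound, account for the fact that the estimate on $\|\boldsymbol{\xi}_i\|_2$ and the Gaussian tail of $\langle \mathbf{w}_{j,r}^{(0)}, \boldsymbol{\xi}_i\rangle$ live on different probability spaces, so I condition on the data first and then integrate out. Concluding by a final union bound over the three events delivers the lemma.
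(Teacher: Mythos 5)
Your proposal is correct and follows the standard approach. The paper itself does not reproduce a proof for this lemma but cites \citet{cao2022benign,kou2023benign}, where the argument is exactly the one you reconstruct: a chi-squared (Laurent--Massart or Bernstein) tail for the squared norm, a one-dimensional Gaussian tail for the inner product with the fixed vector $\boldsymbol{\mu}$, and a conditional Gaussian tail for the inner product with $\boldsymbol{\xi}_i$ after controlling $\|\boldsymbol{\xi}_i\|_2$ via Lemma~\ref{lemma:data_innerproducts}, all assembled by union bounds. One small bookkeeping point worth making explicit: since the third estimate rides on the event $\|\boldsymbol{\xi}_i\|_2^2 \leq 3\sigma_\xi^2 d/2$ from Lemma~\ref{lemma:data_innerproducts}, there are effectively four events to split $\delta$ across (or one reuses the data-concentration event already charged elsewhere), not three as literally stated; but you flag this conditioning subtlety yourself, and the constants in the claimed bounds (e.g.\ the factor $2$ in $2\sqrt{\log(12mn/\delta)}$ absorbing the $\sqrt{3/2}$) have enough slack that the arithmetic closes either way.
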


\begin{lemma}[\cite{kou2023benign}]
\label{lem:init_set_bound}
Let $\gS_i^{(t)} \coloneqq \{ r \in [m] : \langle \bw_{\tilde y_i,r}^{(t)}, \bxi_i \rangle > 0\}$ and $\gS_{j,r}^{(t)} \coloneqq \{ i \in [n] : j = \tilde y_i, \langle \bw_{j, r}^{(t)} , \bxi_i  \rangle > 0 \}$. Then for any $\delta > 0$, and $m \geq 50\log(4n/\delta)$, $n \geq 32\log(8m/\delta)$, we have with probability at least $1- \delta$, 
\begin{align*}
    &|\gS_i^{(0)}| \geq 0.4 m, \quad \forall i \in [n] \\
    &|\gS_{j,r}^{(0)}| \geq n/8, \quad \forall j = \pm 1, r \in [m].
\end{align*}
\end{lemma}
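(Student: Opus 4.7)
The plan is to reduce each cardinality in the statement to a sum of Bernoulli$(1/2)$ random variables by exploiting the Gaussian symmetry of $\langle \bw_{j,r}^{(0)},\bxi_i\rangle$ at initialization, and then to conclude with Hoeffding's inequality together with a union bound. The key observation is that at $t=0$, $\{\bw_{j,r}^{(0)}\}\sim \gN(\mathbf 0,\sigma_0^2\bI_d)$ are independent of $\{\bxi_i\}\sim\gN(\mathbf 0,\sigma_\xi^2\bI_d)$ and of the labels. Hence, conditioning on either $\bxi_i$ or $\bw_{j,r}^{(0)}$, the inner product $\langle \bw_{j,r}^{(0)},\bxi_i\rangle$ is a continuous mean-zero Gaussian, so each indicator $\sone(\langle \bw_{j,r}^{(0)},\bxi_i\rangle>0)$ is a Bernoulli$(1/2)$ random variable.

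For the first bound $|\gS_i^{(0)}|\geq 0.4m$: fix $i$ and condition on $\bxi_i$. Across $r\in[m]$ the weights $\{\bw_{\tilde y_i,r}^{(0)}\}_{r=1}^m$ are mutually independent, so the $m$ indicators defining $\gS_i^{(0)}$ are i.i.d.\ Bernoulli$(1/2)$ and $|\gS_i^{(0)}|\sim \mathrm{Bin}(m,1/2)$. Hoeffding's inequality gives
\begin{equation*}
\sP\bigl(|\gS_i^{(0)}|<0.4m\bigr)\leq \exp\bigl(-2m(0.1)^2\bigr)=\exp(-m/50)\leq \delta/(4n)
\end{equation*}
under $m\geq 50\log(4n/\delta)$. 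A union bound over $i\in[n]$ bounds the failure probability by $\delta/4$.

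For the second bound $|\gS_{j,r}^{(0)}|\geq n/8$: fix $(j,r)$, condition on $\bw_{j,r}^{(0)}$ and on the vector of observed labels, and let $\gI_j\coloneqq\{i:\tilde y_i=j\}$ with $n_j=|\gI_j|$. Because $\{\bxi_i\}_{i\in\gI_j}$ are i.i.d.\ and independent of $\bw_{j,r}^{(0)}$, the indicators $\{\sone(\langle \bw_{j,r}^{(0)},\bxi_i\rangle>0)\}_{i\in\gI_j}$ are conditionally i.i.d.\ Bernoulli$(1/2)$, so $|\gS_{j,r}^{(0)}|\mid n_j\sim \mathrm{Bin}(n_j,1/2)$. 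A preliminary step handles the randomness of $\gI_j$: using that the training set has $n/2$ samples per true class and labels are flipped independently with probabilities $\tau_\pm\in(0,1/2)$ of constant order, one has $\sE[n_j]=(n/2)(1-\tau_j+\tau_{-j})\geq cn$ for a constant $c>1/4$, and a Hoeffding bound on the label flips (union-bounded over $j=\pm 1$) shows $n_j\geq c'n$ for some $c'\in(1/4,1/2)$ with probability $1-\delta/4$. Conditioning on this event, a second Hoeffding bound yields
\begin{equation*}
\sP\bigl(|\gS_{j,r}^{(0)}|<n/8\bigr)\leq \exp\bigl(-2(n_j/2-n/8)^2/n_j\bigr)\leq \exp\bigl(-\Omega(n)\bigr),
\end{equation*}
which is $\leq \delta/(8m)$ under $n\geq 32\log(8m/\delta)$. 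A union bound over $(j,r)\in\{\pm 1\}\times[m]$ gives a failure probability of at most $\delta/4$, and combining the three failure events completes the proof.

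The only real subtlety compared to the noiseless setting is the preliminary control of $n_j$ in the second bound: because the labels $\tilde y_i$ are themselves randomized by independent flips, $\gI_j$ is random and must be lower-bounded before the Bernoulli Hoeffding for the inner-product indicators can be applied. This obstacle is routine given the constant-order assumptions on $\tau_\pm$ in Condition~\ref{ass:main}; the remainder of the argument is standard concentration.
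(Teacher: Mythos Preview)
Your proposal is correct and follows the same approach the paper uses. The paper itself does not prove this lemma (it is cited from \cite{kou2023benign}), but it does prove the sharper companion result Lemma~\ref{lemma:init_set_bound_tight} by exactly your method: observe that $\sP(\langle \bw_{\tilde y_i,r}^{(0)},\bxi_i\rangle>0)=1/2$ by Gaussian symmetry, apply Hoeffding, and union bound. One minor simplification you could make: the paper explicitly assumes at the start of its analysis (just before the coefficient decomposition) that $|\gS_1\cap\gS_t|=\tfrac{(1-\tau_+)n}{2}$, $|\gS_1\cap\gS_f|=\tfrac{\tau_+ n}{2}$, etc., so $n_j=|\{i:\tilde y_i=j\}|$ is treated as deterministic and your preliminary Hoeffding step on the label flips is unnecessary; with $n_j$ fixed at these values the exponent works out to $\Theta(n)$ with a constant independent of the particular realization, and the stated threshold $n\geq 32\log(8m/\delta)$ follows directly.
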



\section{Analysis with label noise}

Without loss of generality, for the subsequent analysis, we assume 
$|\gS_1 \cap \gS_{t}| = \frac{(1-\tau_+)n }{2}, |\gS_1 \cap \gS_{f}| = \frac{\tau_+ n}{2}, |\gS_{-1} \cap \gS_t| = \frac{(1-\tau_{-})n}{2}, |\gS_{-1} \cap \gS_{f}| = \frac{\tau_{-} n}{2}$.

\subsection{Coefficients Decomposition Iteration}

\begin{lemma}\label{lemma:coefficient_iterative}
The coefficients $\gamma_{j,r}^{(t)},\overline{\rho}_{j,r,i}^{(t)},\underline{\rho}_{j,r,i}^{(t)}$ in decomposition~(\ref{eq:w_decomposition}) satisfy $\gamma_{j,r}^{(0)}, \overline{\rho}_{j,r,i}^{(0)},\underline{\rho}_{j,r,i}^{(0)} = 0$ and admit the following iterative update rule:
\begin{align*}
    &\gamma_{j,r}^{(t+1)} = \gamma_{j,r}^{(t)} - \frac{\eta}{nm}  \sum_{i=1}^n   {\ell}_i'^{(t)}  \sigma'(\langle \bw_{j,r}^{(t)}, y_i \bmu \rangle)    y_i \Tilde{y}_i   \| \boldsymbol{\mu} \|_2^2,   \\
    & \overline{\rho}_{j,r,i}^{(t+1)}   = \overline{\rho}_{j,r,i}^{(t)} - \frac{\eta}{nm}   {\ell}_i'^{(t)}   \sigma' (  \langle \mathbf{w}_{j,r}^{(t)},   {\boldsymbol{\xi}}_{i}   \rangle  )      \| \boldsymbol{\xi}_i \|_2^2   \mathds{1}({\Tilde{y}_{i} = j}),  \\
   & \underline{\rho}_{j,r,i}^{(t+1)}   = \underline{\rho}_{j,r,i}^{(t)} + \frac{\eta}{nm} {\ell}_i'^{(t)}    \sigma' (   \langle \mathbf{w}_{j,r}^{(t)},   {\boldsymbol{\xi}}_{i}   \rangle )  \| \boldsymbol{\xi}_i \|_2^2  \mathds{1}({\Tilde{y}_{i} = -j}). 
\end{align*} 
\end{lemma}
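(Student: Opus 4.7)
The plan is to verify the iterative formulas by induction on $t$, using the gradient-descent update \eqref{eq:gdupdate} as the source of truth and showing that the coefficients defined by the stated recursions reproduce $\mathbf{w}_{j,r}^{(t)}$ exactly through \eqref{eq:w_decomposition}. The base case $t=0$ is immediate since all coefficients are initialized to zero, so \eqref{eq:w_decomposition} reduces to $\mathbf{w}_{j,r}^{(0)}$, matching the initialization.

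For the inductive step, I would assume \eqref{eq:w_decomposition} holds at step $t$ and then substitute the right-hand side of \eqref{eq:gdupdate} into the decomposition. The GD update has two sums, one aligned with each $\boldsymbol{\xi}_i$ and one aligned with $\boldsymbol{\mu}$; the plan is to match these term-by-term against the increments $\gamma_{j,r}^{(t+1)}-\gamma_{j,r}^{(t)}$, $\overline{\rho}_{j,r,i}^{(t+1)}-\overline{\rho}_{j,r,i}^{(t)}$, and $\underline{\rho}_{j,r,i}^{(t+1)}-\underline{\rho}_{j,r,i}^{(t)}$ multiplied by the appropriate normalized directions. For the signal direction, multiplying the increment of $\gamma_{j,r}^{(t)}$ by $j\|\boldsymbol{\mu}\|_2^{-2}\boldsymbol{\mu}$ cancels the $\|\boldsymbol{\mu}\|_2^2$ factor and yields exactly the $\boldsymbol{\mu}$-term in \eqref{eq:gdupdate}. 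For the noise direction, I would use the identity $j\tilde y_i = \mathds{1}(\tilde y_i=j)-\mathds{1}(\tilde y_i=-j)$ to split the contribution of each $\boldsymbol{\xi}_i$ into a nonnegative piece (absorbed into $\overline{\rho}_{j,r,i}^{(t)}$) and a nonpositive piece (absorbed into $\underline{\rho}_{j,r,i}^{(t)}$), matching exactly the indicator conditions in the stated recursions.

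The one subtlety is justifying the split into $\overline{\rho}$ and $\underline{\rho}$ parts; a priori the sign of $\rho_{j,r,i}^{(t)}$ could flip across iterations, so we need to check that $\overline{\rho}_{j,r,i}^{(t)}\ge 0$ and $\underline{\rho}_{j,r,i}^{(t)}\le 0$ are preserved. This follows from the recursions themselves: the $\overline{\rho}$ update only subtracts a nonnegative quantity when $\tilde y_i=j$ (since $-\ell_i'^{(t)}\ge 0$ and $\sigma'\ge 0$), so $\overline{\rho}$ is monotone nondecreasing from $0$, and symmetrically $\underline{\rho}$ is monotone nonincreasing from $0$. Hence the two sums in \eqref{eq:w_decomposition} remain in their designated half-lines and \eqref{eq:sn_decomp} is recovered as their sum.

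The hardest part is bookkeeping rather than any analytical difficulty: tracking the indicator functions carefully and confirming that the coefficient-wise update, when re-expanded via \eqref{eq:w_decomposition}, reproduces \eqref{eq:gdupdate} componentwise. Once the algebra is written out, the claim follows by induction; no probabilistic control over $\langle \mathbf{w}_{j,r}^{(0)},\boldsymbol{\xi}_i\rangle$, $\langle \boldsymbol{\xi}_i,\boldsymbol{\xi}_{i'}\rangle$, etc., is needed at this stage, since the decomposition here is defined by construction rather than derived from approximate orthogonality (the latter will enter later, e.g., in Lemma \ref{lemma:yfi_bound_main}, when relating $\gamma,\rho$ to inner products with $\boldsymbol{\mu}$ and $\boldsymbol{\xi}_i$).
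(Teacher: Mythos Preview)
Your proposal is correct and takes essentially the same approach as the paper: the paper unrolls the gradient-descent recursion \eqref{eq:gdupdate} to obtain closed-form sums for $\gamma_{j,r}^{(t)}$ and $\rho_{j,r,i}^{(t)}$, invokes almost-sure linear independence of $\{\boldsymbol{\mu},\boldsymbol{\xi}_1,\ldots,\boldsymbol{\xi}_n\}$ to justify uniqueness of the coefficients, and then uses the sign of $-\ell_i'^{(s)} j\tilde y_i$ to split $\rho$ into $\overline{\rho}$ and $\underline{\rho}$ via $\mathds{1}(\tilde y_i=\pm j)$. Your induction is simply the one-step version of this unrolling, and your sign-preservation check is exactly the paper's sign argument; the only point you leave implicit is the linear-independence/uniqueness step that lets you identify the recursion-defined coefficients with the decomposition-defined ones.
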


\begin{proof}[Proof of Lemma \ref{lemma:coefficient_iterative}]
By iterating the gradient descent update, we can show
\begin{align*}
    \bw_{j,r}^{(t)} &= \bw_{j,r}^{(0)} - \frac{\eta}{nm} \sum_{s=0}^{t-1} \sum_{i=1}^n \ell_i'^{(s)}    \sigma'(\langle \mathbf{w}_{j,r}^{(s)}, \boldsymbol{\xi}_{i} \rangle)   j \tilde{y}_{i}\boldsymbol{\xi}_{i}  - \frac{\eta}{nm} \sum_{s=0}^{t-1}\sum_{i=1}^n \ell_i'^{(s)}  \sigma'(\langle \mathbf{w}_{j,r}^{(s)}, y_{i} \boldsymbol{\mu}  \rangle) j y_i \Tilde{y}_i \boldsymbol{\mu} 
\end{align*}
Because $\bxi_i, \bmu$ are linearly independent almost surely for all $i \in [n]$. Then from the definition:
\begin{equation*}
    \bw_{j,r}^{(t)} = \mathbf{w}_{j,r}^{(0)} +  j  \gamma_{j,r}^{(t)}   \| \boldsymbol{\mu} \|^{-2}_2   \boldsymbol{\mu}   +  \sum_{i=1}^n \rho_{j,r,i}^{(t)}  \|  {\boldsymbol{\xi}}_i \|^{-2}_2   {\boldsymbol{\xi}}_i
\end{equation*}
there exists a unique decomposition as 
\begin{align*}
    &\gamma_{j,r}^{(t)} = - \frac{\eta}{nm} \sum_{s=0}^{t-1} \sum_{i=1}^n \ell_i'^{(s)}  \sigma'(\langle \mathbf{w}_{j,r}^{(s)}, y_{i} \boldsymbol{\mu}  \rangle)  y_i \Tilde{y}_i \| \boldsymbol{\mu} \|^2_2 \\
    &\rho^{(t)}_{j,r,i} = - \frac{\eta}{mn}  \sum_{s=0}^{t-1} \ell_i'^{(s)}    \sigma'(\langle \mathbf{w}_{j,r}^{(s)}, \boldsymbol{\xi}_{i} \rangle)   j \tilde{y}_{i} \| \boldsymbol{\xi}_{i} \|^2_2.
\end{align*}
By definition of $\orho_{j,r,i}^{(t)}, \urho_{j,r,i}^{(t)}$, and the fact that $\ell_i' \leq 0$, 
\begin{align*}
    &\orho_{j,r,i}^{(t)} = - \frac{\eta}{nm} \sum_{s=0}^{t-1} \ell_i'^{(s)}    \sigma'(\langle \mathbf{w}_{j,r}^{(s)}, \boldsymbol{\xi}_{i} \rangle)   \| \boldsymbol{\xi}_{i} \|^2_2 \mathds{1}(\tilde y_i = j)\\
    &\urho_{j,r,i}^{(t)} =  \frac{\eta}{nm} \sum_{s=0}^{t-1} \ell_i'^{(s)}    \sigma'(\langle \mathbf{w}_{j,r}^{(s)}, \boldsymbol{\xi}_{i} \rangle)   \| \boldsymbol{\xi}_{i} \|^2_2 \mathds{1}(\tilde y_i = -j)
\end{align*}
Then the iterative updates of the coefficients follow directly. 
\end{proof}

\subsection{Scale of Coefficients}

Here we start to provide a global bound for the decomposition coefficients. We show for a sufficiently large number of iterations $T^* = \widetilde \Theta(\eta^{-1} \epsilon^{-1} nm d^{-1} \sigma_\xi^{-2})$, the scale of the coefficients can be upper bounded up to some logarithmic factors.






We consider the following definition:
\begin{align*}
    \beta = 2 \max_{i,j,r}\{|\langle \mathbf{w}_{j,r}^{(0)}, {\boldsymbol{\mu}} \rangle|,|\langle \mathbf{w}_{j,r}^{(0)}, {\boldsymbol{\xi}}_{i}\rangle| \}, \qquad \SNR = \frac{\|\bmu \|}{\sigma_\xi \sqrt{d}}, \qquad \alpha = C_t \log(T^*)
\end{align*}
for some constant $C_t > 0$ to be determined later.
Then by Lemma \ref{lem:prelbound}, we can bound as $\beta \leq \sigma_0 \max \big\{ \sqrt{2 \log(12 m/\delta) } \| \bmu \|_2, 2\sqrt{\log(12 mn/\delta)}   \sigma_\xi \sqrt{d} \}$. 




We next provide the main proposition that bounds the scale of coefficients.

\begin{proposition}[Restatement of Proposition \ref{prop:global_main}]
\label{prop:bound_alltime}
Under Condition~\ref{ass:main}, for any $0 \leq t \leq T^*$
\begin{align}
    &0 \leq \orho^{(t)}_{j,r,i} \leq \alpha, \label{prop_bound:orho}\\
    &0 \geq \urho^{(t)}_{j,r,i} \geq - \beta - 10 \sqrt{\frac{\log(6n^2/\delta)}{d}} n \alpha  \geq -\alpha, \label{prop_bound:urho} \\
    &0 \leq \gamma^{(t)}_{j,r} \leq C_\gamma \alpha \label{prop_bound:gamma} 
\end{align}
for some constant $C_\gamma > 0$.
\end{proposition}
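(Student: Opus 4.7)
The plan is a joint induction on $t\in[0,T^*]$. At $t=0$ all coefficients vanish by Lemma~\ref{lemma:coefficient_iterative}, so (\ref{prop_bound:orho})--(\ref{prop_bound:gamma}) hold trivially. Assuming they hold up to $t$, the first preparatory step is to translate the coefficient bounds back into activation estimates via the decomposition (\ref{eq:w_decomposition}): using Lemmas~\ref{lemma:data_innerproducts} and \ref{lem:prelbound}, $\langle\bw_{j,r}^{(t)},\bxi_i\rangle = \langle\bw_{j,r}^{(0)},\bxi_i\rangle + \orho^{(t)}_{j,r,i}\mathds{1}(\tilde y_i=j) + \urho^{(t)}_{j,r,i}\mathds{1}(\tilde y_i=-j)$ up to cross-sample contamination of order $\widetilde O(n\alpha/\sqrt{d})=o(1)$ under Condition~\ref{ass:main}, and similarly $\langle \bw_{j,r}^{(t)}, y_i\bmu\rangle \approx jy_i\gamma_{j,r}^{(t)}$ up to $o(1)$ slack. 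With these estimates, the one-step updates of Lemma~\ref{lemma:coefficient_iterative} can be read off directly in terms of the coefficients.

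The bound (\ref{prop_bound:urho}) follows the template of \citep{cao2022benign,kou2023benign}: $\urho^{(t)}_{j,r,i}$ is non-increasing, and telescoping its update while using $|\ell_i'|\leq 1$, $\|\bxi_i\|_2^2\leq \tfrac{3}{2}\sigma_\xi^2 d$, together with the observation that $\sigma'(\langle\bw_{j,r}^{(t)},\bxi_i\rangle)\mathds{1}(\tilde y_i=-j)=1$ forces the cross-contamination (and initialization) to dominate $\urho_{j,r,i}^{(t)}$ itself, yields the claimed lower bound controlled by $\beta + \widetilde O(n\alpha/\sqrt{d})$. For (\ref{prop_bound:orho}) I argue by contradiction: let $t_0$ be the first iteration at which some $\orho_{\tilde y_i,r,i}^{(t_0+1)}>\alpha$. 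The one-step jump is $O(\eta\sigma_\xi^2 d/(nm))=o(1)$ by Condition~\ref{ass:main}, so $\orho_{\tilde y_i,r,i}^{(t_0)}\geq \alpha/2$. If $i\in\gS_t$, Lemma~\ref{lemma:yfi_bound_main} gives $\tilde y_i f(\bW^{(t_0)},\bx_i)\geq \alpha/(2m)-1/C_1 \gtrsim \log(T^*)$ once $C_t$ is chosen large, so $|\ell_i'^{(t_0)}|\leq 1/\mathrm{poly}(T^*)$ and the increment is negligible, a contradiction. If $i\in\gS_f$, the corresponding lower bound on $\tilde y_i f$ involves the gap $\orho^{(t_0)}_{\tilde y_i,r,i}-\gamma_{-\tilde y_i,r}^{(t_0)}$, so the induction hypothesis (\ref{prop_bound:gamma}) with $C_\gamma$ tuned so that $\gamma$ cannot fully cancel $\orho$ at its threshold closes the case.

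The main obstacle is (\ref{prop_bound:gamma}), since $\gamma_{j,r}^{(t)}$ need not be monotone in Stage II: noisy samples push negatively along $\bmu$, so the usual first-passage argument fails directly. The upper bound $\gamma\leq C_\gamma\alpha$ uses the same contradiction scheme as for $\orho$, but the positive updates from the clean majority now have to be balanced against a standing negative push of magnitude $O\bigl(|\gS_f|/n\cdot\eta\|\bmu\|_2^2/m\bigr)$ per iteration; choosing $C_\gamma$ sufficiently large relative to $C_t$ and to $\tau_\pm$ ensures that the window during which $|\ell_i'^{(t)}|$ is non-negligible on clean class-$j$ samples is too short to push $\gamma$ past $C_\gamma\alpha$. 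The non-negativity $\gamma\geq 0$ is obtained by another contradiction: were $\gamma_{j,r}^{(t_0)}$ to first cross zero from above, the collapse of the class-$j$ clean margins forces $|\ell_i'^{(t_0)}|\mathds{1}(i\in\gS_t\cap\gS_j)$ to exceed $|\ell_i'^{(t_0)}|\mathds{1}(i\in\gS_f\cap\gS_j)$ via Lemma~\ref{lemma:yfi_bound_main}, so the clean majority (fraction $1-\tau_\pm>1/2$) restores a non-negative net update along $\bmu$.

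The key conceptual difficulty, as stressed in the Comparison with Previous Studies paragraph, is that the regime $n\cdot\SNR^2=\Theta(1)$ disables the automatic balancing of loss derivatives exploited in \citep{kou2023benign}: here noisy-sample derivatives may strictly dominate clean-sample ones. Consequently, the three bounds cannot be propagated independently; the induction must be coupled, and the constants $C_t$, $C_\gamma$, together with the margin buffer $1/C_1$ from Lemma~\ref{lemma:yfi_bound_main}, must be tuned simultaneously against one another rather than in sequence.
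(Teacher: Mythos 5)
Your overall structure matches the paper's: joint induction on $t$, using Lemma~\ref{lemma:bound_inner} and Lemma~\ref{lemma:yfi_bound_main} to convert coefficient bounds into activation/margin estimates, the ``once below threshold the neuron stays off'' argument for $\underline\rho$, a first-crossing/last-time-above-threshold argument for the upper bounds on $\overline\rho$ and $\gamma$, and a contradiction argument for $\gamma\geq 0$ built on the clean majority dominating the noisy minority once the signal margins are no longer large. You also correctly identify the key difficulty created by $n\cdot\SNR^2=\Theta(1)$ and the loss of automatic balancing from \citep{kou2023benign}, and the need to tune the constants jointly.

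There is, however, a quantitative gap in your contradiction step for $\overline\rho^{(t)}_{\tilde y_i,r,i}\leq\alpha$ that the argument cannot survive. After concluding $\overline\rho^{(t_0)}_{\tilde y_i,r,i}\geq\alpha/2$ for one specific neuron $r$, you deduce $\tilde y_i f(\bW^{(t_0)},\bx_i)\geq\alpha/(2m)-1/C_1$ and claim this is $\gtrsim\log T^*$. It is not: $\alpha/(2m)=C_t\log(T^*)/(2m)$, and under Condition~\ref{ass:main} the width $m$ is large, so no admissible choice of the constant $C_t$ makes $\alpha/(2m)\gtrsim\log T^*$. The ingredient you are missing is the preserved-activation structure: Lemma~\ref{lemma:subset_xi} (combined with Lemma~\ref{lem:init_set_bound}) shows that for every $r'\in\gS_i^{(0)}$ — a set of size at least $0.4m$ — the indicator $\sigma'(\langle\bw^{(s)}_{\tilde y_i,r'},\bxi_i\rangle)$ remains $1$ for all $s\leq t$, so every such $\overline\rho^{(t)}_{\tilde y_i,r',i}$ has accumulated at least as many increments as the threshold-crossing neuron and is therefore at least as large. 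Hence $\frac{1}{m}\sum_{r'}\overline\rho^{(t_0)}_{\tilde y_i,r',i}=\Omega(\alpha)$, not $\Omega(\alpha/m)$, and it is this lower bound, fed into Lemma~\ref{lemma:yfi_bound_main}, that gives a logit of order $\log T^*$ and hence $|\ell_i'^{(t_0)}|=O(1/T^*)$. Without invoking this structural fact, the induction step for $\overline\rho$ does not close. The same point recurs in your $\gamma$ upper bound, where the suppression of clean-sample derivatives again rests on an $\Omega(\alpha)$ (not $\Omega(\alpha/m)$) margin. A smaller caveat: for the noisy branch of the $\overline\rho$ bound you propose to take $C_\gamma$ small enough that $\gamma$ cannot cancel $\overline\rho$ near its threshold; the paper instead keeps $C_\gamma\geq 1$ and splits on whether $\frac{1}{m}\sum_r\overline\rho^{(t)}_{\tilde y_i,r,i}$ exceeds $\frac{1}{m}\sum_r\gamma_{-\tilde y_i,r}^{(t)}$, arguing the other case is incompatible with loss convergence, so this part of your constant bookkeeping would need to be revisited in any case.
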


We aim to prove Proposition \ref{prop:bound_alltime} using induction. This requires several intermediate lemmas through the induction process. 

\begin{lemma}
\label{lemma:bound_inner} 
Under Condition~\ref{ass:main}, suppose \eqref{prop_bound:orho}, \eqref{prop_bound:urho}, \eqref{prop_bound:gamma} hold at iteration $t$. Then for all $r \in [m]$, $j \in \{ \pm1\}, i \in [n]$,
\begin{align*}
    &|\langle \bw^{(t)}_{j,r} -  \bw^{(0)}_{j,r}, \bmu \rangle - j \cdot \gamma^{(t)}_{j,r}| \leq  \SNR \sqrt{\frac{8 \log(6n/\delta)}{d}}   n \alpha  , \\
    &|\langle\bw^{(t)}_{j,r} -  \bw^{(0)}_{j,r}, \bxi_i \rangle - \orho^{(t)}_{j,r,i}| \leq  5 \sqrt{\frac{\log(6n^2/\delta)}{d}} n \alpha   , \quad \tilde y_i = j \\
    &|\langle\bw^{(t)}_{j,r} -  \bw^{(0)}_{j,r}, \bxi_i \rangle - \urho^{(t)}_{j,r,i}| \leq 5 \sqrt{\frac{\log(6n^2/\delta)}{d}} n \alpha  ,  \quad \tilde y_i = -j 
\end{align*}
\end{lemma}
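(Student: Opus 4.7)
The plan is to expand the left-hand sides using the signal-noise decomposition in \eqref{eq:w_decomposition} so that the ``main terms'' cancel against $j\gamma_{j,r}^{(t)}$, $\orho_{j,r,i}^{(t)}$, or $\urho_{j,r,i}^{(t)}$, and then control the residual ``cross terms'' via the near-orthogonality estimates in Lemma \ref{lemma:data_innerproducts}, combined with the inductive scale bounds \eqref{prop_bound:orho}--\eqref{prop_bound:gamma} on the coefficients.

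More concretely, from \eqref{eq:w_decomposition} we have
\begin{equation*}
    \bw_{j,r}^{(t)} - \bw_{j,r}^{(0)} \;=\; j\,\gamma_{j,r}^{(t)}\|\bmu\|_2^{-2}\bmu + \sum_{i'=1}^n \rho_{j,r,i'}^{(t)}\|\bxi_{i'}\|_2^{-2}\bxi_{i'},
\end{equation*}
where $\rho_{j,r,i'}^{(t)} = \orho_{j,r,i'}^{(t)} + \urho_{j,r,i'}^{(t)}$, and by the support properties noted after Lemma \ref{lemma:coefficient_iterative} the two summands are never simultaneously nonzero. Taking inner product with $\bmu$ yields $j\gamma_{j,r}^{(t)}$ plus a cross term $\sum_{i'} \rho_{j,r,i'}^{(t)}\|\bxi_{i'}\|_2^{-2}\langle \bxi_{i'},\bmu\rangle$; by Lemma \ref{lemma:data_innerproducts} each factor $\|\bxi_{i'}\|_2^{-2}|\langle \bxi_{i'},\bmu\rangle|$ is at most $\tfrac{2\|\bmu\|_2\sigma_\xi\sqrt{2\log(6n/\delta)}}{\sigma_\xi^2 d} = 2\SNR\sqrt{2\log(6n/\delta)}/\sqrt{d}$, and by the inductive hypotheses \eqref{prop_bound:orho}--\eqref{prop_bound:urho} we have $|\rho_{j,r,i'}^{(t)}|\le \alpha$. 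Summing over $n$ indices and absorbing constants gives the first bound.

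For the $\bxi_i$ inner product, the diagonal term yields $\rho_{j,r,i}^{(t)}\|\bxi_i\|_2^{-2}\|\bxi_i\|_2^2 = \rho_{j,r,i}^{(t)}$, which equals $\orho_{j,r,i}^{(t)}$ when $\tilde y_i = j$ and $\urho_{j,r,i}^{(t)}$ when $\tilde y_i = -j$. The remaining pieces are a $\bmu$-cross term $j\gamma_{j,r}^{(t)}\|\bmu\|_2^{-2}\langle\bmu,\bxi_i\rangle$ and off-diagonal $\bxi$-cross terms $\sum_{i'\ne i}\rho_{j,r,i'}^{(t)}\|\bxi_{i'}\|_2^{-2}\langle\bxi_{i'},\bxi_i\rangle$. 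Using Lemma \ref{lemma:data_innerproducts}, the off-diagonal terms contribute at most $(n-1)\alpha\cdot\tfrac{4\sqrt{\log(6n^2/\delta)}}{\sqrt{d}}$, and the $\bmu$-cross term is bounded by $C_\gamma\alpha\cdot\tfrac{\sigma_\xi\sqrt{2\log(6n/\delta)}}{\|\bmu\|_2}$. Applying the SNR condition $n\cdot\SNR^2=\Theta(1)$, i.e.\ $\|\bmu\|_2=\Theta(\sigma_\xi\sqrt{d/n})$, one shows the latter is dominated by $\sqrt{\log(6n^2/\delta)/d}\cdot n\alpha$, so the two contributions combine to the required $5\sqrt{\log(6n^2/\delta)/d}\cdot n\alpha$.

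There is no essential difficulty here; the argument is a triangle-inequality book-keeping step. The only mild subtlety will be verifying that the $\bmu$-cross term in the $\bxi_i$ estimate fits inside the stated constant $5$, which is where Condition \ref{ass:main}(1) and (4) (giving $\|\bmu\|_2$ of order $\sigma_\xi\sqrt{d/n}$ and $\|\bmu\|_2\gtrsim \sigma_\xi\sqrt{\log(n/\delta)}$) get used to absorb the $\gamma$ contribution into the noise-cross bound; the rest is direct substitution of the high-probability concentration bounds from Lemma \ref{lemma:data_innerproducts} and the inductive coefficient bounds from Proposition \ref{prop:bound_alltime}.
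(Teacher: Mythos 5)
Your proposal is correct and matches the paper's proof essentially line-for-line: expand $\bw^{(t)}_{j,r}-\bw^{(0)}_{j,r}$ via the signal-noise decomposition, isolate the matching coefficient, bound the remaining cross terms using the inner-product estimates from Lemma~\ref{lemma:data_innerproducts} together with the inductive coefficient bounds \eqref{prop_bound:orho}--\eqref{prop_bound:gamma}, and use the support constraint (at most one of $\orho_{j,r,i}^{(t)}$, $\urho_{j,r,i}^{(t)}$ is nonzero) plus $n\cdot\SNR^2=\Theta(1)$ to absorb the $\gamma$-cross term into the stated constant. No gaps.
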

\begin{proof}[Proof of Lemma~\ref{lemma:bound_inner}]
From signal-noise decomposition \eqref{eq:w_decomposition}, 
\begin{align*}
    |\langle \bw^{(t)}_{j,r} - \bw^{(0)}_{j,r}, \bmu  \rangle - j \cdot\gamma^{(t)}_{j,r} | &= \Big|  \sum_{ i = 1}^n \overline{\rho}_{j,r,i}^{(t) }\cdot \| {\boldsymbol{\xi}}_i \|_2^{-2} \cdot \langle {\boldsymbol{\xi}}_{i}, \bmu \rangle  + \sum_{ i = 1}^n \underline{\rho}_{j,r,i}^{(t) }\cdot \| {\boldsymbol{\xi}}_i \|_2^{-2} \cdot  \langle {\boldsymbol{\xi}}_{i} , \bmu \rangle \Big| \\
    &\leq \sum_{i=1}^n  \big( |\overline{\rho}_{j,r,i}^{(t) }| + |\underline{\rho}_{j,r,i}^{(t) }| \big) \| {\boldsymbol{\xi}}_i \|_2^{-2} \cdot | \langle {\boldsymbol{\xi}}_{i} , \bmu \rangle | \\
    &\leq \SNR \sqrt{\frac{8 \log(6n/\delta)}{d}}  \sum_{i=1}^n \big( |\overline{\rho}_{j,r,i}^{(t) }| + |\underline{\rho}_{j,r,i}^{(t) }|\big) \\
    &\leq   \SNR \sqrt{\frac{8 \log(6n/\delta)}{d}} n \alpha  
\end{align*}
where the second inequality is due to Lemma \ref{lemma:data_innerproducts} and the last inequality is by \eqref{prop_bound:orho}, \eqref{prop_bound:urho}. The second inequality follows similarly. 

Then, for $\tilde y_i = j$, we have $\urho^{(t)}_{j,r,i}  =  0$, $\forall t \geq 0$ and hence
\begin{align*}
    &|\langle\bw^{(t)}_{j,r} -  \bw^{(0)}_{j,r}, \bxi_i\rangle - \orho^{(t)}_{j,r,i}| \\
    &= \Big| j \cdot \gamma^{(t)}_{j,r} \cdot \|\bmu \|_2^{-2} \langle \bmu, \bxi_i \rangle   + \sum_{i' \neq i} \orho^{(t)}_{j,r,i'} \cdot \| \bxi_{i'} \|^{-2}_{2} \cdot \langle \bxi_i, \bxi_{i'} \rangle + \sum_{i' \neq i} \urho^{(t)}_{j,r,i'} \cdot \| \bxi_{i'} \|^{-2}_{2} \cdot \langle \bxi_i, \bxi_{i'} \rangle  \Big| \\
    &\leq  \|\bmu \|_2^{-2} \cdot | \langle \bmu, \bxi_i \rangle | \cdot | \gamma_{j,r}^{(t)} | 
    + \sum_{i'\neq i}^n  \big( |\overline{\rho}_{j,r,i'}^{(t) }| + |\underline{\rho}_{j,r,i'}^{(t) }| \big) \| {\boldsymbol{\xi}}_{i'} \|_2^{-2} \cdot | \langle {\boldsymbol{\xi}}_{i'} , \bxi_i \rangle | \\
    &\leq  \SNR \sqrt{\frac{2 \log(6n/\delta)}{d}} C_\gamma n \alpha +  4 \sqrt{\frac{\log(6n^2/\delta)}{d}}  n  \alpha \\
    &\leq (2 C_\gamma \SNR + 4) \sqrt{\frac{\log(6n^2/\delta)}{d}} n \alpha \\
    &\leq 5 \sqrt{\frac{\log(6n^2/\delta)}{d}} n \alpha
\end{align*}
where we use  Lemma \ref{lemma:data_innerproducts} and \eqref{prop_bound:gamma} in the second inequality. In the third inequality, we use $2 \log(6n/\delta) \leq 4 \log(6n^2/\delta)$. In the fourth inequality, we note that the condition on SNR ensures that $\SNR = \Theta(1/\sqrt{n})$.

For $\tilde y_i \neq j$, the proof follow exactly the same strategy as for $\tilde y_i = j$ and hence is omitted.
\end{proof}

\begin{lemma}
\label{lemma:yfi_bound}
Under Condition~\ref{ass:main} and suppose \eqref{prop_bound:orho}, \eqref{prop_bound:urho}, \eqref{prop_bound:gamma} hold at time $t$, then there exists a sufficiently large constant $C_1 > 0$ such that 
\begin{align*}
    \frac{1}{m} \sum_{r = 1}^m \big( \gamma_{\tilde y_i, r}^{(t)}  + \orho^{(t)}_{\tilde y_i,r,i} \big) - 1/C_1 \leq \tilde y_i f(\bW^{(t)}, \bx_i) \leq  \frac{1}{m} \sum_{r = 1}^m \big( \gamma_{\tilde y_i, r}^{(t)}  + \orho^{(t)}_{\tilde y_i,r,i} \big)  + 1/C_1 &\quad \text{ when } i \in \gS_t \\
    \frac{1}{m} \sum_{r = 1}^m \big(  \orho^{(t)}_{\tilde y_i,r,i} -  \gamma_{-\tilde y_i, r}^{(t)}  \big) - 1/C_1 \leq \tilde y_i f(\bW^{(t)}, \bx_i) \leq  \frac{1}{m} \sum_{r = 1}^m \big(  \orho^{(t)}_{\tilde y_i,r,i} - \gamma_{-\tilde y_i, r}^{(t)}  \big)  + 1/C_1 &\quad \text{ when } i \in \gS_f
\end{align*}
\end{lemma}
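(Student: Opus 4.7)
The plan is to unroll $\tilde y_i f(\bW^{(t)}, \bx_i)$ using the definition of the two-layer CNN, replace each ReLU activation by the dominant signal or noise coefficient (justified by Lemma~\ref{lemma:bound_inner} together with Lipschitzness of $\sigma$), and then split into the four subcases given by $y_i \in \{\pm 1\}$ crossed with clean vs.\ noisy. All four subcases will collapse into one of the two stated inequalities after multiplying through by $\tilde y_i$.

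\textbf{Per-neuron reduction.} For each $(j,r)$, I would apply Lemma~\ref{lemma:bound_inner} to write
\begin{align*}
\langle \bw^{(t)}_{j,r}, y_i\bmu\rangle &= \langle \bw^{(0)}_{j,r}, y_i\bmu\rangle + j y_i\,\gamma^{(t)}_{j,r} + E^{\mu}_{j,r,i}, \\
\langle \bw^{(t)}_{j,r}, \bxi_i\rangle &= \langle \bw^{(0)}_{j,r}, \bxi_i\rangle + \orho^{(t)}_{j,r,i}\sone(\tilde y_i = j) + \urho^{(t)}_{j,r,i}\sone(\tilde y_i = -j) + E^{\xi}_{j,r,i},
\end{align*}
with residuals bounded in Lemma~\ref{lemma:bound_inner} and initialization inner products bounded in Lemma~\ref{lem:prelbound}. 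Since $\gamma^{(t)}_{j,r}, \orho^{(t)}_{j,r,i} \geq 0$ and $|\urho^{(t)}_{j,r,i}|$ is negligible by Proposition~\ref{prop:bound_alltime}, the inequality $|\sigma(a)-\sigma(b)| \leq |a-b|$ gives, up to small error terms $\varepsilon^{\mu}_{j,r,i}, \varepsilon^{\xi}_{j,r,i}$,
\begin{align*}
\sigma(\langle \bw^{(t)}_{j,r}, y_i\bmu\rangle) &= \gamma^{(t)}_{j,r}\sone(j = y_i) + \varepsilon^{\mu}_{j,r,i}, \\
\sigma(\langle \bw^{(t)}_{j,r}, \bxi_i\rangle) &= \orho^{(t)}_{j,r,i}\sone(j = \tilde y_i) + \varepsilon^{\xi}_{j,r,i}.
\end{align*}
When the indicator is off, the argument of $\sigma$ is nonpositive up to error, so the activation is bounded by the same error terms.

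\textbf{Case assembly.} Plugging these into $f = F_{+1} - F_{-1}$ and multiplying by $\tilde y_i$, the signal sum contributes $\tilde y_i y_i \cdot \frac{1}{m}\sum_r \gamma^{(t)}_{y_i,r}$ and the noise sum contributes $\tilde y_i^{\,2} \cdot \frac{1}{m}\sum_r \orho^{(t)}_{\tilde y_i,r,i} = \frac{1}{m}\sum_r \orho^{(t)}_{\tilde y_i,r,i}$, since the $j=-\tilde y_i$ branch of each sum vanishes. For a clean sample ($i \in \gS_t$), $\tilde y_i y_i = +1$ and $y_i = \tilde y_i$, so the two contributions combine to $\frac{1}{m}\sum_r(\gamma^{(t)}_{\tilde y_i,r} + \orho^{(t)}_{\tilde y_i,r,i})$. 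For a noisy sample ($i \in \gS_f$), $\tilde y_i y_i = -1$ and $y_i = -\tilde y_i$, so the signal term becomes $-\frac{1}{m}\sum_r \gamma^{(t)}_{-\tilde y_i,r}$, producing $\frac{1}{m}\sum_r(\orho^{(t)}_{\tilde y_i,r,i} - \gamma^{(t)}_{-\tilde y_i,r})$.

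\textbf{Main obstacle.} The arithmetic is essentially bookkeeping; the delicate step is to verify that the aggregated error $\frac{1}{m}\sum_{r,j}(|\varepsilon^{\mu}_{j,r,i}| + |\varepsilon^{\xi}_{j,r,i}|)$ is at most $1/C_1$ for a single absolute constant $C_1$, uniformly in $t \leq T^*$. This requires combining three small quantities granted by Condition~\ref{ass:main}: the initialization magnitudes $\sigma_0\|\bmu\|_2\sqrt{\log(m/\delta)}$ and $\sigma_0\sigma_\xi\sqrt{d\log(mn/\delta)}$ from Lemma~\ref{lem:prelbound}, the $\widetilde O(n\alpha/\sqrt{d})$ cross-term residuals of Lemma~\ref{lemma:bound_inner}, and the bound on $|\urho^{(t)}_{j,r,i}|$ from Proposition~\ref{prop:bound_alltime}. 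Each is $o(1)$, but the $\log(T^*)$ factors hidden in $\alpha$ must be controlled by the dimension requirement $d = \widetilde\Omega(n^2\log(T^*)^2)$ to keep the aggregate error below a fixed constant threshold. Choosing $C_1$ large enough to swallow the resulting sum then closes the argument.
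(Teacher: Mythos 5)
Your proposal is correct and follows essentially the same route as the paper's proof: expand $\tilde y_i f(\bW^{(t)},\bx_i)$ over $j\in\{\pm1\}$ and $r\in[m]$, apply Lemma~\ref{lemma:bound_inner} (together with Lemma~\ref{lem:prelbound} for the initialization terms and Proposition~\ref{prop:bound_alltime} for the $\urho$ bound) to replace each ReLU argument by its dominant coefficient, note that the ``wrong-sign'' branches are non-positive up to the same error so their activations vanish, and then split on clean versus noisy. The only organizational difference is that you state a unified per-neuron identity via indicators and then assemble the four sign cases, whereas the paper establishes the upper and lower bounds in each of the two cases $\tilde y_i=y_i$ and $\tilde y_i\neq y_i$ directly by one-sided inequalities; this is presentation, not substance, and your accounting of the error budget (initialization magnitudes, cross-term residuals of order $\widetilde O(n\alpha/\sqrt d)$, the $\urho$ tail) matches how the paper absorbs everything into a fixed $1/C_1$ under Condition~\ref{ass:main}.
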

\begin{proof}[Proof of Lemma \ref{lemma:yfi_bound}]
We first see
\begin{align}
    \tilde y_i f(\bW^{(t)}, \bx_i) &= \frac{1}{m} \sum_{j,r} \tilde y_i \cdot j \cdot \big( \sigma(\langle  \bw^{(t)}_{j,r}, y_i \bmu  \rangle) +  \sigma(\langle \bw^{(t)}_{j,r}, \bxi_i \rangle) \big) \nonumber \\
    &= \frac{1}{m} \sum_{r = 1}^m \big( \sigma(\langle \bw^{(t)}_{\tilde y_i, r}, y_i \bmu \rangle) + \sigma(\langle \bw^{(t)}_{\tilde y_i,r}, \bxi_i  \rangle)   \big)   - \frac{1}{m} \sum_{r =1}^m \big( \sigma(\langle \bw^{(t)}_{-\tilde y_i, r}, y_i \bmu  \rangle) +  \sigma(\langle \bw^{(t)}_{-\tilde y_i, r}, \bxi_i \rangle) \big). \nonumber
\end{align}

Recall from the gradient descent update and Lemma \ref{lemma:bound_inner}, 
\begin{align*}
    &|\langle  \bw^{(t)}_{j,r},  \bmu  \rangle - \langle \bw^{(0)}_{j, r} ,   \bmu \rangle - j \cdot \gamma_{j, r}^{(t)} | = \SNR \sqrt{\frac{8 \log(6n/\delta)}{d}}   n \alpha  
\end{align*}

Then it can be verified that when $\tilde y_i = y_i$,
\begin{align*}
    \langle  \bw^{(t)}_{\tilde y_i,r},  y_i \bmu   \rangle & \leq | \langle \bw^{(0)}_{\tilde y_i, r} ,   \bmu \rangle| + \gamma_{\tilde y_i, r}^{(t)} +  \SNR \sqrt{\frac{8 \log(6n/\delta)}{d}}   n \alpha  \\
    \langle  \bw^{(t)}_{\tilde y_i,r}, - y_i \bmu \rangle &\leq  | \langle \bw^{(0)}_{\tilde y_i, r} ,   \bmu \rangle| - \gamma_{\tilde y_i, r}^{(t)} +  \SNR \sqrt{\frac{8 \log(6n/\delta)}{d}}   n \alpha \\
    &\leq | \langle \bw^{(0)}_{\tilde y_i, r} ,   \bmu \rangle| +   \SNR \sqrt{\frac{8 \log(6n/\delta)}{d}}   n \alpha  \\
    \langle \bw^{(t)}_{\tilde y_i, r}, \bxi_i \rangle &\leq | \langle \bw^{(0)}_{\tilde y_i, r} ,   \bxi_i \rangle | + \orho_{\tilde y_i, r, i}^{(t)} +   5 \sqrt{\frac{\log(6n^2/\delta)}{d}} n \alpha   \\
    \langle \bw^{(t)}_{-\tilde y_i, r} , - y_i \bmu \rangle &\geq \gamma_{-\tilde y_i,r}^{(t)} - | \bw^{(0)}_{-\tilde y_i, r} , \bmu | -  \SNR \sqrt{\frac{8 \log(6n/\delta)}{d}}   n \alpha 
\end{align*}
Using these inequalities, we can upper bound when $\tilde y_i = y_i$, i.e., $i \in\gS_t$,
\begin{align*}
   \tilde y_i f(\bW^{(t)}, \bx_i) &\leq \frac{1}{m} \sum_{r = 1}^m \big( \sigma(\langle \bw^{(t)}_{\tilde y_i, r}, y_i \bmu \rangle)   + \sigma(\langle \bw^{(t)}_{\tilde y_i,r}, \bxi_i  \rangle)   \big)  \\
   &\leq  \frac{1}{m} \sum_{r = 1}^m \big( \gamma_{\tilde y_i, r}^{(t)} + \orho^{(t)}_{\tilde y_i,r,i}  \big) + 2 \beta + \widetilde O(n \alpha/\sqrt{d})  \\
   &\leq  \frac{1}{m} \sum_{r = 1}^m \big( \gamma_{\tilde y_i, r}^{(t)} + \orho^{(t)}_{\tilde y_i,r,i} \big) + 1/C_1
\end{align*}
where we use Lemma \ref{lemma:bound_inner} and the Condition~\ref{ass:main} where we choose a sufficiently large $C_1$.

Similarly, we can lower bound 
\begin{align*}
     \tilde y_i f(\bW^{(t)}, \bx_i) &\geq \frac{1}{m} \sum_{r = 1}^m \big( \gamma_{\tilde y_i, r}^{(t)} + \orho^{(t)}_{\tilde y_i,r,i}  \big) - 1/C_1
\end{align*}

On the other hand, when $\tilde y_i \neq y_i$, it can be shown that 
\begin{align*}
    \langle  \bw^{(t)}_{\tilde y_i,r},  y_i \bmu  \rangle &\leq  | \langle \bw^{(0)}_{\tilde y_i, r} ,   \bmu \rangle| - \gamma_{\tilde y_i, r}^{(t)} +  \SNR \sqrt{\frac{8 \log(6n/\delta)}{d}}   n \alpha \leq | \langle \bw^{(0)}_{\tilde y_i, r} ,   \bmu \rangle| +  \SNR \sqrt{\frac{8 \log(6n/\delta)}{d}}   n \alpha  \\
    \langle  \bw^{(t)}_{-\tilde y_i,r},  y_i \bmu \rangle &\leq | \langle \bw_{-\tilde y_i, r}^{(0)}, \bmu \rangle | + \gamma_{-\tilde y_i, r}^{(t)} + \SNR \sqrt{\frac{8 \log(6n/\delta)}{d}}   n \alpha  \\
    \langle  \bw^{(t)}_{-\tilde y_i,r},  y_i \bmu \rangle &\geq  \gamma_{-\tilde y_i, r}^{(t)} - | \langle \bw_{-\tilde y_i, r}^{(0)}, \bmu \rangle | - \SNR \sqrt{\frac{8 \log(6n/\delta)}{d}}   n \alpha \\
    \langle \bw^{(t)}_{\tilde y_i, r}  ,  \bxi_i \rangle &\leq \orho_{\tilde y_i, r, i}^{(t)} + |\langle \bw_{\tilde y_i, r}^{(0)}, \bxi_i \rangle| + 5 \sqrt{\frac{\log(6n^2/\delta)}{d}} n\alpha
\end{align*}
where we notice $\gamma_{j, r}^{(t)} \geq 0$.

Then we can upper bound when  $\tilde y_i \neq y_i$ as 
\begin{align*}
    \tilde y_i f(\bW^{(t)}, \bx_i) &\leq  \frac{1}{m} \sum_{r = 1}^m \big( \sigma(\langle \bw^{(t)}_{\tilde y_i, r}, y_i \bmu \rangle)   + \sigma(\langle \bw^{(t)}_{\tilde y_i,r}, \bxi_i  \rangle)  \big)  - \frac{1}{m} \sum_{r=1}^m \sigma(\langle \bw^{(t)}_{-\tilde y_i, r}, y_i \bmu \rangle) \\
    &\leq \beta + \SNR \sqrt{\frac{8 \log(6n/\delta)}{d}}  C_\rho n  + \frac{1}{m} \sum_{r=1}^m  \orho^{(t)}_{\tilde y_i,r,i} + 5 \sqrt{\frac{\log(6n^2/\delta)}{d}} C_\rho n   \\
    &\quad - \frac{1}{m} \sum_{r=1}^m  \gamma_{-\tilde y_i, r}^{(t)} + \frac{1}{m} \sum_{r=1}^m  |\langle \bw^{(0)}_{-\tilde y_i, r}, y_i \bmu \rangle| + \SNR \sqrt{\frac{8 \log(6n/\delta)}{d}}  C_\rho n  \\
    &\leq \frac{1}{m} \sum_{r=1}^m \big( \orho^{(t)}_{\tilde y_i,r,i}  -  \gamma_{-\tilde y_i, r}^{(t)} \big) + 1/C_1
\end{align*}
where the second inequality uses Lemma \ref{lemma:bound_inner} and last inequality is by the Condition~\ref{ass:main}.

Similarly, we can lower bound $\tilde y_i f(\bW^{(t)}, \bx_i)$ as 
\begin{align*}
    \tilde y_i f(\bW^{(t)}, \bx_i) &\geq \frac{1}{m} \sum_{r = 1}^m  \sigma(\langle \bw^{(t)}_{\tilde y_i,r}, \bxi_i  \rangle)   - \frac{1}{m} \sum_{r =1}^m \big( \sigma(\langle \bw^{(t)}_{-\tilde y_i, r}, y_i \bmu  \rangle)  +\sigma(\langle \bw^{(t)}_{-\tilde y_i, r}, \bxi_i \rangle) \big) \\
    &\geq \frac{1}{m} \sum_{r = 1}^m \big(\orho^{(t)}_{\tilde y_i,r,i}  - \gamma_{\tilde y_i, r}^{(t)}  \big) - 1/C_1
\end{align*}
where we use Lemma \ref{lemma:bound_inner}. 
\end{proof}

\begin{lemma}
\label{lemma:ell_bound}
Under Condition~\ref{ass:main} and suppose \eqref{prop_bound:orho}, \eqref{prop_bound:urho}, \eqref{prop_bound:gamma} hold at time $t$. If $\max_{j,r,i} \{ \gamma_{j,r}^{(t)}, \orho^{(t)}_{j,r,i} \} = O(1)$, we have $\tilde y_i f(\bW^{(t)}, \bx_i) = O(1)$ and $\ell_i'^{(t)} = \Omega(1)$ for all $i \in [n]$. 
\end{lemma}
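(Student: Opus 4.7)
\textbf{Proof proposal for Lemma~\ref{lemma:ell_bound}.} The plan is to invoke Lemma~\ref{lemma:yfi_bound} directly and then translate the resulting $O(1)$ bound on the margin $\tilde y_i f(\bW^{(t)}, \bx_i)$ into an $\Omega(1)$ lower bound on $|\ell_i'^{(t)}|$ via the explicit formula for the logistic loss derivative. Since the hypothesis already controls $\gamma^{(t)}_{j,r}$ and $\orho^{(t)}_{j,r,i}$ by a constant, and Proposition~\ref{prop:bound_alltime} guarantees that both of these coefficients are non-negative (while $\urho$ does not appear in the sandwich of Lemma~\ref{lemma:yfi_bound}), no further control of the weights is needed.

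First, I would split into cases $i \in \gS_t$ and $i \in \gS_f$, and in each case apply Lemma~\ref{lemma:yfi_bound} to bound $\tilde y_i f(\bW^{(t)}, \bx_i)$ from both sides. For $i \in \gS_t$, the upper bound reads $\tilde y_i f(\bW^{(t)}, \bx_i) \le \frac{1}{m}\sum_r(\gamma^{(t)}_{\tilde y_i,r} + \orho^{(t)}_{\tilde y_i,r,i}) + 1/C_1$, which is $O(1)$ by hypothesis, and the lower bound gives $\tilde y_i f(\bW^{(t)}, \bx_i) \ge -1/C_1$, using non-negativity of $\gamma$ and $\orho$ from Proposition~\ref{prop:bound_alltime}. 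For $i \in \gS_f$, the upper bound gives $\tilde y_i f(\bW^{(t)}, \bx_i) \le \frac{1}{m}\sum_r \orho^{(t)}_{\tilde y_i,r,i} + 1/C_1 = O(1)$ (dropping the non-positive $-\gamma_{-\tilde y_i,r}^{(t)}$ term), while the lower bound yields $\tilde y_i f(\bW^{(t)}, \bx_i) \ge -\frac{1}{m}\sum_r \gamma^{(t)}_{-\tilde y_i,r} - 1/C_1 = -O(1)$. Hence, in either case, $|\tilde y_i f(\bW^{(t)}, \bx_i)| = O(1)$.

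Second, recalling $\ell_i'^{(t)} = -1/(1 + \exp(\tilde y_i f(\bW^{(t)}, \bx_i)))$, the previous step gives $\exp(\tilde y_i f(\bW^{(t)}, \bx_i)) \le \exp(O(1)) = O(1)$, so $|\ell_i'^{(t)}| \ge 1/(1+O(1)) = \Omega(1)$, which completes the proof.

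There is no substantial obstacle: all heavy lifting is packaged in Lemma~\ref{lemma:yfi_bound}, and the remainder is a one-line algebraic consequence. The only subtlety to be careful about is that the constants $1/C_1$ from Lemma~\ref{lemma:yfi_bound} and the hidden constants inside $O(1)$ should be made explicit so that the resulting $\Omega(1)$ lower bound on $|\ell_i'^{(t)}|$ is a concrete constant depending only on $C_1$, $C_\gamma$, and the $O(1)$ constant in the hypothesis; this is only bookkeeping and involves no new estimate.
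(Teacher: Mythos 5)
Your proof is correct and follows essentially the same route as the paper's: invoke Lemma~\ref{lemma:yfi_bound} to sandwich $\tilde y_i f(\bW^{(t)}, \bx_i)$ in the two cases $i \in \gS_t$ and $i \in \gS_f$, use non-negativity of $\gamma$ and $\orho$ together with the $O(1)$ hypothesis, and then read off the lower bound on $|\ell_i'^{(t)}|$ from the logistic derivative formula. The paper's version only writes out the upper-bound direction on $\tilde y_i f$ (since only that is needed for $\ell_i'^{(t)} = \Omega(1)$), while you also verify the lower bound to cover the $\tilde y_i f = O(1)$ claim explicitly; this is a harmless and arguably more complete presentation of the same argument.
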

\begin{proof}[Proof of Lemma \ref{lemma:ell_bound}]
The proof trivially from Lemma \ref{lemma:yfi_bound} and the definition of loss. Specifically, we denote the upper bound as $C''$. For $i \in \gS_t$, by Lemma \ref{lemma:yfi_bound},
\begin{align*}
    |\ell_i'^{(t)}| = \frac{1}{1 + \exp(\tilde y_i f(\bW^{(t)}, \bx_i))} &\geq \frac{1}{1 + \exp\big( \frac{1}{m} \sum_{r = 1}^m \big( \gamma_{\tilde y_i, r}^{(t)}  + \orho^{(t)}_{\tilde y_i,r,i} \big) + 1/C_1  \big)} \\
    &\geq \frac{1}{1 + \exp(2 C'' + 1/C_1) } 
\end{align*}

For $i \in \gS_f$, by Lemma \ref{lemma:yfi_bound},
\begin{align*}
     |\ell_i'^{(t)}| = \frac{1}{1 + \exp(\tilde y_i f(\bW^{(t)}, \bx_i))} &\geq \frac{1}{1 + \exp\big( \frac{1}{m} \sum_{r = 1}^m \big( - \gamma_{-\tilde y_i, r}^{(t)}  +  \orho^{(t)}_{\tilde y_i,r,i} \big) + 1/C_1  \big)} \\
     &\geq \frac{1}{1 + \exp(C''  + 1/C_1)} > \frac{1}{1 + \exp(2 C''+ 1/C_1)} 
\end{align*}
where the second inequality is by $\gamma_{-\tilde y_i, r}^{(t)} \geq 0$. Thus for all $i \in [n]$, we can show that $|\ell_i'^{(t)}| \geq (1 + \exp(2 C'' + C_1^{-1}))^{-1}$. 
\end{proof}

Recall $\gS_{i}^{(s)} \coloneqq \{ r \in [m] : \langle \bw^{(s)}_{\tilde y_i, r}, \bxi_i \rangle > 0\}$ and $\gS_{j,r}^{(s)} \coloneqq \{ i \in [n] : y_i = j, \langle \bw_{j,r}^{(s)}, \bxi_i \rangle > 0 \}$.






The next lemma shows that in the first stage where the loss derivatives can be lower bounded, the inner product between weights and noise is increasing. 

\begin{lemma}
\label{lemma:subset_xi}
Under Condition~\ref{ass:main} and suppose for any $t \leq T^*$, \eqref{prop_bound:orho}, \eqref{prop_bound:urho}, \eqref{prop_bound:gamma}  hold for all $s \leq t$. Then we can show 
\begin{align*}
    \gS_i^{(0)} \subseteq \gS_{i}^{(s)}, \quad \gS_{j,r}^{(0)} \subseteq \gS_{j,r}^{(s)}.
\end{align*}
for any $s \leq t$.
\end{lemma}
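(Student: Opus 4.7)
The two inclusions in fact reduce to a single statement: for every $r \in [m]$ and $i \in [n]$ with $\langle \bw_{\tilde y_i, r}^{(0)}, \bxi_i \rangle > 0$, I must show $\langle \bw_{\tilde y_i, r}^{(s)}, \bxi_i \rangle > 0$ for all $s \leq t$, since by definition $r \in \gS_i^{(s)}$ is the same condition as $i \in \gS_{\tilde y_i, r}^{(s)}$. I proceed by induction on $s$, with the base case $s = 0$ true by the definition of $\gS_i^{(0)}$.

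For the inductive step, fix $r \in \gS_i^{(0)}$ and assume $\langle \bw_{\tilde y_i, r}^{(s')}, \bxi_i \rangle > 0$ for every $s' \leq s$, so that $\sigma'(\langle \bw_{\tilde y_i, r}^{(s')}, \bxi_i \rangle) = 1$ throughout the history. The coefficient update in Lemma~\ref{lemma:coefficient_iterative} then yields
\begin{align*}
\orho_{\tilde y_i, r, i}^{(s+1)} = \sum_{s' = 0}^{s} \frac{\eta}{nm}\, |\ell_i'^{(s')}|\, \|\bxi_i\|_2^2 \geq 0,
\end{align*}
with strict increase at any step where $|\ell_i'^{(s')}| > 0$. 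Combining the signal-noise decomposition~\eqref{eq:w_decomposition} with Lemma~\ref{lemma:bound_inner} (whose preconditions~\eqref{prop_bound:orho}--\eqref{prop_bound:gamma} hold by assumption of the enclosing proposition) gives
\begin{align*}
\langle \bw_{\tilde y_i, r}^{(s+1)}, \bxi_i \rangle \geq \langle \bw_{\tilde y_i, r}^{(0)}, \bxi_i \rangle + \orho_{\tilde y_i, r, i}^{(s+1)} - 5\sqrt{\frac{\log(6n^2/\delta)}{d}}\, n\alpha,
\end{align*}
where the error term can be made smaller than any prescribed constant thanks to $d \geq C n^2 \log(nm/\delta) \log(T^*)^2$ in Condition~\ref{ass:main}.

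The main obstacle is that $\langle \bw_{\tilde y_i, r}^{(0)}, \bxi_i \rangle$, though strictly positive, admits no uniform lower bound, so the display above does not automatically imply positivity. I resolve this by a case analysis: if $\orho_{\tilde y_i, r, i}^{(s+1)}$ has already grown past the error term, positivity follows directly; otherwise I perform a direct per-step analysis of the gradient update in~\eqref{eq:gdupdate}. In this latter regime, the self-interaction contribution $\frac{\eta}{nm}|\ell_i'^{(s)}|\, \|\bxi_i\|_2^2$, of order $\eta\sigma_\xi^2 d/(nm)$, dominates both the aggregate cross-sample perturbation $\frac{\eta}{nm}\sum_{i' \neq i} |\ell_{i'}'^{(s)}|\, |\langle \bxi_{i'}, \bxi_i\rangle|$ and the signal-noise coupling $\frac{\eta}{nm}\sum_{i'} |\ell_{i'}'^{(s)}|\, |\langle \bmu, \bxi_i\rangle|$ upon substituting the inner-product bounds from Lemma~\ref{lemma:data_innerproducts} and invoking the conditions $d \geq C n^2 \log(nm/\delta) \log(T^*)^2$ and $n \SNR^2 = \Theta(1)$ from Condition~\ref{ass:main}. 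Hence $\langle \bw_{\tilde y_i, r}^{(s+1)}, \bxi_i \rangle > \langle \bw_{\tilde y_i, r}^{(s)}, \bxi_i \rangle > 0$, closing the induction and establishing both inclusions simultaneously.
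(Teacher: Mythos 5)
Your overall plan mirrors the paper's: prove the inclusions by induction, handle the regime where $\orho_{\tilde y_i, r, i}^{(s+1)}$ has grown past the error term directly via Lemma~\ref{lemma:bound_inner}, and in the complementary regime run a per-step gradient decomposition showing the self-interaction term dominates. But the per-step branch has a genuine gap. You assert the self-interaction contribution $\frac{\eta}{nm}|\ell_i'^{(s)}|\,\|\bxi_i\|_2^2$ is ``of order $\eta\sigma_\xi^2 d/(nm)$'' and hence dominates the cross-sample and signal-noise perturbations after invoking the condition on $d$. This implicitly takes $|\ell_i'^{(s)}| = \Theta(1)$, which is never established. The cross-sample perturbation $\frac{\eta}{nm}\sum_{i'\neq i}|\ell_{i'}'^{(s)}|\,|\langle\bxi_{i'},\bxi_i\rangle|$ carries the \emph{other} samples' derivatives, bounded only by $1$; if $|\ell_i'^{(s)}|$ were sufficiently small relative to $n\sqrt{\log(n)/d}$, the self-interaction would fail to dominate and the inner product could in principle drop below zero. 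Without a lower bound on $|\ell_i'^{(s)}|$, the dominance comparison you invoke does not close.

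The paper supplies exactly this missing lower bound through its explicit two-stage organization of the induction. In the first stage, defined by $\max_{j,r,i}\{\gamma_{j,r}^{(t)},\orho_{j,r,i}^{(t)}\}=O(1)$, Lemma~\ref{lemma:ell_bound} yields $|\ell_i'^{(t)}|\geq C_\ell$ for an absolute constant, and the per-step dominance argument then goes through as you describe. In the second stage, the first-stage analysis has already produced $\orho_{\tilde y_i,r,i}^{(T_1)}=\Omega(1)$ for every $r\in\gS_i^{(0)}$; since the own-sample ReLU gradient contribution is non-negative, $\orho_{\tilde y_i,r,i}^{(t)}$ is non-decreasing, so it stays bounded below by a constant, and Lemma~\ref{lemma:bound_inner} alone forces $\langle\bw_{\tilde y_i,r}^{(t)},\bxi_i\rangle\geq C_\rho - o(1)>0$ without any per-step analysis. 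Your per-$(r,i)$ case split can be reconciled with these stages, but you must additionally argue that $\orho_{\tilde y_i,r,i}^{(s)}$ small for $r\in\gS_i^{(0)}$ forces all of sample $i$'s coefficients to be $O(1)$ --- using that $\gS_i^{(0)}$-neurons carry the maximal noise coefficient for sample $i$, and that $\gamma$ and $\orho$ grow at commensurate rates under $n\cdot\SNR^2=\Theta(1)$ --- so that Lemma~\ref{lemma:ell_bound} applies. That connection is the missing step; until it is made, your case~2 argument is incomplete.
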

\begin{proof}[Proof of Lemma \ref{lemma:subset_xi}]
The proof is by induction where we separately consider two stages. First at $t = 0$, it is trivial to verify that both claims hold. In the first stage where $\max_{j,r,i} \{ \gamma_{j,r}^{(t)}, \orho^{(t)}_{j,r,i} \} = O(1)$, we can lower bound the loss derivatives by a constant according to Lemma \ref{lemma:ell_bound}, i.e., $|\ell_i'^{(t)}| \geq C_\ell$ for all $i \in [n]$. Let $T_1$ be the termination time of the first stage. Suppose there exists a time $\tilde t \leq T_1$ such that the claims hold for all $s \leq \tilde t -1$, we now prove it also holds at $\tilde t$. 

By the gradient descent update, for any $r \in \gS_i^{(0)}$, we have $r \in \gS_i^{(\tilde t-1)}$ and thus
\begin{align*}
    \langle \bw^{(\tilde t )}_{\tilde y_i, r}, \bxi_i \rangle &= \langle \bw^{(\tilde t-1)}_{\tilde y_i, r}, \bxi_i \rangle - \frac{\eta}{nm} \sum_{i'=1}^n \ell_{i'}'^{(\tilde t-1)} \cdot  \sigma'(\langle \mathbf{w}_{\tilde y_{i},r}^{(\tilde t-1)}, \boldsymbol{\xi}_{i'} \rangle) \cdot \langle \boldsymbol{\xi}_{i} , \bxi_{i'} \rangle \nonumber  \\
    &\quad - \frac{\eta}{nm} \sum_{i'=1}^n \ell_{i'}'^{(\tilde t-1)} \cdot \sigma'(\langle \mathbf{w}_{\tilde y_i,r}^{(\tilde t-1)}, y_{i'} \boldsymbol{\mu} \rangle)\cdot  \langle y_{i'} \boldsymbol{\mu} , \bxi_i \rangle \\
    &= \langle \bw^{(\tilde t-1)}_{\tilde y_i, r}, \bxi_i\rangle \underbrace{- \frac{\eta}{nm} \ell_i'^{(\tilde t-1)} \| \bxi_i \|^2_{2}}_{A_1} - \underbrace{\frac{\eta}{nm}  \sum_{i' \neq i}\ell_{i'}'^{(\tilde t-1)} \sigma'(\langle \bw^{(\tilde t-1)}_{\tilde y_i, r}, \bxi_{i'} \rangle) \cdot \langle \bxi_i, \bxi_{i'} \rangle}_{A_2} \\
    &\quad - \underbrace{\frac{\eta}{nm} \sum_{i'=1}^n \ell_{i'}'^{(\tilde t-1)} \cdot \sigma'(\langle \mathbf{w}_{\tilde y_i,r}^{(\tilde t-1)}, y_{i'} \boldsymbol{\mu} \rangle)\cdot  \langle y_{i'} \boldsymbol{\mu} , \bxi_i \rangle}_{A_3}. 
\end{align*}

We can respectively bound each term as follows.
\begin{align*}
    A_1 \geq \frac{\eta \| \bxi_i \|_2^2}{nm} \cdot \min_{i \in [n]} |\ell_i'^{(\tilde t-1)} | \geq \frac{\eta \sigma_\xi^2 d C_\ell}{2 nm} 
\end{align*}
where the last inequality is by Lemma \ref{lemma:data_innerproducts}.

For $A_2$, we can upper bound its magnitude as 
\begin{align*}
    |A_2| &\leq \frac{\eta}{m} \cdot |\langle \bxi_i, \bxi_{i'} \rangle| \\
    &\leq \frac{2\eta}{m}  \cdot \sigma_\xi^2 \sqrt{d \log(6n^2/\delta)} 
\end{align*}
where the first inequality is by $|\ell_i'^{(t)}| \leq 1$ for all $t$ and the second inequality is by Lemma \ref{lemma:data_innerproducts}. 

For $A_3$, similarly, we can bound 
\begin{align*}
    |A_3| &\leq \frac{\eta}{m}   \cdot  |\langle  \boldsymbol{\mu} , \bxi_i \rangle| \leq \frac{\eta  \| \bmu\|_2 \sigma_\xi \sqrt{2 \log(6n/\delta)}}{m} 
\end{align*}
where the second inequality is again by Lemma \ref{lemma:data_innerproducts}. By requiring {$d \geq \max \{ 32 C_\ell^{-2} n^2  \log(6n^2/\delta), 4 C_\ell^{-1} n \|\bmu\|_2 \sigma_\xi^{-1}\sqrt{2\log(6n/\delta)} \}$}, we can show $A_1 \geq \max\{ |A_2|/2, |A_3|/2 \}$ and thus 
\begin{align*}
     \langle \bw^{(\tilde t )}_{\tilde y_i, r}, \bxi_i \rangle &= \langle \bw^{(\tilde t-1)}_{\tilde y_i, r}, \bxi_i \rangle \geq \langle \bw^{(\tilde t-1)}_{\tilde y_i, r}, \bxi_i \rangle + A_1 - |A_2| - |A_3| > \langle \bw^{(\tilde t-1)}_{\tilde y_i, r}, \bxi_i \rangle > 0
\end{align*}
for all $r \in \gS_i^{(\tilde t-1)}$. Thus, $r \in \gS_i^{(\tilde t)}$ and $\gS_i^{(0)} \subseteq \gS_i^{(\tilde t-1)} \subseteq \gS_i^{(\tilde t)}$. 

For the other claim, we follow a similar strategy as above. For $i \in \gS_{j,r}^{(0)}$, we have by induction condition that $i \in \gS_{j,r}^{(\tilde t-1)}$ and thus for $j = \tilde y_i$
\begin{align*}
     \langle \bw^{(\tilde t )}_{j, r}, \bxi_i \rangle &= \langle \bw^{(\tilde t-1)}_{j, r}, \bxi_i \rangle - \frac{\eta}{nm} \sum_{i'=1}^n \ell_{i'}'^{(\tilde t-1)} \cdot  \sigma'(\langle \mathbf{w}_{j,r}^{(\tilde t-1)}, \boldsymbol{\xi}_{i'} \rangle) \cdot \langle \boldsymbol{\xi}_{i} , \bxi_{i'} \rangle \nonumber  \\
     &\quad - \frac{\eta}{nm} \sum_{i'=1}^n \ell_{i'}'^{(\tilde t-1)} \cdot \sigma'(\langle \mathbf{w}_{j,r}^{(\tilde t-1)}, y_{i'} \boldsymbol{\mu} \rangle)\cdot  \langle y_{i'} \boldsymbol{\mu} , \bxi_i \rangle
\end{align*}
Following the same analysis, we can show $\langle \bw^{(\tilde t )}_{j, r}, \bxi_i \rangle \geq \langle \bw^{(\tilde t-1 )}_{j, r}, \bxi_i \rangle > 0$ and thus $i \in \gS_{j,r}^{(\tilde t)}$ and $\gS_{j,r}^{(0)} \subseteq \gS_{j,r}^{(\tilde t-1)} \subseteq \gS_{j,r}^{(\tilde t)}$. 

Now at the end of the first stage where $\orho_{j,r,i}^{(T_1)} = \Omega(1)$ for all $j = \tilde y_i, r \in \gS_i^{(0)}$. Then we continue the proof by induction. Suppose there exists a time $\tilde t \geq  T_1$ such that for all $T_1\leq s \leq \tilde t -1$, $\orho_{j,r,i}^{(s)} \geq C_\rho$ for some constant $C_\rho > 0$. Then by the update of $\orho_{j,r,i}^{(t)}$, we can show for $j = \tilde y_i, r \in \gS_i^{(0)}$,
\begin{align*}
    \overline{\rho}_{j,r,i}^{(\tilde t)}   = \overline{\rho}_{j,r,i}^{(\tilde t-1)} - \frac{\eta}{nm}   {\ell}_i'^{(t)}   \sigma' (  \langle \mathbf{w}_{j,r}^{(t)},   {\boldsymbol{\xi}}_{i}   \rangle  )      \| \boldsymbol{\xi}_i \|_2^2  \geq \overline{\rho}_{j,r,i}^{(\tilde t-1)} \geq C_\rho
\end{align*}
where we notice that $-\ell_i'^{(t)} \geq 0$.
Then we can show from Lemma \ref{lemma:bound_inner}
\begin{align*}
    \langle \bw^{(\tilde t )}_{j, r}, \bxi_i \rangle \geq \orho_{j,r,i}^{(\tilde t)} - |\langle \bw_{j,r}^{(0)} , \bxi_i \rangle| - 5 \sqrt{\frac{\log(6n^2/\delta)}{d} } n \alpha \geq  C_\rho - 1/C' > 0
\end{align*}
where we use the condition on $d$ to be sufficiently large and choose $C' > 1/C_\rho$.
Thus we have for $r \in \gS_i^{(\tilde t)}$ and thus $ \gS_i^{(0 )} \subseteq \gS_{i}^{(\tilde t-1)} \subseteq \gS_{i}^{(\tilde t)}$. For the other claim, we can use the same argument. 
\end{proof}

Next, we proceed to prove Proposition \ref{prop:bound_alltime}.

\begin{proof}[Proof of Proposition \ref{prop:bound_alltime}]

We prove the claims by induction. It is clear that at $t = 0$, all the claims are satisfied trivially given $\gamma_{j,r}^{(0)}, \orho_{j,r,i}^{(0)}, \urho_{j,r,i}^{(0)} = 0$ for all $j, r, i$.
Suppose there exists $\tilde T \leq T^*$ such that the results in Proposition \ref{prop:bound_alltime} hold for all time $t \leq \tilde T -1$. Then we have Lemma \ref{lemma:bound_inner}, \ref{lemma:yfi_bound}, Lemma \ref{lemma:subset_xi} hold for all $t \leq \widetilde T-1$.

Now we show that the results in Proposition \ref{prop:bound_alltime} also hold for $t = \tilde T$. 

(1) We first show $\urho^{(t)}_{j,r,i} \geq - \beta - 10 \sqrt{\frac{\log(6n^2/\delta)}{d}}  n \alpha$. When $\urho^{(\tilde T-1)}_{j,r,i} \leq - 0.5 \beta - 5 \sqrt{\frac{\log(6n^2/\delta)}{d}} n \alpha $, by Lemma \ref{lemma:bound_inner}, we have 
\begin{align*}
    \langle  \bw^{(\tilde T - 1)}_{j,r} , \bxi_i \rangle \leq \urho_{j,r,i}^{(\tilde T-1)} + |\langle \bw^{(0)}_{j,r}, \bxi_i \rangle| + 5 \sqrt{\frac{\log(6n^2/\delta)}{d}} n \alpha < 0 
\end{align*}
and this suggests
\begin{align*}
    \urho^{(\tilde T)}_{j,r,i}  &= \underline{\rho}_{j,r,i}^{(\tilde T-1)} + \frac{\eta}{nm} {\ell}_i'^{(\tilde T-1)}    \sigma' (   \langle \mathbf{w}_{j,r}^{(\tilde T-1)},   {\boldsymbol{\xi}}_{i}   \rangle )  \| \boldsymbol{\xi}_i \|_2^2 \\
    &= \underline{\rho}_{j,r,i}^{(\tilde T-1)}\\
    &\geq - \beta - 10 \sqrt{\frac{\log(6n^2/\delta)}{d}}  n \alpha
\end{align*}
On the other hand, when $\urho^{(\tilde T-1)}_{j,r,i} \geq - 0.5 \beta - 5 \sqrt{\frac{\log(6n^2/\delta)}{d}} n \alpha $,
\begin{align*}
    \urho^{(\tilde T)}_{j,r,i}  &= \underline{\rho}_{j,r,i}^{(\tilde T-1)} + \frac{\eta}{nm} {\ell}_i'^{(\tilde T-1)}    \sigma' (   \langle \mathbf{w}_{j,r}^{(\tilde T-1)},   {\boldsymbol{\xi}}_{i}   \rangle )  \| \boldsymbol{\xi}_i \|_2^2 \\
    &\geq - 0.5 \beta - 5 \sqrt{\frac{\log(6n^2/\delta)}{d}} C_\rho n  - \frac{3 \eta \sigma_\xi^2 d}{2nm} \\
    &\geq  - 0.5 \beta - 10 \sqrt{\frac{\log(6n^2/\delta)}{d}} C_\rho n  \\
    &\geq  - \beta - 10 \sqrt{\frac{\log(6n^2/\delta)}{d}} C_\rho n 
\end{align*}
where we use Lemma \ref{lemma:data_innerproducts} in the first inequality. The second inequality is by the condition on $\eta$ such that $5 \sqrt{\frac{\log(6n^2/\delta)}{d}} C_\rho n  \geq 3\eta \sigma_\xi^2 d/(2nm)$. This completes the induction for the result on $\urho^{(t)}_{j,r,i} $.

(2) We next prove $\gamma_{j,r}^{(\tilde T)} \geq 0$. Towards this end, we separate the analysis in two stages. In the first stage, the loss derivatives can be lower bounded by a constant, i.e., $|\ell_i'^{(t)}| \geq C_\ell$ for all $i \in [n]$. Recall the update rule for $\gamma_{j,r}^{(t)}$ is
\begin{align*}
   \gamma_{j,r}^{(\tilde T)} &= \gamma_{j,r}^{(\tilde T-1)} - \frac{\eta}{nm}  \sum_{i=1}^n   {\ell}_i'^{(\tilde T-1)}  \sigma'(\langle \bw_{j,r}^{(\tilde T-1)}, y_i \bmu \rangle)    y_i \Tilde{y}_i   \| \boldsymbol{\mu} \|_2^2.
\end{align*}
When $\langle \bw_{j,r}^{(\tilde T-1)},  \bmu \rangle \geq 0$, we can show
\begin{align*}
    \gamma_{j,r}^{(\tilde T)} &= \gamma_{j,r}^{(\tilde T-1)} - \frac{\eta}{nm} \Big( \sum_{i\in \gS_t \cap \gS_1}   {\ell}_i'^{(\tilde T-1)}  -  \sum_{i\in \gS_f \cap \gS_1} {\ell}_i'^{(\tilde T-1)} \Big) \|  \bmu \|_2^2 \\
   &\geq \gamma_{j,r}^{(\tilde T-1)} + \frac{\eta}{nm} \Big( \frac{1-\tau_+}{2} C_\ell - \frac{\tau_+}{2} \Big) \|  \bmu \|_2^2 \\
   &\geq \gamma_{j,r}^{(\tilde T-1)} \\
   &\geq 0
\end{align*}
where in the first inequality, we uses $C_\ell \leq |\ell_i'^{(t)}| \leq 1$. The second inequality is by the choice $\tau_+ \leq \frac{C_\ell}{C_\ell + 1}$. 

Similarly, when $\langle \bw_{j,r}^{(\tilde T-1)},  \bmu \rangle \leq 0$, we have
\begin{align*}
    \gamma_{j,r}^{(\tilde T)} &= \gamma_{j,r}^{(\tilde T-1)} - \frac{\eta}{nm} \Big( \sum_{i\in \gS_t \cap \gS_{-1}}   {\ell}_i'^{(\tilde T-1)}  -  \sum_{i\in \gS_f \cap \gS_{-1}} {\ell}_i'^{(\tilde T-1)} \Big) \|  \bmu \|_2^2 \\
    &\geq \gamma_{j,r}^{(\tilde T-1)} + \frac{\eta}{nm} \Big( \frac{1-\tau_{-}}{2} C_\ell - \frac{\tau_{-}}{2} \Big) \|  \bmu \|_2^2 \\
    &\geq \gamma_{j,r}^{(\tilde T-1)} \\
   &\geq 0
\end{align*}
where we choose $\tau_{-} \leq \frac{C_\ell}{C_\ell + 1}$.

In the second stage, we prove the claim by contradiction. First, without loss of generality that $\langle \bw_{j,r}^{(t)}, \bmu \rangle \geq 0$, and we write the update as
\begin{align*}
    \gamma^{(t+1)}_{j,r} &= \gamma^{(t)}_{j,r} - \frac{\eta}{nm} \Big( \sum_{i\in \gS_t \cap \gS_1}   {\ell}_i'^{(t)}  -  \sum_{i\in \gS_f \cap \gS_1} {\ell}_i'^{(t)} \Big) \|  \bmu \|_2^2 \\
    &= \gamma^{(t)}_{j,r} + \frac{\eta}{nm} \Big( \underbrace{\sum_{i\in \gS_t \cap \gS_1}   \frac{1}{1 + \exp \big(\tilde y_i f(\bW^{(t)}, \bx_i) \big)} -  \sum_{i\in \gS_f \cap \gS_1} \frac{1}{1+ \exp\big(\tilde y_i f(\bW^{(t)}, \bx_i) \big)} }_{A_4} \Big) \|  \bmu \|_2^2 
\end{align*}
Suppose at an iteration $t$, $A_4 < 0$, which leads to a decrease in the $\gamma_{j,r}^{(t)}$. Then by Lemma \ref{lemma:yfi_bound}
\begin{align*}
    A_4 &\geq   \sum_{i\in \gS_t \cap \gS_1}   \frac{1}{1 + \exp \big( \frac{1}{m}\sum_{r=1}^m ( \orho_{\tilde y_i, r,i}^{(t)} + \gamma_{\tilde y_i, r}^{(t)} ) + 1/C_1 \big)} \\
    &\qquad -  \sum_{i\in \gS_f \cap \gS_1} \frac{1}{1+ \exp\big( \frac{1}{m}\sum_{r=1}^m ( \orho_{\tilde y_i, r,i}^{(t)} - \gamma_{-\tilde y_i, r}^{(t)} ) -1/C_1 \big)} 
\end{align*}
Then we can see the gap between loss derivatives of $\gS_t$ and $\gS_f$ becomes progressively smaller such that 
for a given $\tau_+$ (or $\tau_{-}$ when $\langle \bw_{j,r}^{(t)}, \bmu \rangle \leq 0$) which is sufficiently small, $A_4 > 0$ and $\gamma_{j,r}^{(t)}$ starts to increase. 

(3) Next we show upper bound for $\orho_{\tilde y_i, r, i}^{(t)}$. Recall the update rule for $\orho_{j,r,i}^{(t)}$ is 
\begin{align*}
    \orho_{\tilde y_i,r,i}^{(t+1)} = \orho_{\tilde y_i,r,i}^{(t)} - \frac{\eta}{nm} \ell_i'^{(t)} \sigma'(\langle \bw_{j,r}^{(t)}, \bxi_i \rangle) \| \bxi_i \|_2^2 
\end{align*}
Now suppose $t_{r,i}$ be the last time $t < T^*$ such that $\orho_{\tilde y_i, r, i}^{(t)} \leq 0.5 \alpha$. Then 
\begin{align}
    \orho_{\tilde y_i, r, i}^{(\widetilde T)} &= \orho_{\tilde y_i, r, i}^{(t_{r,i})} - \frac{\eta}{nm} \ell_i'^{(t)} \sigma'( \langle \bw_{\tilde y_i,r}^{(t_{r,i})}, \bxi_i \rangle ) \| \bxi_i \|_2^2 - \frac{\eta}{nm}  \sum_{t_{r,i} < t < \widetilde T}  \ell_i'^{(t)}  \sigma'( \langle  \bw_{\tilde y_i,r}^{(t)}, \bxi_i \rangle)  \| \bxi_i \|_2^2 \nonumber \\
    &\leq \orho_{\tilde y_i, r, i}^{(t_{r,i})} + \frac{3\eta \sigma_\xi^2 d}{2nm} - \frac{\eta}{nm}  \sum_{t_{r,i} < t < \widetilde T}  \ell_i'^{(t)}  \sigma'( \langle  \bw_{\tilde y_i,r}^{(t)}, \bxi_i \rangle)  \| \bxi_i \|_2^2 \nonumber\\
    &\leq 0.5\alpha + 0.25 \alpha - \frac{\eta}{nm}  \sum_{t_{r,i} < t < \widetilde T}  \ell_i'^{(t)}  \sigma'( \langle  \bw_{\tilde y_i,r}^{(t)}, \bxi_i \rangle)  \| \bxi_i \|_2^2 \label{ejritjr}
\end{align}
where we apply Lemma \ref{lemma:data_innerproducts} for the first inequality and choose $\eta \leq C^{-1} n \sigma_\xi^{-2} d^{-1}$ for the last inequality. Then we bound the last term for $t \in (t_{r,i}, \widetilde T)$ as 
\begin{align*}
    -\ell_i'^{(t)} = \frac{1}{1 + \exp(\tilde y_i f(\bW^{(t)}, \bx_i))} \leq \exp(- \tilde y_i f(\bW^{(t)}, \bx_i))
\end{align*}
Next we consider two cases depending on whether $i \in \gS_t$ or $i \in \gS_f$. 
\begin{itemize}[leftmargin=0.3in]
    \item When $i \in \gS_t$, we can bound by Lemma \ref{lemma:yfi_bound}
    \begin{align*}
        \tilde y_i f(\bW^{(t)}, \bx_i) \geq \frac{1}{m} \sum_{r=1}^m ( \gamma_{\tilde y_i, r}^{(t)} + \orho_{\tilde y_i, r,i}^{(t)}) -1/C_1 \geq \frac{1}{m} \sum_{r=1}^m  \orho_{\tilde y_i, r,i}^{(t)}  -1/C_1 \geq 0.5\alpha - 0.1.
    \end{align*}
    where the second inequality is by $\gamma_{\tilde y_i, r}^{(t)} \geq 0$ and the last inequality is by choosing $C_1 \geq 10$. Then this suggests
    \begin{align*}
        -\ell_i'^{(t)} \leq \exp(- \tilde y_i f(\bW^{(t)}, \bx_i) )\leq 2 \exp(-0.5 \alpha) \leq 2/T^*
    \end{align*}
    where the last inequality is by choosing $C_t \geq 2$. 

    \item When $i \in \gS_f$, we can bound by Lemma \ref{lemma:yfi_bound}
    \begin{align*}
        \tilde y_i f(\bW^{(t)}, \bx_i) \geq \frac{1}{m} \sum_{r=1}^m ( \orho_{\tilde y_i, r,i}^{(t)} - \gamma_{-\tilde y_i, r}^{(t)} ) -1/C_1 \geq \frac{1}{m} \sum_{r=1}^m  \orho_{\tilde y_i, r,i}^{(t)}  -1/C_1 \geq 0.5\alpha -0.1.
    \end{align*}
    Here we only consider the case when $\frac{1}{m} \sum_{r=1}^m  \orho_{\tilde y_i, r,i}^{(t)} > \frac{1}{m} \sum_{r=1}^m  \gamma_{- \tilde y_i, r}^{(t)}$ when deriving the upper bound for $\frac{1}{m} \sum_{r=1}^m  \orho_{\tilde y_i, r,i}^{(t)}$ because otherwise, the loss cannot converge to arbitrarily small as we show later. To see this, we suppose $\frac{1}{m} \sum_{r=1}^m  \orho_{\tilde y_i, r,i}^{(T^*)} \leq \frac{1}{m} \sum_{r=1}^m  \gamma_{- \tilde y_i, r}^{(T^*)}$ at termination time. Then for such sample, $\tilde y_i f(\bW^{(t)}, \bx_i) \leq \frac{1}{m} \sum_{r=1}^m (\orho_{\tilde y_i, r,i}^{(T^*)} - \gamma_{- \tilde y_i, r}^{(T^*)}) + 1/C_1 \leq 0.1$ the loss can be lower bounded as $\ell_i^{(T^*)} = \log(1 + \exp(- \tilde y_i f(\bW^{(T^*)}, \bx_i))) \geq \log(1 + \exp(-0.1)) \geq 0.6$. 
    
    Hence we let $c' \coloneqq (\frac{1}{m} \sum_{r=1}^m  \orho_{\tilde y_i, r,i}^{(t)} )/ (\frac{1}{m} \sum_{r=1}^m  \gamma_{- \tilde y_i, r}^{(t)}) > 1$. Then 
    \begin{align*}
        \tilde y_i f(\bW^{(t)}, \bx_i) \geq \frac{1}{m} \sum_{r=1}^m (1-1/c') \orho_{\tilde y_i, r,i}^{(t)} -1/C_1 \geq (1-1/c') 0.5\alpha -0.1.
    \end{align*}
    Then we have 
    \begin{align*}
        - \ell_i'^{(t)} \leq \exp( - \tilde y_i f(\bW^{(t)}, \bx_i)) \leq 2 \exp\big(- (1 - 1/c') 0.5 \alpha \big) \leq 2/T^*
    \end{align*}
    where the last inequality is by choosing $C_t$ sufficiently large. 
\end{itemize}

In both cases, \eqref{ejritjr} can be further bounded as 
\begin{align*}
     \orho_{\tilde y_i, r, i}^{(\widetilde T)} &\leq 0.75\alpha + \frac{3\eta \sigma_\xi^2 d T^*}{2nm} \cdot \frac{2}{T^*}  \leq \alpha
\end{align*}
where the first inequality is by upper bound on the loss derivatives and the last inequality is by the condition on Condition~\ref{ass:main} where $\eta \leq C^{-1} n \sigma_\xi^{-2} d^{-1} \leq nm \sigma_\xi^{-2}d^{-1} /3$.

(4) Finally for the upper bound on $\gamma^{(t)}_{j,r}$, we can verify that by the update of $\gamma^{(t)}_{j,r}$
\begin{align*}
    \gamma^{(t+1)}_{j,r} &= \gamma^{(t)}_{j,r} - \frac{\eta}{nm} \Big(  \sum_{i \in \gS_t} \ell_i'^{(t)} \mathds{1}(\langle \bw^{(t)}_{j,r}, y_i \bmu \rangle \geq 0) - \sum_{i \in \gS_{f}} \ell_i'^{(t)} \mathds{1}(\langle \bw^{(t)}_{j,r}, y_i \bmu \rangle \geq 0) \Big) \| \bmu \|_2^2 \\
    &\leq \gamma^{(t)}_{j,r}+ \frac{\eta}{nm} \sum_{i \in \gS_t} |\ell_i'^{(t)}| \| \bmu \|_2^2 \\
    &\leq \gamma^{(t)}_{j,r}+ \frac{\eta \| \bmu \|_2^2}{nm}  \sum_{i \in \gS_t} \frac{1}{1 + \exp \big( \frac{1}{m} \sum_{r=1}^m ( \orho_{\tilde y_i, r,i}^{(t)}  + \gamma_{\tilde y_i, r}^{(t)})  -1/C_1 \big)} 
\end{align*}
Suppose $t_{j,r}$ be the last time $t < T^*$ such that $\gamma_{j, r}^{(t)} \leq 0.5 C_\gamma \alpha$. Then 
\begin{align*}
    \gamma^{(\widetilde T)}_{j,r} &\leq \gamma^{(t_{j,r})}_{j,r} + \frac{\eta (2 - \tau_+ - \tau_{-}) \| \bmu\|_2^2}{2 m}  T^*\frac{1}{1 + \exp(0.5 C_\gamma C_t \log(T^*) - 0.1)} \\
    &\leq \gamma^{(t_{j,r})}_{j,r} + \frac{\eta (2 - \tau_+ - \tau_{-}) \| \bmu\|_2^2}{2 m } T^* \cdot 2 \exp(- 0.5 C_\gamma C_t \log(T^*)) \\
    &\leq \gamma^{(t_{j,r})}_{j,r} + \frac{\eta(2-\tau_+ - \tau_{-})\| \bmu\|_2^2}{m} \\
    &\leq C_\gamma \alpha
\end{align*}
where we notice $\orho_{j,r,i}^{(t)} \geq 0$ and we let $C_1 \geq 10, C_\gamma \geq 1$. The last inequality follows from Condition~\ref{ass:main} where  $\eta \leq C^{-1} n \sigma_\xi^{-2} d^{-1} \leq  0.5 (2 - \tau_+ -\tau_{-})^{-1} m\| \bmu\|_2^{-2}$, where the last inequality is by condition on $\| \bmu \|_2^2$ and $d$. 
Thus the proof is now complete.
\end{proof}

\subsection{First Stage}

Next we consider first stage of the training dynamics. In this stage, before the coefficients $\gamma_{j,r}^{(t)}, \orho_{j,r,i}^{(t)}$ reach a constant order, we can both lower and upper bound the loss derivatives by an absolute constant, i.e., $\underline C_\ell \leq |\ell_i'^{(t)}| \leq \overline C_\ell$. Here we suggests $\underline{C}_\ell = 0.49$ and $\overline{C}_\ell = 0.51$ is sufficient to show the desired result.


Before we proceed to prove Theorem \ref{lemma:phase1_main}, we provide a tighter bound on $\| \bxi_i \|_2^2$ and $|\gS_i^{(0)}|$ compared to Lemma \ref{lemma:data_innerproducts} and Lemma \ref{lem:init_set_bound} respectively.
\begin{lemma}
\label{lemma:xi_norm_tight}
With probability at least $1-\delta$, we can bound
\begin{align*}
    \sigma_\xi^2 d (1 - \widetilde O(1/\sqrt{d})) \leq  \|  \bxi_i \|_2^2 \leq \sigma_\xi^2 d (1 + \widetilde O(1/\sqrt{d}))
\end{align*}
\end{lemma}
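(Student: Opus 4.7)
The plan is to apply a standard chi-squared concentration inequality and then a union bound over $i \in [n]$. Since $\bxi_i \sim \gN(\mathbf{0}, \sigma_\xi^2 \bI_d)$, the scaled norm $\|\bxi_i\|_2^2/\sigma_\xi^2$ is a $\chi^2_d$ random variable with mean $d$ and variance $2d$. Concentration bounds for sub-exponential variables (for instance, the Laurent--Massart inequality) give for any $t > 0$,
\begin{align*}
\sP\!\left( \big| \|\bxi_i\|_2^2/\sigma_\xi^2 - d \big| \geq 2\sqrt{dt} + 2t \right) \leq 2\exp(-t).
\end{align*}

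First, I would set $t = \log(6n/\delta)$ so that the deviation term is of order $\sqrt{d \log(n/\delta)} + \log(n/\delta)$. Under Condition~\ref{ass:main}, $d$ is large (in particular $d \gg \log(n/\delta)$), so the $\sqrt{dt}$ term dominates, yielding an absolute deviation of $\widetilde O(\sqrt{d})$, equivalently a relative deviation of $\widetilde O(1/\sqrt{d})$ after dividing by $d$.

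Next, I would take a union bound over all $i \in [n]$, which absorbs a factor of $n$ into the $\log$ and therefore only affects the $\widetilde O(\cdot)$ hidden polylogarithmic factor. This yields, with probability at least $1 - \delta/3$ (reserving the rest of the $\delta$ budget for Lemmas like \ref{lemma:data_innerproducts} and \ref{lem:prelbound} that share the same probability pool),
\begin{align*}
\big| \|\bxi_i\|_2^2 - \sigma_\xi^2 d \big| \leq \sigma_\xi^2 \cdot \widetilde O(\sqrt{d}) \quad \text{for all } i \in [n],
\end{align*}
from which dividing by $\sigma_\xi^2 d$ inside the factor produces the stated bound. There is no real obstacle here: the only thing worth flagging is that the cruder bound $\sigma_\xi^2 d/2 \leq \|\bxi_i\|_2^2 \leq 3\sigma_\xi^2 d/2$ from Lemma~\ref{lemma:data_innerproducts} is too loose for whatever subsequent argument requires this refinement, so one must actually invoke the sharper Laurent--Massart (or equivalent Bernstein-type bound for sub-exponentials) rather than re-using the earlier bound. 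Otherwise the argument is a direct one-step concentration inequality.
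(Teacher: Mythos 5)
Your proposal is correct and follows essentially the same route as the paper: both apply a Bernstein-type concentration bound for the (sub-exponential) chi-squared variable $\|\bxi_i\|_2^2/\sigma_\xi^2$ to get a deviation of order $\sigma_\xi^2\sqrt{d\log(n/\delta)}$, then take a union bound over $i\in[n]$; invoking the Laurent--Massart form instead of Bernstein is an immaterial presentational difference.
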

\begin{proof}
By Bernstein inequality, with probability at least $1-\delta/n$, we have $\left|\|\bxi_i\|_2^2 - \sigma_p^2 d\right| = O\left(\sigma_p^2 \cdot \sqrt{d \log\left({6n}/{\delta}\right)}\right)$. Then taking the union bound gives the desired result.
\end{proof}

\begin{lemma}
\label{lemma:init_set_bound_tight}
With probability at least $1-\delta$, we can bound
\begin{align*}
    \frac{m}{2} \big(1 - \widetilde O(1/\sqrt{m}) \big) \leq  |\gS_i^{(0)}| \leq \frac{m}{2} \big(1 + \widetilde O(1/\sqrt{m}) \big)
\end{align*}
\end{lemma}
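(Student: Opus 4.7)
\textbf{Proof plan for Lemma \ref{lemma:init_set_bound_tight}.} The plan is to exploit the symmetry of the Gaussian initialization conditional on the noise vector $\bxi_i$, reducing $|\gS_i^{(0)}|$ to a sum of i.i.d. Bernoulli$(1/2)$ random variables, and then applying a standard concentration inequality followed by a union bound over $i\in[n]$.

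More concretely, I would first fix any $i\in[n]$ and condition on the noise vector $\bxi_i$ (equivalently, treat it as deterministic for the purpose of the initialization randomness). Since $\bw_{\tilde y_i,r}^{(0)}\sim\mathcal{N}(0,\sigma_0^2\bI_d)$ is drawn independently of the data across $r\in[m]$, the scalar $\langle \bw_{\tilde y_i,r}^{(0)},\bxi_i\rangle$ is a zero-mean Gaussian with variance $\sigma_0^2\|\bxi_i\|_2^2>0$. By the symmetry of the centered Gaussian, $\mathbb{P}(\langle \bw_{\tilde y_i,r}^{(0)},\bxi_i\rangle>0\mid \bxi_i)=1/2$ exactly. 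Hence the indicators $\mathds{1}(\langle \bw_{\tilde y_i,r}^{(0)},\bxi_i\rangle>0)$ are, conditional on $\bxi_i$, i.i.d. Bernoulli$(1/2)$ across $r\in[m]$, so $|\gS_i^{(0)}|$ is conditionally $\mathrm{Binomial}(m,1/2)$.

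Next, I would apply Hoeffding's inequality to this bounded sum: for any $t>0$,
\begin{equation*}
\mathbb{P}\!\left(\bigl||\gS_i^{(0)}|-m/2\bigr|\geq t\;\Big|\;\bxi_i\right)\leq 2\exp(-2t^2/m).
\end{equation*}
Choosing $t=\sqrt{(m/2)\log(2n/\delta)}=\widetilde O(\sqrt{m})$ bounds the conditional failure probability by $\delta/n$; since the bound does not depend on $\bxi_i$, it also holds unconditionally. Taking a union bound over $i\in[n]$ gives, with probability at least $1-\delta$,
\begin{equation*}
\bigl||\gS_i^{(0)}|-m/2\bigr|\leq \widetilde O(\sqrt{m})\quad\text{for all }i\in[n],
\end{equation*}
which rearranges into the claimed two-sided bound $\tfrac{m}{2}(1-\widetilde O(1/\sqrt{m}))\leq|\gS_i^{(0)}|\leq\tfrac{m}{2}(1+\widetilde O(1/\sqrt{m}))$. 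The lower-order condition $m\geq C\log(n/\delta)$ from Condition~\ref{ass:main} ensures the $\widetilde O(1/\sqrt{m})$ term is indeed $o(1)$.

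There is no real obstacle here—the only subtlety is recognizing that conditioning on $\bxi_i$ preserves the exact Bernoulli$(1/2)$ structure of the sign indicators (so the Bernoulli parameter is $1/2$ exactly, not merely approximately), after which Hoeffding and a union bound finish the argument. This is a straightforward sharpening of Lemma~\ref{lem:init_set_bound}, which only asserted the qualitative lower bound $|\gS_i^{(0)}|\geq 0.4m$; the same conditioning trick powers both proofs, but here we retain the full $\sqrt{m}$ rate from Hoeffding rather than discarding it.
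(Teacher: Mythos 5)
Your argument is essentially the paper's proof: both reduce $|\gS_i^{(0)}|$ to a Binomial$(m,1/2)$ count via the exact symmetry of the Gaussian initialization (yielding $\sP(\langle \bw_{\tilde y_i,r}^{(0)},\bxi_i\rangle>0)=1/2$), apply Hoeffding's inequality, and union-bound over $i\in[n]$. You simply spell out the conditioning on $\bxi_i$ and the i.i.d.\ structure across $r$, which the paper leaves implicit; the two proofs are the same.
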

\begin{proof}
Because $\sP(\langle \bw_{\tilde y_i, r}^{(0)}, \bxi_i \rangle > 0) = 0.5$, by Hoeffding inequality, with probability at least $1-\delta/n$, we can bound $|\frac{|\gS_i^{(0)}|}{m} - \frac{1}{2}| \leq \sqrt{\frac{\log(2n/\delta)}{2m}}$. Then taking union bound gives the desired result.
\end{proof}

\begin{theorem}[Restatement of Theorem \ref{lemma:phase1_main}]
\label{lemma:phase1_main_}
Under Condition~\ref{ass:main}, there exists {$T_1 = \Theta \big(\eta^{-1} nm \sigma_\xi^{-2}d^{-1} \big)$} such that 
\begin{enumerate}[leftmargin=0.3in]
    \item $\orho_{\tilde y_i, r, i}^{(T_1)} = \Theta(1)$ for all $i \in [n]$, $r \in [m]$ such that $\langle \bw^{(0)}_{\tilde y_i, r}, \bx_i \rangle \geq 0$. 
    \item $\gamma_{j,r}^{(T_1)} = \Theta(1)$ for all $j = \pm1, r \in [m]$. 
    \item $\gamma_{j,r}^{(T_1)} > \orho_{\tilde y_i, r, i}^{(T_1)}$ for all $j = \pm 1,r \in [m],i \in  [n]$. 
    \item  All clean samples $i \in\gS_t$ satisfy that $\tilde y_i f (\bW^{(T_1)}, \bx_i) \geq 0$. 
    \item  All noisy samples $i \in \gS_{f}$ satisfy that $\tilde y_i f(\bW^{(T_1)}, \bx_i) \leq 0$.
\end{enumerate}
\end{theorem}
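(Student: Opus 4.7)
}
The overall strategy is to track the growth rates of signal and noise coefficients in parallel throughout Stage I, which we define as the window during which all coefficients remain at most a small constant. Throughout this window, Lemma~\ref{lemma:ell_bound} together with Proposition~\ref{prop:bound_alltime} gives uniform two-sided bounds $\underline{C}_\ell \leq |\ell_i'^{(t)}| \leq \overline{C}_\ell$ for all $i\in[n]$ (both constants close to $1/2$), and by Lemma~\ref{lemma:subset_xi} the activation pattern $\sigma'(\langle \bw_{\tilde y_i,r}^{(t)},\bxi_i\rangle)=1$ is preserved for every $r\in\gS_i^{(0)}$. The endpoint $T_1$ will be the first time a noise coefficient $\orho_{\tilde y_i,r,i}^{(t)}$ reaches a prescribed constant level.

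The first step is the \emph{noise growth estimate}. By the update rule in Lemma~\ref{lemma:coefficient_iterative} and the refined norm bound in Lemma~\ref{lemma:xi_norm_tight},
\begin{equation*}
    \orho_{\tilde y_i,r,i}^{(t+1)} - \orho_{\tilde y_i,r,i}^{(t)} \;\in\; \Big[\tfrac{\underline{C}_\ell\,\eta\sigma_\xi^2 d}{2nm},\;\tfrac{2\overline{C}_\ell\,\eta\sigma_\xi^2 d}{nm}\Big]
\end{equation*}
for $r\in\gS_i^{(0)}$, giving $\orho_{\tilde y_i,r,i}^{(T_1)}=\Theta(1)$ at $T_1=\Theta(\eta^{-1}nm\sigma_\xi^{-2}d^{-1})$.

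The second step is the \emph{signal growth estimate}. Splitting the sum in $\gamma_{j,r}^{(t+1)}-\gamma_{j,r}^{(t)}$ according to $y_i$, the sign of $\langle\bw_{j,r}^{(t)},\bmu\rangle$ controls which class is activated by ReLU. In either sign regime exactly one class (of size $n/2$) contributes, and within that class clean samples (fraction $1-\tau_{\pm}$) push $\gamma_{j,r}^{(t)}$ upward while noisy ones (fraction $\tau_\pm$) push it down. Using $\tau_\pm<1/2$ and $|\ell_i'^{(t)}|\in[\underline{C}_\ell,\overline{C}_\ell]$ with $\underline{C}_\ell,\overline{C}_\ell$ both close to $1/2$, the net per-step growth is bounded below by $c_1\eta\|\bmu\|_2^2/m$ for an explicit constant $c_1>0$, independently of which sign regime we are in. One additional subtlety is that the sign of $\langle\bw_{j,r}^{(t)},\bmu\rangle$ could flip once, because $|\langle\bw_{j,r}^{(0)},\bmu\rangle|=O(\sigma_0\|\bmu\|_2)$ is tiny compared to the $\Theta(1)$ scale $\gamma_{j,r}^{(t)}$ eventually reaches; since $\gamma_{j,r}^{(t)}$ is monotone non-decreasing (Proposition~\ref{prop:bound_alltime}), any such flip is unique and does not reduce the growth rate. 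Summing up to $T_1$ yields
\begin{equation*}
    \gamma_{j,r}^{(T_1)} \;=\; \Theta\!\Big(\tfrac{n\|\bmu\|_2^2}{\sigma_\xi^2 d}\Big) \;=\; \Theta(n\cdot\SNR^2) \;=\; \Theta(1),
\end{equation*}
by Condition~\ref{ass:main}. By choosing the implicit constant in $n\cdot\SNR^2=\Theta(1)$ sufficiently large relative to the ratio of upper bounds on noise growth to lower bounds on signal growth, we obtain claim (3): $\gamma_{j,r}^{(T_1)}>\orho_{\tilde y_i,r,i}^{(T_1)}$ uniformly.

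Finally, claims (4) and (5) follow directly from Lemma~\ref{lemma:yfi_bound} applied at $t=T_1$: for clean $i\in\gS_t$, $\tilde y_i f(\bW^{(T_1)},\bx_i) \geq \frac{1}{m}\sum_r(\gamma_{\tilde y_i,r}^{(T_1)}+\orho_{\tilde y_i,r,i}^{(T_1)})-1/C_1\geq 0$ since both coefficients are positive of order $1$ and $C_1$ is large; for noisy $i\in\gS_f$, $\tilde y_i f(\bW^{(T_1)},\bx_i)\leq \frac{1}{m}\sum_r(\orho_{\tilde y_i,r,i}^{(T_1)}-\gamma_{-\tilde y_i,r}^{(T_1)})+1/C_1\leq 0$ by step (3). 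The main technical obstacle is the signal growth estimate: the once-possible sign flip of $\langle\bw_{j,r}^{(t)},\bmu\rangle$ breaks the clean analogue of Lemma~\ref{lemma:subset_xi} for the signal direction, and one must carefully verify that the constant $c_1$ governing the lower growth rate holds uniformly across both activation regimes and across the (single) flip event, with margin sufficient to beat the noise growth after multiplication by $n\cdot\SNR^2$.
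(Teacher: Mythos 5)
Your plan follows the same overall route as the paper: bound $|\ell_i'^{(t)}|$ from both sides during Stage I, integrate the coefficient update rules to get $\orho_{\tilde y_i,r,i}^{(T_1)}=\Theta(1)$ and $\gamma_{j,r}^{(T_1)}=\Theta(n\cdot\SNR^2)$, and then read off claims (4)--(5) from Lemma~\ref{lemma:yfi_bound}. However, there is a genuine gap in how you close the argument, and it is precisely the part that makes the paper's constant-order-$\SNR$ regime delicate.

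\textbf{The constraint on $n\cdot\SNR^2$ is two-sided, and you only handle one side.} You define $T_1$ as the first time a noise coefficient reaches a prescribed constant level, and assert $|\ell_i'^{(t)}|$ stays within $[\underline{C}_\ell,\overline{C}_\ell]$ with both constants ``close to $1/2$'' on $[0,T_1]$. That is not something Lemma~\ref{lemma:ell_bound} or Proposition~\ref{prop:bound_alltime} give you for free: Lemma~\ref{lemma:ell_bound} only says the derivatives are $\Omega(1)$ \emph{while} the coefficients are $O(1)$, and as the coefficients approach the constant level that you prescribe, $\tilde y_i f(\bW^{(t)},\bx_i)$ becomes $\approx \frac{1}{m}\sum_r(\gamma^{(t)}_{\tilde y_i,r}+\orho^{(t)}_{\tilde y_i,r,i})$, which scales like $(1+n\cdot\SNR^2)\cdot\text{(prescribed constant)}$. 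If you then ``choose the implicit constant in $n\cdot\SNR^2=\Theta(1)$ sufficiently large'' to get claim (3), you simultaneously push $\tilde y_i f$ up for clean samples and drive $|\ell_i'^{(t)}|$ \emph{below} any fixed $\underline{C}_\ell$ before noise reaches the prescribed level, destroying the very bound your growth estimates rely on. The paper's proof makes this tension explicit: it derives a \emph{lower} bound on $n\cdot\SNR^2$ from claims (3) and (5) (inequality \eqref{etojnjytn}) and an \emph{upper} bound on $n\cdot\SNR^2$ from the requirement that the derivatives stay above $C_\ell$ on $[0,T_1]$ (inequality \eqref{irttgk}), then checks by hand that a concrete choice of $(C_\ell, C_2)$ makes the window non-empty. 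Your plan asserts the lower-bound direction as if it were free, and never acknowledges the competing upper constraint; as written the plan would fail for a reader who actually tries to select the constants.

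\textbf{Two smaller points.} First, your value $\underline{C}_\ell\approx 0.49$ paired with the requirement ``$\tau_\pm<1/2$'' is more optimistic than what is actually achievable: the paper uses $|\ell_i'|\ge C_\ell=0.4$ and $|\ell_i'|\le 1$, which forces a much tighter constraint on $\tau_\pm$ (of the form $\tau_\pm\le 0.02 C_\ell/(1+C_\ell)$) inside the implicit ``$\tau_\pm=\Theta(1)$.'' Your route is not wrong in principle, but you would have to actually establish $|\ell_i'^{(t)}|\le\overline{C}_\ell\approx 0.51$ for all $t\le T_1$, which again requires controlling how large the coefficients are allowed to get before you declare Stage I over — i.e., the same two-sided constraint. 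Second, the ``sign flip'' discussion is a red herring: the paper never needs sign-preservation for the $\bmu$-direction because it simply cases on the sign of $\langle\bw_{j,r}^{(t)},\bmu\rangle$ at each step and shows the increment is nonnegative and of the right order in both cases, so there is nothing to verify about flip events.
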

\begin{proof}[Proof of Theorem \ref{lemma:phase1_main_}]  
We first show the lower and upper bound for noise dynamics. For any $i \in [n]$ and $r \in \gS_i^{(0)}$, by Lemma \ref{lemma:subset_xi}, we know that $r \in \gS_i^{(t)}$ for all $t \leq T_1$. Hence, by the update of noise coefficients,
\begin{align*}
    \orho_{\tilde y_i, r, i}^{(t)} = \orho_{\tilde y_i, r, i}^{(t-1)} -  \frac{\eta}{nm} \ell_i'^{( t-1)} \| \bxi_i \|_2^2 \geq  \orho_{\tilde y_i, r, i}^{(t-1)} + 0.99 \cdot\frac{\eta C_\ell \sigma_\xi^2 d}{nm} = 0.99 \cdot \frac{\eta C_\ell \sigma_\xi^2 d }{nm}  t
\end{align*}
where the first inequality is by Lemma \ref{lemma:xi_norm_tight} where we choose $d = \Omega(\log(n/\delta))$ sufficiently large and the loss derivative lower bound in this stage. The second inequality is by iterating the first inequality to $t = 0$ and by noticing $\orho_{\tilde y_i, r, i}^{(0)} = 0$. 

For the upper bound, for $i \in [n]$, 
\begin{align*}
    \orho^{(t)}_{\tilde y_i,r,i} &= \orho^{(t-1)}_{\tilde y_i,r,i} + \frac{\eta}{nm} |\ell_i'^{(t-1)}| \| \bxi_i \|_2^2 \leq \orho^{(t-1)}_{\tilde y_i,r,i} + 1.01\cdot \frac{\eta \sigma_\xi^2 d}{nm} \leq 1.01\cdot \frac{\eta \sigma_\xi^2 d}{nm} t
\end{align*}
where we use Lemma \ref{lemma:xi_norm_tight} and $|\ell_i'^{(t)}|  \leq 1$.

Next, we lower and upper bound the signal dynamics. Recall the update rule for $\gamma_{j,r}^{(t)}$ as 
\begin{align*}
     \gamma_{j,r}^{(t)} &= \gamma_{j,r}^{(t-1)} - \frac{\eta}{nm} \sum_{i=1}^n \ell_i'^{(t)} \sigma'( \langle  \bw_{j,r}^{(t-1)}, y_i \bmu \rangle ) y_i \tilde y_i \| \bmu \|_2^2 \\
     &= \gamma_{j,r}^{(t-1)} - \frac{\eta}{nm} \Big( \sum_{i \in \gS_t} \ell_i'^{(t)} \sigma'( \langle  \bw_{j,r}^{(t-1)}, y_i \bmu \rangle ) - \sum_{i \in \gS_f} \ell_i'^{(t)} \sigma'( \langle  \bw_{j,r}^{(t-1)}, y_i \bmu \rangle ) \Big) \| \bmu \|_2^2
\end{align*}

When $\langle \bw_{j,r}^{(t-1)}, \bmu \rangle \geq 0$, 
\begin{align*}
    \gamma_{j,r}^{(t)} 
    &= \gamma_{j,r}^{(t-1)} -\frac{\eta}{nm} \Big( \sum_{i \in \gS_t \cap \gS_1} \ell_i'^{(t-1)}   - \sum_{i \in \gS_f  \cap \gS_1} \ell_i'^{(t-1)}  \Big) \| \bmu \|^2_2 \\
    &\geq \gamma_{j,r}^{(t-1)} + \frac{\eta}{nm} \big(  \frac{(1-\tau_+)n C_\ell}{2} - \frac{\tau_+ n}{2} \big) \| \bmu \|_2^2 \\
    &\geq \gamma_{j,r}^{(t-1)} + 0.49\cdot \frac{\eta \| \bmu\|_2^2 C_\ell}{m} 
\end{align*}
where the first inequality uses the lower bound and upper bound on loss derivatives, i.e., $C_\ell \leq |\ell_i'^{(t)}|\leq 1$. The second inequality follows by letting {$\tau_+ \leq \frac{0.02C_\ell}{1+ C_\ell}$}. 

Similarly, when $\langle \bw_{j,r}^{(t-1)}, \bmu \rangle <  0$, 
\begin{align*}
    \gamma_{j,r}^{(t)} &= \gamma_{j,r}^{(t-1)} - \frac{\eta}{nm} \Big( \sum_{i \in \gS_t \cap \gS_{-1}} \ell_i'^{(t-1)}   - \sum_{i \in \gS_f  \cap \gS_{-1}} \ell_i'^{(t-1)} \Big) \|\bmu \|_2^2 \\
    &\geq \gamma_{j,r}^{(t-1)} + \frac{\eta}{nm} \big(  \frac{(1-\tau_{-})n C_\ell}{2} - \frac{\tau_{-} n}{2} \big) \| \bmu \|_2^2 \\
    &\geq \gamma_{j,r}^{(t-1)} + 0.49 \cdot \frac{\eta \| \bmu\|_2^2 C_\ell}{m} 
\end{align*}
where we let {$\tau_{-} \leq \frac{0.02C_\ell}{1+ C_\ell}$}. Combining both cases, we can iterate the inequality, which gives 
\begin{align*}
    \gamma_{j,r}^{(t)} &\geq \gamma_{j,r}^{(0)} + \frac{\eta \| \bmu\|_2^2 C_\ell}{4m} t = 0.49\cdot \frac{\eta \| \bmu\|_2^2 C_\ell}{m} t
\end{align*}

We first show the claim that $\tilde y_i f(\bW^{(T_1)}, \bx_i) \geq 0$ for $i \in \gS_t$. By the update of signal and noise coefficients, we have for all any $i \in \gS_t$, and $r \in \gS_i^{(0)}$, by , we have $r \in \gS_{i}^{(t)}$ and thus for all $ t \leq T_1$, 

For the upper bound, we obtain from the signal dynamics that
\begin{align*}
    \gamma_{j,r}^{(t)} &= \gamma_{j,r}^{(t-1)} - \frac{\eta}{nm} \Big( \sum_{i \in \gS_t} \ell_i'^{(t)} \sigma'( \langle  \bw_{j,r}^{(t-1)}, y_i \bmu \rangle ) - \sum_{i \in \gS_f} \ell_i'^{(t)} \sigma'( \langle  \bw_{j,r}^{(t-1)}, y_i \bmu \rangle ) \Big) \| \bmu \|_2^2 \\
    &\leq \gamma_{j,r}^{(t-1)} + \frac{\eta\| \bmu  \|_2^2}{m} \frac{1-\tau_\pm}{2} \\
    &\leq \gamma_{j,r}^{(t-1)} + \frac{\eta \| \bmu\|_2^2}{2m} \\
    &\leq \frac{\eta \| \bmu\|_2^2}{2m} t
\end{align*}
where we use the upper bound on loss derivative in the first inequality.

Now we verify the conditions such that the claims are satisfied. First, we set a termination time for the first stage as {$T_1$}, where 
\begin{align*}
    T_1 = C_2 \eta^{-1} C_\ell^{-1} nm \sigma_\xi^{-2}d^{-1}
\end{align*}
for some constant $C_2 > 0$ to be chosen later. This suggests at the end of first stage, we have 
\begin{itemize}[leftmargin=0.2in]
    \item $\orho_{\tilde y_i, r, i}^{(T_1)} \geq 0.99\cdot C_2,$ for all $i \in [n]$ and $r \in \gS_i^{(0)}$ 

    \item $\orho_{\tilde y_i,r, i}^{(T_1)} \leq 1.01 \cdot C_2 C_\ell^{-1}$ for all $i \in [n]$. 

    \item $\gamma_{j,r}^{(T_1)} \geq 0.49 C_2 n\cdot \SNR^2$ for all $j = \pm1, r \in [m]$. 

    \item $\gamma_{j,r}^{(T_1)} \leq 0.5 C_2 n\cdot \SNR^2$ for all $j = \pm1, r \in [m]$. 
\end{itemize}

Then for all $i \in \gS_t$, we have by Lemma \ref{lemma:yfi_bound},
\begin{align*}
    \tilde y_i f(\bW^{(t)}, \bx_i) \geq  \frac{1}{m} \sum_{r = 1}^m \big( \gamma_{\tilde y_i, r}^{(t)}  + \orho^{(t)}_{\tilde y_i,r,i} \big) - 1/C_1 \geq 0.49 C_2 n\cdot \SNR^2 + 0.99 C_2 - 1/C_1 > 0
\end{align*}
where the last inequality is by choosing $C_1$ sufficiently large, e.g., $C_1 \geq (0.99C_2)^{-1}$. This verifies the first claim of Theorem \ref{lemma:phase1_main}. 

Second, for all $i \in \gS_f$, we have by Lemma \ref{lemma:yfi_bound},
\begin{align*}
    \tilde y_i f(\bW^{(t)}, \bx_i) \leq  \frac{1}{m} \sum_{r = 1}^m \big( - \gamma_{-\tilde y_i, r}^{(t)}  + \orho^{(t)}_{\tilde y_i,r,i} \big) + 1/C_1 &\leq - 0.49 C_2 n\cdot\SNR^2 + 1.01 C_2 C_\ell^{-1} + 1/C_1 \\
    &<0
\end{align*}
where the last inequality is by the choice that $C_1 \geq 1000$ and 
\begin{align}
    n\cdot \SNR^2 \geq 2.07 C_\ell^{-1} + 0.002C_2^{-1} \label{etojnjytn}
\end{align}
Under such condition, we also verify the third claim of Theorem \ref{lemma:phase1_main_}. 

It remains to analyze the condition under which the loss derivative is lower bounded by $C_\ell$. In particular, we require $\min_{i \in [n], t \leq T_1} |\ell_i'^{(t)}| \geq C_\ell$, where 
\begin{align*}
    \min_{i\in [n], t \leq T_1} |\ell_{i}'^{(t)}| = \min_{i \in \gS_t} |\ell_i'^{(T_1)}| &= \frac{1}{1 + \exp( \max_{i \in \gS_t} \tilde y_i f(\bW^{(T_1)}, \bx_i) )} \\
    &\geq \frac{1}{1 + \exp\big( \frac{1}{m} \sum_{r=1}^m  ( \gamma_{\tilde y_{i^*}, r}^{(T_1)} + \orho^{(T_1)}_{\tilde y_{i^*},r, i^*} ) + 1/C_1  \big)} \\
    &\geq  \frac{1}{1 + \exp\big( 1.01 C_2 C_\ell^{-1} + 0.5 C_2 n \cdot \SNR^2 + 1/C_1  \big)}
\end{align*}
where we denote $i^* = \arg\max_{i \in \gS_t} \tilde y_i f(\bW^{(T_1)}, \bx_i)$ and apply Lemma \ref{lemma:yfi_bound}. 

Thus to ensure $\min_{i \in [n], t \leq T_1} |\ell_i'^{(t)}| \geq C_\ell$,  we require $1.01 C_2 C_\ell^{-1} + 0.5 C_2 n \cdot \SNR^2 + 1/C_1 \leq \log(C_\ell^{-1} - 1)$, which translates to 
\begin{align}
    n \cdot \SNR^2 \leq 2 \log(C_\ell^{-1}  -1 )  C_2^{-1} - 2.02 C_\ell^{-1} - 0.002 C_2^{-1} \label{irttgk}
\end{align}
where we choose $C_1 \geq 1000$. 

The final step is to show there exists a combination of $C_2$, $n\cdot\SNR^2$ and $C_\ell$ such that conditions \eqref{etojnjytn} and \eqref{irttgk} are satisfied. For this, we can fix $C_\ell = 0.4$ for example and thus 
\begin{align*}
    5.175 + 0.002 C_2^{-1} \leq  n \cdot \SNR^2 \leq 0.34 C_2^{-1} - 5.05 
\end{align*}
Then let $C_2 = 1/31$, we have $5.237 \leq n \cdot \SNR^2 \leq 5.49$. Thus the proof is complete. 
\end{proof}

\subsection{Second Stage}

The second stage aims to show convergence of the training dynamics. By the end of the first stage, without loss of generality, we set $C_2 = 2.1$ and we can see 
\begin{itemize}[leftmargin=0.2in]
    \item For all $i \in [n]$, $r \in \gS_i^{(0)}$, we have $\orho_{\tilde y_i, r, i}^{(T_1)} \geq 2$;

    \item For all $j=\pm1, r \in [m]$, we have $\gamma_{j,r}^{(t)} =\Omega(  n \cdot \SNR^2)$, where $n \cdot \SNR^2 = \Theta(1)$.

    \item $\max_{j,r,i} |\urho_{j,r,i}^{(T_1)}| \leq 1/C$ for some sufficiently large constant $C > 0$. 
\end{itemize}

In the second stage, we show that in order to achieve convergence in loss to arbitrary tolerance, noise coefficients for noisy samples would first surpass signal coefficients by a large margin. To this end, we first show convergence in the loss function. 

First, we let $\bw_{j,r}^* = \bw_{j,r}^{(0)} + 5 \log(2/\epsilon) \sum_{i=1}^n  \frac{\bxi_i}{\| \bxi_i \|_2^2} \sone({\tilde y_i = j})$. 

\begin{lemma}
\label{lemma:wstar_dist_bound}
Under Condition~\ref{ass:main}, we can show $\| \bW^{(T_1)} - \bW^* \|_F \leq \widetilde O(m^{1/2} n^{1/2} \sigma_\xi^{-1} d^{-1/2})$.
\end{lemma}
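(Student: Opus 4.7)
The plan is to use the signal-noise decomposition \eqref{eq:w_decomposition} to write the difference $\bw_{j,r}^{(T_1)} - \bw_{j,r}^*$ explicitly, then bound the resulting norms using the coefficient magnitudes established in Proposition~\ref{prop:bound_alltime} together with near-orthogonality of the noise patches from Lemma~\ref{lemma:data_innerproducts}.

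First, I would subtract the two expressions. Since $\orho_{j,r,i}^{(t)} = 0$ when $\tilde y_i \neq j$ and $\urho_{j,r,i}^{(t)} = 0$ when $\tilde y_i = j$, we obtain
\begin{align*}
    \bw_{j,r}^{(T_1)} - \bw_{j,r}^*
    &= j \gamma_{j,r}^{(T_1)} \|\bmu\|_2^{-2} \bmu
      + \sum_{i :\tilde y_i = j} \bigl( \orho_{j,r,i}^{(T_1)} - 5\log(2/\epsilon) \bigr) \|\bxi_i\|_2^{-2} \bxi_i \\
    &\quad + \sum_{i :\tilde y_i \neq j} \urho_{j,r,i}^{(T_1)} \|\bxi_i\|_2^{-2} \bxi_i.
\end{align*}
Taking the Euclidean norm and using the triangle inequality, the three contributions can be bounded separately. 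For the signal term, Proposition~\ref{prop:bound_alltime} gives $|\gamma_{j,r}^{(T_1)}| = \widetilde O(1)$, so its norm is at most $\widetilde O(\|\bmu\|_2^{-1})$, which is negligible compared to the target bound under Condition~\ref{ass:main}.

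The main work is in controlling the noise contribution. Expanding the squared norm $\|\sum_i c_i \bxi_i\|_2^2 = \sum_i c_i^2 \|\bxi_i\|_2^2 + \sum_{i\neq i'} c_i c_{i'} \langle \bxi_i, \bxi_{i'}\rangle$ with $c_i = (\orho_{j,r,i}^{(T_1)} - 5\log(2/\epsilon))\|\bxi_i\|_2^{-2}$, I would use Lemma~\ref{lemma:data_innerproducts} to argue $\|\bxi_i\|_2^2 = \Theta(\sigma_\xi^2 d)$ and $|\langle \bxi_i, \bxi_{i'}\rangle| = \widetilde O(\sigma_\xi^2 \sqrt d)$, so the cross terms are smaller than the diagonal terms by a factor of $\widetilde O(n/\sqrt d)$, which is $o(1)$ under Condition~\ref{ass:main}. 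Since each $|c_i| = \widetilde O(1/(\sigma_\xi^2 d))$ by Proposition~\ref{prop:bound_alltime}, the diagonal contribution is $\sum_i c_i^2 \|\bxi_i\|_2^2 = \widetilde O(n/(\sigma_\xi^2 d))$. The same bound applies to the $\urho$ contribution, which is even smaller in magnitude by \eqref{prop_bound:urho}.

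Finally, squaring and summing over $j = \pm 1$ and $r \in [m]$ yields $\|\bW^{(T_1)} - \bW^*\|_F^2 = \widetilde O(mn/(\sigma_\xi^2 d))$, which gives the claimed bound after taking the square root. The main technical obstacle is ensuring that the $\log(2/\epsilon)$ factor inside $c_i$ is absorbed into the $\widetilde O(\cdot)$ notation, and verifying that the cross-term error in the near-orthogonality step does not blow up when $\epsilon$ is very small; this is handled by the condition $d \geq Cn^2 \log(nm/\delta)\log(T^*)^2$ in Condition~\ref{ass:main}, which provides enough margin since $T^*$ already absorbs an $\epsilon^{-1}$ factor.
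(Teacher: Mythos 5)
Your proof is correct and takes essentially the same approach as the paper: both rely on the signal-noise decomposition \eqref{eq:w_decomposition}, the coefficient bounds from Proposition~\ref{prop:bound_alltime}, and the near-orthogonality of the noise patches from Lemma~\ref{lemma:data_innerproducts}; the only organizational difference is that you subtract $\bw_{j,r}^*$ from $\bw_{j,r}^{(T_1)}$ directly, while the paper routes through $\bW^{(0)}$ via the triangle inequality. One small imprecision: the signal term is not negligible but exactly comparable to the target bound --- under $n\cdot\SNR^2 = \Theta(1)$ one has $m^{1/2}\|\bmu\|_2^{-1} = \Theta(m^{1/2} n^{1/2} \sigma_\xi^{-1} d^{-1/2})$ --- which still yields the claimed bound, so the conclusion is unaffected.
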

\begin{proof}[Proof of Lemma \ref{lemma:wstar_dist_bound}]
From the decomposition at $T_1$, we can show 
\begin{align*}
    \| \bW^{(T_1)} - \bW^* \|_F &\leq \| \bW^{(T_1)} - \bW^{(0)} \|_F + \| \bW^* - \bW^{(0)} \|_F \\
    &\leq O(\sqrt{m}) \max_{j,r} \gamma_{j,r}^{(T_1)} \| \bmu \|_2^{-1} + O(\sqrt{m}) \max_{j,r} \| \sum_{i=1}^n \orho^{(T_1)}_{j,r,i} \cdot \frac{\bxi_i}{\| \bxi_i\|_2^2} + \sum_{i=1}^n \urho_{j,r,i}^{(T_1)} \cdot \frac{\bxi_i}{\| \bxi_i\|_2^2}   \|_2  \\
    &\quad + O( m^{1/2} n^{1/2} \log(1/\epsilon) \sigma_\xi^{-1} d^{-1/2} ) \\
    &= O(m^{1/2}  n \cdot \SNR^2 \| \bmu\|_2^{-1}) + \widetilde O( m^{1/2} n^{1/2} \sigma_\xi^{-1} d^{-1/2}  ) \\
    &= \widetilde O(m^{1/2} n^{1/2} \sigma_\xi^{-1} d^{-1/2})
\end{align*}
where the last inequality is by $n \cdot \SNR^2 = \Theta(1)$. 
\end{proof}

\begin{lemma}
\label{lemma:ygrad_lower}
Under Condition~\ref{ass:main}, we can show that for all $T_1 \leq t \leq T^*$, 
\begin{align*}
    \tilde y_i \langle \nabla f(\bW^{(t)}),  \bW^*\rangle \geq \log(2/\epsilon)
\end{align*}
\end{lemma}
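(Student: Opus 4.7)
\textbf{Proof proposal for Lemma \ref{lemma:ygrad_lower}.} The plan is to expand $\tilde y_i \langle \nabla f(\bW^{(t)}, \bx_i), \bW^* \rangle$ term by term using the structure of $\bW^*$, identify a single dominant positive contribution of order $\log(2/\epsilon)$, and verify that every other term is small by at least a $\sqrt{d}$ or $\| \bmu\|_2$ factor from Condition~\ref{ass:main}. Concretely, using the definition of $f$, the gradient is
\[
    \nabla_{\bw_{j,r}} f(\bW^{(t)}, \bx_i) \;=\; \frac{j}{m}\Bigl( \sigma'(\langle \bw_{j,r}^{(t)}, y_i \bmu\rangle)\, y_i\bmu + \sigma'(\langle \bw_{j,r}^{(t)}, \bxi_i\rangle)\, \bxi_i \Bigr),
\]
so the inner product splits into a signal contribution involving $\langle \bw_{j,r}^*, \bmu\rangle$ and a noise contribution involving $\langle \bw_{j,r}^*, \bxi_i\rangle$.

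The first step is to compute $\langle \bw_{j,r}^*, \bmu\rangle$ and $\langle \bw_{j,r}^*, \bxi_i\rangle$ explicitly from the definition $\bw_{j,r}^* = \bw_{j,r}^{(0)} + 5\log(2/\epsilon) \sum_{k:\tilde y_k=j} \bxi_k/\|\bxi_k\|_2^2$. Using Lemma \ref{lem:prelbound} and Lemma \ref{lemma:data_innerproducts},
\[
    \langle \bw_{j,r}^*, \bxi_i\rangle \;=\; 5\log(2/\epsilon)\,\mathds{1}(\tilde y_i = j) \;+\; O(\beta) \;+\; \widetilde O\!\bigl( n\,\log(2/\epsilon)/\sqrt{d}\bigr),
\]
\[
    \langle \bw_{j,r}^*, \bmu\rangle \;=\; O(\beta) \;+\; \widetilde O\!\bigl( n\,\SNR\cdot \log(2/\epsilon)\bigr),
\]
where the second line is small because $\SNR = \Theta(1/\sqrt{n})$ and $\beta$ is controlled by $\sigma_0$.

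The second step is to isolate the dominant term. When $j=\tilde y_i$ and we restrict to $r \in \gS_i^{(0)}$, Lemma \ref{lemma:subset_xi} guarantees $\sigma'(\langle \bw_{\tilde y_i, r}^{(t)}, \bxi_i\rangle) = 1$ for all $T_1 \leq t \leq T^*$, and Lemma \ref{lem:init_set_bound} gives $|\gS_i^{(0)}|\geq 0.4 m$. Combining with $\tilde y_i \cdot j = 1$ on this sum yields
\[
    \frac{\tilde y_i}{m}\sum_{j,r} j\,\sigma'(\langle \bw_{j,r}^{(t)},\bxi_i\rangle)\,\langle \bw_{j,r}^*, \bxi_i\rangle \;\geq\; 5\log(2/\epsilon)\cdot\frac{|\gS_i^{(0)}|}{m} - (\text{small}),
\]
which is at least $2\log(2/\epsilon)$ up to vanishing corrections. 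The signal contribution, and the cross terms corresponding to $j=-\tilde y_i$, are bounded in magnitude by $O(\beta) + \widetilde O(n/\sqrt{d})\log(2/\epsilon) + \widetilde O(n\,\SNR)\log(2/\epsilon)$, each $o(1)\cdot\log(2/\epsilon)$ under Condition~\ref{ass:main} ($d \gtrsim n^2$, $\sigma_0$ small, $n\cdot\SNR^2 = \Theta(1)$ with the constant made small enough if necessary by absorbing it into the leading constant $5$).

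Putting this together yields $\tilde y_i \langle \nabla f(\bW^{(t)}, \bx_i),\bW^*\rangle \geq (2 - o(1))\log(2/\epsilon) \geq \log(2/\epsilon)$. The only mildly delicate step is making sure the activation pattern lower bound $|\gS_i^{(t)}| \geq |\gS_i^{(0)}|$ holds uniformly for all $t\in[T_1,T^*]$; this is precisely the content of Lemma \ref{lemma:subset_xi}, which was established throughout both stages. The rest is essentially bookkeeping of scalar magnitudes against Condition~\ref{ass:main}.
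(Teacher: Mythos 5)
Your proposal follows essentially the same approach as the paper: expand the gradient inner product, isolate the dominant $j=\tilde y_i$, $i'=i$ contribution, which gives $5\log(2/\epsilon)\cdot|\gS_i^{(t)}|/m \geq 2\log(2/\epsilon)$ via Lemmas \ref{lemma:subset_xi} and \ref{lem:init_set_bound}, and show every remaining cross and signal term is $o(\log(2/\epsilon))$ using Lemmas \ref{lemma:data_innerproducts} and \ref{lem:prelbound} together with Condition~\ref{ass:main}. One slip worth fixing: your displayed estimate $\langle \bw_{j,r}^*,\bmu\rangle = O(\beta)+\widetilde O(n\SNR\log(2/\epsilon))$ omits a $d^{-1/2}$ factor, and since $n\SNR=\Theta(\sqrt n)$ the written quantity would not be small --- the correct bound is $\widetilde O\bigl(n\SNR\, d^{-1/2}\log(2/\epsilon)\bigr)$, which vanishes precisely because $d\gtrsim n^2$, and there is no freedom to "absorb constants into the leading $5$" via tuning $n\cdot\SNR^2$, as that quantity is pinned to a specific range in the proof of Theorem \ref{lemma:phase1_main_}.
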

\begin{proof}[Proof of Lemma \ref{lemma:ygrad_lower}]
By the gradient decomposition, we can write 
\begin{align*}
    &\tilde y_i \langle \nabla f(\bW^{(t)}, \bx_i) , \bW^* \rangle \\
    &= \frac{1}{m } \sum_{j,r} \sigma'( \langle \bw_{j,r}^{(t)}, y_i \bmu \rangle ) \langle \bmu, j \cdot \bw^*_{j,r}\rangle + \frac{1}{m} \sum_{j,r} \sigma'( \langle \bw_{j,r}^{(t)}, \bxi_i \rangle ) \langle y_i \bxi_i , j \cdot \bw^*_{j,r} \rangle \\
    &\geq \frac{1}{m} \sum_{j=\tilde y_i, r} \sigma'(\langle \mathbf{w}_{j,r}^{(t)}, \bxi_i \rangle) 5 \log(2 / \epsilon) - \frac{1}{m} \sum_{j,r} \sum_{i' \neq i} \sigma'(\langle \mathbf{w}_{j,r}^{(t)}, \bxi_i \rangle) 5 \log(2 / \epsilon) \widetilde O(d^{-1/2}) \\
    &\quad -\frac{1}{m} \sum_{j,r} \sum_{i'=1}^{n} \sigma'(\langle \mathbf{w}_{j,r}^{(t)}, {y}_i \bmu \rangle) 5 \log(2 / \epsilon) \widetilde O( n^{-1} \| \bmu\|_2^{-1})  -\frac{1}{m} \sum_{j,r} \sigma'(\langle \mathbf{w}_{j,r}^{(t)}, {y}_i \bmu \rangle) \widetilde {O} \left( \sigma_0 \|\bmu\|_2\right) \\
    &\quad - \frac{1}{m} \sum_{j,r} \sigma'(\langle \mathbf{w}_{j,r}^{(t)}, \bxi_i \rangle) \widetilde{O} \left(\sigma_0 \sigma_\xi \sqrt{d}\right)  \\
    &\geq 2 \log (2/\epsilon) - \log(2/\epsilon) \\
    &= \log(2/\epsilon)
\end{align*}
where in the first inequality, we use the expression of $\bw_{j,r}^*$ and Lemma \ref{lemma:data_innerproducts}. The second inequality is by 
\begin{align*}
    \frac{1}{m} \sum_{j=\tilde y_i, r} \sigma'(\langle \mathbf{w}_{j,r}^{(t)}, \bxi_i \rangle) 5 \log(2 / \epsilon) \geq \frac{1}{m} |\gS_i^{(t)}| 5 \log(2/\epsilon) \geq 2 \log (2/\epsilon)
\end{align*}
where we use Lemma \ref{lemma:subset_xi} and Lemma \ref{lemma:init_set_bound_tight} that $|\gS_i^{(t)}| \geq 0.4m$. Further the other terms can be bounded by arbitrarily small constant. This completes the proof. 
\end{proof}

\begin{theorem}[Restatement of Theorem \ref{lemma:phase2_main}]
\label{lemma:phase2_main_}
Under Condition~\ref{ass:main}, for arbitrary $\epsilon > 0$, there exists $t^* \in [T_1, T^*]$, where $T^* = \widetilde \Theta(\eta^{-1} \epsilon^{-1} nm \sigma_\xi^{-2} d^{-1})$, such that
\begin{enumerate}[leftmargin=0.3in]
    \item Training loss converges, i.e., $L_S (\bW^{(t^*)}) \leq \epsilon$

    \item  All clean samples, i.e., $i \in\gS_t$, it holds that $\tilde y_i f (\bW^{(t^*)}, \bx_i) \geq 0$
    
    \item  There exists a constant $0 < \tau' \leq \frac{\tau_++\tau_{-}}{2}$ such that there are $\tau' n$ noisy samples, i.e., $i \in \gS_{f}$ satisfy $\tilde y_i f(\bW^{(t^*)}, \bx_i) \geq 0$. 

    \item The test error $L_{D}^{0-1}(\bW^{(t^*)}) \geq 0.5 \min\{ \tau_+ , \tau_{-}\}$. 
\end{enumerate}
\end{theorem}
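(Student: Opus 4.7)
The plan is to prove the four claims of Theorem \ref{lemma:phase2_main_} in sequence, leveraging the stage-I conclusions of Theorem \ref{lemma:phase1_main_} together with the global bound Proposition \ref{prop:bound_alltime}, the prediction identity Lemma \ref{lemma:yfi_bound}, the distance bound Lemma \ref{lemma:wstar_dist_bound}, and the gradient lower bound Lemma \ref{lemma:ygrad_lower}.

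For claim (1), I would establish loss convergence by a standard convexity argument anchored at the reference point $\bW^*$ introduced just before Lemma \ref{lemma:wstar_dist_bound}. Since $\tilde y_i f(\bW, \bx_i)$ is convex in $\bW$ restricted to ReLU activation regions, applied to the logistic loss this gives $L_S(\bW^{(t)}) - L_S(\bW^*) \leq \langle \nabla L_S(\bW^{(t)}), \bW^{(t)} - \bW^* \rangle$ up to a lower-order term, and combined with Lemma \ref{lemma:ygrad_lower} one gets $L_S(\bW^*) \leq \log(1+\exp(-\log(2/\epsilon))) \leq \epsilon/2$. A standard telescoping of $\|\bW^{(t+1)} - \bW^*\|_F^2 \leq \|\bW^{(t)} - \bW^*\|_F^2 - 2\eta(L_S(\bW^{(t)}) - L_S(\bW^*)) + O(\eta^2 \|\nabla L_S\|_F^2)$ from $T_1$ to $T^*$, together with Lemma \ref{lemma:wstar_dist_bound}, then yields $\min_{T_1 \leq t \leq T^*} L_S(\bW^{(t)}) \leq \epsilon$ under $T^* = \widetilde\Theta(\eta^{-1}\epsilon^{-1}nm\sigma_\xi^{-2}d^{-1})$. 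Define $t^*$ as the first such iteration.

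Claim (2) follows readily. By the update rule in Lemma \ref{lemma:coefficient_iterative}, $\orho^{(t)}_{\tilde y_i,r,i}$ is monotonically non-decreasing (the increment has sign $-\ell_i'^{(t)} \geq 0$), hence $\orho^{(t^*)}_{\tilde y_i,r,i} \geq \orho^{(T_1)}_{\tilde y_i,r,i} = \Theta(1)$ for $r \in \gS_i^{(0)}$. Proposition \ref{prop:bound_alltime} gives $\gamma^{(t^*)}_{\tilde y_i,r} \geq 0$, so Lemma \ref{lemma:yfi_bound} together with Lemma \ref{lemma:init_set_bound_tight} implies $\tilde y_i f(\bW^{(t^*)}, \bx_i) \geq \frac{1}{m}\sum_r(\gamma^{(t^*)}_{\tilde y_i,r} + \orho^{(t^*)}_{\tilde y_i,r,i}) - 1/C_1 \geq 0$ for every $i \in \gS_t$. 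For claim (3) I argue by contradiction: if fewer than $\tau' n$ samples in $\gS_f$ have $\tilde y_i f(\bW^{(t^*)}, \bx_i) \geq 0$ for some constant $\tau' \leq (\tau_++\tau_-)/2$ to be fixed, then a constant fraction of noisy samples $i$ satisfy $\frac{1}{m}\sum_r(\orho^{(t^*)}_{\tilde y_i,r,i} - \gamma^{(t^*)}_{-\tilde y_i,r}) \leq 1/C_1$ by Lemma \ref{lemma:yfi_bound}, each contributing $\ell_i^{(t^*)} \geq \log(1+\exp(-2/C_1)) = \Omega(1)$; averaging over $n$ lower-bounds $L_S(\bW^{(t^*)})$ by a positive constant, contradicting $L_S(\bW^{(t^*)}) \leq \epsilon$ for small $\epsilon$.

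Finally, for claim (4) on test error, I use the structure of $\gD_{\rm test}$ where $\bx = [y\bmu, \bxi+\bzeta]$ with $\bxi \sim \Unif(\{\bxi_i\}_{i=1}^n)$ and $\bzeta \sim \gN(\mathbf{0}, \sigma_\xi^2 \bI)$. Condition on $y = 1$: the event $\{\bxi = \bxi_i\}$ for some $i \in \gS_f \cap \gS_1$ has probability $\tau_+/2$. Restricting to the constant fraction $\tau' n$ of such indices guaranteed by claim (3) (where $\frac{1}{m}\sum_r \orho^{(t^*)}_{-1,r,i} > \frac{1}{m}\sum_r \gamma^{(t^*)}_{1,r}$ with a gap $1/C'$), one uses Lemma \ref{lemma:bound_inner} plus Gaussian concentration of $\bzeta$ against fixed weight directions to deduce $\langle \bw^{(t^*)}_{-1,r}, \bxi_i + \bzeta\rangle \geq \orho^{(t^*)}_{-1,r,i} - 1/C_2$ and $\langle \bw^{(t^*)}_{1,r}, \bxi_i + \bzeta\rangle \leq 1/C_2$ with constant probability; substituting into $f$ forces $y \cdot f(\bW^{(t^*)}, \bx) < 0$. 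The symmetric analysis for $y = -1$ yields $L_D^{0-1}(\bW^{(t^*)}) \geq 0.5\min\{\tau_+, \tau_-\}$.

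The main obstacle is claim (3). Unlike Stage I, where $\gamma_{j,r}^{(t)}$ was monotonically increasing and signal learning dominated by construction, in Stage II both $\gamma_{j,r}^{(t)}$ may fluctuate and $\ell_i'^{(t)}$ is not uniformly lower-bounded, so one cannot trace coefficient growth directly. The contradiction has to argue globally from loss convergence: that enough noisy samples must individually have sufficiently large $\orho^{(t^*)}_{\tilde y_i,r,i} - \gamma^{(t^*)}_{-\tilde y_i, r}$ to make their per-sample loss converge to zero, and pinning down the correct constant $\tau'$ from $\tau_+, \tau_-$ requires carefully tracking how $\gamma^{(t^*)}_{-\tilde y_i,r}$ depends on the flipping probabilities via the stage-I starting values and the stage-II dynamics rewrite discussed in Section~\ref{sect:feat_label_proof}.
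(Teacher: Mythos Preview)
Your proposal follows essentially the same approach as the paper: the telescoping argument anchored at $\bW^*$ using Lemma~\ref{lemma:ygrad_lower} and convexity of the logistic loss for convergence, monotonicity of $\orho^{(t)}_{\tilde y_i,r,i}$ together with $\gamma^{(t)}_{j,r}\geq 0$ and Lemma~\ref{lemma:yfi_bound} for clean samples, a contradiction with loss convergence for noisy samples, and restriction to the events $\{\bxi=\bxi_i : i\in\gS_f\cap\gS_{\pm 1}\}$ for the test-error lower bound all match the paper's proof.

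Your concern in the final paragraph is unnecessary: claim~(3) does \emph{not} require tracking how $\gamma^{(t^*)}_{-\tilde y_i,r}$ evolves in Stage~II or determining $\tau'$ from the dynamics. The paper's contradiction is exactly the clean one you already sketched---if a constant fraction of noisy samples fail the gap $\frac{1}{m}\sum_r(\orho^{(t^*)}_{\tilde y_i,r,i}-\gamma^{(t^*)}_{-\tilde y_i,r})\geq C_\epsilon$, each contributes $\ell_i^{(t^*)}\geq \log(1+\exp(-C_\epsilon-1/C_1))=\Omega(1)$, forcing $L_S(\bW^{(t^*)})=\Omega(1)$, contradiction. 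Since $\epsilon$ is arbitrary, this in fact shows that essentially \emph{all} noisy samples satisfy the gap, which is precisely what the paper uses in claim~(4) to obtain the full $0.5\min\{\tau_+,\tau_-\}$ bound (it applies the gap for \emph{every} $i\in\gS_f\cap\gS_1$, not just a $\tau'$-fraction).
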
 
\begin{proof}[Proof of Theorem \ref{lemma:phase2_main_}]
(1) First, we prove that the loss converges to arbitrarily small tolerance. Specifically, we use Lemma D.4 of \cite{kou2023benign} to bound for all $t \leq T^*$, we have  
\begin{align}
    \| \nabla L_S (\bW^{(t)}) \|_F^2 = O(\max\{ \| \bmu\|_2^2, \sigma_\xi^2 d \}) L_S(\bW^{(t)}) \label{erjgrjmkd}
\end{align}

Then we bound the difference in the distance to optimal solution as 
\begin{align*}
    &\| \bW^{(t)} - \bW^* \|_F^2 - \| \bW^{(t+1)} - \bW^* \|_F^2 \\
    &= 2\eta \langle \nabla L_S(\mathbf{W}^{(t)}), \mathbf{W}^{(t)} - \mathbf{W}^* \rangle - \eta^2 \|\nabla L_S(\mathbf{W}^{(t)})\|_F^2 \\
    &= \frac{2\eta}{n} \sum_{i=1}^n \ell_i^{\prime (t)} \left[ \tilde y_i f(\mathbf{W}^{(t)}, \mathbf{x}_i) - \langle \nabla f(\mathbf{W}^{(t)}, \mathbf{x}_i), \mathbf{W}^* \rangle \right] - \eta^2 \|\nabla L_S(\mathbf{W}^{(t)})\|_F^2 \\
    &\geq \frac{2\eta}{n} \sum_{i=1}^n \ell_i^{\prime (t)} \left[ \tilde y_i f(\mathbf{W}^{(t)}, \mathbf{x}_i) - \log(2/\epsilon) \right] - \eta^2 \|\nabla L_S(\mathbf{W}^{(t)})\|_F^2 \\
    &\geq \frac{2\eta}{n} \sum_{i=1}^n \left[\ell \left( f(\bW^{(t)}, \bx_i), \tilde y_i \right) - \epsilon/2 \right] - \eta^2 \|\nabla L_S(\mathbf{W}^{(t)})\|_F^2 \\
    &\geq 2 \eta L_S(\bW^{(t)}) - \eta \epsilon - \eta^2 O(\max\{ \| \bmu\|_2^2, \sigma_\xi^2 d \} ) L_S(\bW^{(t)})  \\
    &\geq \eta L_S(\bW^{(t)}) - \eta \epsilon
\end{align*}
where the first inequality is due to Lemma \ref{lemma:ygrad_lower} and the second inequality is by convexity of cross entropy function. The third inequality is by \eqref{erjgrjmkd} and the last inequality is by choosing $\eta \leq C^{-1} \max\{ \| \bmu\|_2^2, \sigma_\xi^2 d \}^{-1}$ to be sufficiently small. 

Finally, we telescope the inequality from $t = T_1$ to $t = T^*$, which yields
\begin{align*}
    \frac{1}{T^* - T_1 + 1} \sum_{s = T_1}^{T^*} L_S(\bW^{(s)}) \leq \frac{\| \bW^{(T_1)} - \bW^* \|_F^2}{\eta (T^* - T_1 + 1)} + \epsilon \leq 2 \epsilon
\end{align*}
where the last inequality is due to the choice of $T^* = T_1 + \lfloor \eta^{-1}\epsilon^{-1} \| \bW^{(T_1) } - \bW^* \|_F^2 \rfloor = T_1 + \widetilde O(\eta^{-1} \epsilon^{-1} mn d^{-1} \sigma_\xi^{-2})$. 

This suggests there exists an iteration $t^* \in [T_1, T^*]$ where $L_S(\bW^{(t^*)}) \leq 2\epsilon$ for any $\epsilon < 0$. By setting $\epsilon \leftarrow 2 \epsilon$, we verify the first claim.

(2) For the second claim, it is easy to see by Lemma \ref{lemma:yfi_bound}, for all $i \in \gS_t$ 
\begin{align*}
    \tilde y_i f(\bW^{(t^*)}, \bx_i) \geq \frac{1}{m} \sum_{r=1}^m \big( \orho_{\tilde y_i, r, i}^{(t^*)} + \gamma_{\tilde y_i, r}^{(t^*)} \big) -1/C_1 \geq 0
\end{align*}
where the second inequality is by $\gamma_{\tilde y_i,r} \geq 0$ and $\orho_{\tilde y_i, r, i}^{(t^*)} \geq \orho_{\tilde y_i, r, i}^{(T_1)} = \Omega(1)$ for $i \in [n], r \in \gS_{i}^{(0)}$.

(3) For the third claim, we prove by contradiction that there exists a sufficiently large gap $C_\epsilon > 0$ such that for noisy samples $i \in \gS_f$, there exists a constant fraction that satisfies $\frac{1}{m} \sum_{r=1}^m\big( \orho_{\tilde y_i, r,i}^{(t^*)} - \gamma_{-\tilde y_i,r}^{(t^*)} \big) \geq C_\epsilon$. 

We prove this claim by contradiction.  Suppose the claim does not hold. Then there must exist a constant fraction of samples such that $\frac{1}{m} \sum_{r=1}^m \big( \orho_{\tilde y_i, r,i}^{(t^*)} - \gamma_{-\tilde y_i,r}^{(t^*)} \big) \leq C_\epsilon$. 
Formally, We denote the set of such samples as 
\begin{align*}
    \gI' \coloneqq \left\{ i \in \gS_f : \frac{1}{m} \sum_{r=1}^m \big( \orho_{\tilde y_i, r,i}^{(t^*)} - \gamma_{-\tilde y_i,r}^{(t^*)} \big) \leq C_\epsilon \right \}
\end{align*}
with $|\gI'| =  \tau' n$ for some constant $\tau' > 0$ that satisfies $\tau' \leq \frac{\tau_+ + \tau_{-}}{2}$, i.e., upper bounded by the number of noisy samples in the dataset. Then we have 
\begin{align*}
    L_S(\bW^{(t^*)}) = \frac{1}{n} \sum_{i=1}^n \ell( f(\bW^{(t^*)}, \bx_i), \tilde y_i ) &\geq \frac{1}{n} \sum_{i \in \gI'} \log(1 + \exp( - \tilde y_i f(\bW^{(t^*)}, \bx_i) )) \\
    &\geq \frac{1}{n} \sum_{i \in \gI'} \log \left(1 + \exp \Big( \frac{1}{m} \sum_{r=1}^m \big( \gamma_{-\tilde y_i,r}^{(t^*)}  - \orho_{\tilde y_i, r,i}^{(t^*)} \big) - 1/C_1 \Big)  \right) \\
    &\geq \tau' \log(1 + \exp(- C_\epsilon  - 0.001) ) > \tau' \log(2)
\end{align*}
where we use Lemma \ref{lemma:yfi_bound} in the second inequality. The third inequality is by the definition of $\gI'$ and $C_1 \geq 1000$. Thus this raises a contradiction given that $L_S(\bW^{(t^*)}) \leq 2\epsilon$ for any $\epsilon > 0$. This suggests there exists a constant fraction of noisy samples satisfy $\frac{1}{m} \sum_{r=1}^m\big( \orho_{\tilde y_i, r,i}^{(t^*)} - \gamma_{-\tilde y_i,r}^{(t^*)} \big) \geq C_\epsilon$. This further indicates by Lemma \ref{lemma:yfi_bound}, for these samples 
\begin{align*}
    \tilde y_i f( \bW^{(t^*)}, \bx_i ) \geq C_\epsilon -1/C_1 > 0.
\end{align*}

(4) For the test error, we first derive the probability $\sP ( y f(\bW^{(t^*)}, \bx) < 0)$ as 
\begin{align}
    &\sP ( y f(\bW^{(T^*)}, \bx) < 0) \nonumber\\
    &= \sP \Big(  \sum_{r=1}^m  \sigma(\langle \bw_{-y, r}^{(t^*)}, \bxi + \bzeta \rangle) - \sum_{r=1}^m \sigma(\langle \bw_{y, r}^{(t^*)}, \bxi + \bzeta \rangle)  \geq   \sum_{r=1}^m \sigma(\langle \bw_{y, r}^{(t^*)}, y \bmu \rangle) - \sum_{r=1}^m \sigma(\langle \bw_{-y, r}^{(t^*)}, y \bmu \rangle) \Big) \nonumber\\
    &\geq \sP \Big( \frac{1}{m} \sum_{r=1}^m  \sigma(\langle \bw_{-y, r}^{(t^*)}, \bxi + \bzeta \rangle) - \frac{1}{m} \sum_{r=1}^m \sigma(\langle \bw_{y, r}^{(t^*)}, \bxi + \bzeta \rangle)  \geq  \frac{1}{m}  \sum_{r=1}^m \gamma_{y,r}^{(t^*)} + 1/C' \Big) \nonumber \\
    &= \frac{1}{n} \sum_{i=1}^n \sP\Big(  \frac{1}{m} \sum_{r=1}^m  \sigma(\langle \bw_{-y, r}^{(t^*)}, \bxi_i + \bzeta \rangle) - \frac{1}{m}\sum_{r=1}^m \sigma(\langle \bw_{y, r}^{(t^*)}, \bxi_i + \bzeta \rangle)   \geq  \frac{1}{m} \sum_{r=1}^m  \gamma_{y,r}^{(t^*)} + 1/C'   \Big)
    \label{hgnurutgn} 
\end{align}
for some sufficiently large constant $C' > 0$ and  the second equality is by uniform distribution of $\bxi$. 

Next, we consider the following two cases separately, i.e., (a) When $y = 1$ and (b) when $y = -1$. When $y = 1$, \eqref{hgnurutgn} can be further bounded as 
\begin{align*}
    &\sP( y f(\bW^{(t^*)}, \bx) < 0) \\
    &\geq 0.5 \tau_{+} \sP \Big(  \frac{1}{m} \sum_{r=1}^m  \sigma(\langle \bw_{-1, r}^{(t^*)}, \bxi_{i: i \in \gS_f \cap \gS_{1}} + \bzeta \rangle) - \frac{1}{m}\sum_{r=1}^m \sigma(\langle \bw_{1, r}^{(t^*)}, \bxi_{i: i \in \gS_f \cap \gS_{1}} + \bzeta \rangle)  \geq  \frac{1}{m} \sum_{r=1}^m  \gamma_{1,r}^{(t^*)} + 1/C' \Big) 
\end{align*}
where we use the sample size of $|\gS_f \cap \gS_1| = \frac{\tau_+ n}{2}$.

Now we analyze the the magnitude of each term. Based on the decomposition, we obtain for any $i \in \gS_f$, $j = \pm1$ and any $r \in [m]$
\begin{align*}
    \langle \bw_{j, r}^{(t^*)}, \bxi_{i} + \bzeta  \rangle &=  \left\langle \bw_{j,r}^{(0)} -  \gamma_{j,r}^{(t^*)} \| \bmu\|_2^{-2} \bmu + \sum_{i'=1}^n \orho_{j, r, i}^{(t^*)} \| \bxi_{i'} \|_2^{-2} \bxi_{i'} + \sum_{i' =1 }^{n} \urho_{j, r, i}^{(t^*)} \| \bxi_{i'} \|_2^{-2} \bxi_{i'} , \bxi_i + \bzeta    \right\rangle \\
    &= \orho_{j,r,i}^{(t^*)} + \urho_{j,r,i}^{(t^*)} + \langle \bw_{j,r}^{(0)}, \bxi_i + \bzeta \rangle - \langle  \gamma_{j,r}^{(t^*)} \| \bmu\|_2^{-2} \bmu , \bxi_i + \bzeta \rangle \\
    &\quad + \sum_{i' \neq i}  \orho_{j,r,i'}^{(t^*)} \frac{\langle \bxi_{i'} , \bxi_i + \bzeta\rangle}{\| \bxi_{i'}\|_2^2} + \sum_{i' \neq i} \urho_{j,r,i'}^{(t^*)} \frac{\langle \bxi_{i'} , \bxi_i + \bzeta  \rangle}{\| \bxi_{i'} \|_2^2} 
\end{align*}

Then we can bound particularly for $i \in \gS_f \cap \gS_{1}$, i.e., $\tilde y_i = -1$
\begin{align*}
    \langle \bw_{-1, r}^{(t^*)}, \bxi_{i} + \bzeta  \rangle &\geq  \orho_{-1,r,i}^{(t^*)} - \widetilde O( \sigma_0 \sigma_\xi \sqrt{d} ) - \widetilde O(\| \bmu \|_2^{-1} \sigma_\xi) - \widetilde O( n d^{-1/2}) \\
    &\geq \orho_{-1, r, i}^{(t^*)} - 1/C_3
\end{align*}
where we use Lemma \ref{lemma:data_innerproducts}, \ref{lem:prelbound} and the upper bound on $\orho_{j,r,i}^{(t^*)}, \gamma_{j,r}^{(t^*)} = \widetilde O(1)$ for the first inequality. The second inequality is by Condition~\ref{ass:main} on $\| \bmu \|_2$ and $d$ for some sufficiently large constant $C_3$.  

In addition, we can similarly show 
\begin{align*}
     \langle \bw_{1, r}^{(t^*)}, \bxi_{i} + \bzeta  \rangle &\leq \urho_{1, r, i}^{(t^*)} + \widetilde O( \sigma_0 \sigma_\xi \sqrt{d} ) + \widetilde O(\| \bmu \|_2^{-1} \sigma_\xi) + \widetilde O( n d^{-1/2}) \leq  1/C_3
\end{align*}

Then can show for any $i \in \gS_f \cap \gS_1$, i.e., $\tilde y_i = -1$
\begin{align*}
    \frac{1}{m} \sum_{r=1}^m  \sigma(\langle \bw_{-1, r}^{(t^*)}, \bxi_{i} + \bzeta \rangle) - \frac{1}{m}\sum_{r=1}^m \sigma(\langle \bw_{1, r}^{(t^*)}, \bxi_{i } + \bzeta \rangle)   &\geq  \frac{1}{m} \sum_{r=1}^m \orho_{-1, r, i}^{(t^*)} - 2/C_3 \\
    &\geq  \frac{1}{m} \sum_{r=1}^m \gamma_{1,r}^{(t^*)} + C_\epsilon - 2/C_3 \\
    &> \frac{1}{m} \sum_{r=1}^m \gamma_{1,r}^{(t^*)} + 1/C'
\end{align*}
where we choose $C_3, C'$ such that $C_\epsilon - 2/C_3 > 1/C'$. This suggests when $y = 1$, we have
\begin{align*}
    \sP( y f(\bW^{(T^*)}, \bx) < 0) \geq 0.5 \tau_{+}  
\end{align*}

Similarly, we use the same argument to show when $y = -1$, 
\begin{align*}
    \sP( y f(\bW^{(T^*)}, \bx) < 0) \geq 0.5 \tau_{-}.
\end{align*}
This completes the proof that $\sP( y f(\bW^{(T^*)}, \bx) < 0) \geq 0.5 \min\{\tau_{-} ,\tau_+\}$. 
\end{proof}



    

\section{Analysis without Label Noise}

For the case of no label noise, i.e., $\tau_+, \tau_{-} = 0$. We still require the same assumption as in Condition~\ref{ass:main}. We reiterate the assumption for completeness here.
\begin{condition}
\label{ass:main1}
We let $T^* = \widetilde\Theta(\eta^{-1} \epsilon^{-1} nm \sigma_\xi^{-2} d^{-1})$ to be the maximum number of iterations considered. 
Suppose that there exists a sufficiently large constant $C$ such that the following hold:
\begin{itemize}[leftmargin=0.2in]

    \item[1.] The signal-to-noise ratio is bounded by constants $n \cdot \SNR^2 = \Theta(1)$.

   \item[2.] The dimension $d$ is sufficiently large, {$d \geq C \max \big\{ n^2 \log(nm/\delta) \log(T^*)^2 , n \| \bmu \|_2 \sigma_\xi^{-1} \sqrt{\log(n/\delta)} \big\}$}.

    \item [3.] The standard deviation of the Gaussian initialization  $\sigma_0$ is chosen such that $\sigma_0 \leq C^{-1} \min \big\{ \sqrt{n} \sigma_\xi^{-1} d^{-1} ,  \| \bmu \|_2^{-1} \log(m/\delta)^{-1/2} \big\}$.
    
    \item [4.] The size of training sample $n$ and width $m$ adhere to $m \geq C \log(n/\delta), n \geq C \log(m/\delta)$.

    \item [5.] The signal strength satisfies $\| \bmu \|_2^2 \geq C \sigma_\xi^2 \log(n/\delta) $.
    
    \item [6.] The learning rate $\eta$ satisfies $\eta \leq  C^{-1} \min \big\{ \sigma_\xi^{-2} d^{-3/2} n^2 m \sqrt{\log(n/\delta)},   \sigma_\xi^{-2} d^{-1}  n\big\}$.
\end{itemize}    
\end{condition}

With the label noise, the coefficient update equations are given by 
\begin{align*}
    &\gamma_{j,r}^{(0)}, \overline{\rho}_{j,r,i}^{(0)},\underline{\rho}_{j,r,i}^{(0)} = 0, \\
    &\gamma_{j,r}^{(t+1)} = \gamma_{j,r}^{(t)} - \frac{\eta}{nm}  \sum_{i=1}^n   {\ell}_i'^{(t)}  \sigma'(\langle \bw_{j,r}^{(t)}, y_i \bmu \rangle)      \| \boldsymbol{\mu} \|_2^2,  \\
    & \overline{\rho}_{j,r,i}^{(t+1)}   = \overline{\rho}_{j,r,i}^{(t)} - \frac{\eta}{nm}   {\ell}_i'^{(t)}   \sigma' (  \langle \mathbf{w}_{j,r}^{(t)},   {\boldsymbol{\xi}}_{i}   \rangle  )      \| \boldsymbol{\xi}_i \|_2^2   \mathds{1}({{y}_{i} = j}), \\
   & \underline{\rho}_{j,r,i}^{(t+1)}   = \underline{\rho}_{j,r,i}^{(t)} + \frac{\eta}{nm} {\ell}_i'^{(t)}    \sigma' (   \langle \mathbf{w}_{j,r}^{(t)},   {\boldsymbol{\xi}}_{i}   \rangle )  \| \boldsymbol{\xi}_i \|_2^2  \mathds{1}({{y}_{i} = -j}).
\end{align*} 
where we highlight that for all $i \in [n]$, $\tilde y_i = y_i$. 

\begin{proposition}
\label{prop_noisefree}
Under Assumption \ref{ass:main} and the same definition as for the label noise case, for $0 \leq t \leq T^*$, we have 
\begin{align}
    &0 \leq \orho^{(t)}_{j,r,i} \leq \alpha, \label{prop_bound:orho1}\\
    &0 \geq \urho^{(t)}_{j,r,i} \geq - \beta - 10 \sqrt{\frac{\log(6n^2/\delta)}{d}} n \alpha  \geq -\alpha, \label{prop_bound:urho1} \\
    &0 \leq \gamma^{(t)}_{j,r} \leq C_\gamma \alpha \label{prop_bound:gamma1} 
\end{align}
\end{proposition}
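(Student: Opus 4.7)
The plan is to follow the induction-based strategy used for Proposition \ref{prop:bound_alltime}, exploiting the simplifications that arise when $\tilde y_i = y_i$ for every $i$. The induction runs on $t$; the base case $t=0$ is immediate because all three decomposition coefficients vanish at initialization. For the inductive step, I fix $\widetilde T \leq T^*$ at which \eqref{prop_bound:orho1}--\eqref{prop_bound:gamma1} are assumed to hold for every $s \leq \widetilde T - 1$, which through Lemma \ref{lemma:bound_inner} (whose statement depends only on the decomposition bounds and not on the label model) also yields sharp control of $\langle \bw^{(s)}_{j,r}, \bmu\rangle$ and $\langle \bw^{(s)}_{j,r}, \bxi_i\rangle$. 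I then upgrade each of the three bounds to time $\widetilde T$.

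Sign preservation is essentially automatic. By the iteration in Lemma \ref{lemma:coefficient_iterative} and the identity $y_i\tilde y_i \equiv 1$, the signal update now contains only non-negative contributions on the set where $\sigma'(\langle \bw^{(s)}_{j,r}, y_i\bmu\rangle) = 1$, so $\gamma^{(\widetilde T)}_{j,r} \geq \gamma^{(\widetilde T-1)}_{j,r} \geq 0$, and similarly $\orho^{(\widetilde T)}_{j,r,i} \geq 0$ and $\urho^{(\widetilde T)}_{j,r,i} \leq 0$. In particular, no contradiction argument is needed for $\gamma_{j,r}^{(t)} \geq 0$, unlike in the label-noise case. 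The lower bound on $\urho^{(\widetilde T)}_{j,r,i}$ repeats the two-case argument of Proposition \ref{prop:bound_alltime}: either $\langle \bw^{(\widetilde T-1)}_{j,r}, \bxi_i\rangle < 0$ so that $\sigma'$ kills the update, or $\urho^{(\widetilde T-1)}_{j,r,i}$ is already close to zero and a single GD step can decrease it by at most $\widetilde O(\eta\sigma_\xi^2 d/(nm))$, which Condition \ref{ass:main1} keeps within the slack $10\sqrt{\log(6n^2/\delta)/d}\cdot n\alpha$.

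The main obstacle is the logarithmic upper bounds $\orho^{(t)}_{j,r,i} \leq \alpha$ and $\gamma^{(t)}_{j,r} \leq C_\gamma \alpha$ once the coefficients reach a constant order, where $|\ell_i'^{(t)}|$ is no longer lower-bounded by an absolute constant. I would invoke Lemma \ref{lemma:balance_coef_without}, which in the noise-free regime provides both the cross-sample balance $\frac{1}{m}\sum_r(\orho_{y_i,r,i}^{(s)} + \gamma_{y_i,r}^{(s)} - \orho_{y_k,r,k}^{(s)} - \gamma_{y_k,r}^{(s)}) \leq \kappa$ and the loss-derivative ratio $\ell_i'^{(s)}/\ell_k'^{(s)} \leq \widetilde C_\ell$. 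Combined with a noise-free analogue of Lemma \ref{lemma:yfi_bound}, namely $y_i f(\bW^{(s)}, \bx_i) = \frac{1}{m}\sum_r(\gamma_{y_i,r}^{(s)} + \orho_{y_i,r,i}^{(s)}) \pm 1/C_1$ for every $i$, this supports the ``last time below half the target'' argument used in the original proof: letting $t_{r,i}$ denote the last iteration before $\widetilde T$ with $\orho^{(t)}_{j,r,i} \leq 0.5\alpha$, the balance forces $|\ell_i'^{(t)}| \leq \widetilde O(1/T^*)$ uniformly in $i$ for every $t > t_{r,i}$, so the telescoped update $\sum_{t > t_{r,i}} \eta \sigma_\xi^2 d/(nm) \cdot |\ell_i'^{(t)}|$ contributes at most a constant fraction of $\alpha$; a symmetric treatment handles $\gamma^{(\widetilde T)}_{j,r}$ with $\|\bmu\|_2^2$ in place of $\|\bxi_i\|_2^2$, using Condition \ref{ass:main1} on $\eta$ and $\|\bmu\|_2$.

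I expect the hardest technical point to be establishing Lemma \ref{lemma:balance_coef_without} itself under $n\cdot\SNR^2 = \Theta(1)$: the automatic balancing of \citet{kou2023benign} relied on the signal contribution being negligible compared with the noise, whereas here $\gamma^{(t)}_{j,r}$ and $\sum_i \orho^{(t)}_{j,r,i}$ are of the same order and the balance must instead be extracted from the combined quantity $\gamma + \orho$. In practice this means Lemma \ref{lemma:balance_coef_without} and Proposition \ref{prop_noisefree} have to be proved jointly by a coupled induction on $t$, with the decomposition bounds feeding the balance estimate at step $s$ and the balance estimate feeding the decomposition bounds at step $s+1$. Once that coupled induction is set up, all remaining work reduces to bookkeeping inequalities already present in the proof of Proposition \ref{prop:bound_alltime}.
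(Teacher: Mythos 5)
Your proposal matches the paper's proof in its overall structure: coupled induction on $t$, with Lemma \ref{lemma:set_subset} (the appendix form of Lemma \ref{lemma:balance_coef_without}) feeding off the decomposition bounds at step $s$ and feeding them back at step $s+1$, the two-case argument for $\urho$, and the ``last time below half'' argument for $\orho$ combined with Lemma \ref{lemma:noise_free_bounds}. Your observations that $\gamma_{j,r}^{(t)} \geq 0$ is now automatic (no contradiction argument needed) and that the balance lemma must be proved jointly are both correct and reflect what the paper does.

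The one place you diverge is the upper bound on $\gamma_{j,r}^{(t)}$. You propose a ``symmetric last-time below half'' argument with $\|\bmu\|_2^2$ in place of $\|\bxi_i\|_2^2$; the paper instead shows by induction that the ratio $\gamma_{j,r}^{(t)}/\orho_{y_{i^*},r,i^*}^{(t)}$ stays at $O(n\cdot\SNR^2) = O(1)$ for some fixed reference sample $i^*$ with $r \in \gS_{i^*}^{(0)}$, because the per-step increments of the two coefficients differ by exactly that factor (using the loss-derivative balance $\ell_i'^{(s)}/\ell_k'^{(s)} \leq \tilde C_\ell$). The ratio argument then transfers the already-established bound $\orho_{y_{i^*},r,i^*}^{(t)} \leq \alpha$ directly to $\gamma$ without ever needing $|\ell_i'^{(t)}|$ to be small in the $\gamma$-large regime. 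Your direct argument, by contrast, needs the implication ``$\gamma_{j,r^*}^{(t)}$ exceeds $0.5 C_\gamma\alpha$ for one $r^*$ $\Rightarrow$ $|\ell_i'^{(t)}| \lesssim 1/T^*$'', but $|\ell_i'^{(t)}|$ is controlled by the neuron-averaged quantity $\frac{1}{m}\sum_r(\gamma_{y_i,r}^{(t)} + \orho_{y_i,r,i}^{(t)})$, not by a single coefficient. For $\orho$ this gap is harmless because all $\orho_{y_i,r,i}^{(t)}$ with $r\in\gS_i^{(0)}$ evolve by an identical update and $|\gS_i^{(0)}| \geq 0.4m$, so one large coefficient forces the mean to be large; for $\gamma$ the analogous uniformity across $r$ and across the class labels $j = y_i$ versus $j = -y_i$ is not immediate and would require an extra lemma. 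Relatedly, your phrase ``the balance forces $|\ell_i'^{(t)}| \leq \widetilde O(1/T^*)$ uniformly in $i$'' overstates what the balance lemma gives: balance only says all the $|\ell_i'^{(t)}|$ are within a constant factor of each other, it does not by itself make any of them small; the smallness must come from a coefficient argument first and only then propagate to the others. So the plan is recoverable, but the ratio argument the paper uses is both shorter and robust precisely because it never has to re-enter the exponential once the $\orho$ bound is in hand.
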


In order to prove such results, we use the same induction strategy as for the label noise case. We first notice that if \eqref{prop_bound:orho1}, \eqref{prop_bound:urho1}, \eqref{prop_bound:gamma1} hold at iteration $t$, then bounds in Lemma \ref{lemma:bound_inner} and Lemma \ref{lemma:yfi_bound} hold at iteration $t$. We include the results here for the purpose of completeness.

\begin{lemma}
\label{lemma:noise_free_bounds}
Under Condition~\ref{ass:main1}, suppose \eqref{prop_bound:orho1}, \eqref{prop_bound:urho1}, \eqref{prop_bound:gamma1} hold at iteration $t$, 
\begin{align*}
    &|\langle \bw^{(t)}_{j,r} -  \bw^{(0)}_{j,r}, \bmu \rangle - j \cdot \gamma^{(t)}_{j,r}| \leq  \SNR \sqrt{\frac{8 \log(6n/\delta)}{d}}   n \alpha  , \\
    &|\langle\bw^{(t)}_{j,r} -  \bw^{(0)}_{j,r}, \bxi_i \rangle - \orho^{(t)}_{j,r,i}| \leq  5 \sqrt{\frac{\log(6n^2/\delta)}{d}} n \alpha   , \quad  y_i = j \\
    &|\langle\bw^{(t)}_{j,r} -  \bw^{(0)}_{j,r}, \bxi_i \rangle - \urho^{(t)}_{j,r,i}| \leq 5 \sqrt{\frac{\log(6n^2/\delta)}{d}} n \alpha  ,  \quad  y_i = -j 
\end{align*}
for all $r \in [m], j = \pm 1, i \in [n]$. Further, there exists a sufficiently large constant $C_1$ such that
\begin{align*}
    \frac{1}{m} \sum_{r = 1}^m \big( \gamma_{ y_i, r}^{(t)}  + \orho^{(t)}_{ y_i,r,i} \big) - 1/C_1 \leq  y_i f(\bW^{(t)}, \bx_i) \leq  \frac{1}{m} \sum_{r = 1}^m \big( \gamma_{ y_i, r}^{(t)}  + \orho^{(t)}_{ y_i,r,i} \big)  + 1/C_1
\end{align*}
for all $i \in [n]$.
\end{lemma}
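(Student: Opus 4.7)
The plan is to adapt the proofs of Lemma \ref{lemma:bound_inner} and Lemma \ref{lemma:yfi_bound} to the label-noise-free setting, exploiting the simplification that $\tilde y_i = y_i$ for every $i \in [n]$, so that only the ``clean sample'' case arises.

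For the three inner-product bounds, I would start from the signal-noise decomposition in \eqref{eq:w_decomposition} applied to $\bw^{(t)}_{j,r}-\bw^{(0)}_{j,r}$. Subtracting $j\cdot\gamma_{j,r}^{(t)}\|\bmu\|_2^{-2}\bmu$ and taking inner product with $\bmu$, the residual is a sum $\sum_i(\orho_{j,r,i}^{(t)}+\urho_{j,r,i}^{(t)})\|\bxi_i\|_2^{-2}\langle\bxi_i,\bmu\rangle$. Using the triangle inequality, the bounds $|\orho_{j,r,i}^{(t)}|,|\urho_{j,r,i}^{(t)}|\le\alpha$ from \eqref{prop_bound:orho1}--\eqref{prop_bound:urho1}, and the cross product bound $|\langle\bxi_i,\bmu\rangle|\le\|\bmu\|_2\sigma_\xi\sqrt{2\log(6n/\delta)}$ together with $\|\bxi_i\|_2^2\ge\sigma_\xi^2 d/2$ from Lemma \ref{lemma:data_innerproducts}, the residual is at most $\SNR\sqrt{8\log(6n/\delta)/d}\cdot n\alpha$. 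The two inner products with $\bxi_i$ are obtained analogously: when $y_i=j$ (so $\urho_{j,r,i}^{(t)}=0$) we isolate $\orho_{j,r,i}^{(t)}$ and the residual splits into a signal cross-term controlled by $|\langle\bmu,\bxi_i\rangle|$ and $\gamma_{j,r}^{(t)}\le C_\gamma\alpha$ from \eqref{prop_bound:gamma1}, plus noise cross-terms controlled by $|\langle\bxi_i,\bxi_{i'}\rangle|\le 2\sigma_\xi^2\sqrt{d\log(6n^2/\delta)}$ from Lemma \ref{lemma:data_innerproducts}; bounding both by $5\sqrt{\log(6n^2/\delta)/d}\cdot n\alpha$ uses $\SNR=\Theta(1/\sqrt n)$. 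The $y_i=-j$ case is symmetric with $\orho$ replaced by $\urho$.

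For the bound on $y_i f(\bW^{(t)},\bx_i)$, expand
\[
y_i f(\bW^{(t)},\bx_i)=\tfrac{1}{m}\sum_r\bigl(\sigma(\langle\bw_{y_i,r}^{(t)},y_i\bmu\rangle)+\sigma(\langle\bw_{y_i,r}^{(t)},\bxi_i\rangle)\bigr)-\tfrac{1}{m}\sum_r\bigl(\sigma(\langle\bw_{-y_i,r}^{(t)},y_i\bmu\rangle)+\sigma(\langle\bw_{-y_i,r}^{(t)},\bxi_i\rangle)\bigr).
\]
Using the three inner-product bounds just derived together with the initialization estimates $|\langle\bw_{j,r}^{(0)},\bmu\rangle|,|\langle\bw_{j,r}^{(0)},\bxi_i\rangle|\le\beta/2$ from Lemma \ref{lem:prelbound}, I would upper/lower bound each $\sigma(\cdot)$ by the corresponding coefficient plus an error that is $\beta+\widetilde O(n\alpha/\sqrt d)$. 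Crucially, because $y_i=\tilde y_i$, the terms $\gamma_{-y_i,r}^{(t)}$ and $\urho_{-y_i,r,i}^{(t)}$ on the second line do not appear with the wrong sign (they are nonpositive contributions inside ReLU and can be dropped using $\gamma\ge 0$ and the fact that the relevant inner product is bounded above by the initialization plus the residual). Collecting terms leaves $\tfrac{1}{m}\sum_r(\gamma_{y_i,r}^{(t)}+\orho_{y_i,r,i}^{(t)})$ as the main part, with an error that Condition \ref{ass:main1} (sufficiently large $d$ and $\|\bmu\|_2$, sufficiently small $\sigma_0$) lets us absorb into $1/C_1$ for $C_1$ large enough.

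The only subtlety—which is the main thing to check carefully rather than a genuine obstacle—is handling the second line of the expansion: when $y_i=-j$, the component of $\bw_{j,r}^{(t)}$ along $\bxi_i$ is governed by $\urho_{j,r,i}^{(t)}\le 0$, so one must verify that $\sigma(\langle\bw_{-y_i,r}^{(t)},\bxi_i\rangle)$ is itself at most $O(\beta+\widetilde O(n\alpha/\sqrt d))$ rather than a constant-order quantity. This follows since $\urho_{j,r,i}^{(t)}\le 0$ forces $\langle\bw_{-y_i,r}^{(t)},\bxi_i\rangle\le|\langle\bw_{-y_i,r}^{(0)},\bxi_i\rangle|+5\sqrt{\log(6n^2/\delta)/d}\,n\alpha$, and the same reasoning applies to $\sigma(\langle\bw_{-y_i,r}^{(t)},y_i\bmu\rangle)$ via $\gamma_{-y_i,r}^{(t)}\ge 0$ combined with $j=-y_i$ flipping the sign. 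Once these bookkeeping inequalities are in place, the two-sided bound follows directly, completing the proof.
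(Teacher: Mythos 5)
Your proposal is correct and takes essentially the same approach as the paper: the paper proves this lemma by observing that it follows directly from Lemma~\ref{lemma:bound_inner} and Lemma~\ref{lemma:yfi_bound} specialized to $\tilde y_i = y_i$ (so only the clean-sample branch arises), and you simply re-derive those same steps explicitly, including the correct handling of the $\gamma_{-y_i,r}^{(t)}\ge 0$ and $\urho_{-y_i,r,i}^{(t)}\le 0$ terms inside the ReLU.
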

\begin{proof}[Proof of Lemma \ref{lemma:noise_free_bounds}]
The proof follows directly from Lemma \ref{lemma:bound_inner} and Lemma \ref{lemma:yfi_bound}. 
\end{proof}

Next we prove a stronger lemma that only holds under the condition $n\cdot \SNR^2 = \Theta(1)$ and without the presence of label noise. 

First we require a lemma that allows us to bound the loss derivative ratios.
\begin{lemma}[\cite{kou2023benign}]
\label{lemma:derivative_ratio_bound}
Let $g(z)  = -1/(1  + \exp(z))$, then for all $z_2 - c \geq z_1 \geq -1$, for $c \geq 0$, we have 
\begin{align*}
    \frac{\exp(c)}{4} \leq \frac{g(z_1)}{g(z_2)} \leq \exp(c)
\end{align*}
\end{lemma}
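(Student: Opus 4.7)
The plan is first to rewrite $g(z_1)/g(z_2)$ in closed form and then verify the two bounds separately by elementary estimates on exponentials. Since $g(z) = -1/(1+\exp(z))$, a direct computation gives
\[
\frac{g(z_1)}{g(z_2)} \;=\; \frac{1+\exp(z_2)}{1+\exp(z_1)},
\]
and the hypotheses $z_2 - c \geq z_1$ and $z_1 \geq -1$ translate into simple control on $\exp(z_2-z_1)$ and $\exp(-z_1)$ respectively. Once this reduction is in place, the lower and upper bounds decouple entirely.

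For the lower bound, I would drop the $1$ in the numerator and rearrange as
\[
\frac{1+\exp(z_2)}{1+\exp(z_1)} \;\geq\; \frac{\exp(z_2)}{1+\exp(z_1)} \;=\; \frac{\exp(z_2-z_1)}{\exp(-z_1)+1}.
\]
The gap hypothesis $z_2-z_1 \geq c$ yields $\exp(z_2-z_1) \geq \exp(c)$ in the numerator, and $z_1 \geq -1$ yields $\exp(-z_1) \leq \exp(1)$, so the denominator is at most $\exp(1)+1 < 4$. Combining these two estimates immediately gives $g(z_1)/g(z_2) \geq \exp(c)/4$.

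For the upper bound, I would apply $1 \leq \exp(c)$ (from $c \geq 0$) together with $\exp(z_2) \leq \exp(c)\cdot\exp(z_1)$ to write
\[
1+\exp(z_2) \;\leq\; \exp(c) + \exp(c)\cdot\exp(z_1) \;=\; \exp(c)\bigl(1+\exp(z_1)\bigr),
\]
and dividing by $1+\exp(z_1)$ gives $g(z_1)/g(z_2) \leq \exp(c)$. The only subtlety is that this upper estimate is sharp precisely when $z_2 - z_1 = c$, so the intended reading of the hypothesis is that $c$ matches (or at least upper-bounds) the actual gap $z_2-z_1$ in the places where the lemma is invoked; the side condition $z_1 \geq -1$ is used only to keep the denominator of the lower bound bounded away from zero and plays no role in the upper bound. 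Beyond this bookkeeping, the claim is simply an elementary two-variable exponential inequality, so there is essentially no main obstacle — no induction, case split, or combinatorial analysis is needed, which is why the lemma is imported from prior work.
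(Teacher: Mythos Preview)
Your argument is correct, and the paper supplies no proof of its own --- the lemma is simply imported from \cite{kou2023benign} --- so there is nothing to compare against on the paper's side. Your reduction to the closed form $g(z_1)/g(z_2)=(1+e^{z_2})/(1+e^{z_1})$ followed by the two separate exponential estimates is exactly the intended elementary route.

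Your remark about the upper bound is also on point and worth stating plainly: under the literal hypothesis $z_2 - c \geq z_1$ (i.e.\ $z_2-z_1\geq c$) the upper bound $g(z_1)/g(z_2)\leq e^c$ is \emph{false} in general (take $z_1=0$, $z_2$ large, $c=0$). The lower bound is the one that actually uses $z_2-z_1\geq c$, while the upper bound needs $z_2-z_1\leq c$; both hold simultaneously only when $c$ is taken to be the gap $z_2-z_1$ itself, which is how the lemma is effectively applied downstream. So your diagnosis of the ``intended reading'' is correct, and there is no gap in your proof beyond this bookkeeping about the statement.
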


\begin{lemma}
\label{lemma:set_subset}
Under Condition~\ref{ass:main1}, and for any given $t \leq T^*$, suppose \eqref{prop_bound:orho1}, \eqref{prop_bound:urho1}, \eqref{prop_bound:gamma1} hold for all iterations $s \leq t$. Then we can prove for some constant $\kappa \geq 0$
\begin{itemize}[leftmargin=0.3in]
    \item[(1)] $\frac{1}{m}\sum_{r=1}^m (\orho_{y_i, r, i}^{(s)} + \gamma_{ y_i, r}^{(s)} - \orho_{y_k, r, k}^{(s)}  - \gamma_{ y_k, r}^{(s)} )  \leq \kappa$ for all $i, k \in [n]$. 

    \item[(2)] $\ell_i'^{(s)}/\ell_k'^{(s)} \leq \tilde C_\ell$ for all $i,k \in [n]$.

    \item[(3)] $\gS_i^{(0)} \subseteq \gS_i^{(s)}, \gS_{j,r}^{(0)} \subseteq \gS_{j,r}^{(s)}$, for all $i \in [n]$ and $j =\pm1, r \in [m]$. 
\end{itemize}
\end{lemma}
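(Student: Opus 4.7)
The plan is to establish the three claims simultaneously by induction on $s$, with the three claims forming a tight mutual dependency. The base case $s=0$ is immediate: all coefficients vanish so (1) holds with $\kappa = 0$; consequently $|\ell_i'^{(0)}| = 1/2$ for every $i$, giving (2); and (3) is by definition of $\gS_i^{(0)}, \gS_{j,r}^{(0)}$. For the inductive step, I will derive (2) at iteration $s$ from (1) at iteration $s$, propagate (3) from $s$ to $s+1$ using (2), and finally close the loop by proving (1) at $s+1$ from all three statements at $s$.

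For the first implication, assuming (1) at iteration $s$, apply Lemma \ref{lemma:noise_free_bounds} to both $i$ and $k$, which gives $|y_i f(\bW^{(s)},\bx_i) - y_k f(\bW^{(s)},\bx_k)| \leq \kappa + 2/C_1$. Lemma \ref{lemma:derivative_ratio_bound} then yields $\ell_i'^{(s)}/\ell_k'^{(s)} \leq \exp(\kappa + 2/C_1) =: \tilde C_\ell$, which is (2). For the second implication, the proof of (3) at $s+1$ mirrors Lemma \ref{lemma:subset_xi}: for $r \in \gS_i^{(0)}$ the gradient update contributes a dominant self-term of order $\eta |\ell_i'^{(s)}| \sigma_\xi^2 d / (nm)$, with cross-terms from other noises and from the signal bounded by Lemma \ref{lemma:data_innerproducts}. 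Crucially, (2) ensures $|\ell_i'^{(s)}|$ is within a constant factor of $\max_k |\ell_k'^{(s)}|$, so the self-term cannot be dwarfed by cross-terms once $d \geq C n^2 \log(nm/\delta)$ and $d \geq C n\|\bmu\|_2\sigma_\xi^{-1}\sqrt{\log(n/\delta)}$, forcing the inner product to remain positive. The analogous argument works for $\gS_{j,r}^{(s+1)}$.

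The main obstacle is preserving (1) across the induction step, which is the crux of the argument that uses the absence of label noise in an essential way. Using the update rules of Lemma \ref{lemma:coefficient_iterative}, write
\begin{align*}
&\frac{1}{m}\sum_{r=1}^m \Big[\big(\orho_{y_i,r,i}^{(s+1)} + \gamma_{y_i,r}^{(s+1)}\big) - \big(\orho_{y_k,r,k}^{(s+1)} + \gamma_{y_k,r}^{(s+1)}\big)\Big] \\
&\quad = \frac{1}{m}\sum_{r=1}^m \Big[\big(\orho_{y_i,r,i}^{(s)} + \gamma_{y_i,r}^{(s)}\big) - \big(\orho_{y_k,r,k}^{(s)} + \gamma_{y_k,r}^{(s)}\big)\Big] + \Delta_i^{(s)} - \Delta_k^{(s)},
\end{align*}
where $\Delta_i^{(s)}$ collects the noise increment $-\frac{\eta}{nm^2}\ell_i'^{(s)}\|\bxi_i\|_2^2 |\gS_i^{(s)}|$ together with the signal increment that depends on $\ell_i'^{(s)}$ through $\sum_{i'} \ell_{i'}'^{(s)} \sigma'(\langle\bw_{y_i,r}^{(s)}, y_{i'}\bmu\rangle)$. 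The key observation, and what distinguishes this case from the label-noise case, is that the dynamics is self-balancing: if $i$'s quantity is already $\kappa$ larger than $k$'s, then $|\ell_i'^{(s)}| < |\ell_k'^{(s)}|$ by Lemma \ref{lemma:noise_free_bounds}, so $\Delta_i^{(s)} < \Delta_k^{(s)}$ and the gap cannot grow. Making this quantitative, one picks $\kappa$ large enough so that the negative drift $\Delta_i^{(s)} - \Delta_k^{(s)}$ (from the ratio bound $\tilde C_\ell$ and the uniform lower bound on $|\gS_i^{(s)}|$ via Lemma \ref{lemma:init_set_bound_tight}) dominates any residual positive contribution whenever the gap is near $\kappa$, yielding the required inductive bound. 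Because all $n \cdot \SNR^2$ signal and noise increments pull in the same direction (no clean/noisy cancellation), the resulting $\kappa$ depends only on $C_1$, $\tilde C_\ell$, and $n\cdot\SNR^2$, all of which are $\Theta(1)$ under Condition \ref{ass:main1}.
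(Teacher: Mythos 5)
Your overall structure matches the paper's: a joint induction on the three claims, with (2) obtained at iteration $s$ from (1) at $s$ via Lemma~\ref{lemma:noise_free_bounds} and Lemma~\ref{lemma:derivative_ratio_bound} (giving $\tilde C_\ell = \exp(\kappa + 2/C_1)$), (3) propagated via the gradient update and cross-term domination, and (1) preserved via a self-balancing argument on the loss derivatives. The organization (inner order of the inductive step, two-threshold idea of ``gap near $\kappa$'') also mirrors the paper's two-case split between gap $\leq 0.5\kappa$ and gap $\geq 0.5\kappa$.

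There is, however, a genuine gap in how you close claim (1), and it is precisely the technically delicate part of the lemma. You say ``one picks $\kappa$ large enough so that the negative drift \ldots dominates any residual positive contribution.'' This intuition is backwards and would not go through as stated. When the gap is $\geq 0.5\kappa$, the paper decomposes $\Delta_i^{(s)} - \Delta_k^{(s)}$ into a noise part and a signal part. The noise part is indeed negative when the gap is large, with magnitude roughly $\bigl(0.48 - 0.49\exp(-0.4\kappa)\bigr)\,|\ell_k'^{(s)}|\,\sigma_\xi^2 d/(nm)$ — so increasing $\kappa$ does strengthen this. But the residual signal part (coming from the $A_5$ mismatch of activation patterns between the $y_i$ and $y_k$ neurons, nonzero whenever $y_i \neq y_k$) is bounded by $\tfrac{1}{2}(\tilde C_\ell - 1)\tilde C_\ell\,|\ell_k'^{(s)}|\,\|\bmu\|_2^2/m$, and since $\tilde C_\ell = \exp(\kappa + 2/C_1)$, this grows \emph{exponentially} in $\kappa$. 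So the two effects pull in opposite directions; one must exhibit a specific $\kappa$ for which
$n\cdot\SNR^2 \leq \frac{2\bigl(0.48 - 0.49\exp(-0.4\kappa)\bigr)}{(\tilde C_\ell - 1)\tilde C_\ell}$
is compatible with Condition~\ref{ass:main1}, and this is not automatic — taking $\kappa$ large fails outright. Relatedly, your explanation that ``all increments pull in the same direction (no clean/noisy cancellation)'' correctly identifies why the noise-free case is easier than the label-noise case, but it does not explain why the positive signal contribution is small: that requires the quantitative $n\cdot\SNR^2$ bound above, not the absence of cancellation. Without this, the argument that (1) is preserved across the inductive step is incomplete.
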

\begin{proof}[Proof of Lemma \ref{lemma:set_subset}]
We prove the results by induction. It is clear at $s=0$, claims (1) and (3) are satisfied trivially. Then for claim (2), we use Lemma \ref{lemma:derivative_ratio_bound} to bound 
\begin{align*}
    \frac{\ell_i'^{(0)}}{\ell_k'^{(0)}} &\leq \exp( y_k f(\bW^{(0)}, \bx_k) - y_i f(\bW^{(0)}, \bx_i) ) \\
    &\leq \exp \Big(  \frac{1}{m} \sum_{r = 1}^m  \big( \orho_{y_k,r,k}^{(0)} + \gamma_{y_k,r}^{(0)} \big) - \frac{1}{m} \sum_{r = 1}^m  \big( \orho_{y_i,r,i}^{(0)} + \gamma_{y_i,r}^{(0)} \big)  +2/C_1   \Big) \\
    &= \exp(2/C_1),
\end{align*}
which shows a constant upper bound. 

Next suppose at $t = \tilde t$, (1)-(3) hold for all $s \leq \tilde t-1$, then we show they also hold at $\tilde t$. For (1), according to the update rule of the coefficients,
\begin{align}
    \frac{1}{m}\sum_{r=1}^m \big( \orho_{y_i, r, i}^{(\tilde t)} - \orho_{y_k, r, k}^{(\tilde t)} \big) &= \frac{1}{m} \sum_{r=1}^m \big(  \orho_{y_i, r, i}^{(\tilde t-1)} - \orho_{y_k, r, k}^{(\tilde t-1)}  \big) \nonumber \\
    & - \frac{\eta}{nm^2} \left(  \sum_{r \in \gS_{i}^{(\tilde t-1)}} \ell_i'^{(\tilde t-1)} \| \bxi_i \|_2^2 - \sum_{r \in \gS_k^{(\tilde t-1)}} \ell_k'^{(\tilde t-1)} \| \bxi_k\|_2^2 \right) \label{jgrijgjn} \\
    \frac{1}{m} \sum_{r = 1}^m \big( \gamma_{ y_i, r}^{(\tilde t)}  -\gamma_{ y_k, r}^{(\tilde t)} \big) &= \frac{1}{m} \sum_{r = 1}^m \big( \gamma_{ y_i, r}^{(\tilde t-1)}  -\gamma_{ y_k, r}^{(\tilde t-1)} \big) \nonumber\\
    & -\frac{\eta  \| \bmu\|_2^2}{nm^2 } \sum_{r = 1}^m \underbrace{ \left(   \sum_{i'=1}^n \ell_{i'}'^{(t-1)} \sone(\langle \bw_{y_{i},r}^{(\tilde t-1)}, y_{i'} \bmu \rangle   )  -  \sum_{i'=1}^n \ell_{i'}'^{(t-1)}   \sone(\langle \bw_{y_{k},r}^{(\tilde t-1)}, y_{i'} \bmu \rangle) \right) }_{A_5} \label{rjioen}
\end{align}
We first analyze $A_5$ depending on the following four cases.
\begin{itemize}[leftmargin=0.4in]
    \item When $\langle \bw_{y_i, r}^{(\tilde t-1)}, \bmu \rangle \geq 0, \langle \bw_{y_k, r}^{(\tilde t-1)}, \bmu \rangle \geq 0$, we have $A_5 = \sum_{i' \in \gS_{1}}  \ell_{i'}'^{(t-1)} - \sum_{i' \in \gS_1} \ell_{i'}'^{(t-1)}  = 0$.

    \item When $\langle \bw_{y_i, r}^{(\tilde t-1)}, \bmu \rangle \leq 0, \langle \bw_{y_k, r}^{(\tilde t-1)}, \bmu \rangle \leq 0$, we have $A_5 = \sum_{i' \in \gS_{-1}}  \ell_{i'}'^{(t-1)} - \sum_{i' \in \gS_{-1}} \ell_{i'}'^{(t-1)}  = 0$.

    \item When $\langle \bw_{y_i, r}^{(\tilde t-1)}, \bmu \rangle \geq 0, \langle \bw_{y_k, r}^{(\tilde t-1)}, \bmu \rangle \leq 0$, we have $A_5 =  \sum_{i' \in \gS_{1}}  \ell_{i'}'^{(t-1)} - \sum_{i' \in \gS_{-1}} \ell_{i'}'^{(t-1)} $.

    \item When $\langle \bw_{y_i, r}^{(\tilde t-1)}, \bmu \rangle \leq 0, \langle \bw_{y_k, r}^{(\tilde t-1)}, \bmu \rangle \geq 0$, we have $A_5 =  \sum_{i' \in \gS_{-1}}  \ell_{i'}'^{(t-1)} - \sum_{i' \in \gS_{1}} \ell_{i'}'^{(t-1)} $. 
\end{itemize}

Now we would like to bound the combination of \eqref{jgrijgjn} and \eqref{rjioen}.

When $\frac{1}{m} \sum_{r=1}^m \big(\orho_{y_i, r, i}^{(\tilde t-1)} + \gamma_{y_i,r}^{(\tilde t-1)} - \orho_{y_k, r, k}^{(\tilde t-1)} - \gamma_{y_k,r}^{(\tilde t-1)} \big) \leq 0.5 \kappa$, then \eqref{jgrijgjn} can be bounded as
\begin{align*}
     &\frac{1}{m}\sum_{r=1}^m \big( \orho_{y_i, r, i}^{(\tilde t)} + \gamma_{y_i,r}^{(\tilde t)} - \orho_{y_k, r, k}^{(\tilde t)} -\gamma_{y_k,r}^{(\tilde t)} \big) \\
     &\leq \frac{1}{m}\sum_{r=1}^m \big( \orho_{y_i, r, i}^{(\tilde t-1)} + \gamma_{y_i,r}^{(\tilde t-1)} - \orho_{y_k, r, k}^{(\tilde t-1)} -\gamma_{y_k,r}^{(\tilde t-1)} \big) -  \frac{\eta}{nm^2} \left(  \sum_{r \in \gS_{i}^{(\tilde t-1)}} \ell_i'^{(\tilde t-1)} \| \bxi_i \|_2^2 - \sum_{r \in \gS_k^{(\tilde t-1)}} \ell_k'^{(\tilde t-1)} \| \bxi_k\|_2^2 \right)  \\
     &\quad  -\frac{\eta  \| \bmu\|_2^2}{nm^2 } \sum_{r = 1}^m \left(   \sum_{i'=1}^n \ell_{i'}'^{(t-1)} \sone(\langle \bw_{y_{i},r}^{(\tilde t-1)}, y_{i'} \bmu \rangle   )  -  \sum_{i'=1}^n \ell_{i'}'^{(t-1)}   \sone(\langle \bw_{y_{k},r}^{(\tilde t-1)}, y_{i'} \bmu \rangle) \right) \\
     &\leq 0.5 \kappa - \frac{\eta}{nm} |\gS_i^{(\tilde t-1)}| \ell_i'^{(\tilde t-1)} \| \bxi_i \|_2^2 - \frac{\eta}{nm} \sum_{i'=1}^n \ell_i'^{(\tilde t)} \| \bmu \|_2^2 \\
     &\leq 0.5 \kappa + 1.01 \frac{\eta \sigma_\xi^2 d}{n} + \frac{\eta \| \bmu \|_2^2}{m} \\
     &\leq \kappa
\end{align*}
where the second inequality is by $\ell_i'^{(t)} \leq 0$ for all $i, t$. The third inequality is by $|\gS_i^{(\tilde t-1)}| \leq m$, $|\ell_i'^{(\tilde t-1)}|\leq 1$ and Lemma \ref{lemma:xi_norm_tight}. The last inequality us by Condition~\ref{ass:main1} for sufficiently small stepsize $\eta$.

When $\frac{1}{m} \sum_{r=1}^m \big(\orho_{y_i, r, i}^{(\tilde t-1)} + \gamma_{y_i,r}^{(\tilde t-1)} - \orho_{y_k, r, k}^{(\tilde t-1)} - \gamma_{y_k,r}^{(\tilde t-1)} \big) \geq 0.5 \kappa$, then by Lemma \ref{lemma:noise_free_bounds},
\begin{align*}
    y_i f(\bW^{(\tilde t - 1)}, \bx_i) - y_k f(\bW^{(\tilde t - 1)}, \bx_k) &\geq  \frac{1}{m} \sum_{r = 1}^m \big( \gamma_{ y_i, r}^{(\tilde t-1)}  + \orho^{(\tilde t-1)}_{ y_i,r,i} -  \gamma_{ y_k, r}^{(\tilde t-1)}  - \orho^{(\tilde t-1)}_{ y_k,r,k} \big) -2/C_1 \\
    &\geq 0.5 \kappa - 2/C_1 \\
    &\geq 0.4 \kappa
\end{align*}
where we choose $C_1 \geq 20/\kappa$. Then by Lemma \ref{lemma:derivative_ratio_bound}
\begin{align*}
    \frac{\ell_i'^{(\tilde t-1)}}{\ell_k'^{(\tilde t-1)}} \leq \exp(  y_k f(\bW^{(\tilde t - 1)}, \bx_k) -  y_i f(\bW^{(\tilde t - 1)}, \bx_i)) \leq \exp(-0.4 \kappa). 
\end{align*}

Then we can show 
\begin{align}
    \frac{|\gS_i^{(\tilde t-1)}| \cdot |\ell_i'^{(\tilde t-1)}| \cdot \| \bxi_i \|_2^2}{ |  \gS_k^{(\tilde t-1)}| \cdot  |\ell_k'^{(\tilde t-1)} | \cdot \| \bxi_k\|_2^2} \leq 1.01 \cdot \exp(-0.4\kappa) \label{ortkrg}
\end{align}
where we use Lemma \ref{lemma:xi_norm_tight} and \ref{lemma:init_set_bound_tight} by choosing sufficiently large $d$ and $m$. 

Then we obtain
\begin{align*}
     &\frac{1}{m}\sum_{r=1}^m \big( \orho_{y_i, r, i}^{(\tilde t)} + \gamma_{y_i,r}^{(\tilde t)} - \orho_{y_k, r, k}^{(\tilde t)} -\gamma_{y_k,r}^{(\tilde t)} \big) \\
     &\leq \frac{1}{m}\sum_{r=1}^m \big( \orho_{y_i, r, i}^{(\tilde t-1)} + \gamma_{y_i,r}^{(\tilde t-1)} - \orho_{y_k, r, k}^{(\tilde t-1)} -\gamma_{y_k,r}^{(\tilde t-1)} \big) -  \frac{\eta}{nm^2} \left(  |\gS_i^{(\tilde t-1)}| \ell_i'^{(\tilde t-1)} \| \bxi_i \|_2^2 - |  \gS_k^{(\tilde t-1)}| \ell_k'^{(\tilde t-1)} \| \bxi_k\|_2^2 \right) \\
     &\quad - \frac{\eta}{nm} \Big( \sum_{i' \in \gS_{\pm1}}  \ell_{i'}'^{(\tilde t-1)} - \sum_{i' \in \gS_{\mp1}} \ell_{i'}'^{(\tilde t-1)}  \Big) \| \bmu \|_2^2 \\
     &\leq  \frac{1}{m}\sum_{r=1}^m \big( \orho_{y_i, r, i}^{(\tilde t-1)} + \gamma_{y_i,r}^{(\tilde t-1)} - \orho_{y_k, r, k}^{(\tilde t-1)} -\gamma_{y_k,r}^{(\tilde t-1)} \big) + \frac{\eta}{nm^2} \big( 1.01 \exp(-0.4\kappa) - 1  \big) |\gS_k^{(\tilde t-1)}| |\ell_k'^{(\tilde t-1)}| \cdot\| \bxi_i \|_2^2 \\
     &\quad + \frac{\eta}{2m} (\tilde C_\ell - 1) \min_{i \in [n]} |\ell_i'^{(\tilde t-1)}| \| \bmu \|_2^2 \\
     &\leq \frac{1}{m}\sum_{r=1}^m \big( \orho_{y_i, r, i}^{(\tilde t-1)} + \gamma_{y_i,r}^{(\tilde t-1)} - \orho_{y_k, r, k}^{(\tilde t-1)} -\gamma_{y_k,r}^{(\tilde t-1)} \big)  + \frac{\eta}{nm} \Big( \big( 0.49 \exp(-0.4\kappa) -  0.48  \big) |\ell_k'^{(\tilde t-1)}| \sigma_\xi^2 d   \\
     &\quad + 0.5 n (\tilde C_\ell -1) \tilde C_\ell |\ell_k'^{(\tilde t-1)} | \| \bmu\|_2^2  \Big)\\
     &\leq  \frac{1}{m}\sum_{r=1}^m \big( \orho_{y_i, r, i}^{(\tilde t-1)} + \gamma_{y_i,r}^{(\tilde t-1)} - \orho_{y_k, r, k}^{(\tilde t-1)} -\gamma_{y_k,r}^{(\tilde t-1)} \big) \\
     &\leq \kappa
\end{align*}
where the second inequality is by applying \eqref{ortkrg} and also $ \sum_{i' \in \gS_{\pm1}}  \ell_{i'}'^{(\tilde t-1)} - \sum_{i' \in \gS_{\mp1}} \ell_{i'}'^{(\tilde t-1)} \leq \frac{n}{2} ( \max_{i \in [n]} |\ell_i'^{(\tilde t-1)}| - \min_{i \in[n]} |\ell_i'^{(\tilde t-1)}| ) \leq \frac{n}{2} (\tilde C_\ell -1) \min_{i \in[n]} |\ell_i'^{(\tilde t-1)}| )$ by induction. The third inequality is by $\kappa \geq 1$ and Lemma \ref{lemma:xi_norm_tight}, Lemma \ref{lemma:init_set_bound_tight}.
The fourth inequality follows from the conditions on $n \cdot \SNR^2 \leq  \frac{2(0.48 - 0.49 \exp(-0.4 \kappa))}{(\tilde C_\ell -1) \tilde C_\ell} = O(1)$. This verifies the claim (1) for $t = \tilde t$.

Now by Lemma \ref{lemma:derivative_ratio_bound} and Lemma \ref{lemma:noise_free_bounds}, we can show 
\begin{align*}
    \frac{\ell_i'^{(\tilde t)}}{\ell_k'^{(\tilde t)}} &\leq \exp(  y_k f(\bW^{(\tilde t )}, \bx_k) -  y_i f(\bW^{(\tilde t)}, \bx_i)) \\
    &\leq  \exp\left( \frac{1}{m}\sum_{r=1}^m \big( \orho_{y_k, r, i}^{(\tilde t)} + \gamma_{y_k,r}^{(\tilde t)} - \orho_{y_i, r, k}^{(\tilde t)} -\gamma_{y_i,r}^{(\tilde t)} \big) +2/C_1  \right) \\
    &\leq \exp(\kappa + 2/C_1)
\end{align*}
Hence for a given $\kappa$, we can take $\tilde C_\ell = \exp(\kappa + 2/C_1)$. This verifies that claim (2) is satisfied.

To verify the claim (3), we can show 
\begin{align*}
     \langle \bw^{(\tilde t )}_{\tilde y_i, r}, \bxi_i \rangle &= \langle \bw^{(\tilde t-1)}_{\tilde y_i, r}, \bxi_i \rangle - \frac{\eta}{nm} \sum_{i'=1}^n \ell_{i'}'^{(\tilde t-1)} \cdot  \sigma'(\langle \mathbf{w}_{\tilde y_{i},r}^{(\tilde t-1)}, \boldsymbol{\xi}_{i'} \rangle) \cdot \langle \boldsymbol{\xi}_{i} , \bxi_{i'} \rangle \nonumber  \\
    &\quad - \frac{\eta}{nm} \sum_{i'=1}^n \ell_{i'}'^{(\tilde t-1)} \cdot \sigma'(\langle \mathbf{w}_{\tilde y_i,r}^{(\tilde t-1)}, y_{i'} \boldsymbol{\mu} \rangle)\cdot  \langle y_{i'} \boldsymbol{\mu} , \bxi_i \rangle \\
    &= \langle \bw^{(\tilde t-1)}_{\tilde y_i, r}, \bxi_i\rangle \underbrace{- \frac{\eta}{nm} \ell_i'^{(\tilde t-1)} \| \bxi_i \|^2_{2}}_{A_6} - \underbrace{\frac{\eta}{nm}  \sum_{i' \neq i}\ell_{i'}'^{(\tilde t-1)} \sigma'(\langle \bw^{(\tilde t-1)}_{\tilde y_i, r}, \bxi_{i'} \rangle) \cdot \langle \bxi_i, \bxi_{i'} \rangle}_{A_7} \\
    &\quad - \underbrace{\frac{\eta}{nm} \sum_{i'=1}^n \ell_{i'}'^{(\tilde t-1)} \cdot \sigma'(\langle \mathbf{w}_{\tilde y_i,r}^{(\tilde t-1)}, y_{i'} \boldsymbol{\mu} \rangle)\cdot  \langle y_{i'} \boldsymbol{\mu} , \bxi_i \rangle}_{A_8}. 
\end{align*}
We respectively bound the three terms as 
\begin{align*}
    A_6 \geq 0.99 \frac{\sigma_\xi^2d \eta}{nm}  |\ell_i'^{(\tilde t-1)}|,
\end{align*}
where we use \ref{lemma:xi_norm_tight}. 
For $A_7$, we can bound 
\begin{align*}
    |A_7| \leq 2n \tilde C_\ell |\ell_i'^{(\tilde t-1)}| \sigma_\xi^2 \sqrt{d \log(6n^2/\delta)},
\end{align*}
where we use Lemma \ref{lemma:data_innerproducts} and claim (2). Similarly, 
\begin{align*}
    |A_8| \leq n \tilde C_\ell |\ell_i'^{(\tilde t-1)}| \| \bmu \|_2 \sigma_\xi \sqrt{2 \log(6n/\delta)}
\end{align*}
where we use Lemma \ref{lemma:data_innerproducts} and claim (2). By the Condition~\ref{ass:main1} where $d \geq C \max\{ \tilde C_\ell^2 n^2 \log(6n^2/\delta),   \tilde C_\ell n \| \bmu\|_2 \sigma_\xi^{-1} \sqrt{2 \log(6n/\delta)} \}$ for sufficiently large $C$. This ensures $A_6 \geq |A_7| + |A_8|$, which leads to $\langle \bw^{(\tilde t )}_{\tilde y_i, r}, \bxi_i \rangle \geq \langle \bw^{(\tilde t-1 )}_{\tilde y_i, r}, \bxi_i \rangle > 0$ and thus $\gS_i^{(0)} \subseteq \gS_i^{(\tilde t-1)} \subseteq \gS_i^{(\tilde t)}$. Similarly, we can use the same argument to prove $\gS_{j,r}^{(0)} \subseteq \gS_{j,r}^{(\tilde t-1)} \subseteq \gS_{j,r}^{(\tilde t)}$. 
\end{proof}

\begin{proof}[Proof of Proposition \ref{prop_noisefree}]
We prove the results by induction. At $t=0$, it is clear that the claims hold trivially. Suppose there exists $\widetilde T \leq T^*$ such that the the claims hold for all $t \leq \widetilde T-1$. We aim to show they also hold at $t = \widetilde T$. By Lemma \ref{lemma:set_subset}, we know for all $t \leq \widetilde T-1$, we have $\ell_i'^{(t)}/\ell_k'^{(t)} \leq \tilde C_\ell$ for all $i,k \in [n]$ and $\gS_i^{(0)} \subseteq \gS_i^{(t)}$, $\gS_{j,r}^{(0)} \subseteq \gS_{j,r}^{(t)}$.

First, we follow the same proof strategy to show $0 \geq \urho_{j,r,i}^{(\widetilde T)} \geq -\beta - 10 \sqrt{\log(6n^2/\delta)/d}$.

Next we show the upper bound for $\orho_{j,r,i}^{(\widetilde T)}$. Let $t_{r,i}$ be the last time $ t < T^*$ such that $\orho_{j,r,i}^{(t)} \leq 0.5 \alpha$. Then 
\begin{align*}
    \orho_{y_i,r,i}^{(\widetilde T)} &= \orho_{y_i,r,i}^{(t_{r,i})} - \frac{\eta}{nm} \ell_i'^{(t_{r,i})} \sone(\langle \bw_{y_i,r}^{(t_{r,i})} , \bxi_i \rangle \geq0) \| \bxi_i\|_2^2 - \sum_{t \in (t_{r,i}, \widetilde T)} \frac{\eta}{nm} \ell_i'^{(t)} \sone(\langle \bw_{y_i,r}^{(t)} , \bxi_i \rangle \geq 0) \| \bxi_i \|_2^2 \\
    &\leq 0.5 \alpha + 1.01 \frac{\eta \sigma_\xi^2 d}{nm} + 1.01 \frac{\eta \sigma_\xi^2 d}{nm} \sum_{t \in (t_{r,i}, \widetilde T)} \frac{1}{1 + \exp( \frac{1}{m} \sum_{r=1}^m (\orho_{y_i,r,i}^{(t)}  + \gamma_{y_i, r}^{(t)} ) -1/C_1) } \\
    &\leq 0.75\alpha +  2.02 \frac{\eta \sigma_\xi^2 d }{nm} \\
    &\leq \alpha
\end{align*}
where the  first inequality is by Lemma \ref{lemma:xi_norm_tight}, Lemma \ref{lemma:noise_free_bounds} and $|\ell_i'^{(t_{r,i})}|\leq 1$. The
second and third inequality is by choosing $\eta \leq C^{-1} nm \sigma_\xi^{-2} d^{-1}$ for sufficiently large $C$.

Then we proceed to show upper bound for $\gamma_{j,r}^{(\widetilde T)}$. From the update rule, it is clear that $\gamma_{j,r}^{(\widetilde T)} \geq \gamma_{j,r}^{(\widetilde T-1)} \geq 0$. To prove the upper bound on $\gamma_{j,r}^{(\widetilde T)}$, we aim to show there exists $i^* \in [n]$ such that for all $t \leq T^*$ that 
\begin{align*}
    \frac{\gamma_{j,r}^{(t)}}{\orho^{(t)}_{y_{i^*},r,i^*}} \leq C_\gamma n \cdot \SNR^2.
\end{align*}
where we take $C_\gamma = 1.1 \tilde C_\ell$. 
We prove the claim by induction. We first lower bound $\orho_{y_i,r,i}^{(\widetilde T)}$. In particular, we can lower bound for any $i^* \in [n]$, $r \in \gS_i^{(0)}$ that 
\begin{align*}
    \orho_{y_{i^*},r,i^*}^{(\widetilde T)} = \orho_{y_{i^*},r,i^*}^{(\widetilde T-1)} - \frac{\eta}{nm} \ell_{i^*}'^{(\widetilde T-1)} \| \bxi_{i^*} \|_2^2 \geq \orho_{y_{i^*},r,i^*}^{(\widetilde T-1)} +0.49  \frac{\eta\sigma_\xi^2 d}{nm} | \ell_{i^*}'^{(\widetilde T-1)} |,
\end{align*}
where we use Lemma \ref{lemma:xi_norm_tight} and Lemma \ref{lemma:noise_free_bounds}.
Then for $\gamma_{j,r}^{(\widetilde T)}$, we have
\begin{align*}
    \gamma_{j,r}^{(\widetilde T)} &= \gamma_{j,r}^{(\widetilde T-1)} - \frac{\eta}{nm}  \sum_{i=1}^n   {\ell}_i'^{(\widetilde T-1)}  \sigma'(\langle \bw_{j,r}^{(\widetilde T-1)}, y_i \bmu \rangle)      \| \boldsymbol{\mu} \|_2^2  \leq \gamma^{(\widetilde T-1)}_{j,r} + \frac{\eta \| \bmu\|_2^2 \tilde C_\ell |\ell_{i^*}'^{(\widetilde T-1)}|}{m} 
\end{align*}
where we use the second claim of Lemma \ref{lemma:set_subset}.

Then at iteration $t = 1$, we can show 
\begin{align*}
     \frac{\gamma_{j,r}^{(1)}}{\orho^{(1)}_{y_{i^*}, r, i^*}} \leq \frac{n \| \bmu \|_2^2 \tilde C_\ell}{0.49 \sigma_\xi^2 d} \leq 1.1 \tilde C_\ell n \cdot \SNR^2 = C_\gamma n \cdot \SNR^2.
\end{align*}

Now suppose for all $t \leq \widetilde T-1$, we have ${\gamma_{j,r}^{(t)}}/{\orho^{(t)}_{y_{i^*}, r, i^*}} \leq C_\gamma n \cdot \SNR^2$, then we

Then we can bound 
\begin{align*}
    \frac{\gamma_{j,r}^{(\widetilde T)}}{\orho^{(\widetilde T)}_{y_{i^*}, r, i^*}} \leq \max\left\{ \frac{\gamma_{j,r}^{(\widetilde T-1)}}{\orho^{(\widetilde T-1)}_{y_{i^*}, r, i^*}} , \frac{n \| \bmu \|_2^2 \tilde C_\ell}{0.49 \sigma_\xi^2 d} \right\} \leq C_\gamma n \cdot \SNR^2.  
\end{align*}
This shows $\gamma_{j,r}^{(\widetilde T)}\leq C_\gamma n\cdot \SNR^2 \orho^{(\widetilde T-1)}_{y_{i^*}, r, i^*} \leq C_\gamma \alpha$.
\end{proof}

\subsection{First Stage}

In the first stage, we can lower bound the loss derivatives by a constant $C_\ell$ (by Lemma \ref{lemma:ell_bound}) and we show both $\rho_{ y_i, r,i}^{(t)}$, $r \in \gS_i^{(0)}$ and $\gamma_{j,r}^{(t)}$ can grow to a constant order.

\begin{theorem}
Under Condition~\ref{ass:main1}, there exists $T_1 = \Theta(\eta^{-1} nm \sigma_\xi^{-2} d^{-1})$ such that 
\begin{itemize}[leftmargin=0.4in]
    \item $\orho_{y_i,r,i}^{(T_1)} = \Theta(1)$ for all $i \in [n]$ and $r \in \gS_i^{(0)}$. 

    \item $\gamma_{j,r}^{(T_1)} = \Theta(n \cdot \SNR^2)= \Theta(1)$ for all $j = \pm1, r \in [m]$. 

    \item $y_i f(\bW^{(T_1)}, \bx_i) \geq 0$, for all $i \in [n]$.
\end{itemize}
\end{theorem}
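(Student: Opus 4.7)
The plan is to mirror the two-part first-stage argument of Theorem~\ref{lemma:phase1_main_} but exploit the fact that $\tilde y_i = y_i$ everywhere, so the dynamics only contain the ``clean'' terms. Throughout, we work inside the inductive window in which Proposition~\ref{prop_noisefree} already gives us $\orho_{j,r,i}^{(t)}, \gamma_{j,r}^{(t)} \in [0, \alpha]$ and $\urho_{j,r,i}^{(t)} \in [-\alpha,0]$, and in which Lemma~\ref{lemma:set_subset} guarantees $\gS_i^{(0)} \subseteq \gS_i^{(t)}$ and $\gS_{j,r}^{(0)} \subseteq \gS_{j,r}^{(t)}$. The first thing to establish is a uniform constant lower bound $|\ell_i'^{(t)}| \geq C_\ell$ for all $i$ and all $t \leq T_1$. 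Since in this stage the coefficients are all $O(1)$, Lemma~\ref{lemma:yfi_bound} (equivalently the clean half of Lemma~\ref{lemma:noise_free_bounds}) gives $y_i f(\bW^{(t)},\bx_i) \leq \frac{1}{m}\sum_r(\gamma_{y_i,r}^{(t)}+\orho_{y_i,r,i}^{(t)}) + 1/C_1 = O(1)$, and hence $|\ell_i'^{(t)}|$ is bounded below by a constant.

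Next I would run the coupled lower/upper bound argument on the noise and signal coefficients. For any $i\in[n]$ and $r\in\gS_i^{(0)}$, Lemma~\ref{lemma:set_subset} lets us keep the indicator alive and Lemma~\ref{lemma:xi_norm_tight} controls $\|\bxi_i\|_2^2$, so
\begin{equation*}
0.99\,\frac{\eta C_\ell \sigma_\xi^2 d}{nm}\,t \;\leq\; \orho_{y_i,r,i}^{(t)} \;\leq\; 1.01\,\frac{\eta \sigma_\xi^2 d}{nm}\,t .
\end{equation*}
For the signal, without label noise the update reads $\gamma_{j,r}^{(t+1)} = \gamma_{j,r}^{(t)} - \frac{\eta\|\bmu\|_2^2}{nm}\sum_i \ell_i'^{(t)} \sone(\langle \bw_{j,r}^{(t)}, y_i\bmu\rangle \geq 0)$; splitting by the sign of $\langle \bw_{j,r}^{(t)},\bmu\rangle$ and using the balance of the dataset together with $C_\ell \leq |\ell_i'^{(t)}| \leq 1$, every iteration yields
\begin{equation*}
\gamma_{j,r}^{(t)} \;\geq\; \tfrac{1}{2}\,\frac{\eta C_\ell \|\bmu\|_2^2}{m}\,t, \qquad \gamma_{j,r}^{(t)} \;\leq\; \tfrac{1}{2}\,\frac{\eta \|\bmu\|_2^2}{m}\,t,
\end{equation*}
where the sign-flip cancellation is cleaner here than in the labeled-noise case because there are no adversarial $\gS_f$ contributions subtracting from the increment.

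Choosing $T_1 = C_2 \eta^{-1} C_\ell^{-1} nm \sigma_\xi^{-2} d^{-1}$ for an appropriately small constant $C_2$ then plugs directly into the four bounds above and gives $\orho_{y_i,r,i}^{(T_1)} = \Theta(1)$ as well as $\gamma_{j,r}^{(T_1)} = \Theta(n\cdot\SNR^2) = \Theta(1)$ (using $n\cdot\SNR^2 = \Theta(1)$ from Condition~\ref{ass:main1}). The third claim is essentially free: Lemma~\ref{lemma:noise_free_bounds} gives
\begin{equation*}
y_i f(\bW^{(T_1)},\bx_i) \;\geq\; \frac{1}{m}\sum_{r=1}^m\bigl(\gamma_{y_i,r}^{(T_1)} + \orho_{y_i,r,i}^{(T_1)}\bigr) - 1/C_1 \;\geq\; \Omega(1) - 1/C_1 \;\geq\; 0,
\end{equation*}
where positivity of every signal and noise coefficient (Proposition~\ref{prop_noisefree}) is what makes this immediate, in sharp contrast to the labeled-noise proof where one had to separately handle $\gS_t$ and $\gS_f$.

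The main obstacle, and really the only delicate point, is the consistency check that the termination time $T_1$ chosen to make $\orho$ and $\gamma$ reach $\Theta(1)$ is simultaneously small enough for the loss derivative lower bound $|\ell_i'^{(t)}| \geq C_\ell$ to persist throughout $[0,T_1]$. Exactly as in the labeled-noise Stage~I proof this becomes an inequality of the form $C_2 C_\ell^{-1} + C_2 n\cdot\SNR^2 \leq \log(C_\ell^{-1}-1)$, and the condition $n\cdot\SNR^2 = \Theta(1)$ in Condition~\ref{ass:main1} provides a nonempty range of admissible $(C_\ell, C_2)$, e.g.\ the same $C_\ell = 0.4$, $C_2 = 1/31$ calibration used in Theorem~\ref{lemma:phase1_main_} works here and is in fact easier to satisfy because the $\tau_\pm$-related slack is gone.
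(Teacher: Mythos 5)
Your proposal is correct and follows essentially the same route as the paper's proof: iterate the coefficient updates with a constant lower bound $C_\ell$ on the loss derivatives to get linear-in-$t$ sandwiches on $\orho_{y_i,r,i}^{(t)}$ and $\gamma_{j,r}^{(t)}$, pick $T_1 = C_2\eta^{-1}C_\ell^{-1}nm\sigma_\xi^{-2}d^{-1}$, and read off the three claims via Lemma~\ref{lemma:noise_free_bounds}. The one notable departure is your explicit self-consistency verification that $|\ell_i'^{(t)}|\ge C_\ell$ actually persists up to $T_1$: the paper carries out that closing step only in the labeled-noise Stage~I proof of Theorem~\ref{lemma:phase1_main_} and silently omits it in the noiseless proof, where it instead fixes $C_2 = 2.1$ (to drive $\orho$ to $2$ for later use in Stage~II). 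Your choice of the smaller $C_2 = 1/31$ mirrors the labeled-noise calibration, and it is the more defensible one if you insist on the consistency check, since the larger $C_2 = 2.1$ would push the end-of-Stage-I coefficients to $\Theta(C_\ell^{-1})$ and the inequality $\mathrm{stuff}\cdot C_\ell^{-1} \le \log(C_\ell^{-1}-1)$ cannot hold; for the theorem as stated this is harmless because only $T_1 = \Theta(\eta^{-1}nm\sigma_\xi^{-2}d^{-1})$ and $\Theta(1)$ coefficient levels are claimed, but it is worth being aware that the paper's Stage~II argument implicitly assumes the larger constant.
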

\begin{proof}
By the update rule of the coefficients, for $r \in \gS_i^{(0)}$,
\begin{align*}
    \orho_{y_i, r,i}^{(t)} = \orho_{y_i, r, i}^{(t-1)} - \frac{\eta}{nm} \ell_i'^{(t-1)} \| \bxi_i\|_2^2 \geq \orho_{y_i, r, i}^{(t-1)} + 0.99 \frac{\eta C_\ell \sigma_\xi^2 d}{nm} \geq 0.99 \frac{\eta C_\ell \sigma_\xi^2 d}{nm} t,
\end{align*}
where we use the lower bound on loss derivatives and Lemma \ref{lemma:xi_norm_tight}.

Then with 
\begin{align*}
    T_1 = 2.1 \eta^{-1} nm C_\ell^{-1} \sigma_\xi^{-2} d^{-1}
\end{align*}
we can show $\orho_{y_i,r,i}^{(t)} \geq 2$. Further we can obtain the upper bound as for all $i \in [n],r \in [m], j = \pm1$
\begin{align*}
    \orho_{j, r,i}^{(t)} = \orho_{j, r, i}^{(t-1)} - \frac{\eta}{nm} \ell_i'^{(t-1)} \| \bxi_i\|_2^2  \leq \orho_{j, r, i}^{(t-1)} + 1.01 \frac{\eta \sigma_\xi^2 d}{nm} \leq 1.01 \frac{\eta \sigma_\xi^2 d}{nm}  t
\end{align*}
where we use the upper bound on loss derivatives and Lemma \ref{lemma:xi_norm_tight}. Under the definition of $T_1$, for all $i,r,j$, we upper bound $\orho_{j,r,i}^{(t)} \leq 3 C_\ell^{-1}$.

Next for lower and upper bound for $\gamma_{j,r}^{(t)}$, we first recall the update as for any $j = \pm1, r \in [m]$, 
\begin{align*}
    \gamma_{j,r}^{(t)} = \gamma_{j,r}^{(t)} - \frac{\eta}{nm}  \sum_{i=1}^n   {\ell}_i'^{(t-1)}  \sigma'(\langle \bw_{j,r}^{(t-1)}, y_i \bmu \rangle)      \| \boldsymbol{\mu} \|_2^2.
\end{align*}
When $\langle \bw_{j,r}^{(t-1)} , \bmu \rangle \geq 0$, 
\begin{align*}
    \gamma_{j,r}^{(t)} = \gamma_{j,r}^{(t-1)} + \frac{\eta}{nm} \sum_{i \in \gS_1} |\ell_i'^{(t-1)}| \| \bmu \|_2^2 \geq \gamma_{j,r}^{(t-1)} + \frac{\eta C_\ell \| \bmu\|_2^2}{2m} \geq \frac{\eta C_\ell \| \bmu \|_2^2}{2m} t
\end{align*}
where we use the lower bound on $|\ell_i'^{(t-1)}|$. Similarly, when  $\langle \bw_{j,r}^{(t-1)} , \bmu \rangle \leq 0$, we can obtain the same lower bound as $\gamma_{j,r}^{(t)} \geq \frac{\eta C_\ell \| \bmu\|_2^2}{2m}$. 

Then at $t = T_1$, we can bound for all $j = \pm 1, r \in [m]$ as 
\begin{align*}
    \gamma_{j,r}^{(t)} \geq \frac{n \| \bmu \|_2^2}{\sigma_\xi^{2} d} = n \cdot \SNR^2 = \Omega(1)
\end{align*}
The upper bound follows from 
\begin{align*}
    \gamma_{j,r}^{(t)} \leq \gamma_{j,r}^{(t-1)} + \frac{\eta \| \bmu \|_2^2}{2m} \leq \frac{\eta\| \bmu\|_2^2}{2m } t 
\end{align*}
where we apply the upper bound on $|\gamma_{j,r}^{(t-1)}|\leq 1$. This verifies that at $t  = T_1$,
\begin{align*}
    \gamma_{j,r}^{(t)} \leq 1.1 C_\ell^{-1} n \cdot \SNR^2 = O(1),
\end{align*}
which shows at $t = T_1$, both $\orho_{y_i,r,i}^{(T_1)}, \gamma_{j,r}^{(T_1)} = \Theta(1)$. 

For all samples, by Lemma \ref{lemma:noise_free_bounds},
\begin{align*}
    y_i f(\bW^{(T_1)}, \bx_i) \geq \frac{1}{m} \sum_{r=1}^m \big( \orho_{y_i,r,i}^{(T_1)} + \gamma_{j,r}^{(T_1)} \big) -1/C_1 \geq 0,
\end{align*}
where we let $C_1$ to be sufficiently large.
\end{proof}

\subsection{Second Stage}
\label{app:second_stage_withou}

In the second stage, we show the loss converges and under constant signal-to-noise ratio, we can show the test error can be arbitrarily small at convergence, while for all $T_1 \leq t \leq T^*$, $y_i f(\bW^{(t)}, \bx_i) \geq 0$, for all $i \in [n]$. 

\begin{theorem}
\label{them:noiseless_stage2}
Under Condition~\ref{ass:main1}, there exists a time $t^* \in [T_1, T^*]$ where $T^* = \widetilde \Theta(\eta^{-1} \epsilon^{-1} nm \sigma_\xi^{-2}d^{-1})$ such that 
\begin{itemize}[leftmargin=0.4in]
    \item Training loss converges, i.e., $L_S(\bW^{(t^*)})\leq \epsilon$. 

    \item For all samples, i.e., $i \in [n]$, it satisfies $y_i f( \bW^{(t)}, \bx_i) \geq 0$. 

    \item Test error is small, i.e., $L_D^{0-1}(\bW^{(t^*)}) \leq \exp\left(  \frac{d}{n} - \frac{n \| \bmu\|_2^4}{C_{D} \sigma_\xi^4 d} \right)$.
\end{itemize}
\end{theorem}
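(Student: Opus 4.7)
My strategy parallels the three-part Stage II argument of Theorem~\ref{lemma:phase2_main_}, exploiting the substantial simplification that $\tilde y_i = y_i$ for every $i$: signal coefficients and noise coefficients for each sample grow coherently, there is no tension between $\gS_t$ and $\gS_f$, and the loss-derivative ratio remains uniformly bounded (Lemma~\ref{lemma:set_subset}). I will invoke Proposition~\ref{prop_noisefree} together with Lemma~\ref{lemma:noise_free_bounds} to relate inner products to coefficients and to relate $y_i f(\bW^{(t)},\bx_i)$ to $\frac{1}{m}\sum_r(\gamma_{y_i,r}^{(t)}+\orho_{y_i,r,i}^{(t)})$.

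\textbf{Loss convergence and training classification.} For loss convergence I would reuse the $\bW^*$-trick verbatim: set $\bw_{j,r}^* = \bw_{j,r}^{(0)} + 5\log(2/\epsilon)\sum_{i=1}^n\sone(y_i=j)\,\bxi_i/\|\bxi_i\|_2^2$, bound $\|\bW^{(T_1)}-\bW^*\|_F = \widetilde O(\sqrt{mn/d}\,\sigma_\xi^{-1})$ as in Lemma~\ref{lemma:wstar_dist_bound}, and prove $y_i\langle\nabla f(\bW^{(t)},\bx_i),\bW^*\rangle\ge\log(2/\epsilon)$ as in Lemma~\ref{lemma:ygrad_lower}, using $\gS_i^{(0)}\subseteq\gS_i^{(t)}$ from Lemma~\ref{lemma:set_subset} and $|\gS_i^{(0)}|\ge 0.4 m$ from Lemma~\ref{lemma:init_set_bound_tight}. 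Combining convexity of the logistic loss with the smoothness bound $\|\nabla L_S(\bW^{(t)})\|_F^2 = O(\max\{\|\bmu\|_2^2,\sigma_\xi^2 d\})\,L_S(\bW^{(t)})$ yields the one-step descent
\begin{align*}
\|\bW^{(t)}-\bW^*\|_F^2 - \|\bW^{(t+1)}-\bW^*\|_F^2 \ge \eta L_S(\bW^{(t)}) - \eta\epsilon,
\end{align*}
which, telescoped over $[T_1,T^*]$ with $T^*-T_1 = \widetilde\Theta(\eta^{-1}\epsilon^{-1} mn\sigma_\xi^{-2}d^{-1})$, produces some $t^*$ with $L_S(\bW^{(t^*)})\le\epsilon$. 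The training-classification claim $y_i f(\bW^{(t)},\bx_i)\ge 0$ for all $T_1\le t\le T^*$ follows immediately from Lemma~\ref{lemma:noise_free_bounds}: $\gamma_{y_i,r}^{(t)}\ge 0$ always, and $\orho_{y_i,r,i}^{(t)}$ is monotone non-decreasing with $\orho_{y_i,r,i}^{(T_1)}=\Omega(1)$ on the $\Omega(m)$-sized set $\gS_i^{(0)}$, so the lower bound $\frac{1}{m}\sum_r(\gamma_{y_i,r}^{(t)}+\orho_{y_i,r,i}^{(t)}) - 1/C_1 \ge 0$ is preserved for all later iterates.

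\textbf{Test error --- the main obstacle.} For $\bx = [y\bmu,\bxi+\bzeta]$ with $\bxi\sim\Unif(\{\bxi_i\}_{i=1}^n)$ and $\bzeta\sim\gN(\mathbf{0},\sigma_\xi^2\bI_d)$, I would decompose
\begin{align*}
y f(\bW^{(t^*)},\bx) = \underbrace{\tfrac{1}{m}\sum_r[\sigma(\langle\bw_{y,r}^{(t^*)},y\bmu\rangle)-\sigma(\langle\bw_{-y,r}^{(t^*)},y\bmu\rangle)]}_{S} + \underbrace{\tfrac{1}{m}\sum_r[\sigma(\langle\bw_{y,r}^{(t^*)},\bxi+\bzeta\rangle)-\sigma(\langle\bw_{-y,r}^{(t^*)},\bxi+\bzeta\rangle)]}_{N(\bzeta)}.
\end{align*}
Lemma~\ref{lemma:noise_free_bounds} together with $\gamma_{\pm y,r}^{(t^*)}\ge 0$ and the Stage~I lower bound $\gamma_{y,r}^{(t^*)}=\Omega(n\,\SNR^2)$, sharpened via the invariant $\gamma_{j,r}^{(t)}/\sum_i\orho_{j,r,i}^{(t)}=\Theta(\SNR^2)$, yields $S\ge c_1\, n\|\bmu\|_2^2/(\sigma_\xi^2 d)$ for some $c_1>0$. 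Conditioning on $\bxi=\bxi_i$, the map $\bzeta\mapsto N(\bzeta)$ is Lipschitz with constant $L\le\max_{j,r}\|\bw_{j,r}^{(t^*)}\|_2$, and tracking the decomposition with Proposition~\ref{prop_noisefree} gives $\max_{j,r}\|\bw_{j,r}^{(t^*)}\|_2^2 = \widetilde O(\sigma_0^2 d + n/(\sigma_\xi^2 d)) = \widetilde O(n/(\sigma_\xi^2 d))$ under Condition~\ref{ass:main1}. Gaussian concentration in $\bzeta$ then yields
\begin{align*}
\sP\bigl(yf(\bW^{(t^*)},\bx)<0 \,\big|\,\bxi=\bxi_i,\,y\bigr) \le \exp\!\Bigl(-\frac{S^2}{C\,\sigma_\xi^2\max_{j,r}\|\bw_{j,r}^{(t^*)}\|_2^2}\Bigr) \le \exp\!\Bigl(-c_2\,\frac{n\|\bmu\|_2^4}{\sigma_\xi^4 d}\Bigr),
\end{align*}
and averaging over $i\in[n]$ and $y\in\{\pm 1\}$, together with a union-bound penalty of $\exp(d/n)$ for atypical training noise directions, delivers the stated bound. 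The hard part is exactly this coupling: a naive use of the coarse $\widetilde O(\alpha)$ coefficient scale from Proposition~\ref{prop_noisefree} gives only a vacuous Gaussian exponent, so the proof must deploy the finer $\Theta(\SNR^2)$-ratio invariant (the no-label-noise analogue of Key Technique~1 in \citet{kou2023benign}) alongside $\gS_i^{(0)}\subseteq\gS_i^{(t)}$ from Lemma~\ref{lemma:set_subset}, pinning the signal $S$ and the weight norms $\|\bw_{j,r}^{(t^*)}\|_2$ to comparable scales so that the Gaussian exponent remains $-\Omega(n\|\bmu\|_2^4/(\sigma_\xi^4 d))$ under $n\,\SNR^2=\Theta(1)$.
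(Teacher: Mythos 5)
Your proposal follows the paper's proof essentially step by step: the $\bW^*$-trick and descent lemma for convergence, monotonicity of coefficients plus Lemma~\ref{lemma:noise_free_bounds} for training classification, and Gaussian Lipschitz concentration sharpened by the time-invariant ratio $\gamma_{j,r}^{(t)}/\sum_i\orho_{j,r,i}^{(t)}=\Theta(\SNR^2)$ for the test error, which is exactly the mechanism the paper uses to make the exponent $\Theta(d/n)$ rather than degrading by $\log$-factors. The only cosmetic differences are that the paper drops the two sign-favorable ReLU terms before concentrating (considering only $\sum_r\sigma(\langle\bw_{-y,r},\bxi+\bzeta\rangle)$ against $\sum_r\sigma(\langle\bw_{y,r},y\bmu\rangle)$, whereas you keep both test-noise terms in $N(\bzeta)$), and the paper's $+d/n$ arises from the mean $\sE g_i(\bzeta)$ (the $\sum_r\orho_{-y,r,i}^{(t^*)}$ contamination from training noise memorized for the opposite class, after applying $(s-t)^2\ge s^2/2-t^2$), not a union-bound penalty as you describe — but that is a label mismatch, not a substantive gap.
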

\begin{proof}[Proof of Theorem \ref{them:noiseless_stage2}]
The proof of convergence is exactly the same as for the case with label noise, and thus we omit it here. The second claim is also easy to verify given that both $\gamma_{j,r}^{(t)}, \orho_{y_i,r,i}^{(t)}$ are monotonically increasing. By Lemma \ref{lemma:noise_free_bounds}, we can obtain the desired result. 

Now we prove the third claim regarding the test error. To this end, we first show for all $T_1\leq t \leq T^*$ $\gamma_{j,r}^{(t)}/\sum_{i=1}^n \orho_{j,r,i}^{(t)} = \Theta(\SNR^2)$ for all $j=\pm1, r \in [m]$. We prove such a claim by induction. It is clear at $t= T_1$, we have $\sum_{i=1}^n \orho_{j,r,i}^{(T_1)} = \Theta(n)$ and $\gamma_{j,r}^{(T_1)} = \Theta(n \cdot \SNR^2)$ for all $j = \pm1, r\in [m]$. Thus, we can verify the $\gamma_{j,r}^{(T_1)}/\sum_{i=1}^n \orho_{j,r,i}^{(T_1)} = \Theta(\SNR^2)$. Now suppose for a given $\widetilde T \in [T_1, T^*]$ such that $\gamma_{j,r}^{(t)}/\sum_{i=1}^n \orho_{j,r,i}^{(t)} = \Theta(\SNR^2)$ holds for all $T_1\leq t \leq \widetilde T-1$. Then according tor the update,
\begin{align*}
    \sum_{i=1}^n \orho_{j,r,i}^{(\widetilde T)} &= \sum_{i: y_i = j}^n \orho_{j,r,i}^{(\widetilde T)} = \sum_{i: y_i = j} \orho_{j,r,i}^{(\widetilde T-1)} - \frac{\eta}{nm} \sum_{i \in \gS_{j,r}^{(\widetilde T-1)}} \ell_i'^{(\widetilde T-1)} \| \bxi_i \|_2^2 \\
    &\geq \sum_{i = 1}^n \orho_{j,r,i}^{(\widetilde T-1)} + 0.12 \frac{\eta \sigma_\xi^2  d}{m} \min_{i \in [n]} | \ell_i'^{(\widetilde T-1)} | 
\end{align*}
where the second inequality is by $\gS_{j,r}^{(0)} \subseteq \gS_{j,r}^{(\widetilde T-1)}$ and Lemma \ref{lem:init_set_bound}, Lemma \ref{lemma:xi_norm_tight}. Similarly, we can upper bound 
\begin{align*}
     \sum_{i=1}^n \orho_{j,r,i}^{(\widetilde T)} &\leq \sum_{i =1}^n \orho_{j,r,i}^{(\widetilde T-1)} + 1.01 \frac{\eta \sigma_\xi^2 d}{m} \max_{i \in [n]} |\ell_i'^{(\widetilde T-1)}| 
\end{align*}
where we Lemma \ref{lemma:xi_norm_tight}. 

On the other hand, we can lower  and upper bound
\begin{align*}
    \gamma_{j,r}^{(\widetilde T)} \geq  \gamma_{j,r}^{(\widetilde T-1)} + \frac{\eta}{2m} \min_{i \in \gS_{n}} |\ell_{i}'^{(\widetilde T-1)}| \| \bmu \|_2^2 \\
    \gamma_{j,r}^{(\widetilde T)} \leq \gamma_{j,r}^{(\widetilde T-1)} + \frac{\eta}{2m} \max_{i\in [n]} |\ell_{i}'^{(\widetilde T-1)}| \| \bmu  \|_2^2
\end{align*}
This suggests 
\begin{align*}
    \frac{\gamma_{j,r}^{(\widetilde T)}}{\sum_{i = 1}^n \orho_{j,r,i}^{(\widetilde T)}} \geq \min\left\{ \frac{\gamma_{j,r}^{(\widetilde T-1)}}{\sum_{i = 1}^n \orho_{j,r,i}^{(\widetilde T-1)}},  \frac{\min_{i \in [n] } |\ell_{i}'^{(\widetilde T-1)}| \| \bmu \|_2^2}{2.02 \max_{i \in [n]} |\ell_{i}'^{(\widetilde T-1)}| \sigma_\xi^2 d } \right\} &\geq \min\left\{ \frac{\gamma_{j,r}^{(\widetilde T-1)}}{\sum_{i = 1}^n \orho_{j,r,i}^{(\widetilde T-1)}},  \frac{\SNR^2}{2.02 \tilde C_\ell  } \right\} \\
    &= \Omega( \SNR^2) 
\end{align*}
where we use $\max_{i \in [n]} |\ell_{i}'^{(\widetilde T-1)}| \leq \tilde C_\ell \min_{i \in [n]} |\ell_{i}'^{(\widetilde T-1)}|$ by second claim of Lemma \ref{lemma:set_subset}. Similarly, 
\begin{align*}
    \frac{\gamma_{j,r}^{(\widetilde T)}}{\sum_{i = 1}^n \orho_{j,r,i}^{(\widetilde T)}} \leq \max\left\{ \frac{\gamma_{j,r}^{(\widetilde T-1)}}{\sum_{i = 1}^n \orho_{j,r,i}^{(\widetilde T-1)}}, \frac{\tilde C_\ell \SNR^2}{0.24 } \right\} = O(\SNR^2). 
\end{align*}
This verifies for all $T_1\leq t \leq T^*$, 
\begin{align}
    \gamma_{j,r}^{(t)}/\sum_{i=1}^n \orho_{j,r,i}^{(t)} = \Theta(\SNR^2).  \label{gamma_rho_snr}
\end{align}

Finally, we prove the test error can be upper bounded. We first write for a test sample $(\bx, y) \sim \gD_{\rm test}$, 
\begin{align*}
    y f(\bW^{(t^*)}, \bx) &= \frac{1}{m} \sum_{r=1}^m \big( \sigma(\langle \bw_{y,r}^{(t^*)} , y \bmu \rangle) + \sigma( \langle \bw_{y,r}^{(t^*)}, \bxi + \bzeta \rangle) \big)  \\
    &\quad - \frac{1}{m} \sum_{r=1}^m \big( \sigma(\bw_{-y, r}^{(t^*)} , y \bmu) + \sigma(\langle \bw_{-y,r}^{(t^*)} , \bxi + \bzeta \rangle) \big).
\end{align*}
For $\langle \bw_{y,r}^{(t^*)} , y \bmu \rangle$, we can bound 
\begin{align*}
    \langle \bw_{y,r}^{(t^*)}, y \bmu \rangle &= \gamma_{y,r}^{(t^*)} + \langle \bw_{y,r}^{(0)}, y \bmu \rangle + \sum_{i=1}^n \frac{ \langle \bxi_i, y \bmu \rangle}{\| \bxi_i \|_2^2} \orho_{y,r,i}^{(t^*)} + \sum_{i=1}^n \frac{\langle \bxi_i,y \bmu \rangle}{\| \bxi_i \|_2^2} \urho_{y,r,i}^{(t^*)} \\
    &\geq \gamma_{y,r}^{(t^*)} - \sqrt{2 \log(12m/\delta)} \sigma_0 \| \bmu \|_2 - (\| \bmu \|_2 \sqrt{2 \log(6n/\delta)} \sigma_\xi^{-1}d^{-1})  \Theta(\SNR^{-2}) \gamma_{y,r}^{(t^*)} \\
    &\geq 0.99 \gamma_{y,r}^{(t^*)}
\end{align*}
where we use Lemma \ref{lemma:data_innerproducts} and Lemma \ref{lem:prelbound} and \eqref{gamma_rho_snr} in the second inequality. The last inequality follows from Condition~\ref{ass:main1}. With an similar argument, we can show 
\begin{align*}
    \langle  \bw_{-y,r}^{(t^*)}, y \bmu\rangle \leq - 0.99\gamma_{-y,r}^{(t^*)}.
\end{align*}
Further, we let $g(\bzeta) = \sum_{r=1}^m \sigma(\langle \bw_{-y,r}^{(t^*)}, \bzeta + \bxi \rangle)$ and by \citet[Theorem 5.2.2]{vershynin2018high}, we have for any $a \geq 0$, 
\begin{align*}
    \sP \big(g( \bzeta) - \sE g(\bzeta)  > a \big) \leq \exp(- c a^2 \sigma_\xi^{-2} \| g \|_{\rm Lip}^{-2})
\end{align*}
where expectation is taken with respect to $\bzeta \sim \gN(0, \sigma_\xi^2 \bI)$ and $c > 0$ is a constant. To compute the Lipschitz constant, we compute 
\begin{align*}
    |g(\bzeta) - g(\bzeta')| &= \left| \sum_{r=1}^m  \sigma(\langle \bw_{-y,r}^{(t^*)}, \bzeta + \bxi \rangle) - \sum_{r=1}^m \sigma(\langle \bw_{-y,r}^{(t^*)}, \bzeta' + \bxi \rangle)\right| \\
    &\leq \sum_{r=1}^m \left| \sigma(\langle \bw_{-y,r}^{(t^*)}, \bzeta + \bxi \rangle) -  \sigma(\langle \bw_{-y,r}^{(t^*)}, \bzeta' + \bxi \rangle) \right|  \\
    &\leq \sum_{r=1}^m | \langle \bw_{-y,r}^{(t^*)} , \bzeta - \bzeta' \rangle| \leq \sum_{r=1}^m \| \bw_{-y,r}^{(t^*)} \|_2 \| \bzeta - \bzeta'\|_2
\end{align*}
where the first inequality is by triangle inequality and the second is by the property of ReLU function. This suggests $\|g \|_{\rm Lip} \leq \sum_{r=1}^m \| \bw_{-y,r}^{(t^*)} \|_2$.

In addition, because conditioned on $\bxi$,  $\langle \bw_{-y,r}^{(t^*)}, \bzeta + \bxi \rangle \sim \gN \big( \langle \bw_{-y,r}^{(t^*)}, \bxi \rangle, \| \bw_{-y, r}^{(t^*)} \|_2^2 \sigma_\xi^2 \big)$. 
\begin{align*}
    \sE g(\bzeta) &= \sum_{r=1}^m \sE \sigma(\langle \bw_{-y,r}^{(t^*)}, \bzeta + \bxi \rangle  ) \\
    &= \sum_{r=1}^m \left( \langle \bw_{-y,r}^{(t^*)}, \bxi \rangle \Big(1 - \Phi ( - \frac{\langle \bw_{-y,r}^{(t^*)}, \bxi \rangle}{\| \bw_{-y, r}^{(t^*)} \|_2 \sigma_\xi} ) \Big) + \frac{\| \bw_{-y, r}^{(t^*)} \|_2 \sigma_\xi}{\sqrt{2\pi}}  \exp( - \frac{\langle \bw_{-y,r}^{(t^*)}, \bxi \rangle^2}{2 \| \bw_{-y, r}^{(t^*)} \|_2^2 \sigma_\xi^2 })  \right) \\
    &\leq \sum_{r=1}^m \left(  \orho_{-y,r,i^*}^{(t^*)}  +   \frac{\| \bw_{-y, r}^{(t^*)} \|_2 \sigma_\xi}{\sqrt{2\pi}}  \right)
\end{align*}
where the expectation is taken with respect to $\bzeta$ and the last equality is due to \cite{beauchamp2018numerical} on expectation of truncated Gaussian. The  first inequality is by taking $\bxi = \bxi_{i^*}$ where $ i^*= \arg\max_{i \in [n], r} \orho_{-y,r,i}^{(t^*)}$ and use Condition~\ref{ass:main1} to remove the leading constant.    

Further, we require to bound $\|\bw_{-y,r}^{(t^*)} \|_2$ by first bounding 
\begin{align*}
    \left\| \sum_{i=1}^n \rho_{j,r,i}^{(t^*)} \| \bxi_i \|_2^{-2} \cdot \bxi_i  \right\|_2^2 &= \sum_{i=1}^n \rho_{j,r,i}^{(t^*)^2} \|\bxi_i \|_2^{-2} + 2 \sum_{1 \leq i < i' \leq n } \rho_{j,r,i}^{(t^*)} \rho_{j,r,i'}^{(t^*)} \frac{\langle \bxi_i, \bxi_{i'} \rangle}{\| \bxi_{i} \|_2^2 \| \bxi_{i'} \|_2^2} \\
    &\leq 1.01 \sigma_\xi^{-2} d^{-1} \sum_{i=1}^n \rho_{j,r,i}^{(t^*)^2} + 4.08 \frac{ \sqrt{ \log(6n^2/\delta)}}{\sigma_\xi^2 d^{3/2}}  \sum_{1 \leq i < i' \leq n } |\rho_{j,r,i}^{(t^*)} \rho_{j,r,i'}^{(t^*)} | \\
    &=  1.01 \sigma_\xi^{-2} d^{-1} \sum_{i=1}^n \rho_{j,r,i}^{(t^*)^2} + 4.08 \frac{ \sqrt{ \log(6n^2/\delta)}}{\sigma_\xi^2 d^{3/2}}   \big( (\sum_{i=1}^n \rho_{j,r,i}^{(t^*)})^2 -   \sum_{i=1}^n \rho_{j,r,i}^{(t^*)^2} \big) \\
    &= \Theta( \sigma_\xi^{-2} d^{-1}  )\sum_{i=1}^n \rho_{j,r,i}^{(t^*)^2}  + \widetilde \Theta(\sigma_\xi^{-2} d^{-3/2})   (\sum_{i=1}^n \rho_{j,r,i}^{(t^*)})^2 \\
    &\leq \Theta( \sigma_\xi^{-2} d^{-1} n^{-1} )   (\sum_{i=1}^n \orho_{j,r,i}^{(t^*)})^2
\end{align*}
where the first inequality is by Lemma \ref{lemma:xi_norm_tight} and Lemma \ref{lemma:data_innerproducts} and the last inequality is by the coefficient orders at $t^*$.  Thus, we can bound
\begin{align*}
    \| \bw_{j,r}^{(t^*)} \|_2 &\leq \| \bw_{j,r}^{(0)} \|_2 + \gamma_{j,r}^{(t^*)} \| \bmu \|_2^{-1} + \Theta( \sigma_\xi^{-1} d^{-1/2} n^{-1/2} )   \sum_{i=1}^n \orho_{j,r,i}^{(t^*)}  \\
     &= \Theta(\sigma_0 \sqrt{d}) + \Theta(\SNR^2 \| \bmu \|_2^{-1} +  \sigma_\xi^{-1} d^{-1/2} n^{-1/2} )   \sum_{i=1}^n \orho_{j,r,i}^{(t^*)} \\
     &\leq \Theta(  \sigma_\xi^{-1} d^{-1/2} n^{-1/2} )   \sum_{i=1}^n \orho_{j,r,i}^{(t^*)},
\end{align*}
where the first equality uses Lemma \ref{lem:prelbound} and \eqref{gamma_rho_snr}. The second equality follows from $\sum_{i=1}^n \orho_{j,r,i}^{(t^*)} = \Omega(n)$. 
The last inequality is by Condition~\ref{ass:main1} where  $\SNR^2 \| \bmu\|_2^{-1}/ (\sigma_\xi^{-1} d^{-1/2} n^{-1/2}) = \sqrt{n} \cdot \SNR = \Theta(1)$ and $\sigma_0 \sqrt{d}/ (\sigma_\xi^{-1} d^{-1/2} n^{-1/2} \sum_{i=1}^n \orho_{j,r,i}^{(t^*)}) = O( \sigma_0 \sigma_\xi d n^{-1/2}) = O(1)$ by condition on $\sigma_0$ in Condition~\ref{ass:main1}.

This gives
\begin{align*}
    \frac{\sum_{r=1}^m \sigma(\langle \bw_{y,r}^{(t^*)} , y\bmu\rangle)}{\sE g(\bzeta)} \geq \frac{\Theta(\sum_{r=1}^m \gamma_{y,r}^{(t^*)}) }{\Theta(d^{-1/2} n^{-1/2}) \sum_{r,i} \orho_{-y,r,i}^{(t^*)} + \sum_{r=1}^m \orho_{-y,r^*,i^*}^{(t^*)}} = \Theta(n \cdot \SNR^2)
\end{align*}
Suppose $n \cdot \SNR^2 \geq C_+$ for $C_+ > 0$ sufficiently large. Then we have $\sum_{r=1}^m \sigma(\langle \bw_{y,r}^{(t^*)} , y\bmu\rangle) - \sE g(\bzeta) > 0$.

We bound the test error as 
\begin{align*}
    &\sP( y f(\bW^{(t^*)}, \bx) < 0) \\
    &\leq \sP\left( \sum_{r=1}^m \sigma(\langle \bw_{-y, r}^{(t^*)} , \bxi + \bzeta \rangle) \geq \sum_{r=1}^m \sigma(\langle \bw_{y,r}^{(t^*)}, y \bmu \rangle) \right) \\
    &= \frac{1}{n} \sum_{i=1}^n \sP \left( \sum_{r=1}^m \sigma(\langle \bw_{-y, r}^{(t^*)} , \bxi_i + \bzeta \rangle) \geq \sum_{r=1}^m \sigma(\langle \bw_{y,r}^{(t^*)}, y \bmu \rangle) \right) \\
    &= \frac{1}{n} \sum_{i=1}^n \sP \left( g_i(\bzeta) - \sE g_i(\bzeta) \geq \sum_{r=1}^m \sigma(\langle \bw_{y,r}^{(t^*)}, y \bmu \rangle) - \sE g_i(\bzeta) \right) \\
    &\leq \frac{1}{n}\sum_{i=1}^n \exp\left( - \frac{c \big(  \sum_{r=1}^m \sigma(\langle \bw_{y,r}^{(t^*)}, y \bmu \rangle) - \sum_{r=1}^m \orho_{-y,r,i}^{(t^*)} - \sigma_\xi/(2\pi) \sum_{r=1}^m \| \bw^{(t^*)}_{-y,r}\|_2 \big)^2}{\sigma_\xi^2 ( \sum_{r=1}^m \| \bw_{y,r}^{(t^*)} \|_2 )^2} \right) \\
    &\leq\frac{1}{n}\sum_{i=1}^n \exp\left( \frac{c}{2\pi} - 0.5c \Big(  \frac{  \sum_{r=1}^m \sigma(\langle \bw_{y,r}^{(t^*)}, y \bmu \rangle) - \sum_{r=1}^m \orho_{-y,r,i}^{(t^*)}}{\sigma_\xi  \sum_{r=1}^m \|  \bw_{y,r}^{(t^*)} \|_2} \Big)^2 \right) \\
    &= \frac{1}{n}\sum_{i=1}^n \exp\left( \frac{c}{2\pi}   - 0.5 c \Big(  \frac{\sum_{r=1}^m \Theta(\gamma_{y,r}^{(t^*)}) - \sum_{r=1}^m \orho^{(t^*)}_{-y,r,i}}{\Theta(d^{-1/2} n^{-1/2}) \sum_{r=1}^m \sum_{i=1}^n \orho^{(t^*)}_{j,r,i}} \Big)^2 \right) \\
    &\leq \frac{1}{n}\sum_{i=1}^n \exp\left( \frac{c}{2\pi} + \Big( \frac{ \sum_{r=1}^m \Theta( n^{-1} \SNR^{-2} \gamma_{y,r}^{(t^*)})}{\Theta(d^{-1/2} n^{-1/2}) \sum_{r=1}^m \sum_{i=1}^n \orho^{(t^*)}_{j,r,i}}  \Big)^2 - 0.25 c \Big(  \frac{\sum_{r=1}^m \Theta(\gamma_{y,r}^{(t^*)}) }{\Theta(d^{-1/2} n^{-1/2}) \sum_{r=1}^m \sum_{i=1}^n \orho^{(t^*)}_{j,r,i}} \Big)^2 \right) \\
    &= \exp\left(  \frac{d}{n} - \frac{n \| \bmu\|_2^4}{C_D \sigma_\xi^4 d} \right)
\end{align*}
where we denote $g_i(\bzeta) = \sum_{r=1}^m \sigma(\langle \bw_{-y,r}^{(t^*)}, \bzeta + \bxi_i \rangle)$. The third and fourth inequalities are by $(s-t)^2 \geq s^2/2 - t^2$.  
\end{proof}

\section{Early Stopping and Sample Selection}

\begin{proof}[Proof of Proposition \ref{prop:early_stopping}]
The proof follows the same idea as for Theorem \ref{them:noiseless_stage2}, with the difference that both $\sum_{r\in [m]}\gamma_{j,r}^{(T_1)} = \Theta(m)$ and $\sum_{r\in [m]} \orho^{(T_1)}_{\tilde y_i, r,i} = \Theta(m)$ and $\sum_{r\in [m]}\gamma_{j,r}^{(T_1)} > \sum_{r\in [m]} \orho^{(T_1)}_{\tilde y_i, r,i}$ for all $j = \pm 1$ and $i \in [n]$ (from the results of Theorem \ref{lemma:phase1_main_}). Then we can bound the test error directly as 
\begin{align*}
    \sP(y f(\bW^{(T_1)}, \bx) < 0) &\leq\frac{1}{n} \sum_{i=1}^n \exp\left( \frac{c}{2\pi} - 0.5c \Big(  \frac{  \sum_{r=1}^m \sigma(\langle \bw_{y,r}^{(T_1)}, y \bmu \rangle) - \sum_{r=1}^m \orho_{-y,r,i}^{(T_1)}}{\sigma_\xi  \sum_{r=1}^m \|  \bw_{y,r}^{(T_1)} \|_2} \Big)^2 \right) \\
    &\leq \frac{1}{n}\sum_{i=1}^n \exp\left( \frac{c}{2\pi} - 0.5c \Big(  \frac{ 1 }{\Theta(d^{-1/2 } n^{1/2}) } \Big)^2 \right) \\
    &= \exp\left( \frac{c}{2\pi} - \Theta \Big(\frac{d}{n} \Big) \right) \\
    &\leq \exp( -\frac{d}{nC_e} )
\end{align*}
where the second inequality follows from $\sum_{r\in [m]}\gamma_{j,r}^{(T_1)} > \sum_{r\in [m]} \orho^{(T_1)}_{\tilde y_i, r,i} = \Theta(m)$ and $\| \bw_{y,r}^{(T_1)}\|_2 \leq \Theta( \sigma_\xi^{-1} d^{-1/2} n^{1/2} )$ where $\sum_{i=1}^n \orho_{j,r,i}^{(T_1)} \leq \Theta(n)$. The last inequality is by choosing a sufficiently large constant $C_e > 0$.

The claim on sample selection follows easily from Theorem \ref{lemma:phase1_main}.
\end{proof}

{\color{black}

\section{Additional Experiments}
\label{suppl:additional_exp}


\begin{figure}[ht!]
    \centering
    \subfloat[$\mu=20, \tau = 0.1$.]{%
        \includegraphics[width=0.3\textwidth]{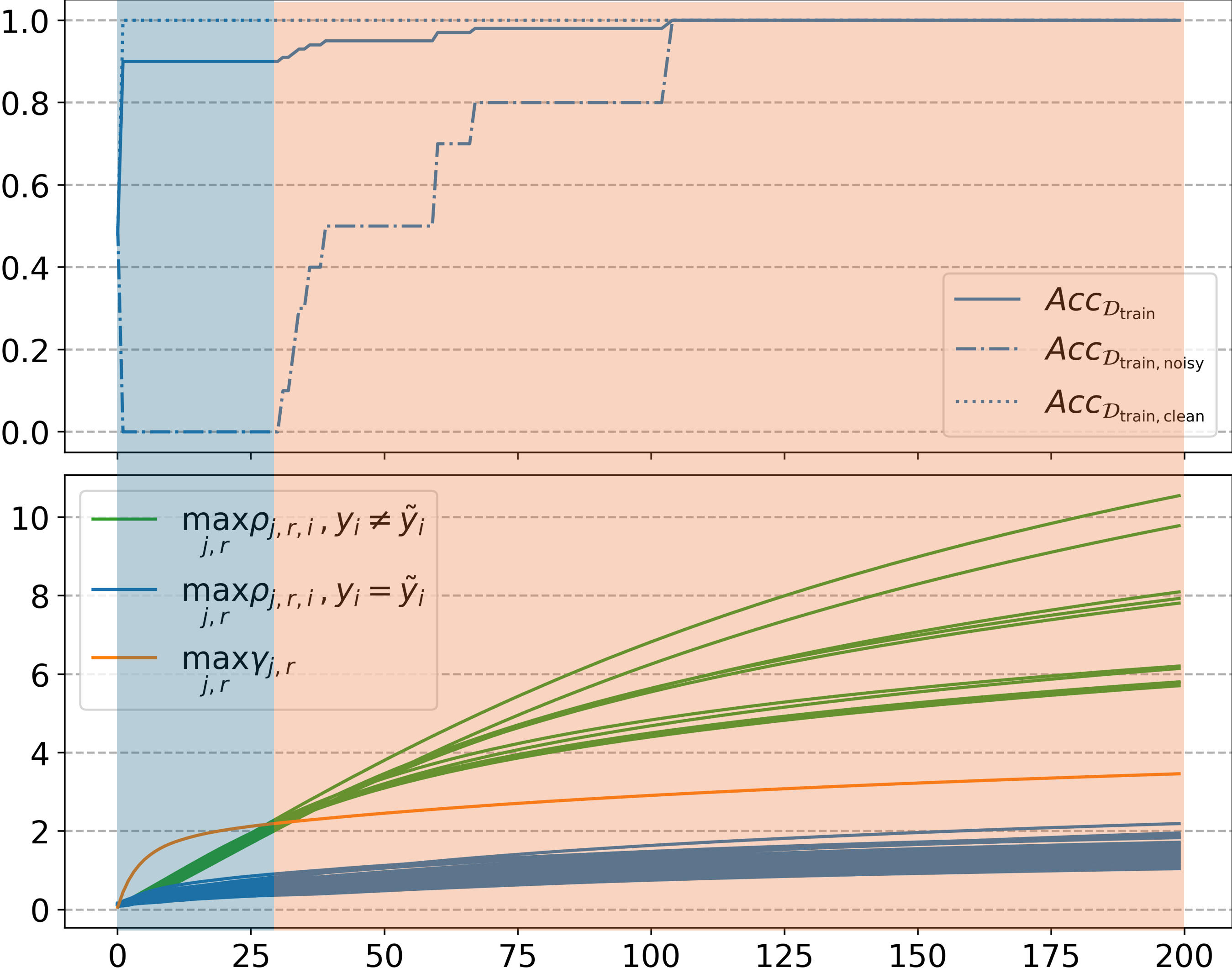}%
        \label{fig:sub1}
    }
    \hfill
    \subfloat[$\mu=20, \tau = 0.15$.]{%
        \includegraphics[width=0.3\textwidth]{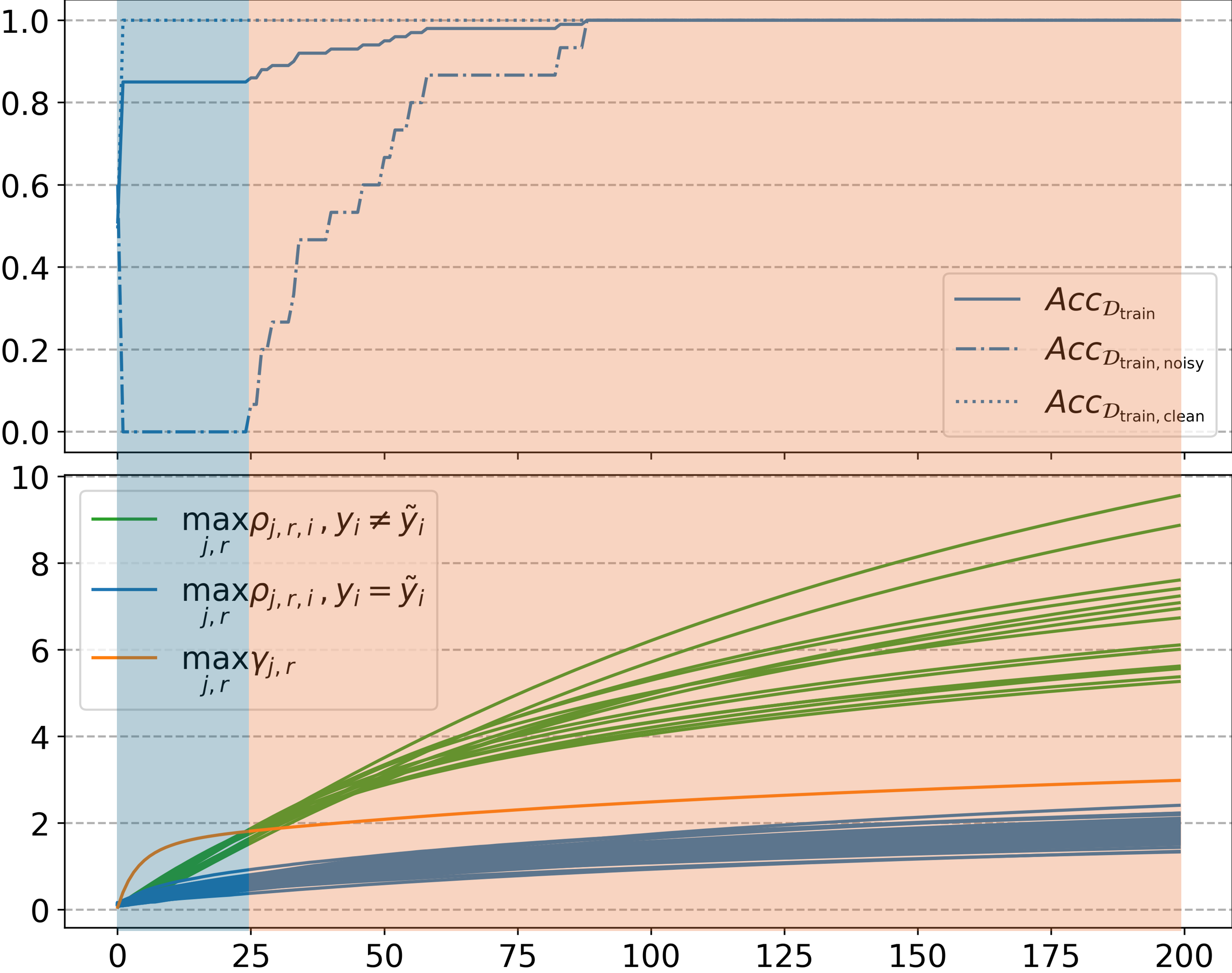}%
        \label{fig:sub2}
    }
    \hfill
    \subfloat[$\mu=20, \tau = 0.2$.]{%
        \includegraphics[width=0.3\textwidth]{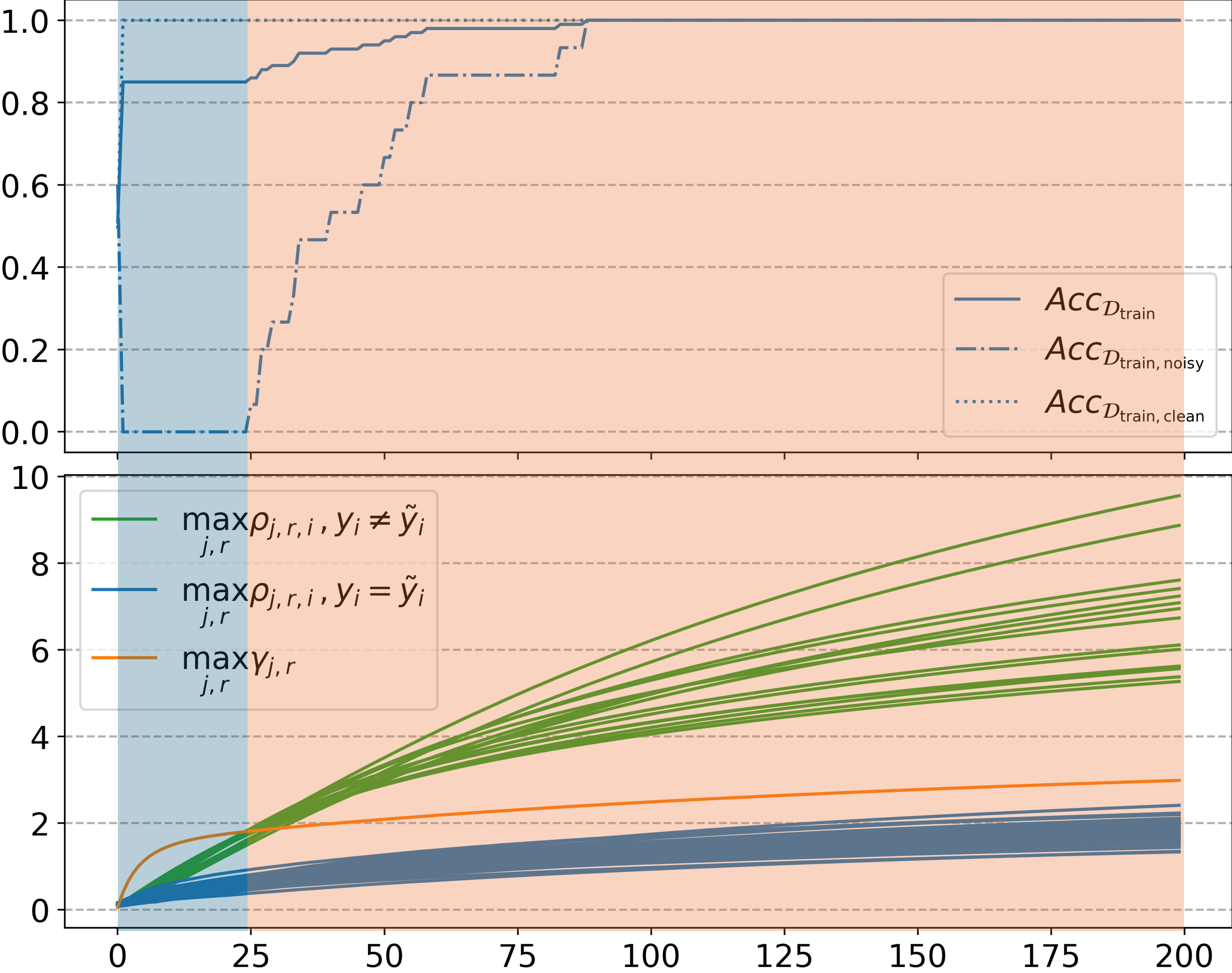}%
        \label{fig:sub3}
    } \\
    \subfloat[$\mu=20, \tau = 0.25$.]{%
        \includegraphics[width=0.3\textwidth]{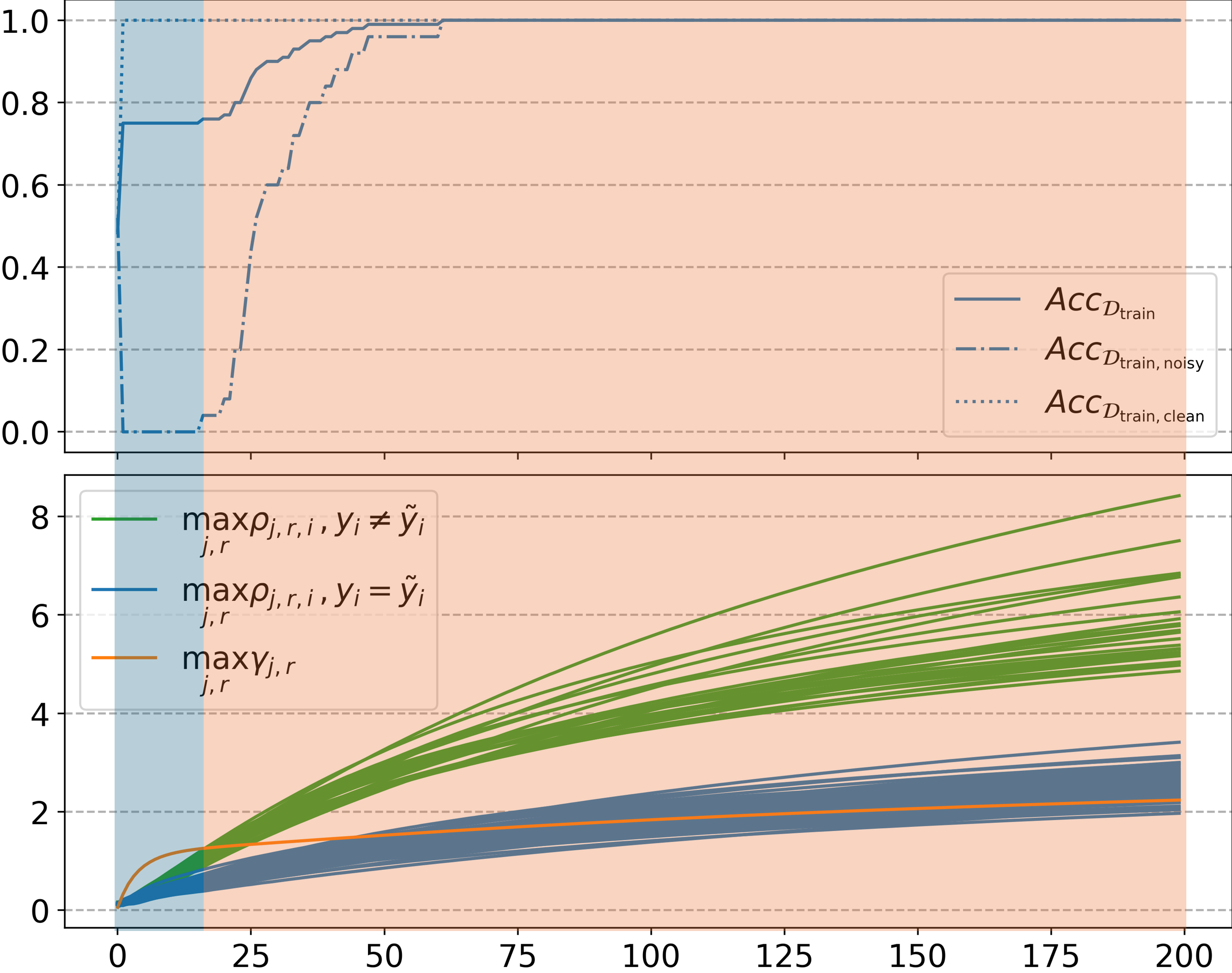}%
        \label{fig:sub4}
    }
    \hfill
    \subfloat[$\mu=15, \tau = 0.2$.]{%
        \includegraphics[width=0.3\textwidth]{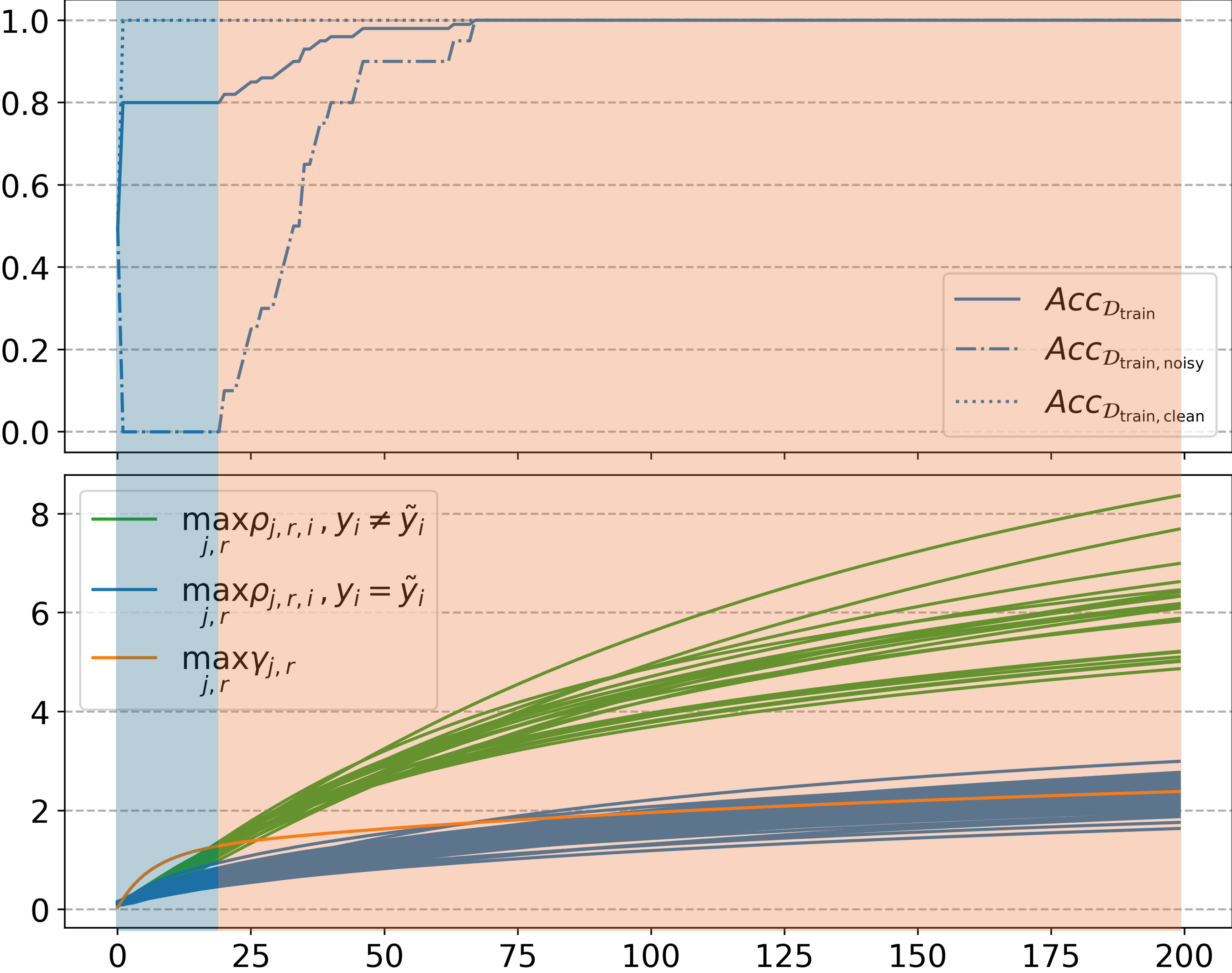}%
        \label{fig:sub5}
    }
    \hfill
    \subfloat[$\mu=25, \tau = 0.2$.]{%
        \includegraphics[width=0.3\textwidth]{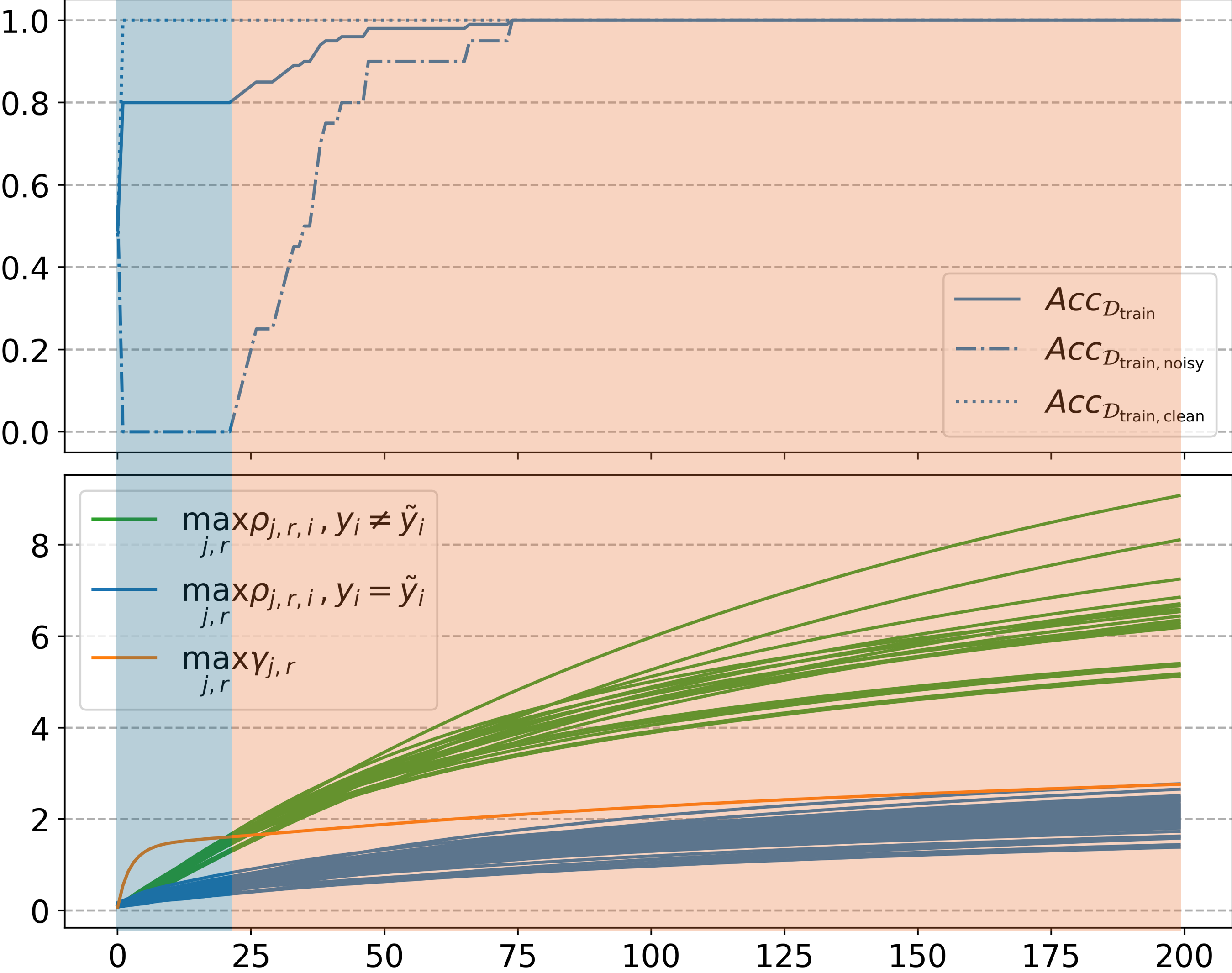} 
        \label{fig:sub6}
    }
    \caption{{\color{black}Experiments on synthetic data with varying problem settings, including varying signal strength $\mu$ and label noise ratio $\tau$. We shade the area before noise learning overtakes signal learning of noisy samples in blue. This corresponds to the Stage I in our analysis, where early stopping is beneficial. We shade the area where signal learning exceeds noise learning for noisy samples in orange, which corresponds to Stage II in our analysis.}}
    \label{fig:combined_plots}
\end{figure}

\begin{figure}[ht!]
    \centering
    \subfloat[$\tau = 0.1$.]{%
        \includegraphics[width=0.4\textwidth]{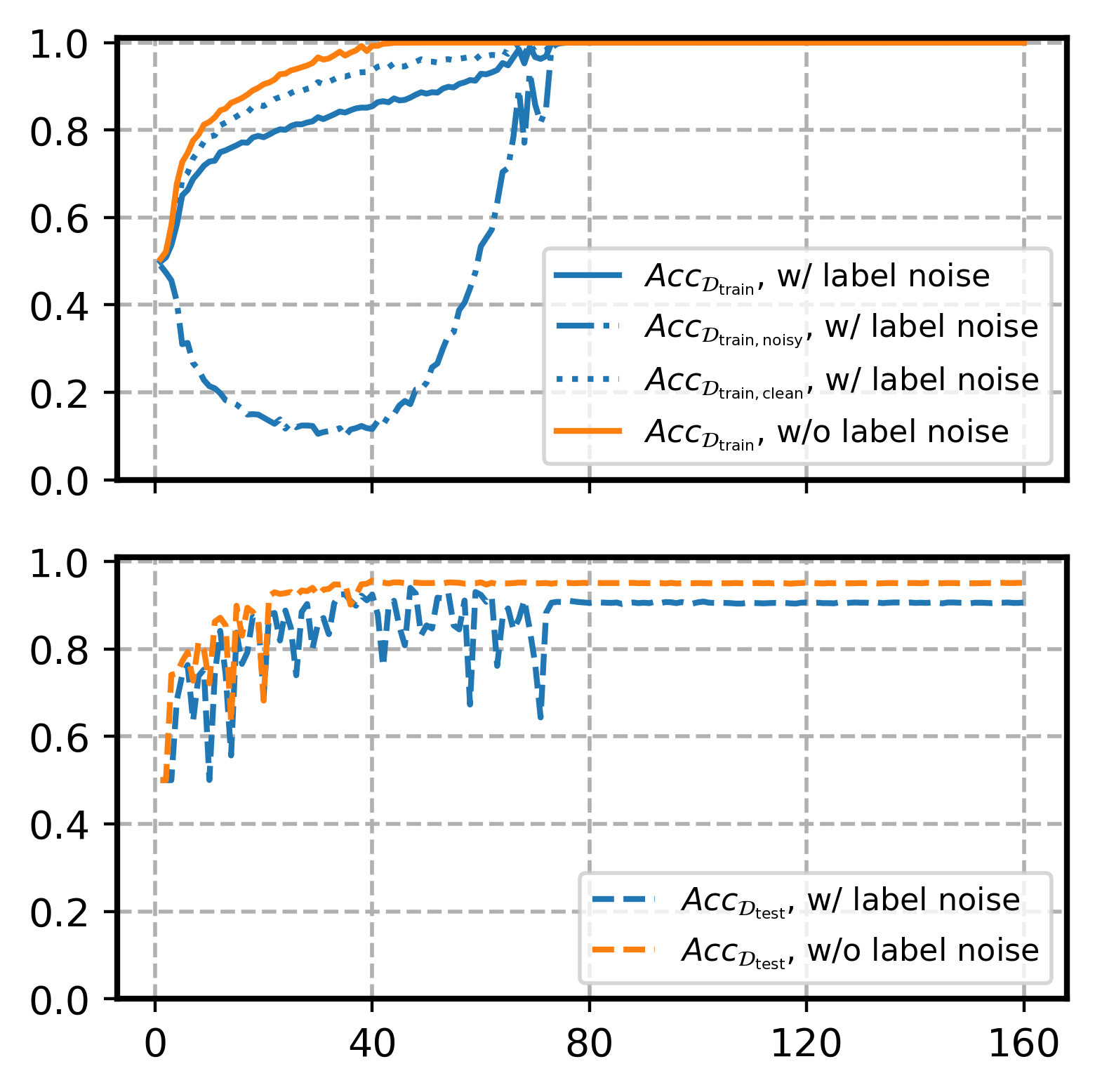}%
        \label{fig:sub11}
    }
    \hspace{5pt}
    \subfloat[$\tau = 0.15$.]{%
        \includegraphics[width=0.4\textwidth]{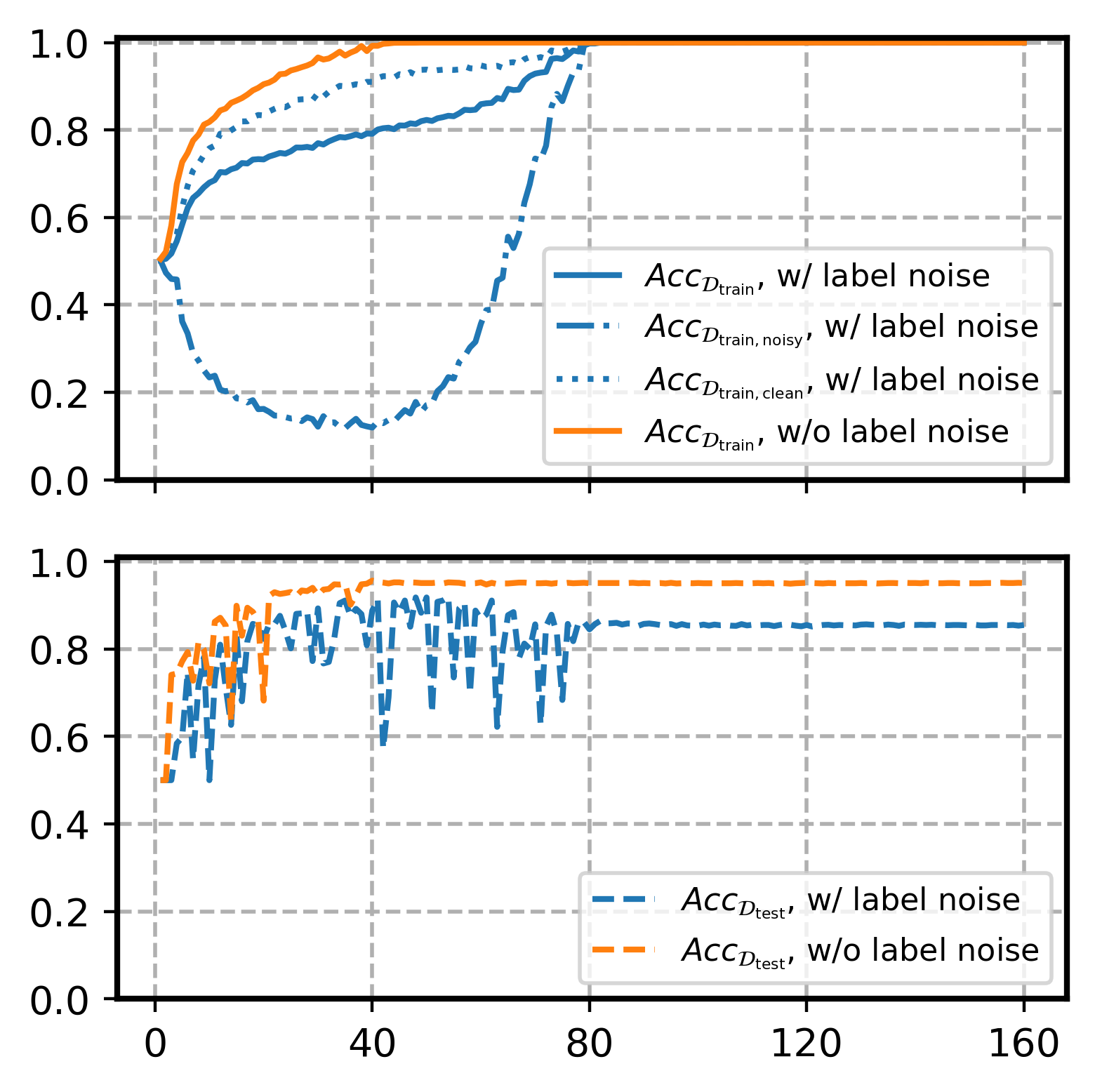}%
        \label{fig:sub7}
    }
    \\
    \subfloat[$ \tau = 0.2$.]{%
        \includegraphics[width=0.4\textwidth]{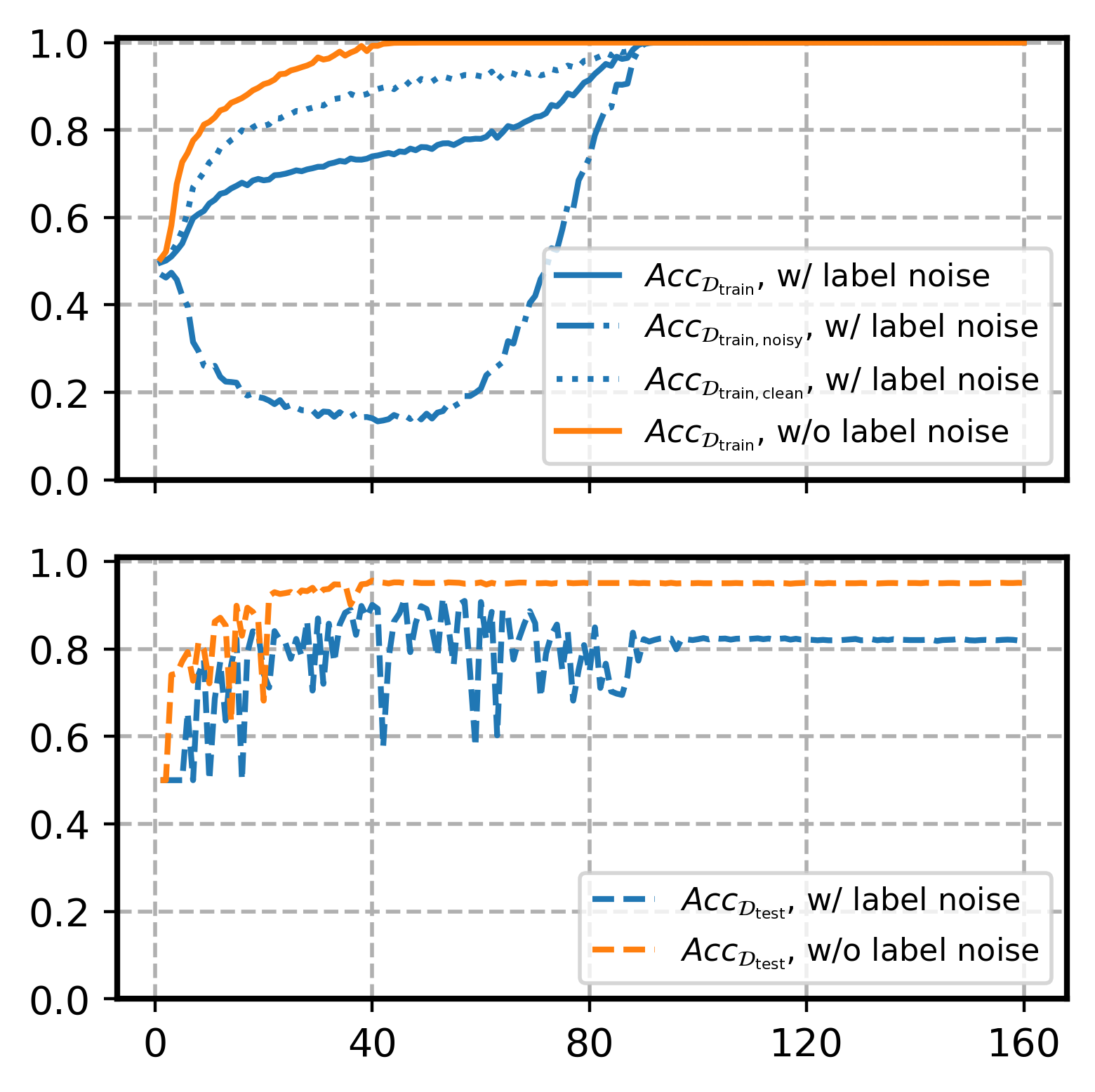}%
        \label{fig:sub8}
    }
    \hspace{5pt}
    \subfloat[$ \tau = 0.25$.]{%
        \includegraphics[width=0.4\textwidth]{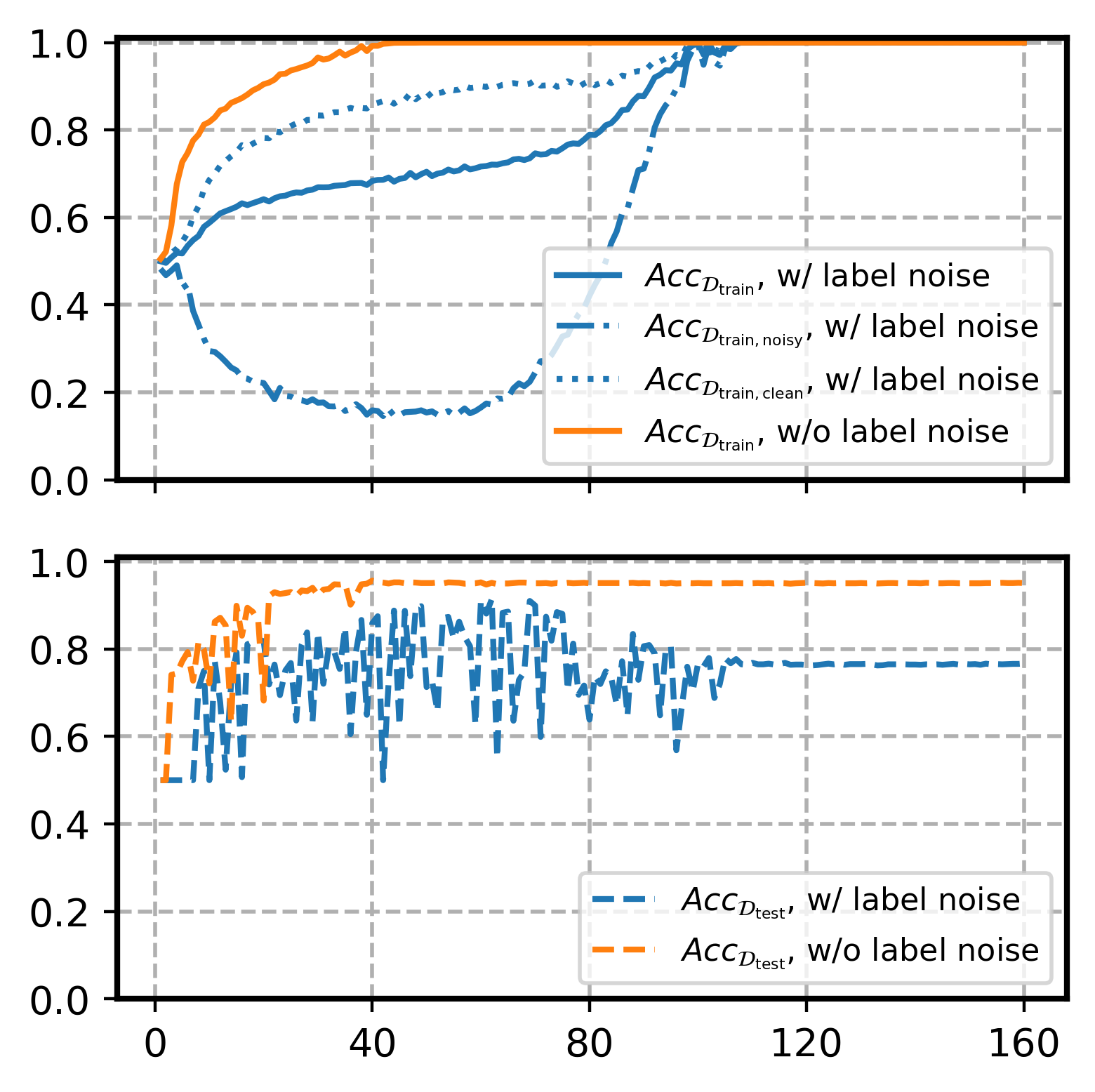}%
        \label{fig:sub9}
    }
    \caption{{\color{black} Experiments on CIFAR-10 dataset with varying label noise ratio $\tau$. Across different label noise ratios, we observe a similar pattern that there exist an initial decrease in the training accuracy on noisy samples before an increase to perfect classification. This validates our theoretical findings in real-world settings under various label noise ratios.}}
    \label{fig:combined_plots2}
\end{figure}

}

\clearpage

\end{document}